\documentclass[twoside,11pt]{article}

\usepackage[preprint]{jmlr2e}
\usepackage[protrusion=true,expansion=false]{microtype}

\usepackage{float}
\usepackage{nccmath}
\usepackage{mathtools}
\mathtoolsset{showonlyrefs}
\usepackage{dsfont}
\usepackage{bigints}
\usepackage{cleveref}
\usepackage{crossreftools}
\usepackage{enumerate}
\usepackage{mathrsfs}
\usepackage[inline,shortlabels]{enumitem}
\usepackage{blindtext}

\makeatletter
\newcommand*\widebar[1]{%
  \begingroup
  \def\mathaccent##1##2{%
    \relax
    \ifmmode
      \setbox\z@\hbox{$\m@th\displaystyle#1$}%
      \setbox\tw@\hbox{$\m@th\overline{\phantom{\copy\z@}}$}%
      \dimen@=\ht\tw@
      \advance\dimen@-\dp\tw@
      \advance\dimen@-\ht\z@
      \advance\dimen@-\dp\z@
      \divide\dimen@ by 2
      \raise\dimen@\hbox to\wd\z@{\hfil$\overline{\phantom{\copy\z@}}$\hfil}%
      \box\z@
    \else
      \PackageWarning{widebar}{Argument of \string\widebar\space should be math material.}%
      #1%
    \fi
  }%
  \mathaccent"0362{#1}%
  \endgroup
}

\newsavebox\myboxA
\newsavebox\myboxB
\newlength\mylenA
\newcommand*\xoverline[2][0.75]{%
    \sbox{\myboxA}{$\m@th#2$}%
    \setbox\myboxB\null
    \ht\myboxB=\ht\myboxA
    \dp\myboxB=\dp\myboxA
    \wd\myboxB=#1\wd\myboxA
    \sbox\myboxB{$\m@th\overline{\copy\myboxB}$}%
    \setlength\mylenA{\the\wd\myboxA}%
    \addtolength\mylenA{-\the\wd\myboxB}%
    \ifdim\wd\myboxB<\wd\myboxA
       \rlap{\hskip 0.5\mylenA\usebox\myboxB}{\usebox\myboxA}%
    \else
       \hskip -0.5\mylenA\rlap{\usebox\myboxA}{\hskip 0.5\mylenA\usebox\myboxB}%
    \fi}
\newcommand{\biggg}{\bBigg@\thr@@}
\newcommand{\Biggg}{\bBigg@{3}}

\newcommand{\bigggg}{\bBigg@\thr@@}
\newcommand{\Bigggg}{\bBigg@{3.5}}

\newcommand\runinsubsection{\@startsection{subsection}{1}{\z@}%
                               {3.5ex \@plus 1ex \@minus .2ex}%
                               {-1em}%
                               {\normalfont\normalsize\bfseries}}
\makeatother

\let\thelemma\relax

\newtheorem{lemma}{Lemma}
\newtheorem{remark}{Remark}
\newtheorem{corollary}{Corollary}
\newtheorem{prop}{Proposition}

\newenvironment{noheadproof}[1][]{\noindent\textit{}}{}
\usepackage[T1]{fontenc}
\usepackage{amsmath}
\usepackage{mathtools}
\mathtoolsset{showonlyrefs}
\usepackage{physics}

\usepackage{bm}
\usepackage{enumitem}
\usepackage{booktabs}
\usepackage{tikz}
\usetikzlibrary{arrows.meta,positioning}
\usepackage{fix-cm}
\usepackage{xcolor}
\usepackage{array} 
\usepackage{makecell} 
\usepackage{pgfplots}
\pgfplotsset{compat=1.18} %

\usepackage[caption=false]{subfig} 
\usepackage{placeins}

\usepackage[english]{babel}

\newcommand{\mS}{\m{S}}

\newcommand{\N}{\mb{N}} %
\newcommand{\bomega}{\bm{\omega}} %
\newcommand{\R}{\mb{R}} %
\renewcommand{\S}{\mb{S}} %
\renewcommand{\d}{\mm{d}} %
\renewcommand{\b}{\mathbf} %
\newcommand{\kB}{k^{(\mm{B})}} %
\newcommand{\mP}{\m{P}} %
\newcommand{\mU}{\m{U}} %
\newcommand{\mY}{\m{Y}} %
\newcommand{\mX}{\m{X}} %
\newcommand{\mC}{\m{C}} %
\newcommand{\mH}{\m{H}} %

\newcommand{\mb}[1]{\mathbb{#1}}
\newcommand{\m}[1]{\mathcal{#1}}

\newcommand{\mm}[1]{\mathrm{#1}}

\newcommand{\mV}{\m{V}} %
\newcommand{\VL}{\mV^{(L)}} %
\newcommand{\CLv}{\widehat{\mathcal C}_{\mathrm{var}}^{(L)}} %
\newcommand{\btheta}{\bm{\theta}} %

\definecolor{lightgreen1}{rgb}{0.6, 0.8, 0.6} %
\definecolor{pastelgreen}{rgb}{0.6, 1.0, 0.6}

\allowdisplaybreaks

\setenumerate{labelindent=0em,leftmargin=1.7em,topsep=0cm,partopsep=0cm,parsep=0cm,itemsep=1mm}
\setitemize{labelindent=0em,leftmargin=1.3em,topsep=0cm,partopsep=0cm,parsep=0cm,itemsep=1mm}

\newcommand{\figw}{0.36\linewidth}   %
\newcommand{\figh}{0.33\linewidth}   %

\pgfplotsset{
  compat=1.18, %
  every axis/.append style={
    width=\figw,
    height=\figh,
    xlabel={$u$},
    ylabel={$\ell(u,0)$},
    xmin=-2, xmax=2,
    ymin=0, ymax=4,
    grid=major,
    axis equal image,
    ticklabel style={font=\scriptsize},
    label style={font=\scriptsize},
    title style={font=\scriptsize, yshift=-1ex},
    clip=false,
    scale only axis,
  },
}

\usetikzlibrary{calc,arrows.meta,positioning}

\hypersetup{
  hidelinks,
  pdftitle={Variation Brownian Kernel Ladders},
  pdfauthor={Mahdi Mohammadigohari},
  pdfsubject={Recursive Brownian variation spaces, statistical learning, and constructive approximation},
  pdfkeywords={variation Brownian kernel ladders, variation spaces, Brownian kernels, recursive function spaces, statistical learning theory, constructive approximation}
}

\ShortHeadings{Variation Brownian Kernel Ladders}{Mohammadigohari}
\firstpageno{1}

\makeatletter%
\begin{document}

\makeatletter
\def\label@in@display@noarg#1{%
  \ifx\df@label\@empty\else
    \@amsmath@err{Multiple \string\label's: label '\df@label' will be lost}\@eha
  \fi
  \gdef\df@label{#1}%
}
\def\ltx@label#1{\cref@label{#1}}
\makeatother

\title{Variation Brownian Kernel Ladders}

\author{\name Mahdi Mohammadigohari \email Mahdi.Mohammadigohari@gmail.com \\
       \addr Faculty of Engineering,
       Free University of Bozen-Bolzano\\
       Bruno Buozzi 1, Bolzano, 39100, Italy} 

\editor{}

\maketitle

\begin{abstract}

Claims about the benefit of depth depend on the complexity assigned to a
representation.
We introduce the
\emph{Variation Brownian Kernel Ladder}
(VBKL), a path-atomic function-space framework that separates nonlinear
recursive dictionary construction from linear variation superposition.
Starting from linear projections, each atom recursively composes unit-ball
profiles from the Brownian reproducing kernel Hilbert space; the full VBKL
space is then the signed-measure variation hull of the completed dictionary.
We identify each recursive dictionary as a union of Brownian pullback RKHS
balls and establish variation-controlled H\"older regularity, compactness and
attainment, and strict growth with depth under a local non-degeneracy condition
whose trace lies in the support of the input measure.
For associated finite lower-support architectures, we derive Rademacher and
generalization bounds through Brownian quadratic chaos, signed threshold
traces, and VC entropy.
We also construct two-stage approximants by discretizing the outer measure
and the selected outer Brownian profiles, obtaining an
$M^{-1/2}+m^{-1/2}$
error bound, a sharp interpolation constant
$\sqrt{A/2}$,
and at most
$2M$
active outer-profile basis contributions per evaluation.
Controlled experiments illustrate the approximation mechanisms and indicate a
favorable limited-data accuracy--complexity trade-off.

\end{abstract}

\begin{keywords}
recursive function spaces, variation spaces, Brownian kernels,
statistical learning theory, constructive approximation
\end{keywords}

\section{Introduction}
\label{sec:intro}

Any claim about the representational benefit of depth is relative to a
notion of complexity.
A function may be inexpensive when complexity is measured by width or
parameter count and expensive when it is measured by weight magnitude,
path variation, or an intrinsic function-space norm.
This distinction is especially important for overparameterized models,
where the size of one finite realization may reveal little about the class
of functions favored by learning.
Classical and modern capacity analyses therefore control neural function
classes through weight magnitudes and norm-based representation costs,
often independently of width
\citep{bartlett1996,neyshabur2015,golowich2018,barron2019}.
The corresponding function-space viewpoint associates a predictor with the
function it represents rather than with one particular parameterization and
asks which functions lie in a norm- or gauge-bounded class.

A rich theory has been developed for shallow networks through Barron spaces,
measure-generated variation spaces, ridge-spline representations, and
reproducing kernel Banach spaces
\citep{Barron1993,Bach2017,ongie2019,savarese2019,parhi2021,parhi2022b,ma2022,siegel2023,siegel2024,bartolucci2023}.
These constructions connect approximation, regularization, statistical
complexity, and representer theorems through intrinsic function-space
geometry.
Extending this perspective to depth has led to several distinct recursive
models.
Compositional and recursive RKHS theories build Hilbert-space hierarchies
through kernel or function composition
\citep{Chen2024,heeringa2025}.
Recursive RKBS and Banach-space constructions emphasize vector-valued
measures, layerwise variational structure, and representer theorems
\citep{wojtowytsch2020,bartolucci2024}.
Deep variation-space and neural-tree formulations instead iterate atomic,
measure, or composed shallow-space constructions
\citep{parhi2022a,shenouda2024,nakhleh2026deep,ongie2026}.

The resulting theories differ not only in their choice of activation or
kernel, but also in the location at which linear superposition is introduced.
This design choice determines the recursive object being studied.
If convexification or measure superposition is applied at every layer, then
nonlinear feature generation and linear combination evolve together.
If superposition is postponed, one may first study the geometry of individual
compositional paths and only afterward ask how much linear variation is
required to combine them.
These operations need not commute, and they can lead to different conclusions
about regularity, expressivity, statistical complexity, and finite
realizability.
This motivates the central question of the paper:
\emph{how does the placement of linear superposition within a recursive
function-space construction affect the geometry, learning complexity, and
constructive realization of the resulting deep function classes?}

We answer this question through the
\emph{Variation Brownian Kernel Ladder}
(VBKL).
The construction begins with linear projections and recursively builds a
nonlinear path dictionary.
At each subsequent level, one chooses a lower-level atom and composes it with
a unit-ball function from the Brownian RKHS\@.
Thus, a depth-$L$ atom contains one support path and exactly
$L-1$
normalized Brownian profiles.
No convex or variation hull is taken at the intermediate levels.
Only after the depth-$L$ dictionary has been constructed do we introduce
linear superposition, by integrating its atoms against finite signed
measures.
The infimal total variation over all such representations defines the
intrinsic VBKL complexity.
This order of construction separates the geometry of recursive atoms, the
cost of their outer linear combination, and the complexity of a finite
realization used for computation.

\paragraph{Why the Brownian RKHS?}
The Brownian kernel is not used merely as a convenient nonlinear activation.
Its RKHS is the explicit anchored Sobolev--Cameron--Martin space of absolutely
continuous functions with square-integrable derivative.
Brownian pullbacks admit exact RKHS descriptions, while the Brownian kernel
metric propagates square-root regularity through recursive composition.
The signed-threshold representation of the kernel connects finite Brownian
variation classes to quadratic chaos and VC entropy.
Finally, the one-dimensional profile geometry permits piecewise-linear
approximation with a sharp constant.
These properties allow analytical, statistical, and constructive theories to
be developed within the same recursive model.

\paragraph{Brownian and recursive-kernel lineage.}
The shallow Brownian projection model of
\citet{follain2025}
represents predictors as expectations of one-dimensional Sobolev functions
over learned projections and identifies the associated Brownian projection
kernel.
Brownian Kernel Ladders
\citep{MohammadigohariDiFattaNicosiaPardalos2026BKL}
recursively average Brownian pullback kernels to construct integral RKHS
hierarchies.
VBKL uses the same Brownian function-generation mechanism in a different
way: it follows individual compositional supports to construct a path
dictionary and places the signed-measure superposition only at the outermost
level.
The three constructions are therefore related through the Brownian kernel,
but differ in whether projection measures, recursive kernel measures, or
outer atomic measures are the primary representation variables.

\paragraph{Relation to Deep Neural Variation Spaces.}
The closest deep variation-space framework is that of
\citet{nakhleh2026deep}.
Schematically, its depth-$l$ unit ball and the VBKL depth-$l$ dictionary are
constructed as
\begin{align}
\mathcal B_l^{\mathrm{DNVS}}
&=
\overline{\operatorname{aconv}}
\left(
\left\{
\sigma_s\circ f:
s>0,
\; f\in\mathcal B_{l-1}^{\mathrm{DNVS}}
\right\}
\right),
\notag\\
\mU_l
&=
\left\{
g\circ u:
u\in\mU_{l-1},
\; g\in\mH_{\kB},
\; \left\|g\right\|_{\mH_{\kB}}\le1
\right\}.
\notag
\end{align}
Deep Neural Variation Spaces therefore alternate nonlinear activation and
closed absolute convexification at every level, whereas VBKL recursively
constructs only the nonlinear dictionary and takes its variation hull after
the desired depth is reached.
There is a second distinction: the former varies a normalized family derived
from one prescribed activation, while each VBKL profile ranges over the
entire unit ball of the Brownian RKHS\@.
Because nonlinear composition and absolute convexification do not generally
commute, VBKL is not obtained merely by specializing that framework to a
Brownian activation family, and we make no general inclusion or equivalence
claim between the resulting spaces.
The two theories also reveal complementary depth phenomena.
Deep Neural Variation Spaces establish depth saturation for norm-controlled
univariate ReLU classes, whereas the Brownian path construction below yields
a strict hierarchy under a local non-degeneracy condition along the support
of the input measure.
Their finite-representation results are likewise complementary: the former
proves a representer theorem for norm-penalized data fitting, while VBKL
gives a two-stage approximation of every element of its full
infinite-dimensional space.

\paragraph{Depth separation and approximation.}
Classical depth-separation results compare finite networks through width,
parameter count, or related representation costs
\citep{telgarsky2016,eldan2016,venturi2022,vardi2020,parkinson2024}.
Our strict-hierarchy result addresses a different question: whether adjacent
intrinsic, potentially infinite-width, norm-controlled function spaces
contain genuinely different functions.
The constructive theory is also connected to nonlinear, greedy, and
variable-basis approximation
\citep{DeVore1998,Temlyakov2011,kurkova2002,siegel2024}.
Here the two approximation resources have distinct mathematical meanings:
the number of atoms controls discretization of the outer signed measure,
whereas the profile resolution controls realization of each selected
Brownian nonlinearity.

\paragraph{Scope of the results.}
The full path-atomic VBKL space and the associated finite mixed architecture
play different roles.
Our analytical and constructive results concern the full
infinite-dimensional space generated by signed measures over the path
dictionary.
The statistical guarantees concern an explicitly parameterized finite
Brownian variation class whose lower-support architecture may include
intermediate linear mixing.
The path-only specialization of that architecture is a subclass of the full
VBKL space, but no such inclusion is asserted for the general mixed
architecture.
Stating this distinction explicitly prevents finite-parameter complexity
from being attributed to the unrestricted infinite-dimensional variation
ball.

The principal contributions are as follows.

\begin{enumerate}

\item
\textbf{Path-atomic recursive variation spaces.}
We introduce a recursive Brownian dictionary in which every new atom is the
composition of one lower-level atom with one normalized Brownian RKHS profile.
We identify each dictionary exactly as a union of Brownian pullback RKHS unit
balls and define the depth-$L$ VBKL space as its finite signed-measure
variation hull.
This gives an explicit infinite-dimensional representation while keeping
nonlinear recursion and linear superposition mathematically distinct.

\item
\textbf{Full-space geometry.}
We establish well-definedness and non-degeneracy of the variation complexity.
Under compactness of the input and direction sets, the recursive dictionary
is compact and the minimum-total-variation representation is attained.
Variation complexity controls pointwise magnitude and
$2^{-(L-1)}$-H\"older regularity. When the input measure has full support, the
resulting representative is unique and yields a continuous H\"older embedding.
Under a local non-degeneracy condition whose trace lies in the support of the
input measure, the spaces form a strict hierarchy across depth.

\item
\textbf{Architecture-dependent statistical guarantees.}
For explicit finite lower-support architectures with piecewise-linear
Brownian profiles, normalized mixing, and parameter count
$P_{L-1,m,G}$,
we derive empirical and expected Rademacher bounds and a high-probability
generalization guarantee.
The proof passes from the union-of-RKHS representation to Brownian quadratic
chaos, represents that chaos through signed threshold traces, and controls
the trace entropy through VC theory.
The bound separates outer variation radius, recursive Brownian range, finite
architecture size, and sample size.

\item
\textbf{Constructive approximation with separated resources.}
Every function in the full VBKL space is first approximated by at most
$M$
recursive atoms with error of order
$M^{-1/2}$.
The selected outer Brownian profiles are then replaced by piecewise-linear
interpolants with an additional
$m^{-1/2}$
error.
Balanced refinement yields an
$O\left(N^{-1/2}\right)$
approximation with finite outer atomic support and discretized outer profiles.
The interpolation constant
$\sqrt{A/2}$
is optimal, and evaluating the resulting model uses at most
$2M$
active outer-profile basis contributions independently of
$m$.

\item
\textbf{Theory-directed experiments.}
Controlled experiments isolate signed-measure discretization, profile
discretization, and balanced refinement, and the normalized tent construction
attains the worst-case interpolation bound exactly.
Supervised studies compare validation-selected VBKL realizations with Deep
Neural Variation Spaces and kernel baselines.
They indicate the strongest VBKL behavior in limited-data regimes and a
favorable accuracy--parameter trade-off rather than universal predictive
dominance.
Additional experiments verify the finite-difference directional estimator,
the variance reduction from Monte Carlo directional averaging, and the
practical realizability of the finite models.

\end{enumerate}

The remainder of the paper is organized as follows.
Section~\ref{sec:notations}
introduces the recursive Brownian dictionary, the associated VBKL space, and
the finite lower-support architectures.
Section~\ref{sec:analytical-properties}
establishes analytical properties of the full VBKL spaces and
architecture-dependent statistical guarantees for the associated finite
Brownian variation classes.
Section~\ref{sec:constructive-approximation}
develops the two-stage constructive approximation theory.
Section~\ref{sec:experiments}
presents theory-directed experiments on approximation, statistical learning,
parameter efficiency, and optimization.
Complete protocols, numerical tables, selected configurations, and
computational measurements are reported in
Appendix~\ref{app:experimental-details}.
Proofs of the main theoretical results are collected in
Section~\ref{app}, while additional notation, the sharpness analysis of the
Brownian profile interpolation estimate, and the supporting auxiliary lemmas
are provided in Appendices~\ref{appendix:notation},
\ref{sec:sharpness-brownian-interpolation}, and
\ref{app:auxi_lems}, respectively.

\begin{table}[t]
\centering
\begin{tabular}{@{}lll@{}}
\toprule
Result & Content & Page \\
\midrule

Theorem~\ref{thm:analytical-properties-variation-bkl}
&
Analytical properties of the VBKL spaces
&
page~\pageref{thm:analytical-properties-variation-bkl}
\\

Theorem~\ref{thm:variation-bkl-main-generalization}
&
Architecture-dependent Rademacher bound
&
page~\pageref{thm:variation-bkl-main-generalization}
\\

Corollary~\ref{cor:vbkl-generalization}
&
Finite-architecture generalization guarantee
&
page~\pageref{cor:vbkl-generalization}
\\
Theorem~\ref{thm:path-bkl-approx}
&
Finite-atomic approximation of the full VBKL space
&
page~\pageref{thm:path-bkl-approx}
\\

Theorem~\ref{thm:two-stage-path-profile-bkl}
&
Two-stage finite-atomic and outer-profile approximation
&
page~\pageref{thm:two-stage-path-profile-bkl}
\\

Proposition~\ref{thm:profile-bkl-approximation}
&
Brownian profile interpolation
&
page~\pageref{thm:profile-bkl-approximation}
\\

\bottomrule
\end{tabular}

\caption{Overview of the main theoretical results. Their logical dependencies are illustrated in Fig.~\ref{fig:dependency_graph}; supporting results are summarized in Table~\ref{tab:auxiliary-results}.}
\label{tab:main-results}
\end{table}

\section{Notation}
\label{sec:notations}
We use the following notation throughout the paper.
The set of natural numbers is denoted by
$\N=\{1,2,\ldots\}$,
$\R$ denotes the set of real numbers, and
$[n]=\{1,\ldots,n\}$.
Throughout the paper,
$\mX\subseteq\R^d$
denotes a compact input domain equipped with a probability measure $\nu$, and
$\Omega\subseteq\S^{d-1}$
denotes the admissible set of first-layer directions. The Brownian kernel is
$
\kB(x,x')
=
(|x|+|x'|-|x-x'|)/2,
$
whose associated RKHS is
$
\mH_{\kB}
=
\{
g:
g(0)=0,\;
g
\text{ absolutely continuous},\;
g'\in L^2(\R)
\},
$
equipped with the norm
$
\|g\|_{\mH_{\kB}}^2
=
\int_{\R}|g'(t)|^2\,dt.
$
For a measurable space
$\Theta$,
we write
$\mathcal M(\Theta)$
for the space of finite signed measures equipped with the total variation norm
$
\|\mu\|_{\mathrm{TV}}
=
|\mu|(\Theta).
$
Additional notation is collected in Appendix~\ref{appendix:notation}.

\subsection{Atomic Brownian Kernel Ladder}
\label{subsec:atomic-bkl}

The depth-$L$ VBKL space is generated by a recursively constructed
Brownian dictionary. In BKL
\citep{MohammadigohariDiFattaNicosiaPardalos2026BKL}, a recursive level is
formed by averaging Brownian pullback kernels over a probability measure on
lower-level supports: if $\mS$ is a support class and
$\mu\in\mP_1(\mS)$, then
\begin{align}
k[\mS,\mu]\left(\b x,\b x'\right)
&=
\int_{\mS}
\kB\left(u(\b x),u(\b x')\right)
\,\d\mu(u),
\qquad
\b x,\b x'\in\mX.
\label{eq:atomic-bkl-integral-kernel}
\end{align}
VBKL uses the corresponding Dirac specialization. For a single support
$u$, the recursive kernel is
\begin{align}\label{eq:pullbackk}
k_u\left(\b x,\b x'\right)
&=
\kB\left(u(\b x),u(\b x')\right),
\qquad
\b x,\b x'\in\mX.
\end{align}
Thus each recursive step propagates one support rather than averaging over a
support family. The recursion below constructs only the nonlinear dictionary;
the signed-measure variation hull is introduced after the target depth is
reached.

\paragraph{First layer.}
Let $\Omega\subseteq\S^{d-1}$ denote the admissible set of directions. The first
recursive generator class consists of linear projections,
$ \mU_1 : \allowbreak= \left\{ \b x\mapsto\bomega^\top\b x : \bomega\in\Omega \right\}. $

\paragraph{Atomic BKL construction.}

For each
$l\in\left\{2,\ldots,L\right\}$,
assume that the lower-level dictionary
$\mU_{l-1}$
has been defined.
The next dictionary is obtained by composing a normalized Brownian
profile with a lower-level atom.
The exact pullback-RKHS identification is established in
\Cref{lem:vbkl-union-rkhs}, and gives
\begin{align}
\mU_l
&:=
\left\{
\b x
\mapsto
g\left(
u\left(
\b x
\right)
\right)
:
u\in\mU_{l-1},
\;
g\in\mH_{\kB},
\;
\left\|g\right\|_{\mH_{\kB}}
\le
1
\right\}
\notag\\
&=
\bigcup_{u\in\mU_{l-1}}
\left\{
f\in\mH_{k_u}
:
\left\|f\right\|_{\mH_{k_u}}
\le
1
\right\}.
\end{align}
Consequently, every
$f\in\mU_L$
admits a representation
$ f\left( \b x \right) = \left( g_{L-1} \circ \cdots \circ g_1 \right) \left( \bomega^\top\b x \right), $\allowbreak
$ \bomega\in\Omega$, $\left\|g_j\right\|_{\mH_{\kB}} \le 1$, $j\in\left[L-1\right]. $
Thus, a depth-$L$ atom consists of one linear projection followed by
exactly
$L-1$
normalized Brownian profiles.
The collection of all such atoms forms the recursive Brownian
dictionary
$\mU_L$
used in the variation-space construction below.

\begin{remark}
\label{rem:atomic-bkl-canonical-supports}
Let
$u:\mX\rightarrow\mathbb R$
be a support function and let
$c\in\mathbb R\setminus\left\{0\right\}$.
The positive homogeneity of the Brownian kernel gives
\begin{align}
k_{cu}\left(
\b x,\b x'
\right)
&\stackrel{(a)}{=}
\kB\left(
cu\left(\b x\right),
cu\left(\b x'\right)
\right)
\stackrel{(b)}{=}
\left|c\right|
\kB\left(
u\left(\b x\right),
u\left(\b x'\right)
\right)
 \stackrel{(c)}{=}
\left|c\right|
k_u\left(
\b x,\b x'
\right),
\qquad
\b x,\b x'\in\mX.
\end{align}
Here,
(a) follows from the definition of the pullback kernel,
(b) uses the positive homogeneity of
$\kB$,
and
(c) applies the definition of
$k_u$. Since multiplying a positive-definite kernel by a positive scalar does
not change its RKHS as a set, but rescales its norm, one obtains
\begin{align}
\mH_{k_{cu}}
&=
\mH_{k_u},
\\
\left\|f\right\|_{\mH_{k_{cu}}}
&=
\left|c\right|^{-1/2}
\left\|f\right\|_{\mH_{k_u}},
\qquad
f\in\mH_{k_u}.
\end{align}
Therefore, scaling a support function preserves the underlying RKHS as
a set but does not preserve its unit ball.
Accordingly, intermediate support functions are not identified with
their normalized versions unless such a normalization is imposed
explicitly as part of a separate finite architecture.
\end{remark}

\paragraph{VBKL space.}

Each $\mU_l$ is viewed as a subspace of $C(\mX)$ and is equipped with the
Borel $\sigma$-algebra induced by the supremum norm. The completed dictionary
$\mU_L$ generates the depth-$L$ VBKL space
\begin{align}
\VL
:=
\left\{
F\in L^2(\nu)
:
F
=
\int_{\mU_L}
u\,\d\mu(u),
\;
\mu\in\mathcal M(\mU_L),
\;
\|\mu\|_{\mathrm{TV}}<\infty
\right\}.
\label{eq:variation-bkl-space}
\end{align}
Here $\mathcal M(\mU_L)$ is the space of finite signed Borel measures on
$\mU_L$, and the integral is a Bochner integral in $L^2(\nu)$. Every
absolutely summable atomic expansion corresponds to a discrete measure.
When $\mU_L$ is compact in $L^2(\nu)$, the canonical map
$u\mapsto u$ is continuous, and a uniformly
$\|\cdot\|_{\mathrm{TV}}$-bounded sequence of representing measures has a
weak-* convergent subsequence in
$\mathcal M(\mU_L)=C(\mU_L)^*$. Under these conditions,
\Cref{prop:variation-gauge-attainment} shows that the limiting measure retains
the barycentric representation. Unlike BKL, which
averages pullback kernels recursively, VBKL fixes the completed path
dictionary and applies signed-measure superposition only at the outer level.

\paragraph{Variation complexity.}

For $F\in\VL$, define the intrinsic variation complexity as the infimal total
variation of a representing measure:
\begin{align}
\CLv(F)
:=
\inf
\left\{
\|\mu\|_{\mathrm{TV}}
:
F
=
\int_{\mU_L}
u\,\d\mu(u)
\right\}.
\label{eq:variation-bkl-complexity}
\end{align}
The infimum is over finite signed Borel measures
$\mu\in\mathcal M(\mU_L)$ representing $F$ through
\eqref{eq:variation-bkl-space}. This formulation is convenient for both the
statistical analysis and the approximation results, where finite-support
measures arise as approximants of general representations.

\paragraph{Finite-support VBKL class.}

For $m\ge1$, let
\begin{align}
\mV_m^{(L)}
:=
\left\{
F\in\VL
:
F
=
\int_{\mU_L}
u\,\d\mu(u),
\;
\left|\operatorname{supp}(\mu)\right|
\le
m
\right\},
\end{align}
where $\mu$ is a finite signed Borel measure on $\mU_L$. Equivalently,
$\mV_m^{(L)}$ contains precisely the functions admitting a representation by
at most $m$ atoms from $\mU_L$.

\begin{remark}
\label{rem:variation-bkl-intuition}
The construction has three complementary interpretations.
\begin{enumerate}
\item
\textnormal{Well-definedness and non-degeneracy.}
By \Cref{lem:path-atomic-space-well-defined}, every finite signed measure of
finite total variation defines an element of $L^2(\nu)$, so $\VL$ is a
well-defined linear subspace. Moreover,
\Cref{cor:path-gauge-nondegenerate} gives
\begin{align}
\|F\|_{L^2(\nu)}
\le
R_L\CLv(F),
\qquad
F\in\VL,
\end{align}
where $R_L$ is defined in \eqref{eq:atomic-generator-l2-radius}; hence $\CLv(F)=0$ implies $F=0$ in $L^2(\nu)$.

\item
\textnormal{Variation-space viewpoint.}
The pair $(\VL,\CLv)$ is the measure-generated variation space associated
with the recursive dictionary
\citep{Barron1993,Bach2017,parhi2021,siegel2023,nakhleh2026deep}. Recursion
constructs the nonlinear atoms; only the final signed measure introduces
linear superposition.

\item
\textnormal{Relation to BKL.}
Each $\mU_l$ is a union of Brownian pullback RKHS unit balls. VBKL therefore
retains the recursive Brownian geometry of BKL while replacing layerwise
kernel averaging by an outer variation hull over individual compositional
paths.
\end{enumerate}
\end{remark}

\subsection{Finite Lower-Support Architectures}
\label{subsec:finite-parametric-bkl}

The recursive dictionaries introduced in
\Cref{subsec:atomic-bkl}
are infinite-dimensional because their Brownian profiles belong to
$\mH_{\kB}$.
For the architecture-dependent statistical analysis, we therefore
introduce an explicit finite-parametric class of lower-level support
functions.
This class is defined separately from the infinite-dimensional dictionary
$\mU_{L-1}$;
the relationship between the two classes is stated at the end of this
subsection. Fix
$m\ge1$
and
$G\ge2$.
Set
\begin{align}
R_{\mX}
&:=
\sup_{\b x\in\mX}
\left\|
\b x
\right\|_2,
&
A_{\mX}
&:=
\max
\left\{
1,
R_{\mX}
\right\}.
\label{eq:finite-architecture-input-radius}
\end{align}
Since
$\mX$
is compact,
one has
$A_{\mX}<\infty$.

\paragraph{Finite Brownian profiles.}

Let
\begin{align}
-A_{\mX}
=
t_0
<
t_1
<
\cdots
<
t_{j_0}
=
0
<
\cdots
<
t_G
=
A_{\mX}
\label{eq:finite-profile-grid}
\end{align}
be a fixed interpolation grid containing the origin, and let
$\psi_0,\ldots,\psi_G$
denote the associated continuous piecewise-linear hat functions on
$\left[-A_{\mX},A_{\mX}\right]$.
For
$g\in\mH_{\kB}$,
define
\begin{align}
\left(
\Pi_Gg
\right)
\left(
t
\right)
:=
\sum_{j=0}^{G}
g\left(
t_j
\right)
\psi_j\left(
t
\right),
\qquad
t\in
\left[
-A_{\mX},
A_{\mX}
\right].
\label{eq:finite-profile-interpolant}
\end{align}
Equivalently, for every
$j\in\left\{0,\ldots,G-1\right\}$
and every
$t\in\left[t_j,t_{j+1}\right]$,
\begin{align}
\left(
\Pi_Gg
\right)
\left(
t
\right)
&=
\frac{
t_{j+1}-t
}{
t_{j+1}-t_j
}
g\left(
t_j
\right)
+
\frac{
t-t_j
}{
t_{j+1}-t_j
}
g\left(
t_{j+1}
\right).
\label{eq:finite-profile-interpolant-piecewise}
\end{align}
We extend
$\Pi_Gg$
constantly outside the interpolation interval by setting
\begin{align}
\left(
\Pi_Gg
\right)
\left(
t
\right)
&:=
\left(
\Pi_Gg
\right)
\left(
-A_{\mX}
\right),
\qquad
t<
-A_{\mX},
\label{eq:finite-profile-left-extension}
\\
\left(
\Pi_Gg
\right)
\left(
t
\right)
&:=
\left(
\Pi_Gg
\right)
\left(
A_{\mX}
\right),
\qquad
t>
A_{\mX}.
\label{eq:finite-profile-right-extension}
\end{align}
Define the finite Brownian profile class by
\begin{align}
\mathcal Q_{G,A_{\mX}}
:=
\left\{
\Pi_Gg
:
g\in\mH_{\kB},
\;
\left\|
g
\right\|_{\mH_{\kB}}
\le
1
\right\}.
\label{eq:finite-brownian-profile-class}
\end{align}
Every
$q\in\mathcal Q_{G,A_{\mX}}$
is determined by its nodal coefficient vector $\b c_q
:=
\left(
q\left(
t_0
\right),
\ldots,
q\left(
t_G
\right)
\right)
\in
\mathbb R^{G+1}$. Moreover, the nodal coefficients satisfy $q\left(
0
\right)
=
0$, 
\begin{align}
\left\|
q
\right\|_{\mH_{\kB}}^2
&=
\sum_{j=0}^{G-1}
\frac{
\left(
q\left(
t_{j+1}
\right)
-
q\left(
t_j
\right)
\right)^2
}{
t_{j+1}-t_j
}
\le
1.
\label{eq:finite-profile-discrete-energy}
\end{align}
The exact correspondence between
\eqref{eq:finite-brownian-profile-class}
and
\eqref{eq:finite-profile-discrete-energy}
is established in the appendix.

\paragraph{Admissible mixing coefficients.}

Let
\begin{align}
\mathfrak M_m
:=
\left\{
\b W\in\mathbb R^{m\times m}
:
\max_{j\in\left[m\right]}
\sum_{r=1}^{m}
\left|
W_{jr}
\right|
\le
1
\right\},
\label{eq:finite-mixing-matrix-class}
\end{align}
and let
\begin{align}
\mathfrak B_m
:=
\left\{
\b \beta\in\mathbb R^m
:
\sum_{j=1}^{m}
\left|
\beta_j
\right|
\le
1
\right\}.
\label{eq:finite-readout-class}
\end{align}
The row-sum constraint in
\eqref{eq:finite-mixing-matrix-class}
and the readout constraint in
\eqref{eq:finite-readout-class}
ensure that the intermediate supports remain uniformly bounded.

\paragraph{Finite lower-support architecture.}

For
$L=2$,
define
\begin{align}
\mathcal A_{1}^{m,G}
:=
\mU_1.
\label{eq:finite-lower-support-class-depth-one}
\end{align}
Suppose now that
$L\ge3$,
and set $K
:=
L-2$. An admissible parameter tuple is of the form
\begin{align}
\btheta
:=
\left(
\left(
\bomega_j
\right)_{j\in\left[m\right]},
\left(
q_j^{(r)}
\right)_{
r\in\left[K\right],
\,
j\in\left[m\right]
},
\left(
\b W^{(r)}
\right)_{r=2}^{K},
\b \beta
\right),
\label{eq:finite-lower-support-parameter-tuple}
\end{align}
where
\begin{align}
\bomega_j
&\in
\Omega,
\qquad
j\in\left[m\right],
\label{eq:finite-lower-support-direction-constraint}
\\
q_j^{(r)}
&\in
\mathcal Q_{G,A_{\mX}},
\qquad
r\in\left[K\right],
\quad
j\in\left[m\right],
\label{eq:finite-lower-support-profile-constraint}
\\
\b W^{(r)}
&\in
\mathfrak M_m,
\qquad
r\in
\left\{
2,\ldots,K
\right\},
\label{eq:finite-lower-support-mixing-constraint}
\\
\b \beta
&\in
\mathfrak B_m.
\label{eq:finite-lower-support-readout-constraint}
\end{align}
When
$K=1$,
the family of mixing matrices in
\eqref{eq:finite-lower-support-parameter-tuple}
is empty. For every
$j\in\left[m\right]$,
define the first linear coordinates by
\begin{align}
z_j^{(0)}
\left(
\b x
\right)
:=
\bomega_j^\top\b x,
\qquad
\b x\in\mX.
\label{eq:finite-lower-support-linear-layer}
\end{align}
The first Brownian profile layer is
\begin{align}
z_j^{(1)}
\left(
\b x
\right)
:=
q_j^{(1)}
\left(
z_j^{(0)}
\left(
\b x
\right)
\right),
\qquad
j\in\left[m\right].
\label{eq:finite-lower-support-first-profile-layer}
\end{align}
For every
$r\in\left\{2,\ldots,K\right\}$
and
$j\in\left[m\right]$,
define recursively
\begin{align}
s_j^{(r)}
\left(
\b x
\right)
&:=
\sum_{k=1}^{m}
W_{jk}^{(r)}
z_k^{(r-1)}
\left(
\b x
\right),
\label{eq:finite-lower-support-preactivation}
\\
z_j^{(r)}
\left(
\b x
\right)
&:=
q_j^{(r)}
\left(
s_j^{(r)}
\left(
\b x
\right)
\right).
\label{eq:finite-lower-support-hidden-layer}
\end{align}
The scalar lower-support output associated with
$\btheta$
is
\begin{align}
a_{\btheta}
\left(
\b x
\right)
:=
\sum_{j=1}^{m}
\beta_j
z_j^{(K)}
\left(
\b x
\right),
\qquad
\b x\in\mX.
\label{eq:finite-lower-support-output}
\end{align}
The resulting depth-$\left(L-1\right)$ finite lower-support class is
\begin{align}
\mathcal A_{L-1}^{m,G}
&:=
\left\{
a_{\btheta}
:
\btheta \text{ satisfies }
\eqref{eq:finite-lower-support-direction-constraint}
\text{--}
\eqref{eq:finite-lower-support-readout-constraint}
\right\}.
\end{align}
By
\Cref{lem:finite-bkl-parameter-count}
\ref{lem:finite-architecture-range-control},
the normalization constraints yield the sharper uniform range estimate
\begin{align}
\sup_{a\in\mathcal A_{L-1}^{m,G}}
\sup_{\b x\in\mX}
\left|
a\left(
\b x
\right)
\right|
\le
A_{\mX}^{\,2^{-(L-2)}}.
\label{eq:finite-lower-support-uniform-range}
\end{align}
By
\Cref{lem:finite-bkl-parameter-count}
\ref{lem:finite-architecture-parameter-count},
for
$L\ge3$,
every element of
$\mathcal A_{L-1}^{m,G}$
is represented by at most
\begin{align}
P_{L-1,m,G}
&:=
md
+
\left(
L-2
\right)
m
\left(
G+1
\right)
+
\left(
L-3
\right)
m^2
+
m
\label{eq:finite-bkl-explicit-parameter-count-main}
\end{align}
real parameters.
The four terms in
\eqref{eq:finite-bkl-explicit-parameter-count-main}
correspond, respectively, to the first-layer directions, the nodal
profile coefficients, the internal mixing matrices, and the final
readout coefficients.
For
$L=2$,
we set
\begin{align}
P_{1,m,G}
:=
d.
\end{align}

\paragraph{Associated outer Brownian dictionary.}

For every
$a\in\mathcal A_{L-1}^{m,G}$,
define
\begin{align}
k_a
\left(
\b x,
\b x'
\right)
:=
\kB
\left(
a\left(
\b x
\right),
a\left(
\b x'
\right)
\right),
\qquad
\b x,\b x'\in\mX.
\label{eq:finite-architecture-pullback-kernel}
\end{align}
The associated outer Brownian dictionary is
\begin{align}
\mathcal D_L^{m,G}
:=
\bigcup_{a\in\mathcal A_{L-1}^{m,G}}
\left\{
f\in\mH_{k_a}
:
\left\|
f
\right\|_{\mH_{k_a}}
\le
1
\right\}.
\label{eq:finite-architecture-outer-dictionary}
\end{align}
Finally, for
$R>0$,
define the corresponding finite-architecture Brownian variation ball by
\begin{align}
\mathcal W_R^{L;m,G}
:=
\left\{
F_\mu
:
F_\mu
\left(
\b x
\right)
=
\int_{\mathcal D_L^{m,G}}
f\left(
\b x
\right)
\,\d\mu\left(
f
\right),
\;
\mu\in
\mathcal M
\left(
\mathcal D_L^{m,G}
\right),
\;
\left\|
\mu
\right\|_{\mathrm{TV}}
\le
R
\right\}.
\label{eq:finite-architecture-variation-ball}
\end{align}
By
\eqref{eq:finite-lower-support-uniform-range}
and the reproducing property,
every
$f\in\mathcal D_L^{m,G}$
satisfies
$ \left| f\left( \b x \right) \right| \le A_{\mX}^{1/2}, \qquad \b x\in\mX. $
Consequently, the measure representation in
\eqref{eq:finite-architecture-variation-ball}
is well-defined pointwise and in
$L^2\left(\nu\right)$.

\begin{remark}\textnormal{(Relation to the infinite-dimensional VBKL dictionary)}\ 
\label{rem:finite-architecture-relation-to-vbkl}
The class
$\mathcal A_{L-1}^{m,G}$
is an explicit finite lower-support architecture.
Because
\eqref{eq:finite-lower-support-preactivation}
allows nontrivial linear mixing of lower-level coordinates and
\eqref{eq:finite-lower-support-output}
allows a linear readout,
one does not in general have $\mathcal A_{L-1}^{m,G}
\subseteq
\mU_{L-1}$. Accordingly,
$\mathcal W_R^{L;m,G}$
is treated as an associated finite-architecture Brownian variation
class and is not identified with the full infinite-dimensional ball
$\mV_R^{(L)}$. In the path-only specialization, in which every mixing row selects a
single preceding coordinate and the readout selects a single final
coordinate, every resulting support is a recursively composed
Brownian atom.
For that specialization, $\mathcal A_{L-1,\mathrm{path}}^{m,G}
\subseteq
\mU_{L-1}$,
$\mathcal D_{L,\mathrm{path}}^{m,G}
\subseteq
\mU_L$, and $\mathcal W_{R,\mathrm{path}}^{L;m,G}
\subseteq
\mV_R^{(L)}$.

\end{remark}

\section{Analytical and Statistical Properties of the VBKL spaces}
\label{sec:analytical-properties}

This section develops two complementary parts of the theory.
We first establish the fundamental analytical properties of the full
infinite-dimensional VBKL spaces, showing that variation complexity
controls regularity, pointwise evaluation, and expressive power across
recursive depth.
We then study the statistical complexity of the associated
finite-architecture Brownian variation classes introduced in
\Cref{subsec:finite-parametric-bkl}.
The resulting Rademacher bound explicitly separates the outer
variation radius from the parameter complexity of the finite
lower-support architecture.

\subsection{Analytical Properties of VBKL spaces}
\label{subsec:analytical-properties}

We first ask whether the outer variation complexity controls regularity and pointwise evaluation, and whether the recursive path construction produces a genuine depth hierarchy. The following theorem answers these questions for the full VBKL spaces.

\begin{theorem}\textnormal{(Analytical properties of the VBKL spaces)}
\label{thm:analytical-properties-variation-bkl}
Let $(\mX,\nu)$, $\Omega$, and the recursive dictionaries
$(\mU_l)_{l\ge1}$ be as introduced in \Cref{subsec:atomic-bkl}. Assume that
$\mX\subseteq\R^d$ is compact, and let $L\ge2$. Let $\VL$ and $\CLv$ denote
the depth-$L$ VBKL space and variation complexity defined in
\eqref{eq:variation-bkl-space} and \eqref{eq:variation-bkl-complexity}. Set
\begin{align}
\alpha_L
&:=
2^{-(L-1)},
&
R_{\mX}
&:=
\sup_{\b x\in\mX}\|\b x\|_2.
\end{align}
Then the following statements hold.

\begin{enumerate}[(i)]
\item
\label{thm:variation-holder-pointwise-bound}
\textnormal{H\"older representative and pointwise control.}
Every $F\in\VL$ admits a representative, still denoted by $F$, satisfying
\begin{align}
\left|F(\b x)-F(\b x')\right|
&\le
\|\b x-\b x'\|_2^{\,\alpha_L}\CLv(F),
\qquad
\b x,\b x'\in\mX,
\label{eq:variation-holder-bound}
\\
\left|F(\b x)\right|
&\le
\|\b x\|_2^{\,\alpha_L}\CLv(F),
\qquad
\b x\in\mX.
\label{eq:variation-pointwise-bound}
\end{align}

\item
\label{thm:continuous-embedding}
\textnormal{H\"older-space control.}
Every representative supplied by part~\ref{thm:variation-holder-pointwise-bound}
belongs to $C^{0,\alpha_L}(\mX)$ and satisfies
\begin{align}
\|F\|_{C^{0,\alpha_L}(\mX)}
\le
\left(R_{\mX}^{\,\alpha_L}+1\right)\CLv(F).
\label{eq:variation-holder-embedding-bound}
\end{align}
If $\operatorname{supp}(\nu)=\mX$, this representative is unique, and the
representative map is a continuous linear injection
\begin{align}
\VL
\hookrightarrow
C^{0,\alpha_L}(\mX).
\end{align}

\item
\label{thm:depth-monotonicity}
\textnormal{Depth monotonicity and strictness.}
The consecutive spaces satisfy
\begin{align}
\VL
\subseteq
\mV^{(L+1)}.
\label{eq:vbkl-depth-monotonicity}
\end{align}
Suppose, in addition, that there exist $\b x_0\in\mX$, $\rho>0$, constants
$0<c_1\le C_1<\infty$, a Lipschitz map
$\gamma:[0,\rho]\to\mX$, and a first-layer generator $u_1\in\mU_1$ such
that
\begin{align}
\gamma(0)
&=
\b x_0,
&
\gamma([0,\rho])
&\subseteq
\operatorname{supp}(\nu),
\label{eq:vbkl-strictness-trace-support-proof}
\\
u_1(\b x_0)
&=
0,
&
c_1t
&\le
u_1(\gamma(t))
\le
C_1t,
\qquad
t\in[0,\rho].
\label{eq:vbkl-local-nondegeneracy-assumption}
\end{align}
Then the inclusion in \eqref{eq:vbkl-depth-monotonicity} is strict:
\begin{align}
\VL
\subsetneq
\mV^{(L+1)}.
\label{eq:vbkl-strict-depth-monotonicity}
\end{align}
\end{enumerate}
\end{theorem}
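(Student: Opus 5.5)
The plan is to prove everything from one elementary atom-level estimate, proved by induction on the level $l$: every $u\in\mU_l$ satisfies
\begin{align}
|u(\b x)-u(\b x')|\le\|\b x-\b x'\|_2^{\,\alpha_l},
\qquad
|u(\b x)|\le\|\b x\|_2^{\,\alpha_l},
\qquad
\b x,\b x'\in\mX .
\end{align}
For $l=1$ this is Cauchy--Schwarz applied to $\bomega^\top\b x$ with $\|\bomega\|_2=1$. For the step, note that every $g$ in the unit ball of $\mH_{\kB}$ satisfies
\begin{align}
|g(s)-g(t)|\le\Big|\int_t^s g'\Big|\le|s-t|^{1/2},
\qquad
|g(s)|\le|s|^{1/2},
\end{align}
using $g(0)=0$. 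Composing with the level-$(l-1)$ estimate halves the exponent, giving $\alpha_l=2^{-(l-1)}$.

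\emph{Parts (i) and (ii).} By the atom estimate, $\mU_L$ is uniformly bounded and equicontinuous on the compact set $\mX$. Arzel\`a--Ascoli then makes it relatively compact, hence separable, in $C(\mX)$. For a representing measure $\mu$ I define $F_\mu(\b x):=\int u(\b x)\,\d\mu(u)$ pointwise. This is a Bochner integral in $C(\mX)$, and it coincides $\nu$-a.e.\ with the $L^2(\nu)$ Bochner integral, since the inclusion $C(\mX)\to L^2(\nu)$ is bounded and linear. Integrating the atom estimate gives the Hölder and pointwise bounds for $F_\mu$ with constant $\|\mu\|_{\mathrm{TV}}$.

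The representative must not depend on $\mu$, so I take a minimizing sequence $\mu_n$ with $\|\mu_n\|_{\mathrm{TV}}\to\CLv(F)$. The functions $F_{\mu_n}$ are equibounded and equicontinuous, so a subsequence converges uniformly to some $\tilde F$. Both bounds pass to the limit with constant $\CLv(F)$. Since every $F_{\mu_n}=F$ holds $\nu$-a.e., also $\tilde F=F$ holds $\nu$-a.e.

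For (ii), the sup-norm of $\tilde F$ is at most $R_{\mX}^{\alpha_L}\CLv(F)$ and its Hölder seminorm is at most $\CLv(F)$, which gives the stated embedding bound. If $\operatorname{supp}(\nu)=\mX$, two continuous functions that agree $\nu$-a.e.\ agree on the support, hence everywhere, so the representative is unique. Uniqueness makes the representative map linear, the bound makes it continuous, and it is injective because a representative that vanishes identically gives $F=0$ in $L^2(\nu)$.

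\emph{Monotonicity.} Let $B:=\sup_{u\in\mU_L}\|u\|_\infty\le R_{\mX}^{\alpha_L}$; the case $B=0$ is trivial. Take the clipped linear profile $g(t)=\lambda\max(-B,\min(t,B))$ with $\lambda=(2B)^{-1/2}$. Then $\|g\|_{\mH_{\kB}}^2=2B\lambda^2=1$, so $g\circ u\in\mU_{L+1}$ for every $u\in\mU_L$, and $u=\lambda^{-1}(g\circ u)$ on $\mX$. The map $\Phi:u\mapsto g\circ u$ is continuous from $\mU_L$ to $\mU_{L+1}$ (it is $\lambda$-Lipschitz in sup-norm), hence measurable. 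So $F=\int u\,\d\mu$ equals $\int v\,\d(\lambda^{-1}\Phi_\#\mu)(v)$, which places $F$ in $\mV^{(L+1)}$.

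\emph{Strictness (main obstacle).} The idea is to build a single atom of $\mU_{L+1}$ that decays more slowly than $t^{\alpha_L}$ along $\gamma$. I would choose $\beta\in(1/2,\,2^{-1+1/L})$; this interval is nonempty, and the choice gives $\beta^L<2^{1-L}=\alpha_L$. Each profile is $g(t)=c\min(|t|,1)^\beta$ for $t\ge0$, extended by $0$ for $t<0$. Because $\beta>1/2$, its derivative lies in $L^2$, and a small $c>0$ makes the norm at most $1$. Composing $L$ such profiles with $u_1$ gives $h\in\mU_{L+1}\subseteq\mV^{(L+1)}$. Using the non-degeneracy assumption, $h(\gamma(t))\ge c'\,t^{\beta^L}$ for small $t>0$, while $h(\b x_0)=0$.

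Suppose, for contradiction, that $h\in\VL$. Its continuous representative from (i) agrees with the continuous function $h$ $\nu$-a.e., hence on $\operatorname{supp}(\nu)\supseteq\gamma([0,\rho])$, because every neighbourhood of a support point has positive $\nu$-mass. Part (i) then yields $h(\gamma(t))\le(\mathrm{Lip}(\gamma)\,t)^{\alpha_L}\CLv(h)$. This contradicts $c'\,t^{\beta^L}$ as $t\to0$, since $\beta^L<\alpha_L$.

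The delicate points are the admissible range for $\beta$ and tracking the positivity and monotonicity of the composed profiles near $0$. I would handle the latter by keeping all arguments in $[0,1]$ through a small constant $c$.
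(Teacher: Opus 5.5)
Your proposal is correct and follows essentially the same route as the paper. The shared steps are: the level-by-level $\tfrac12$-H\"older atom bound; a near-minimizing sequence of $C(\mX)$-valued barycentres with Arzel\`a--Ascoli for parts (i)--(ii); a push-forward of the representing measure under a normalized profile that is linear on the common atom range, for monotonicity; and, for strictness, an $L$-fold composition of a fractional power with exponent in $(1/2,2^{-1+1/L})$ that contradicts the $\alpha_L$-H\"older bound on $\operatorname{supp}(\nu)$. The only differences are in the choice of profiles: your clipped linear $\lambda\max(-B,\min(t,B))$ replaces the paper's smooth cutoff $t\chi_L(t)/c_L$, and your $c\min(t,1)^{\beta}$ on $t\ge0$ replaces $\eta(t)(t_+)^{a}/\kappa_a$, which gives you explicit norms and keeps every argument in $[0,1]$ without the paper's inductive localization radii.
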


The theorem supplies the regularity and depth properties used by the
statistical and approximation results below.

\begin{remark}
\label{rem:analytical-properties}
Theorem~\ref{thm:analytical-properties-variation-bkl} gives quantitative
H\"older representatives for all VBKL elements, in the spirit of classical RKHS
regularity estimates \citep{Aronszajn1950}. Full support of $\nu$ makes
that representative unique and hence yields the stated embedding into, not
equality with, the H\"older space. The strict-depth conclusion is a separate
claim: it requires both the local non-degeneracy condition and that the
corresponding trace lie in $\operatorname{supp}(\nu)$.
\end{remark}

\subsection{Statistical Properties of Finite-Architecture Brownian Variation Classes}
\label{subsec:statistical-properties}

The analytical results above concern the full infinite-dimensional
VBKL space.
The statistical analysis in this subsection instead concerns the
associated finite-architecture class
$\mathcal W_R^{L;m,G}$
defined in
\eqref{eq:finite-architecture-variation-ball}.
Its lower-level supports belong to the finite-parametric architecture
$\mathcal A_{L-1}^{m,G}$,
while its outer atoms range over the corresponding union of Brownian
pullback RKHS unit balls
$\mathcal D_L^{m,G}$. The proof proceeds by reducing the variation class to its outer
Brownian dictionary, controlling the resulting union of RKHS unit
balls through Brownian quadratic chaos, and estimating that chaos by
the signed threshold entropy of the finite lower-support architecture.
All supporting arguments and the complete proof are deferred to the
appendix.

\subsubsection{Statistical Learning Framework}
\label{subsec:statistical-learning-framework}

Let $(X,Y)\sim\nu_{XY}$ be a random input--output pair on
$\mX\times\mY$, and let
\begin{align}
\mathcal D_n
:=
\left\{
(\b x_i,y_i)
\right\}_{i=1}^{n}
\end{align}
be an independent sample drawn from $\nu_{XY}$. We denote the input marginal
by $\nu_X$ and identify it with the measure $\nu$ introduced in
\Cref{sec:notations}. For a measurable predictor
$F:\mX\rightarrow\mathbb R$ and a loss
$\ell:\mathbb R\times\mY\rightarrow\mathbb R$, define
\begin{align}
\mathcal R(F)
&:=
\mathbb E_{(X,Y)\sim\nu_{XY}}
\left[
\ell(F(X),Y)
\right],
&
\mathcal R_n(F)
&:=
\frac1n
\sum_{i=1}^{n}
\ell(F(\b x_i),y_i).
\end{align}
We work with bounded variation-complexity classes, which play the role of
norm balls in classical RKHS learning.

\paragraph{Finite-architecture hypothesis classes.}

Fix
$L\ge2$,
$m\ge1$,
$G\ge2$,
and
$R>0$.
The hypothesis class considered in this subsection is the
finite-architecture Brownian variation ball
$\mathcal W_R^{L;m,G}$
defined in
\eqref{eq:finite-architecture-variation-ball}.
The radius
$R$
controls the total variation of the outer signed-measure
representation, whereas
$P_{L-1,m,G}$
controls the number of real parameters in the finite lower-support
architecture.

\paragraph{Empirical and expected Rademacher complexities.}

Let
$\mathcal F$
be a class of real-valued functions on
$\mX$,
and let
$\varepsilon_1,\ldots,\varepsilon_n$
be independent Rademacher random variables.
For fixed sample points
$\b x_1,\ldots,\b x_n\in\mX$,
define
\begin{align}
\widehat{\mathfrak R}_n
\left(
\mathcal F
\right)
:=
\mathbb E_{\varepsilon}
\left[
\sup_{f\in\mathcal F}
\frac{1}{n}
\sum_{i=1}^{n}
\varepsilon_i
f\left(
\b x_i
\right)
\right].
\end{align}
Averaging over independent sample points
$X_1,\ldots,X_n\sim\nu$
gives the expected Rademacher complexity
\begin{align}
\mathfrak R_n
\left(
\mathcal F
\right)
:=
\mathbb E_{\b X}
\left[
\widehat{\mathfrak R}_n
\left(
\mathcal F
\right)
\right].
\label{eq:expected-rademacher-complexity}
\end{align}

\begin{theorem}
\textnormal{
(Architecture-dependent Rademacher bound for finite Brownian variation
classes)
}
\label{thm:variation-bkl-main-generalization}
Let
$L,m,G,n\in\mathbb N$
satisfy
$L\ge2$,
$m\ge1$,
$G\ge2$,
and
$n\ge1$,
and let
$R>0$.
Let
$\mathcal A_{L-1}^{m,G}$,
$\mathcal D_L^{m,G}$,
and
$\mathcal W_R^{L;m,G}$
denote, respectively, the finite lower-support class, its associated
outer Brownian dictionary, and the radius-$R$ finite-architecture
Brownian variation ball introduced in
\Cref{subsec:finite-parametric-bkl}. Let
$P_{L-1,m,G}$
denote the parameter-count bound associated with
$\mathcal A_{L-1}^{m,G}$,
as specified in
\Cref{subsec:finite-parametric-bkl}
and established in
\Cref{lem:finite-bkl-parameter-count},
part~\ref{lem:finite-architecture-parameter-count}.
For notational convenience, set
\begin{align}
P
&:=
P_{L-1,m,G},
\label{eq:main-statistical-parameter-count}
\\
\Gamma_{L-1,m,G,n}
&:=
1
+
\left(
P+1
\right)
\ln\left(
e\left(
P+1
\right)
\right)
\ln\left(
en
\right).
\label{eq:finite-architecture-complexity-factor}
\end{align}
For fixed sample points
$\b x_1,\ldots,\b x_n\in\mX$,
define the sample-dependent lower-support envelope
\begin{align}
B_{\b x}
:=
\sup_{a\in\mathcal A_{L-1}^{m,G}}
\max_{i\in\left[n\right]}
\left|
a\left(
\b x_i
\right)
\right|.
\label{eq:main-finite-support-envelope}
\end{align}
Then there exists a constant
$C_L>0$,
depending only on the recursion depth
$L$,
such that
\begin{align}
\widehat{\mathfrak R}_n
\left(
\mathcal W_R^{L;m,G}
\right)
\le
R
B_{\b x}^{1/2}
\min
\left\{
1,
\;
C_L
\left(
\frac{
\Gamma_{L-1,m,G,n}
}{
n
}
\right)^{1/2}
\right\}.
\label{eq:main-empirical-rademacher-bound}
\end{align}
Moreover, the deterministic range estimate
\eqref{eq:finite-lower-support-uniform-range},
established in
\Cref{lem:finite-bkl-parameter-count},
part~\ref{lem:finite-architecture-range-control},
gives
\begin{align}
B_{\b x}
\le
A_{\mX}^{\,2^{-(L-2)}}.
\label{eq:main-finite-support-deterministic-envelope}
\end{align}
Consequently, if
$X_1,\ldots,X_n$
are independent random variables with common distribution
$\nu$,
then the expected Rademacher complexity satisfies
\begin{align}
\mathfrak R_n
\left(
\mathcal W_R^{L;m,G}
\right)
\le
R
A_{\mX}^{\,2^{-(L-1)}}
\min
\left\{
1,
\;
C_L
\left(
\frac{
\Gamma_{L-1,m,G,n}
}{
n
}
\right)^{1/2}
\right\}.
\label{eq:main-rademacher-bound}
\end{align}

\end{theorem}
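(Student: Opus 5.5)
The plan is to reduce the variation ball to its outer dictionary, write the dictionary supremum as a Brownian quadratic chaos, linearize that chaos through the threshold representation of $\kB$, and bound the resulting finite family of sign vectors combinatorially.

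\emph{Reduction to the dictionary.} Fix $\b x_1,\ldots,\b x_n$ and the signs $\varepsilon$. For $F_\mu\in\mathcal W_R^{L;m,G}$ we have
\begin{align}
\sum_i\varepsilon_iF_\mu(\b x_i)=\int\sum_i\varepsilon_if(\b x_i)\,\d\mu(f)\le R\sup_{f\in\mathcal D_L^{m,G}}\Bigl|\sum_i\varepsilon_if(\b x_i)\Bigr|.
\end{align}
For a fixed support $a$, the reproducing property in $\mH_{k_a}$ identifies the supremum over the unit ball exactly:
\begin{align}
\sup_{\|f\|_{\mH_{k_a}}\le1}\Bigl|\sum_i\varepsilon_if(\b x_i)\Bigr|=\bigl(\varepsilon^\top\b K_a\varepsilon\bigr)^{1/2},
\qquad
(\b K_a)_{ij}=\kB\bigl(a(\b x_i),a(\b x_j)\bigr).
\end{align}
Hence
\begin{align}
\widehat{\mathfrak R}_n\le\frac Rn\,\mathbb E_\varepsilon\sup_a\bigl(\varepsilon^\top\b K_a\varepsilon\bigr)^{1/2}.
\end{align}
The branch $1$ of the minimum is the trivial bound. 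Every $f$ in the dictionary satisfies $|f(\b x_i)|\le\kB(a(\b x_i),a(\b x_i))^{1/2}=|a(\b x_i)|^{1/2}\le B_{\b x}^{1/2}$, so each sum is at most $nB_{\b x}^{1/2}$.

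\emph{Quadratic chaos via threshold traces.} For the nontrivial branch, first apply Jensen's inequality: $\mathbb E\sup(\cdot)^{1/2}\le(\mathbb E\sup\varepsilon^\top\b K_a\varepsilon)^{1/2}$. Next use the identity
\begin{align}
\kB(s,t)=\int_0^\infty\mathds 1\{s>\tau\}\mathds 1\{t>\tau\}\,\d\tau+\int_0^\infty\mathds 1\{s<-\tau\}\mathds 1\{t<-\tau\}\,\d\tau .
\end{align}
Since $|a(\b x_i)|\le B_{\b x}$, the integrand vanishes for $\tau>B_{\b x}$. This gives
\begin{align}
\varepsilon^\top\b K_a\varepsilon=\int_0^{B_{\b x}}\Bigl(\sum_i\varepsilon_i\mathds 1\{a(\b x_i)>\tau\}\Bigr)^2\d\tau+\int_0^{B_{\b x}}\Bigl(\sum_i\varepsilon_i\mathds 1\{a(\b x_i)<-\tau\}\Bigr)^2\d\tau .
\end{align}
Bounding each integrand by its supremum, we get
\begin{align}
\sup_a\varepsilon^\top\b K_a\varepsilon\le2B_{\b x}\max_{v\in V}\Bigl(\sum_i\varepsilon_iv_i\Bigr)^2,
\end{align}
where $V\subseteq\{0,1\}^n$ is the set of signed threshold traces $(\mathds 1\{\pm a(\b x_i)>\tau\})_i$ over $a\in\mathcal A_{L-1}^{m,G}$ and $\tau\in\R$. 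A Massart or Hoeffding maximal inequality for squared sub-Gaussian sums gives $\mathbb E\max_{v\in V}(\sum_i\varepsilon_iv_i)^2\le Cn(1+\ln|V|)$. The proof is therefore complete once we show
\begin{align}
\ln|V|\le C_L'\,(P+1)\ln\bigl(e(P+1)\bigr)\ln(en).
\end{align}

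\emph{Counting threshold traces (main obstacle).} I expect this combinatorial step to be the hardest. The plan is to treat $(\btheta,\tau)$ as $P+1$ real parameters, taking the nodal values $\b c_q$ as the profile parameters. For fixed $\b x_i$, the map $(\btheta,\tau)\mapsto a_\btheta(\b x_i)-\tau$ is piecewise polynomial. On each cell where every preactivation lies in a fixed grid interval $[t_j,t_{j+1}]$ (or outside $[-A_{\mX},A_{\mX}]$), each profile evaluation is bilinear in its nodal coefficients and its input. Consequently the degree in the parameters grows by at most a fixed amount per layer, bounded in terms of $L$ alone. Cell boundaries are themselves zero sets of such polynomials, one for each comparison of a preactivation with a grid node, layer by layer. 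I would then follow the Anthony–Bartlett/Warren layer-by-layer refinement. Each of the $L-2$ layers refines the current parameter partition using at most $n\,m\,(G+1)$ polynomial sign conditions of bounded degree, and Warren's bound counts the resulting cells. Tracking the product of the layerwise counts should yield $\ln|V|\lesssim_L(P+1)\ln(e(P+1))\ln(en)$. One point requires care: $G$ enters through the number of sign conditions, so I must check that $\ln(mG)$ is absorbed, since the nodal parameters already make $P\ge m(G+1)$ when $L\ge3$. The case $L=2$ ($P=d$) is the classical halfspace VC bound.

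\emph{Assembly.} Combining the three steps gives
\begin{align}
\widehat{\mathfrak R}_n\le\frac Rn\bigl(2B_{\b x}Cn\,\Gamma\bigr)^{1/2}=RB_{\b x}^{1/2}C_L(\Gamma/n)^{1/2},
\end{align}
and taking the minimum with the trivial bound yields the empirical estimate. \Cref{lem:finite-bkl-parameter-count} gives $B_{\b x}\le A_{\mX}^{2^{-(L-2)}}$ deterministically. Since $B_{\b x}^{1/2}\le A_{\mX}^{2^{-(L-1)}}$ is then uniform in the sample, the expected bound follows by taking the expectation over $X_1,\ldots,X_n$.
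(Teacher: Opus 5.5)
Your proposal is correct and follows the paper's proof essentially step for step. The shared steps are: reduction of $\mathcal W_R^{L;m,G}$ to its outer dictionary, the RKHS dual-norm identity producing $(\varepsilon^\top K_a\varepsilon)^{1/2}$, Jensen, the signed threshold representation of $\kB$, a sub-Gaussian maximal inequality over the finite trace family, and layerwise Warren/Goldberg--Jerrum sign-pattern counting with $m(G+1)\le P$ absorbing the grid size.

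Your two shortcuts are valid. Applying the threshold identity to the whole quadratic form, rather than splitting off the diagonal and rewriting the off-diagonal chaos as $Q_T=(S_T^2-|T|)/(2n)$, reaches the same quantity $\max_T S_T^2$. Evaluating the layerwise sign-pattern bound directly at the $n$ sample points, rather than inverting it into a VC bound and then applying Sauer--Shelah, gives $\ln|V|\le C_L'(P+1)\bigl(\ln(en)+\ln(e(P+1))\bigr)$, which is slightly sharper than the product of logarithms in $\Gamma_{L-1,m,G,n}$.
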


The preceding result separates three sources of statistical
complexity.
The factor
$R$
is the radius of the outer variation representation,
the exponent
$A_{\mX}^{\,2^{-(L-1)}}$
is induced by the recursive Brownian geometry, and
$P_{L-1,m,G}$
is the parameter complexity of the finite lower-support architecture.
The additional real parameter in
$P_{L-1,m,G}+1$
corresponds to the variable threshold used in the threshold-class
analysis.

\begin{remark}
\label{rem:finite-architecture-statistical-scope}
The bound in
\Cref{thm:variation-bkl-main-generalization}
applies to
$\mathcal W_R^{L;m,G}$,
not to the full infinite-dimensional variation ball
$\mV_R^{(L)}$.
This distinction is necessary because the general finite architecture
allows intermediate linear mixing and a final linear readout, and
therefore need not be contained in the path-atomic dictionary
$\mU_{L-1}$.
For the path-only specialization described in
\Cref{rem:finite-architecture-relation-to-vbkl},
one has
$\mathcal W_{R,\mathrm{path}}^{L;m,G} \subseteq \mV_R^{(L)}$,
so the same bound applies to that restricted VBKL subclass.
\end{remark}

\begin{corollary}\textnormal{(Generalization guarantee for finite Brownian variation classes)}\ 
\label{cor:vbkl-generalization}
Assume the architecture, parameter, and sample-size conditions of
\Cref{thm:variation-bkl-main-generalization}.
Let
$
\mathcal D_n
:=
\left(
\left(
X_i,
Y_i
\right)
\right)_{i=1}^{n}
$
be an independent sample drawn from
$\nu_{XY}$,
and let
$\mathcal R$
and
$\mathcal R_n$
denote, respectively, the population and empirical risks introduced in
\Cref{subsec:statistical-learning-framework}.
Suppose that
$
\ell
:
\mathbb R
\times
\mY
\longrightarrow
\left[
0,1
\right]
$
is
$L_\ell$-Lipschitz
in its first argument.
Write
$
\b X
:=
\left(
X_1,\ldots,X_n
\right),
$
and define the random lower-support envelope
\begin{align}
B_{\b X}
:=
\sup_{a\in\mathcal A_{L-1}^{m,G}}
\max_{i\in\left[
n
\right]}
\left|
a\left(
X_i
\right)
\right|.
\label{eq:generalization-random-support-envelope}
\end{align}
Let
$C_L$
and
$\Gamma_{L-1,m,G,n}$
be the quantities appearing in
\Cref{thm:variation-bkl-main-generalization}. Then, for every
$\delta\in\left(
0,1
\right)$,
with probability at least
$1-\delta$
over the draw of
$\mathcal D_n$,
every
$F\in\mathcal W_R^{L;m,G}$
satisfies
\begin{align}
\mathcal R\left(
F
\right)
&\le
\mathcal R_n\left(
F
\right)
+
2L_\ell
R
B_{\b X}^{1/2}
\min
\left\{
1,
\;
C_L
\left(
\frac{
\Gamma_{L-1,m,G,n}
}{
n
}
\right)^{1/2}
\right\}
+
3
\left(
\frac{
\ln\left(
2/\delta
\right)
}{
2n
}
\right)^{1/2}
\notag\\
&\le
\mathcal R_n\left(
F
\right)
+
2L_\ell
R
A_{\mX}^{\,2^{-(L-1)}}
\min
\left\{
1,
\;
C_L
\left(
\frac{
\Gamma_{L-1,m,G,n}
}{
n
}
\right)^{1/2}
\right\}
+
3
\left(
\frac{
\ln\left(
2/\delta
\right)
}{
2n
}
\right)^{1/2}.
\end{align}
\end{corollary}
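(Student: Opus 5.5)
The plan is to run the standard symmetrization route for bounded Lipschitz losses and feed it the Rademacher bound of \Cref{thm:variation-bkl-main-generalization}.

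\textbf{Step 1: concentration.} Consider the loss class
\begin{align}
\ell\circ\mathcal W_R^{L;m,G}:=\left\{(\b x,y)\mapsto\ell(F(\b x),y):F\in\mathcal W_R^{L;m,G}\right\}.
\end{align}
It takes values in $[0,1]$, so the classical uniform deviation bound (McDiarmid plus symmetrization) applies. I would use the version stated through the \emph{empirical} Rademacher complexity on the sample $\mathcal D_n$. In that version McDiarmid is applied twice, once to $\sup_F(\mathcal R-\mathcal R_n)$ and once to $\widehat{\mathfrak R}_n$, each at level $\delta/2$. This gives, with probability at least $1-\delta$ and simultaneously for all $F$,
\begin{align}
\mathcal R(F)\le\mathcal R_n(F)+2\widehat{\mathfrak R}_{\mathcal D_n}\bigl(\ell\circ\mathcal W_R^{L;m,G}\bigr)+3\left(\frac{\ln(2/\delta)}{2n}\right)^{1/2}.
\end{align}
This accounts exactly for the constant $3$ and the $\ln(2/\delta)$ term in the statement. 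Measurability of the suprema should follow from the compact finite-parametric structure of $\mathcal A_{L-1}^{m,G}$. If needed, one can restrict to a countable dense subclass, since every $f\in\mathcal D_L^{m,G}$ is uniformly bounded by $A_{\mX}^{1/2}$.

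\textbf{Step 2: contraction.} For each fixed $y_i$, the map $t\mapsto\ell(t,y_i)$ is $L_\ell$-Lipschitz. The Ledoux--Talagrand contraction lemma, in its form with coordinate-dependent Lipschitz maps, then gives
\begin{align}
\widehat{\mathfrak R}_{\mathcal D_n}\bigl(\ell\circ\mathcal W_R^{L;m,G}\bigr)\le L_\ell\,\widehat{\mathfrak R}_n\bigl(\mathcal W_R^{L;m,G}\bigr),
\end{align}
where the right-hand side is evaluated at the inputs $X_1,\ldots,X_n$. This step conditions on the sample.

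\textbf{Step 3: applying the theorem.} Apply \eqref{eq:main-empirical-rademacher-bound} with $\b x=\b X$, so that $B_{\b x}=B_{\b X}$. This yields the first displayed inequality. The second inequality follows from the deterministic envelope estimate \eqref{eq:main-finite-support-deterministic-envelope}, $B_{\b X}\le A_{\mX}^{2^{-(L-2)}}$. Taking square roots gives $B_{\b X}^{1/2}\le A_{\mX}^{2^{-(L-1)}}$.

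\textbf{Main obstacle.} The delicate point is using the \emph{empirical} complexity rather than the expected one, because the bound must contain the random envelope $B_{\b X}$. This requires the two-sided McDiarmid argument, and one must check that $\widehat{\mathfrak R}_n$ of the loss class has bounded differences $1/n$; this holds because $\ell\in[0,1]$. If instead the expected-complexity version were used, only the second, deterministic inequality would follow, with constant $\ln(1/\delta)$ in place of $3\ln(2/\delta)$. The first inequality genuinely needs the empirical version, so I would proceed that way.
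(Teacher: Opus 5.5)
Your proposal is correct and follows the route the statement's constants encode; the paper does not write this proof out separately. You use the empirical-Rademacher uniform deviation bound for $[0,1]$-valued losses with McDiarmid applied twice at level $\delta/2$ (giving $3\left(\ln(2/\delta)/(2n)\right)^{1/2}$), then coordinatewise Lipschitz contraction to $L_\ell\,\widehat{\mathfrak R}_n\bigl(\mathcal W_R^{L;m,G}\bigr)$ at $X_1,\ldots,X_n$, then the empirical bound of Theorem~\ref{thm:variation-bkl-main-generalization} with $\b x=\b X$ and $B_{\b X}^{1/2}\le A_{\mX}^{2^{-(L-1)}}$, and you are right that the first inequality needs the empirical rather than the expected complexity because of the random envelope $B_{\b X}$.
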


\section{Constructive Approximation of VBKL spaces}\label{sec:constructive-approximation}

This section returns to the full infinite-dimensional VBKL space. Starting
from an arbitrary measure-generated element, the first stage replaces the
outer signed measure by a finite atomic measure while leaving the selected
atoms unchanged. The second stage discretizes only the outermost Brownian
profile of each selected atom; its lower-level support remains in
$\mU_{L-1}$. This separation yields distinct atom-discretization and
outer-profile-interpolation errors.

\subsection{Finite Approximation by Recursive Atomic Representations}
\label{subsec:shared-ladder-approximation}

The first stage discretizes only the outer variation representation: the
recursive dictionary and the selected atoms remain unchanged. Its error is
therefore independent of profile interpolation.

\begin{theorem}\textnormal{(Finite-atomic approximation of the full VBKL space)}\ 
\label{thm:path-bkl-approx}
Assume that
$\mX\subseteq\mathbb R^d$
is compact, that
$\nu$
is a Borel probability measure on
$\mX$,
and that
$\Omega\subseteq\mathbb S^{d-1}$
is compact.
Let
$L\ge2$,
and let
$\VL$
and
$\CLv$
denote the depth-$L$ VBKL space and its variation complexity defined in
\eqref{eq:variation-bkl-space}
and
\eqref{eq:variation-bkl-complexity}. For every
$F\in\VL$
and every
$m\in\mathbb N$,
$m\ge1$,
there exists
$
F_m
\in
\mV_m^{(L)}
$,
where
$\mV_m^{(L)}$
is the finite-support VBKL class introduced in
\Cref{subsec:atomic-bkl},
such that
\begin{align}
\left\|
F-F_m
\right\|_{L^2\left(\nu\right)}
\le
\left(
\int_{\mX}
\left\|
\b x
\right\|_2^{\,2^{-(L-2)}}
\,\d\nu\left(
\b x
\right)
\right)^{1/2}
\CLv\left(
F
\right)
m^{-1/2}.
\label{eq:finite-recursive-approximation-bound}
\end{align}
If
$\CLv\left(
F
\right)=0$,
then
$F=0$
in
$L^2\left(\nu\right)$,
and one may take
$F_m=0$.
If
$\CLv\left(
F
\right)>0$,
then there exist atoms
$
u_1,\ldots,u_m
\in
\mU_L
$
and signs
$
\sigma_1,\ldots,\sigma_m
\in
\left\{
-1,1
\right\}
$
such that
\begin{align}
F_m
=
\frac{
\CLv\left(
F
\right)
}{
m
}
\sum_{i=1}^{m}
\sigma_i
u_i.
\label{eq:finite-recursive-equal-weight-representation}
\end{align}
\end{theorem}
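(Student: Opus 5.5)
This is a Maurey-type probabilistic sampling argument. The case $\CLv(F)=0$ follows from the non-degeneracy estimate $\|F\|_{L^2(\nu)}\le R_L\CLv(F)$ of \Cref{cor:path-gauge-nondegenerate}, so assume $C:=\CLv(F)>0$.

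\textbf{Step 1: attain the complexity.} Since $\mX$ and $\Omega$ are compact, \Cref{prop:variation-gauge-attainment} gives a representing measure $\mu\in\mathcal M(\mU_L)$ with $F=\int_{\mU_L}u\,\d\mu(u)$ and $\|\mu\|_{\mathrm{TV}}=C$. Should attainment be unavailable, I would take $\|\mu\|_{\mathrm{TV}}\le C+\eta$ and let $\eta\downarrow0$ at the end. Because $\mV_m^{(L)}$ is not closed a priori, this limit would require a compactness argument on the $m$-tuples of atoms in the compact dictionary; this is why I prefer attainment.

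\textbf{Step 2: random signed atoms.} Write $\d\mu=\sigma(u)\,\d|\mu|(u)$ with $\sigma(u)\in\{-1,1\}$ (Hahn decomposition), and let $\pi:=|\mu|/C$, a probability measure. Then
\[
F=\mathbb E_{u\sim\pi}\left[C\sigma(u)u\right]
\]
as a Bochner integral in $L^2(\nu)$. Draw $u_1,\ldots,u_m$ i.i.d.\ from $\pi$ and set $\widehat F_m:=\frac{C}{m}\sum_i\sigma(u_i)u_i$. Independence and the Hilbert-space variance identity give
\[
\mathbb E\|F-\widehat F_m\|_{L^2(\nu)}^2=\frac1m\left(\mathbb E\|C\sigma u\|^2-\|F\|^2\right)\le\frac{C^2}{m}\,\mathbb E_{u\sim\pi}\|u\|_{L^2(\nu)}^2.
\]

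\textbf{Step 3: envelope bound on atoms.} I need $|u(\b x)|^2\le\|\b x\|_2^{2^{-(L-2)}}$ for every $u\in\mU_L$. For $g\in\mH_{\kB}$ with $\|g\|\le1$, the reproducing property gives $|g(t)|\le\kB(t,t)^{1/2}=|t|^{1/2}$. Inducting from $|\bomega^\top\b x|\le\|\b x\|_2$, an atom in $\mU_l$ satisfies $|u(\b x)|\le\|\b x\|_2^{2^{-(l-1)}}$. Squaring at $l=L$ gives exponent $2\cdot2^{-(L-1)}=2^{-(L-2)}$. Hence
\[
\mathbb E_\pi\|u\|_{L^2(\nu)}^2\le\int_{\mX}\|\b x\|_2^{2^{-(L-2)}}\,\d\nu(\b x).
\]
Since the expected squared error is at most $C^2 m^{-1}\int_{\mX}\|\b x\|_2^{2^{-(L-2)}}\,\d\nu(\b x)$, some realization achieves it. That realization has the equal-weight form \eqref{eq:finite-recursive-equal-weight-representation} with $\sigma_i=\sigma(u_i)$, and it lies in $\mV_m^{(L)}$ because its representing measure is supported on at most $m$ atoms.

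\textbf{Main obstacle.} The main technical point is measurability: $\mU_L$ must be a separable metric space (it is compact in $C(\mX)$), so that $u\mapsto u$ is Bochner measurable and the i.i.d.\ variance computation in $L^2(\nu)$ is justified. Compactness of $\mU_L$ from \Cref{thm:analytical-properties-variation-bkl}'s setting and \Cref{prop:variation-gauge-attainment} supplies this.
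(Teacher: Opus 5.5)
Your proposal is correct and follows the paper's proof essentially step for step: an attaining measure from \Cref{prop:variation-gauge-attainment}, the sign/total-variation decomposition yielding the probability measure $|\mu|/\CLv(F)$, i.i.d.\ sampling with the Hilbert-space variance identity, the atomic envelope of \Cref{lem:path-atom-bound}, and selection of a realization at most as bad as the mean. The differences are minor. You treat $\CLv(F)=0$ through \Cref{cor:path-gauge-nondegenerate}, whereas the paper observes that the attaining measure vanishes; and the compactness hypothesis that the attainment lemma requires is supplied by \Cref{lem:atomic-bkl-dictionary-compact}, not by the attainment lemma itself as your last paragraph suggests.
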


Theorem~\ref{thm:path-bkl-approx} controls the finite-atomic stage. The next
result additionally interpolates the selected atoms' outer Brownian profiles.
It does not discretize their lower-level supports, which remain elements of
$\mU_{L-1}$.

\begin{theorem}\textnormal{(Two-stage finite-atomic and outer-profile approximation of the VBKL space)}
\label{thm:two-stage-path-profile-bkl}
Let
$\mX\subseteq\mathbb R^d$
be compact, let
$\nu$
be a Borel probability measure on
$\mX$,
and let
$\Omega\subseteq\mathbb S^{d-1}$
be compact.
Let
$L\ge2$,
let
$\left(
\mU_l
\right)_{l=1}^{L}$
be the recursive atomic dictionaries introduced in
\Cref{subsec:atomic-bkl},
and let
$\VL$
and
$\CLv$
denote, respectively, the depth-$L$ VBKL space and its variation complexity defined in
\eqref{eq:variation-bkl-space}
and
\eqref{eq:variation-bkl-complexity}. For
$F\in\VL$,
define
$
R_{\mX}
:=
\sup_{\b x\in\mX}
\left\|
\b x
\right\|_2.
$
Set
\begin{align}
A_0
:=
R_{\mX}^{\,2^{-(L-2)}}
\label{eq:A0}
\end{align}
and
\begin{align}
R_L
:=
\left(
\int_{\mX}
\left\|
\b x
\right\|_2^{\,2^{-(L-2)}}
\,\d\nu\left(
\b x
\right)
\right)^{1/2}.
\label{eq:RL}
\end{align}
Let
$M,m\in\mathbb N$.
If
$\CLv\left(
F
\right)=0$,
then
$F=0$
in
$L^2\left(
\nu
\right)$,
and one may take
$F_{M,m}:=0$. Suppose now that
$\CLv\left(
F
\right)>0$.
Then
$R_{\mX}>0$
and
$A_0>0$.
Let
$
-A_0=t_0<t_1<\cdots<t_m=A_0
$
be the uniform grid on
$\left[
-A_0,A_0
\right]$,
and let
$\psi_0,\ldots,\psi_m$
denote the associated continuous piecewise-linear hat functions.
For every function
$g\in\mH_{\kB}$,
let
$\Pi_mg$
denote its continuous piecewise-linear interpolant on this grid, that is,
\begin{align}
\left(
\Pi_mg
\right)
\left(
t
\right)
:=
\sum_{k=0}^{m}
g\left(
t_k
\right)
\psi_k\left(
t
\right),
\qquad
t\in
\left[
-A_0,A_0
\right].
\end{align}
Then there exist atoms
$
u_1,\ldots,u_M
\in
\mU_L
$
and signs
$
\sigma_1,\ldots,\sigma_M
\in
\left\{
-1,1
\right\}
$
such that, for every
$j\in\left[
M
\right]$,
there exist
$
a_{u_j}\in\mU_{L-1}
$
and
$
g_{u_j}\in\mH_{\kB}
$
satisfying
\begin{align}
u_j\left(
\b x
\right)
&=
g_{u_j}\left(
a_{u_j}\left(
\b x
\right)
\right),
\qquad
\b x\in\mX,
\\
\left\|
g_{u_j}
\right\|_{\mH_{\kB}}
&\le
1,
\\
a_{u_j}\left(
\mX
\right)
&\subseteq
\left[
-A_0,A_0
\right].
\end{align}
The preceding outer-profile representation follows from
\Cref{lem:vbkl-union-rkhs},
while the range inclusion follows from
\Cref{lem:lower-level-atomic-range-bound}. Define
\begin{align}
F_{M,m}\left(
\b x
\right)
:=
\frac{
\CLv\left(
F
\right)
}{
M
}
\sum_{j=1}^{M}
\sigma_j
\left(
\Pi_mg_{u_j}
\right)
\left(
a_{u_j}\left(
\b x
\right)
\right),
\qquad
\b x\in\mX.
\label{eq:two-stage-approximant}
\end{align}
Then
\begin{align}
\left\|
F
-
F_{M,m}
\right\|_{L^2\left(
\nu
\right)}
\le
\CLv\left(
F
\right)
\left[
R_L
M^{-1/2}
+
\left(
\frac{
A_0
}{
2
}
\right)^{1/2}
m^{-1/2}
\right].
\label{eq:two-stage-final-bound}
\end{align}
Moreover, for every
$\b x\in\mX$
and every
$j\in\left[
M
\right]$,
the nodal representation
\begin{align}
\left(
\Pi_mg_{u_j}
\right)
\left(
a_{u_j}\left(
\b x
\right)
\right)
&=
\sum_{k=0}^{m}
g_{u_j}\left(
t_k
\right)
\psi_k\left(
a_{u_j}\left(
\b x
\right)
\right)
\end{align}
contains at most two nonzero terms.
Consequently, evaluating
$F_{M,m}\left(
\b x
\right)$
requires at most
$2M$
active outer-profile basis contributions, independently of the interpolation resolution
$m$.
In the case
$\CLv\left(
F
\right)=0$,
the zero realization requires no active outer-profile contribution.

\end{theorem}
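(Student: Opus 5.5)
The plan is to prove the statement in two stages and combine them with the triangle inequality. Stage one invokes Theorem~\ref{thm:path-bkl-approx} with $m$ replaced by $M$. If $\CLv(F)=0$, Theorem~\ref{thm:path-bkl-approx} gives $F=0$ and we take $F_{M,m}=0$. Otherwise that theorem supplies atoms $u_1,\ldots,u_M\in\mU_L$ and signs $\sigma_j$ such that $F_M=\frac{\CLv(F)}{M}\sum_j\sigma_ju_j$ satisfies $\|F-F_M\|_{L^2(\nu)}\le R_L\CLv(F)M^{-1/2}$. Positivity of $R_{\mX}$ follows because $R_{\mX}=0$ would force $\mX=\{0\}$. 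Every atom then vanishes there: $g(0)=0$ for all profiles and $u_1(\b 0)=0$. Hence $F=0$ and, by Corollary~\ref{cor:path-gauge-nondegenerate} and the definition as an infimum, one can take the zero measure, so $\CLv(F)=0$.

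Next, for each $u_j$, \Cref{lem:vbkl-union-rkhs} gives the factorization $u_j=g_{u_j}\circ a_{u_j}$ with $a_{u_j}\in\mU_{L-1}$ and $\|g_{u_j}\|_{\mH_{\kB}}\le1$. \Cref{lem:lower-level-atomic-range-bound} gives $a_{u_j}(\mX)\subseteq[-A_0,A_0]$. I expect that lemma to rest on the inductive estimate $|g(t)|\le|t|^{1/2}$ for unit-ball profiles, which iterates to $|a(\b x)|\le\|\b x\|^{2^{-(L-2)}}$. So $\Pi_mg_{u_j}$ is evaluated only on the interpolation interval, and $F_{M,m}$ is well defined.

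Stage two bounds $\|F_M-F_{M,m}\|_{L^2(\nu)}$. Write the difference as $\frac{\CLv(F)}{M}\sum_j\sigma_j(g_{u_j}-\Pi_mg_{u_j})\circ a_{u_j}$. Bound its $L^2$ norm by the sup norm, and bound the sup norm by the average of $\|g_{u_j}-\Pi_mg_{u_j}\|_{L^\infty[-A_0,A_0]}$. The key one-dimensional estimate, essentially Proposition~\ref{thm:profile-bkl-approximation}, is the following. On a cell of length $h=2A_0/m$, the error $e=g-\Pi_mg$ vanishes at both endpoints. Hence for $t$ in the cell, $|e(t)|$ is controlled by $\int|e'|$ over the subintervals to either endpoint. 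A Cauchy--Schwarz argument with optimal splitting gives $|e(t)|^2\le \frac{(t-t_k)(t_{k+1}-t)}{h}\int_{t_k}^{t_{k+1}}|e'|^2$. Two facts finish it: the product $(t-t_k)(t_{k+1}-t)$ is at most $h^2/4$, and $\int|e'|^2\le\int|g'|^2\le1$ because $\Pi_m$ is the $\dot H^1$-orthogonal projection onto piecewise linears. Together these give $|e(t)|\le (h/4)^{1/2}=(A_0/(2m))^{1/2}$.

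Summing, $\|F_M-F_{M,m}\|\le\CLv(F)(A_0/2)^{1/2}m^{-1/2}$. Adding the stage-one error gives \eqref{eq:two-stage-final-bound}. The sparsity claim is immediate because hat functions have support on adjacent cells. At any point $s=a_{u_j}(\b x)\in[-A_0,A_0]$, at most two $\psi_k(s)$ are nonzero, which gives $2M$ contributions in total.

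I expect the main obstacle to be the sharp pointwise interpolation estimate, in particular getting the constant $\sqrt{A/2}$ rather than a cruder $\sqrt{A}$. I would handle it via the Green's-function identity $e(t)=\int_{t_k}^{t_{k+1}}\partial_sG(t,s)e'(s)\,ds$, where $G$ is the Dirichlet Green's function of $-d^2/ds^2$ on the cell, followed by Cauchy--Schwarz. The care needed there is to use orthogonality $\int e'\,(\Pi_mg)'=0$ cellwise, which holds since $(\Pi_mg)'$ is constant on each cell and $e$ vanishes at its endpoints.
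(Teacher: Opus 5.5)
Your proposal is correct and follows the paper's proof: \Cref{thm:path-bkl-approx} with $M$ atoms, the factorization from \Cref{lem:vbkl-union-rkhs} and the range bound from \Cref{lem:lower-level-atomic-range-bound}, a uniform per-atom interpolation bound passed to $L^2(\nu)$, the triangle inequality, and the hat-function support argument for the $2M$ count. The one small difference is in the one-dimensional estimate: the paper applies the same optimal-splitting (Peano/Green) kernel $\phi_s$ directly to $g'$, so your extra $\dot H^1$-orthogonality step $\int|e'|^2\le\int|g'|^2$ is not needed, though it is harmless.
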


The resulting approximant has finite outer atomic support and finite
outer-profile interpolation, while preserving the selected lower-level
supports exactly.

\begin{remark}
\label{rem:two-stage-approximation}
The two error terms correspond to different resources. The $M^{-1/2}$ term
comes from replacing the outer measure by at most $M$ atoms; the $m^{-1/2}$
term comes from interpolating the selected outer Brownian profiles. The
lower-level supports are not discretized by
\Cref{thm:two-stage-path-profile-bkl}.
\end{remark}

\begin{corollary}\textnormal{(Balanced two-stage approximation)}\ 
\label{cor:balanced-realization}
Assume the hypotheses and notation of
\Cref{thm:two-stage-path-profile-bkl}.
Let
$N\in\mathbb N$,
$N\ge1$,
and choose the number of selected recursive atoms and the outer-profile interpolation resolution identically:
$
M
=
m
=
N.
$
Then the two-stage approximant
$F_{N,N}$
provided by
\Cref{thm:two-stage-path-profile-bkl}
satisfies
\begin{align}
\left\|
F-F_{N,N}
\right\|_{L^2\left(\nu\right)}
\le
\CLv\left(
F
\right)
\left[
R_L
+
\left(
\frac{
A_0
}{
2
}
\right)^{1/2}
\right]
N^{-1/2}.
\label{eq:balanced-two-stage-approximation-bound}
\end{align}
Consequently, for fixed
$F$,
$R_L$,
and
$A_0$,
one has
$
\left\|
F-F_{N,N}
\right\|_{L^2\left(\nu\right)}
=
O\left(
N^{-1/2}
\right).
$

\end{corollary}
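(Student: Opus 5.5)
The corollary is a direct specialization of \Cref{thm:two-stage-path-profile-bkl}, so the plan is to invoke that theorem with $M=m=N$ and simplify. First treat the degenerate case: if $\CLv(F)=0$, then \Cref{thm:two-stage-path-profile-bkl} gives $F=0$ in $L^2(\nu)$ and allows $F_{N,N}:=0$. Both sides of \eqref{eq:balanced-two-stage-approximation-bound} then vanish, and the bound holds trivially.

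If $\CLv(F)>0$, apply \Cref{thm:two-stage-path-profile-bkl} with $M=N$ and interpolation resolution $m=N$; both are admissible natural numbers since $N\ge1$. This produces atoms $u_1,\ldots,u_N\in\mU_L$, signs $\sigma_j$, lower supports $a_{u_j}\in\mU_{L-1}$ with range in $[-A_0,A_0]$, and outer profiles $g_{u_j}$, all on the uniform grid with $N$ subintervals. The resulting approximant $F_{N,N}$ is exactly \eqref{eq:two-stage-approximant} with $M=m=N$. Substituting into \eqref{eq:two-stage-final-bound} gives
\begin{align}
\left\|F-F_{N,N}\right\|_{L^2(\nu)}
\le
\CLv(F)\left[R_LN^{-1/2}+\left(\frac{A_0}{2}\right)^{1/2}N^{-1/2}\right].
\end{align}
Factoring out $N^{-1/2}$ yields \eqref{eq:balanced-two-stage-approximation-bound}.

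For the asymptotic statement, the bracket $R_L+(A_0/2)^{1/2}$ and $\CLv(F)$ do not depend on $N$. $R_L$ is finite because $\mX$ is compact, so $\|\b x\|_2^{2^{-(L-2)}}\le A_0$ on $\mX$ and $\nu$ is a probability measure. $A_0$ is finite because $R_{\mX}<\infty$. The bound is therefore a fixed constant times $N^{-1/2}$, which is $O(N^{-1/2})$.

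There is no real obstacle. The only points to check are that the choice $m=N$ is consistent with the grid construction in \Cref{thm:two-stage-path-profile-bkl}, since the grid depends only on $A_0$ and $m$, and that the degenerate case is handled separately, because there the theorem asserts no grid or atoms.
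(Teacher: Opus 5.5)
Your proposal is correct and matches the paper's approach: the paper gives no separate proof of this corollary because it is the direct specialization $M=m=N$ of \Cref{thm:two-stage-path-profile-bkl}, obtained by factoring out $N^{-1/2}$, with the case $\CLv(F)=0$ covered by the zero realization. Your extra check that $R_L\le A_0^{1/2}<\infty$, so that the constant is independent of $N$, is correct and supports the $O(N^{-1/2})$ claim.
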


\begin{remark}\textnormal{(Two sources of approximation error)}\ 
The bound in \Cref{thm:two-stage-path-profile-bkl} is
\begin{align}
R_LM^{-1/2}
+
\sqrt{\frac{A_0}{2}}\,m^{-1/2},
\end{align}
separating outer-measure discretization from outer-profile interpolation.
\end{remark}

\subsection{Approximation of Brownian Profile Functions}
\label{sec:profile-bkl-approximation}
We now isolate the outer-profile interpolation step and establish a
quantitative estimate for Brownian RKHS functions.

\begin{prop}\textnormal{(Interpolation of Brownian RKHS functions on a symmetric interval)}\ 
\label{thm:profile-bkl-approximation}
Let
$A>0$
and
$m\in\mathbb N$,
$m\ge1$.
Let
$
-A=t_0<t_1<\cdots<t_m=A
$
be the uniform grid on
$\left[-A,A\right]$,
and let
$\Pi_m$
denote the associated continuous piecewise-linear interpolation operator.
Thus, for
$g\in\mH_{\kB}$,
the function
$\Pi_mg$
is the unique continuous function on
$\left[-A,A\right]$
that is affine on every interval
$\left[t_i,t_{i+1}\right]$
and satisfies
$
\left(
\Pi_mg
\right)
\left(
t_i
\right)
=
g\left(
t_i
\right)$, $i\in
\left\{
0,\ldots,m
\right\}.
$ Then, for every
$g\in\mH_{\kB}$,
\begin{align}
\left\|
g-\Pi_mg
\right\|_{L^\infty\left(\left[-A,A\right]\right)}
\le
\left(
\frac{
A
}{
2
}
\right)^{1/2}
m^{-1/2}
\left\|
g
\right\|_{\mH_{\kB}}.
\label{eq:profile-interpolation-symmetric-bound}
\end{align}

\end{prop}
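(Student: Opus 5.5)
The plan is to reduce the global estimate to a single cell of the grid and then apply a sharp one-dimensional Cauchy--Schwarz argument. Set $h:=2A/m$ for the uniform mesh width. Fix $g\in\mH_{\kB}$ and an interval $I_i=[t_i,t_{i+1}]$. Since $g$ is absolutely continuous with $g'\in L^2(\R)$, the error $e:=g-\Pi_mg$ is absolutely continuous on $I_i$ and vanishes at both endpoints. Its derivative is $e'=g'-\bar g_i$, where $\bar g_i:=(g(t_{i+1})-g(t_i))/h$ is the mean of $g'$ over $I_i$.

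For $t\in I_i$ I would write $e(t)=\int_{t_i}^t e'(s)\,ds=-\int_t^{t_{i+1}}e'(s)\,ds$. Using the interpolation weights $\lambda=(t_{i+1}-t)/h$ and $1-\lambda$, combine these as
\begin{align}
e(t)=\int_{I_i}\kappa_t(s)\,g'(s)\,ds,
\end{align}
with the kernel $\kappa_t(s)=(1-\lambda)\mathds 1_{[t_i,t]}(s)-\lambda\mathds 1_{[t,t_{i+1}]}(s)$. This is the Peano-kernel (Green's function) representation. The constant part $\bar g_i$ drops out because $\int_{I_i}\kappa_t=0$.

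Cauchy--Schwarz then gives $|e(t)|^2\le \|\kappa_t\|_{L^2(I_i)}^2\int_{I_i}|g'|^2$. A direct computation yields
\begin{align}
\|\kappa_t\|_{L^2(I_i)}^2=(1-\lambda)^2\lambda h+\lambda^2(1-\lambda)h=\lambda(1-\lambda)h,
\end{align}
which is at most $h/4$, with equality at the midpoint. Hence
\begin{align}
|e(t)|^2\le \tfrac{h}{4}\int_{I_i}|g'|^2\le \tfrac{h}{4}\|g\|_{\mH_{\kB}}^2=\tfrac{A}{2m}\|g\|_{\mH_{\kB}}^2 .
\end{align}
Taking square roots and the supremum over $t$ and $i$ gives \eqref{eq:profile-interpolation-symmetric-bound}. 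This is in fact slightly stronger, since only the local energy $\int_{I_i}|g'|^2$ enters. Note that $\int_{I_i}|g'-\bar g_i|^2\le\int_{I_i}|g'|^2$ would also suffice; the kernel form avoids even needing it.

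The main obstacle is getting the constant $1/4$ rather than $1$. A naive one-sided bound $|e(t)|\le\int_{t_i}^t|e'|$ loses the factor, so the two-sided weighted representation is essential. I would double-check the zero-mean property of $\kappa_t$, namely $(1-\lambda)(t-t_i)-\lambda(t_{i+1}-t)=(1-\lambda)\lambda h-\lambda(1-\lambda)h=0$, so that replacing $g'$ by $g'-\bar g_i$ is legitimate. Nothing else from the paper is needed beyond the description of $\mH_{\kB}$ in Section~\ref{sec:notations}. The anchoring condition $g(0)=0$ plays no role.
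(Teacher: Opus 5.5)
Your argument is the same as the paper's proof of Lemma~\ref{thm:profile-bkl-approximation-symmetric}. Both use a one-cell Peano-kernel representation of $g-\Pi_mg$, Cauchy--Schwarz, and $(s-t_i)(t_{i+1}-s)/h\le h/4$.

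As written, however, your interpolation weights are swapped.

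\begin{itemize}
\item With your definition $\lambda=(t_{i+1}-t)/h$ one has $t-t_i=(1-\lambda)h$ and $t_{i+1}-t=\lambda h$.
\item Your kernel $\kappa_t=(1-\lambda)\mathbf 1_{[t_i,t]}-\lambda\mathbf 1_{[t,t_{i+1}]}$ then integrates to $(1-2\lambda)h\neq0$.
\item The identity $\int\kappa_t g'=e(t)$ fails. The left side equals $g(t)-\bigl[(1-\lambda)g(t_i)+\lambda g(t_{i+1})\bigr]$ rather than $g(t)-\bigl[\lambda g(t_i)+(1-\lambda)g(t_{i+1})\bigr]$.
\item Using $e'$ in place of $g'$, which is valid for any convex weights, does not rescue it. It gives $\|\kappa_t\|_{L^2([t_i,t_{i+1}])}^2=\bigl(\lambda^3+(1-\lambda)^3\bigr)h$, whose supremum over the cell is $h$, so the constant is off by a factor $2$.
\end{itemize}

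Your zero-mean check and your norm computation both silently substitute $t-t_i=\lambda h$, which is the opposite convention.

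The fix is one line. Use $\kappa_t=\lambda\mathbf 1_{[t_i,t]}-(1-\lambda)\mathbf 1_{[t,t_{i+1}]}$, or equivalently set $\lambda:=(t-t_i)/h$ in your formula. This is exactly the paper's $\phi_s$, and then everything you wrote goes through, including the local-energy refinement and the irrelevance of $g(0)=0$.
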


Together, the results give a two-stage approximation of the full VBKL
space: the outer measure is finitely supported and the selected outer
profiles are piecewise linear, while the lower-level supports may remain
infinite-dimensional elements of $\mU_{L-1}$.

\begin{remark}
The interpolation estimate of
\Cref{thm:profile-bkl-approximation-symmetric}
is optimal.
A sharpness proposition showing that neither the convergence rate
$m^{-1/2}$
nor the constant
$\sqrt{A/2}$
can be improved is proved in
Appendix~\ref{sec:sharpness-brownian-interpolation}.
\end{remark}

\section{Experiments}
\label{sec:experiments}

The experiments are organized around the theoretical mechanisms developed
above rather than around a large benchmark collection. They address four
questions: whether the finite constructions exhibit the predicted
approximation behaviour; whether validation-selected realizations can be
learned effectively from finite data; whether their predictive accuracy is
obtained with an economical recursive representation; and whether the
resulting finite models admit numerically stable optimization. Complete
protocols, validation-selected configurations, numerical tables, and
computational measurements are reported in
Appendix~\ref{app:experimental-details}.

\subsection{Experimental questions and common protocol}
\label{sec:experimental-protocol}

The approximation studies use deterministic teachers whose recursive
representations are known explicitly, so that the effects of signed-measure
discretization and Brownian-profile interpolation can be evaluated without
statistical estimation or training. The supervised studies use a controlled
recursive-teacher problem and the Energy Efficiency regression benchmark.
They compare finite VBKL realizations with Deep Neural Variation Spaces
(DNVS), kernel ridge regression (KRR), and random Fourier features (RBF).
For each training-set size and random seed, the number of recursive Brownian
paths and the optimization horizon of VBKL are selected using validation
data before the selected model is retrained. The Energy Efficiency study
also selects the Brownian-profile resolution by validation. All supervised
results are reported over five independent random seeds using test
mean-squared error. The compared methods use the same data-splitting
protocol within each benchmark.

\subsection{Constructive approximation and interpolation sharpness}
\label{sec:exp-constructive}

We first examine the outer-profile-discretization component of the
two-stage approximation. Three representative unit-norm Brownian RKHS profiles
are considered: a smooth sinusoidal profile, a localized profile, and an
oscillatory profile. Their continuous piecewise-linear interpolants are
evaluated on uniform grids with
$m\in\{8,16,32,64,128,256,512\}$.
Figure~\ref{fig:profile-interpolation}(a) shows that these fixed profiles
converge substantially faster than the uniform worst-case rate. This
behaviour reflects their additional regularity and clarifies that
\Cref{thm:profile-bkl-approximation} is a worst-case guarantee rather than
the asymptotic rate of every fixed profile.

For each resolution, we also construct a normalized tent profile supported
on a single interpolation interval. Its interpolation error coincides with
the theoretical bound at every tested resolution, as shown in
Figure~\ref{fig:profile-interpolation}(b). The experiment therefore
separates typical fixed-profile behaviour from the sharp worst case proved
in Appendix~\ref{sec:sharpness-brownian-interpolation}.

\begin{figure}[t]
    \centering
    \includegraphics[width=\linewidth]{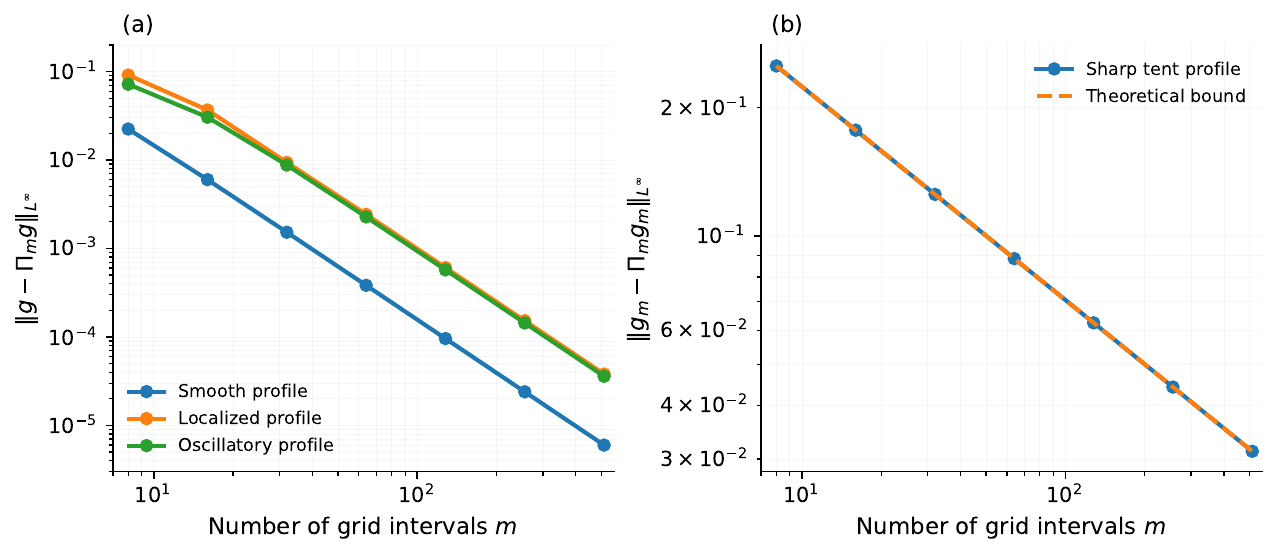}
    \caption{
    \textbf{Brownian profile interpolation.}
    \textbf{(a)} Interpolation errors for three representative unit-norm
    Brownian RKHS profiles.
    \textbf{(b)} Interpolation error of the normalized tent profile together
    with the theoretical interpolation bound. The coincidence of the curves
    demonstrates sharpness of the interpolation estimate.
    }
    \label{fig:profile-interpolation}
\end{figure}

We next test both stages of the construction. In
Figure~\ref{fig:constructive_realization}(a), only the representing signed
measure is discretized while the selected Brownian profiles remain exact;
the error decreases consistently with the $M^{-1/2}$ reference predicted by
\Cref{thm:path-bkl-approx}. Panel~(b) fixes the finite signed-measure
representation and varies only the profile resolution. The deterministic
teacher converges faster than the $m^{-1/2}$ worst-case upper bound, in
agreement with the fixed-profile behaviour above. Panel~(c) refines both
complexities simultaneously with $M=m=N$ and exhibits the balanced
$N^{-1/2}$ behaviour of \Cref{cor:balanced-realization}. Together, the two figures illustrate the separate roles of outer-measure
and outer-profile complexity in \Cref{thm:two-stage-path-profile-bkl}.

\begin{figure}[t]
    \centering
    \includegraphics[width=\linewidth]{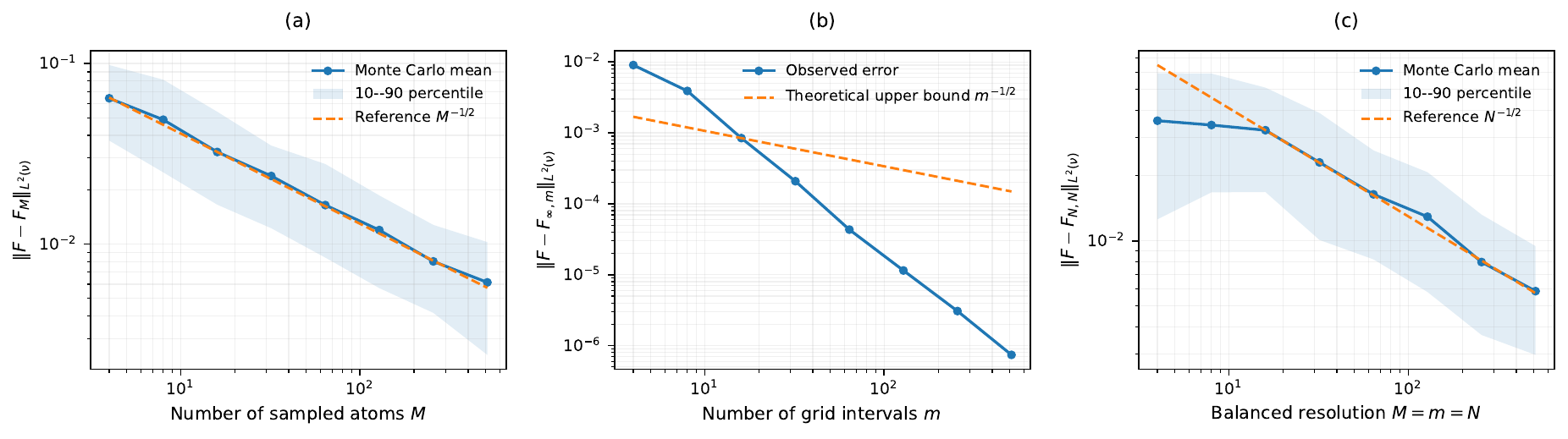}
    \caption{
    \textbf{Two-stage constructive approximation.}
    \textbf{(a)} Finite signed-measure discretization with exact profiles.
    \textbf{(b)} Brownian-profile discretization with a fixed finite
    measure representation, together with the worst-case upper bound.
    \textbf{(c)} Balanced refinement $M=m=N$. The reference slopes correspond
    to the rates in \Cref{thm:path-bkl-approx,cor:balanced-realization}.
    }
    \label{fig:constructive_realization}
\end{figure}

\subsection{Statistical learning and parameter efficiency}
\label{sec:exp-statistical}

Figure~\ref{fig:statistical-validation}(a) reports the recursive-teacher
learning curve. This controlled benchmark matches the hierarchical
Brownian structure of the target to the proposed hypothesis class. The
validation-selected VBKL realizations are particularly effective in the
limited-data regime and remain competitive as the sample size grows; the
performance differences narrow at larger sample sizes, where DNVS attains
the lowest mean error at the largest sizes considered. This behaviour is
consistent with using validation to adapt the explicit finite
representation complexity rather than fixing one architecture throughout
the learning curve.

The Energy Efficiency benchmark provides a complementary real-data test
whose generating mechanism is unknown. Figure~\ref{fig:statistical-validation}(b)
shows that, at $n=100$, VBKL is comparable to the strongest baselines and
has lower mean error than DNVS\@. At $n=250$ and $n=500$, VBKL does not lead
the predictive comparison: DNVS, KRR, and RBF obtain lower mean errors.
Thus, the empirical claim is not universal dominance, but a distinct
limited-data regime in which the recursive Brownian realization is most
competitive.

Representation size provides the second part of this comparison.
Figure~\ref{fig:statistical-validation}(c) and
Table~\ref{tab:energy-efficiency} show that VBKL uses substantially fewer
parameters than DNVS at every Energy Efficiency training size. At $n=100$,
VBKL wins four of the five matched seed-wise comparisons while DNVS uses
approximately $4.6\times$ as many parameters. The parameter ratio grows to
$13.7\times$ and $18.3\times$ at $n=250$ and $n=500$, respectively, even
though DNVS wins more of the corresponding accuracy comparisons. The
combined evidence therefore supports a favourable accuracy--complexity
trade-off, with the clearest empirical advantage in the smallest-data
regime.

\begin{figure}[t]
    \centering
    \subfloat[Recursive-teacher learning curve.\label{fig:recursive_learning}]{%
        \includegraphics[width=.86\linewidth]{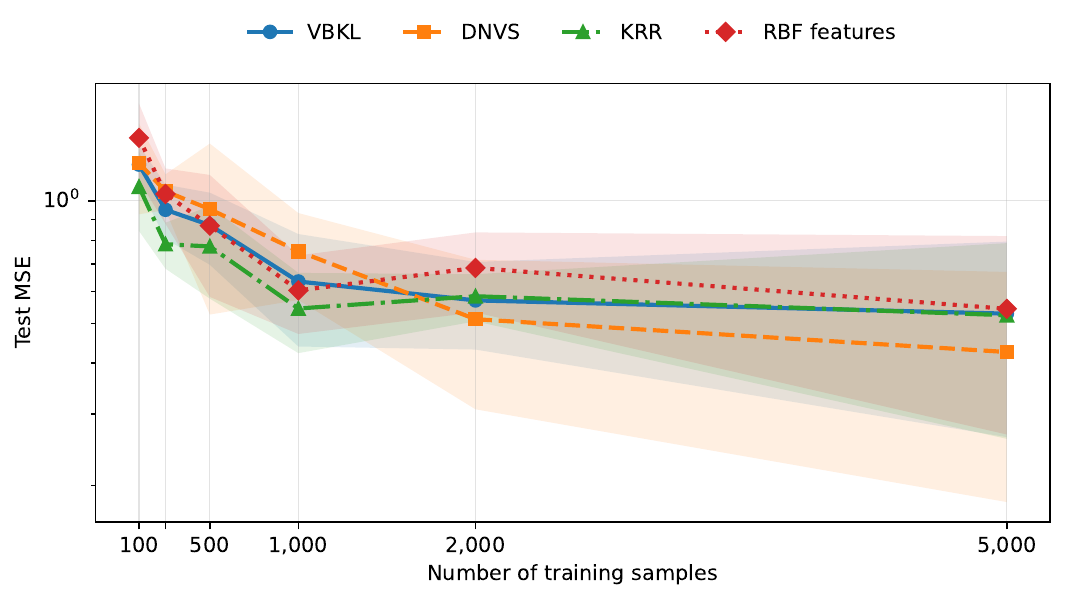}%
    }\\[1.0ex]
    \subfloat[Energy Efficiency learning curve.\label{fig:energy_learning_curve}]{%
        \includegraphics[width=.48\linewidth]{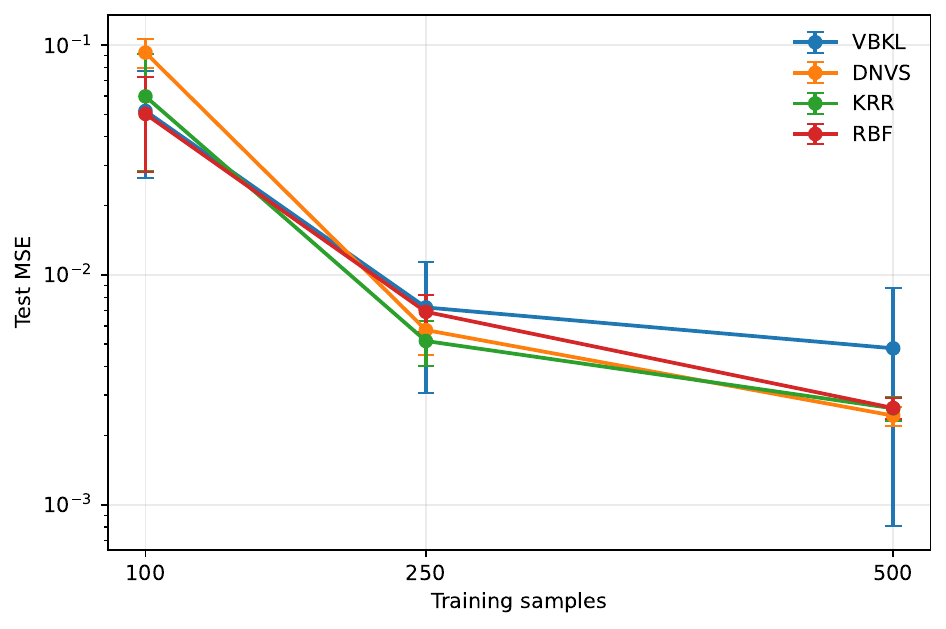}%
    }\hfill
    \subfloat[Accuracy--complexity comparison with DNVS\@.\label{fig:energy_parameter_efficiency}]{%
        \includegraphics[width=.48\linewidth]{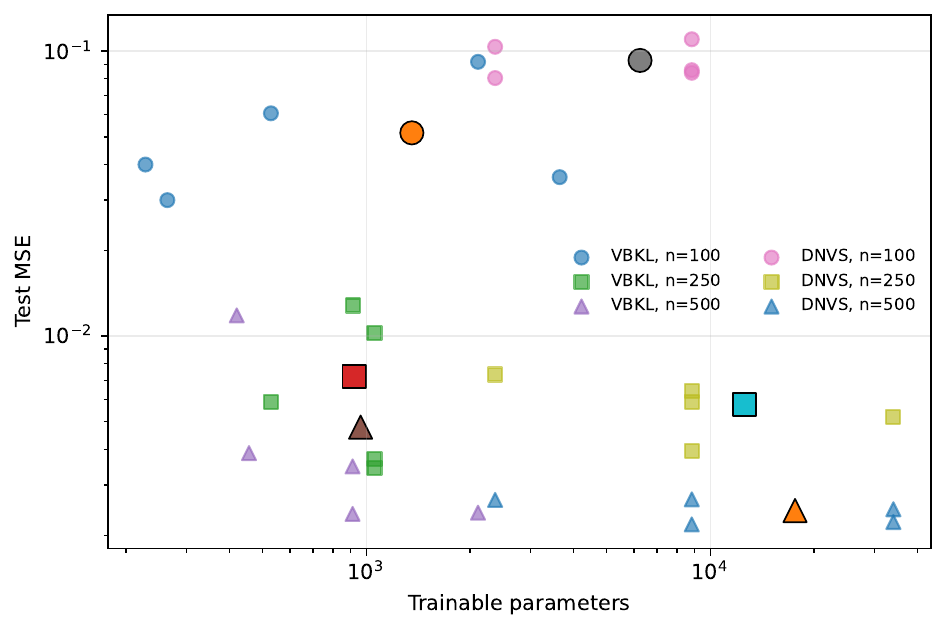}%
    }
    \caption{
    \textbf{Statistical learning and representation efficiency.}
    Test mean-squared error is averaged over five independent random seeds.
    VBKL complexity is selected independently by validation for each
    training-set size and seed. Panel~\textbf{(c)} uses logarithmic axes;
    small markers represent individual seeds and outlined markers represent
    means.
    }
    \label{fig:statistical-validation}
\end{figure}

\begin{table}[t]
\centering
\begin{tabular}{ccc}
\toprule
$n$ & VBKL wins vs.\ DNVS & Parameter ratio \\
\midrule
100 & 4/5 & 4.6$\times$ \\
250 & 2/5 & 13.7$\times$ \\
500 & 2/5 & 18.3$\times$ \\
\bottomrule
\end{tabular}

\caption{Seed-wise and parameter-efficiency comparison between VBKL and DNVS\@. The parameter ratio is DNVS/VBKL.}
\label{tab:energy-efficiency}
\end{table}
 
\subsection{Optimization stability and practical realizability}
\label{sec:exp-optimization}

The constructive theory produces finite recursive models whose parameters
must be optimized numerically. Figure~\ref{fig:optimization_validation}(a)
compares finite-difference directional estimates with the corresponding
automatic-differentiation directional derivatives. The relative error falls
as the interaction scale decreases from $10^{-1}$ to $10^{-3}$ and rises
modestly at $10^{-4}$, displaying the expected finite-difference trade-off
at the smallest scale rather than monotone improvement for arbitrarily small
steps. Panel~(b) evaluates Monte Carlo directional averaging: increasing the
number of sampled directions progressively reduces the estimator standard
deviation.

These controlled tests verify the implementation of the directional
estimator and the variance reduction obtained from averaging. They do not,
by themselves, establish a global optimization-convergence theorem; rather,
they show that the finite VBKL realizations can be optimized with a stable
numerical estimator under the tested conditions. Detailed computational
measurements are reported in Appendix~\ref{app:computational-details}.

\begin{figure}[t]
    \centering
    \includegraphics[width=\linewidth]{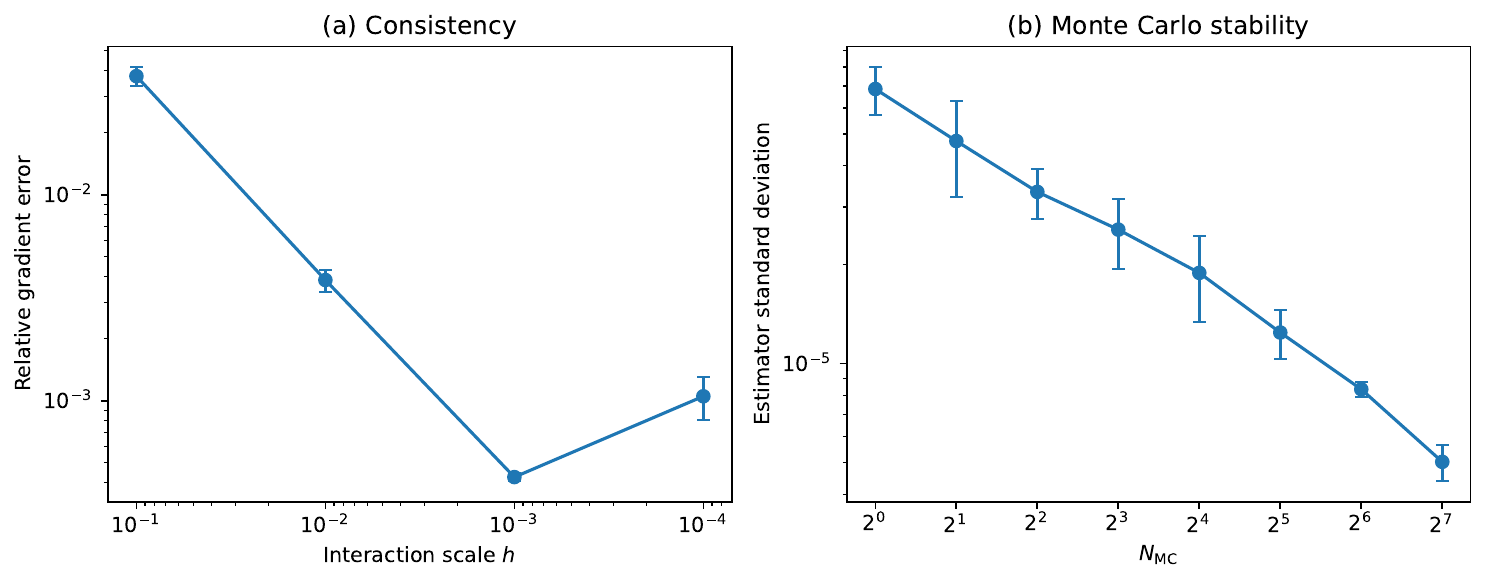}
    \caption{
    \textbf{Optimization validation of finite VBKL realizations.}
    \textbf{(a)} Relative error between automatic-differentiation
    directional derivatives and the averaged directional estimator across
    interaction scales.
    \textbf{(b)} Estimator standard deviation under Monte Carlo directional
    averaging. Increasing the number of sampled directions reduces the
    variability of the estimate.
    }
    \label{fig:optimization_validation}
\end{figure}

\FloatBarrier

\section{Conclusion}
\label{sec:conclusion}

This paper introduced the \emph{Variation Brownian Kernel Ladder} (VBKL), a
path-atomic function-space framework in which recursive Brownian dictionary
construction precedes outer variation superposition. The recursive
dictionaries are unions of Brownian pullback RKHS unit balls. Their variation
hulls admit quantitative pointwise and H\"older representatives; under full
support these representatives define a continuous H\"older embedding, and
under the stated local non-degeneracy and trace-support conditions the spaces
form a strict depth hierarchy.

For associated finite lower-support architectures, we derived Rademacher and
generalization bounds that separate the outer variation radius from the
architecture parameter count. For the full VBKL space, the constructive
results separate two approximation resources: $M$ controls finite support of
the outer measure, and $m$ controls interpolation of the selected outer
Brownian profiles. The lower-level supports remain in $\mU_{L-1}$. The error
is the sum of $M^{-1/2}$ and $m^{-1/2}$ terms, the profile-interpolation
constant is sharp, and each evaluation uses at most $2M$ active outer-profile
basis contributions.

The experiments illustrate these mechanisms and indicate a favorable
trade-off between accuracy and complexity in limited-data regimes, without claiming
universal predictive dominance or global optimization convergence. Future
work includes intrinsic full-space statistical bounds, recursive
discretization of lower-level supports, and optimization theory for explicit
finite architectures.

\section{Proofs}\label{app}
This section contains the proofs of the main analytical, statistical, and
approximation results.

\subsection{Proof of \texorpdfstring{\Cref{thm:analytical-properties-variation-bkl}\ref{thm:variation-holder-pointwise-bound}}{}}

\begin{noheadproof}
Fix $F\in\VL$. For each $n\ge1$, choose a finite signed measure $\mu_n$ on
$\mU_L$ representing $F$ in $L^2(\nu)$ and satisfying
\begin{align}
\|\mu_n\|_{\mathrm{TV}}
\le
\CLv(F)+n^{-1}.
\end{align}
The induced Borel structure on $\mU_L\subset C(\mX)$, separability of
$C(\mX)$, and the uniform atom bound in \Cref{lem:path-atom-bound} imply
that the canonical inclusion is Bochner integrable in $C(\mX)$ with respect
to each $\mu_n$. Define the corresponding $C(\mX)$-valued barycentre by
\begin{align}
F_n(\b x)
:=
\int_{\mU_L}
u(\b x)\,\d\mu_n(u),
\qquad
\b x\in\mX.
\end{align}
By \Cref{lem:path-atom-bound}, every $u\in\mU_L$ satisfies the uniform
$\alpha_L$-H\"older and pointwise bounds. Hence
\begin{align}
|F_n(\b x)-F_n(\b x')|
&\le
\|\b x-\b x'\|_2^{\alpha_L}\|\mu_n\|_{\mathrm{TV}},
\notag\\
|F_n(\b x)|
&\le
\|\b x\|_2^{\alpha_L}\|\mu_n\|_{\mathrm{TV}}.
\end{align}
The sequence $(F_n)_{n\ge1}$ is uniformly bounded and equicontinuous on the
compact set $\mX$. By Arzel\`a--Ascoli, a subsequence converges uniformly to
some $F_\star\in C(\mX)$. The continuous inclusion $C(\mX)\hookrightarrow L^2(\nu)$ commutes with
Bochner integration and shows that each $F_n$ represents the same element $F$ of $L^2(\nu)$. Uniform convergence
therefore implies convergence in $L^2(\nu)$, and hence
$F_\star=F$ in $L^2(\nu)$. Passing to the limit in the preceding bounds and using
$\|\mu_n\|_{\mathrm{TV}}\to\CLv(F)$ along the selected near-minimizing
sequence gives
\begin{align}
|F_\star(\b x)-F_\star(\b x')|
&\le
\|\b x-\b x'\|_2^{\alpha_L}\CLv(F),
\notag\\
|F_\star(\b x)|
&\le
\|\b x\|_2^{\alpha_L}\CLv(F).
\end{align}
Identifying $F$ with this representative proves
\eqref{eq:variation-holder-bound} and \eqref{eq:variation-pointwise-bound}.
\end{noheadproof}
\hfill\BlackBox

\subsection{Proof of \texorpdfstring{\Cref{thm:analytical-properties-variation-bkl}\ref{thm:continuous-embedding}}{}}

\begin{noheadproof}
The bounds in \eqref{eq:variation-holder-bound} and
\eqref{eq:variation-pointwise-bound} give
\begin{align}
[F]_{C^{0,\alpha_L}(\mX)}
&\le
\CLv(F),
&
\|F\|_{L^\infty(\mX)}
&\le
R_{\mX}^{\alpha_L}\CLv(F).
\end{align}
Adding the two estimates proves
\eqref{eq:variation-holder-embedding-bound}. If
$\operatorname{supp}(\nu)=\mX$, two continuous representatives of the same
$L^2(\nu)$ element agree on $\mX$; hence the representative is unique and
the induced map into $C^{0,\alpha_L}(\mX)$ is linear, injective, and
continuous.
\end{noheadproof}
\hfill\BlackBox

\subsection{Proof of
\texorpdfstring{
\Cref{thm:analytical-properties-variation-bkl}
\ref{thm:depth-monotonicity}
}{}
}

\begin{noheadproof}
We first prove the depth inclusion.
Set
\begin{align}
A_L
:=
R_{\mX}^{\,\alpha_L}.
\label{eq:vbkl-depth-inclusion-atomic-range}
\end{align}
By \Cref{lem:path-atom-bound}, every $u\in\mU_L$ satisfies
\begin{align}
|u(\b x)|
\le
\|\b x\|_2^{\alpha_L}
\le
R_{\mX}^{\alpha_L}
=
A_L,
\qquad
\b x\in\mX.
\label{eq:vbkl-depth-inclusion-uniform-range}
\end{align}
Thus,
\begin{align}
u\left(
\mX
\right)
\subseteq
\left[
-A_L,A_L
\right],
\qquad
u\in\mU_L.
\label{eq:vbkl-depth-inclusion-common-range}
\end{align}
Choose
$\chi_L\in C_c^\infty\left(\mathbb R\right)$
satisfying
$\chi_L\left(t\right)=1$
for
$t\in\left[-A_L,A_L\right]$,
and define
\begin{align}
\phi_L\left(
t
\right)
:=
t\chi_L\left(
t
\right),
\qquad
t\in\mathbb R.
\label{eq:vbkl-depth-inclusion-identity-profile}
\end{align}
The function
$\phi_L$
is absolutely continuous, vanishes at the origin, and has a compactly
supported derivative in
$L^2\left(\mathbb R\right)$.
Hence, by the Brownian RKHS characterization
\citep[Lemma~C9]{
MohammadigohariDiFattaNicosiaPardalos2026BKL},
\begin{align}
\phi_L
&\in
\mH_{\kB},
&
c_L
&:=
\left\|
\phi_L
\right\|_{\mH_{\kB}}
\in
\left(
0,\infty
\right).
\end{align}
Define
\begin{align}
S_L
:
\mU_L
&\longrightarrow
\mU_{L+1},
&
S_L\left(
u
\right)
&:=
\frac{
\phi_L
}{
c_L
}
\circ u.
\label{eq:vbkl-depth-inclusion-map}
\end{align}
Since
$\left\|\phi_L/c_L\right\|_{\mH_{\kB}}=1$,
the recursive definition of
$\mU_{L+1}$
gives
$S_L\left(u\right)\in\mU_{L+1}$.
Moreover, since $\phi_L$ is globally Lipschitz,
\begin{align}
\left\|S_L(u)-S_L(v)\right\|_{L^\infty(\mX)}
\le
\frac{\operatorname{Lip}(\phi_L)}{c_L}
\left\|u-v\right\|_{L^\infty(\mX)},
\qquad u,v\in\mU_L.
\end{align}
Thus $S_L$ is continuous for the induced supremum-norm topologies and hence
Borel measurable.
Since
$\phi_L$
agrees with the identity on
$\left[-A_L,A_L\right]$,
\eqref{eq:vbkl-depth-inclusion-common-range}
gives
\begin{align}
u
=
c_L
S_L\left(
u
\right),
\qquad
u\in\mU_L.
\label{eq:depth-atom-inclusion-scaled}
\end{align}
Fix
$F\in\VL$,
and choose a finite signed Borel measure
$\mu\in\mathcal M\left(\mU_L\right)$
such that
\begin{align}
F
&=
\int_{\mU_L}
u
\,\d\mu\left(
u
\right)
\qquad
\text{in }
L^2\left(
\nu
\right),
&
\left\|
\mu
\right\|_{\mathrm{TV}}
&<
\infty.
\end{align}
Define
\begin{align}
\widetilde\mu
:=
c_L
\left(
S_L
\right)_{\#}
\mu.
\label{eq:vbkl-depth-inclusion-pushforward-measure}
\end{align}
Therefore,
\begin{align}
\int_{\mU_{L+1}}
v
\,\d\widetilde\mu\left(
v
\right)
&\stackrel{(a)}{=}
c_L
\int_{\mU_L}
S_L\left(
u
\right)
\,\d\mu\left(
u
\right)
\stackrel{(b)}{=}
\int_{\mU_L}
u
\,\d\mu\left(
u
\right)
\stackrel{(c)}{=}
F.
\label{eq:vbkl-depth-inclusion-pushforward-representation}
\end{align}
Here,
(a) applies the push-forward change-of-variables identity,
(b) uses
\eqref{eq:depth-atom-inclusion-scaled},
and
(c) uses the selected representation of
$F$.
Furthermore,
\begin{align}
\left\|
\widetilde\mu
\right\|_{\mathrm{TV}}
\le
c_L
\left\|
\mu
\right\|_{\mathrm{TV}}
<
\infty.
\end{align}
Hence,
$F\in\mV^{(L+1)}$.
Since
$F\in\VL$
was arbitrary,
\begin{align}
\VL
\subseteq
\mV^{(L+1)}.
\end{align}
We now prove strictness under the local non-degeneracy and trace-support
conditions in the theorem statement.
Recall that
$\alpha_L=2^{-(L-1)}$,
and choose
\begin{align}
a
\in
\left(
\frac{1}{2},
\alpha_L^{1/L}
\right)
=
\left(
\frac{1}{2},
2^{-(L-1)/L}
\right).
\label{eq:vbkl-strictness-fractional-exponent}
\end{align}
This interval is nonempty because
$2^{-(L-1)/L}=2^{-1+1/L}>1/2$.
Following the fractional-power construction in
\citet[Theorem~1(iii)]{
MohammadigohariDiFattaNicosiaPardalos2026BKL},
choose
$\eta\in C_c^\infty\left(\mathbb R\right)$
such that
$0\le\eta\le1$
and
$\eta\left(t\right)=1$
for
$\left|t\right|\le1/2$,
and define
\begin{align}
g_a\left(
t
\right)
:=
\eta\left(
t
\right)
\left(
t_+
\right)^a,
\qquad
t_+
:=
\max
\left\{
t,0
\right\}.
\label{eq:vbkl-strictness-raw-profile}
\end{align}
The calculation in the cited proof shows that
$a>1/2$
implies
$g_a\in\mH_{\kB}$.
Set
\begin{align}
\kappa_a
&:=
\left\|
g_a
\right\|_{\mH_{\kB}},
&
h_a
&:=
\frac{
g_a
}{
\kappa_a
}.
\label{eq:vbkl-strictness-normalized-profile}
\end{align}
Then
\begin{align}
\left\|
h_a
\right\|_{\mH_{\kB}}
&=
1,
&
h_a\left(
s
\right)
&=
\kappa_a^{-1}
s^a,
\qquad
0\le s\le\frac{1}{2}.
\label{eq:vbkl-strictness-local-profile-form}
\end{align}
Starting from the first-layer generator
$u_1$
in the theorem statement, define
\begin{align}
v_1
&:=
u_1,
&
v_{l+1}
&:=
h_a\circ v_l,
\qquad
l\in\left[L\right].
\label{eq:vbkl-strictness-path-recursion}
\end{align}
Since
$\left\|h_a\right\|_{\mH_{\kB}}=1$,
the recursive dictionary construction gives
\begin{align}
v_l
&\in
\mU_l,
\qquad
l\in
\left\{
1,\ldots,L+1
\right\},
\label{eq:vbkl-strictness-path-membership}
\\
v_{L+1}
&\in
\mU_{L+1}
\subseteq
\mV^{(L+1)}.
\label{eq:vbkl-strictness-upper-space-membership}
\end{align}
We claim that, for every
$l\in\left\{1,\ldots,L+1\right\}$,
there exist
$\delta_l\in\left(0,\rho\right]$
and
$c_l,C_l\in\left(0,\infty\right)$
such that
\begin{align}
c_l
t^{a^{l-1}}
\le
v_l\left(
\gamma\left(
t
\right)
\right)
\le
C_l
t^{a^{l-1}},
\qquad
t\in
\left[
0,\delta_l
\right].
\label{eq:vbkl-strictness-recursive-trace-bound}
\end{align}
For
$l=1$,
this is precisely the local non-degeneracy assumption, with
$\delta_1:=\rho$.
Suppose that the claim holds at some
$l\in\left[L\right]$,
and choose
$\delta_{l+1}\in\left(0,\delta_l\right]$
such that
\begin{align}
C_l
\delta_{l+1}^{\,a^{l-1}}
\le
\frac{1}{2}.
\label{eq:vbkl-strictness-localization-radius}
\end{align}
Therefore, for
$t\in\left[0,\delta_{l+1}\right]$,
\begin{align}
0
&\stackrel{(a)}{\le}
v_l\left(
\gamma\left(
t
\right)
\right)
\stackrel{(b)}{\le}
C_l
t^{a^{l-1}}
\stackrel{(c)}{\le}
\frac{1}{2},
\notag\\
v_{l+1}\left(
\gamma\left(
t
\right)
\right)
&\stackrel{(d)}{=}
h_a\left(
v_l\left(
\gamma\left(
t
\right)
\right)
\right)
\stackrel{(e)}{=}
\kappa_a^{-1}
v_l\left(
\gamma\left(
t
\right)
\right)^a.
\end{align}
Here,
(a) and
(b) use the induction bounds,
(c) applies
\eqref{eq:vbkl-strictness-localization-radius},
(d) uses
\eqref{eq:vbkl-strictness-path-recursion},
and
(e) applies
\eqref{eq:vbkl-strictness-local-profile-form}.
Raising the induction bounds to the positive power
$a$
therefore gives
\begin{align}
\kappa_a^{-1}
c_l^a
t^{a^l}
\le
v_{l+1}\left(
\gamma\left(
t
\right)
\right)
\le
\kappa_a^{-1}
C_l^a
t^{a^l},
\qquad
t\in
\left[
0,\delta_{l+1}
\right].
\end{align}
Thus the induction continues with
$c_{l+1}:=\kappa_a^{-1}c_l^a$
and
$C_{l+1}:=\kappa_a^{-1}C_l^a$. Set
$\delta:=\delta_{L+1}$.
Then
\begin{align}
c_{L+1}
t^{a^L}
\le
v_{L+1}\left(
\gamma\left(
t
\right)
\right)
\le
C_{L+1}
t^{a^L},
\qquad
t\in
\left[
0,\delta
\right].
\label{eq:vbkl-strictness-final-trace-bound}
\end{align}
Moreover,
$u_1\left(\b x_0\right)=0$
and
$h_a\left(0\right)=0$
imply recursively that
\begin{align}
v_l\left(
\b x_0
\right)
=
0,
\qquad
l\in
\left\{
1,\ldots,L+1
\right\}.
\label{eq:vbkl-strictness-anchor-values}
\end{align}
Suppose, toward a contradiction, that
$v_{L+1}\in\VL$.
By
\Cref{thm:analytical-properties-variation-bkl}
\ref{thm:variation-holder-pointwise-bound},
its
$L^2\left(\nu\right)$
equivalence class admits an
$\alpha_L$-Hölder representative
$\widetilde v_{L+1}$.
Both
$v_{L+1}$
and
$\widetilde v_{L+1}$
are continuous and agree
$\nu$-almost everywhere.
Consequently, they agree throughout
$\operatorname{supp}\left(\nu\right)$:
\begin{align}
v_{L+1}\left(
\b x
\right)
=
\widetilde v_{L+1}\left(
\b x
\right),
\qquad
\b x\in
\operatorname{supp}\left(
\nu
\right).
\label{eq:vbkl-strictness-representative-agreement}
\end{align}
Let
$L_\gamma<\infty$
be a Lipschitz constant of
$\gamma$.
By
\eqref{eq:vbkl-strictness-trace-support-proof},
\eqref{eq:vbkl-strictness-final-trace-bound},
\eqref{eq:vbkl-strictness-anchor-values},
and
\eqref{eq:vbkl-strictness-representative-agreement},
there exists
$C_\star<\infty$
such that, for every
$t\in\left(0,\delta\right]$,
\begin{align}
c_{L+1}
t^{a^L}
&\stackrel{(a)}{\le}
v_{L+1}\left(
\gamma\left(
t
\right)
\right)
\stackrel{(b)}{=}
\left|
\widetilde v_{L+1}\left(
\gamma\left(
t
\right)
\right)
-
\widetilde v_{L+1}\left(
\gamma\left(
0
\right)
\right)
\right|
\notag\\
&\stackrel{(c)}{\le}
C_\star
\left\|
\gamma\left(
t
\right)
-
\gamma\left(
0
\right)
\right\|_2^{\,\alpha_L}
\stackrel{(d)}{\le}
C_\star
L_\gamma^{\,\alpha_L}
t^{\alpha_L}.
\label{eq:vbkl-strictness-holder-upper-bound}
\end{align}
Here,
(a) applies the lower trace bound,
(b) uses the anchor identity and the agreement of the two representatives
on the trace,
(c) applies the Hölder estimate,
and
(d) uses the Lipschitz continuity of
$\gamma$.
Therefore,
\begin{align}
c_{L+1}
\le
C_\star
L_\gamma^{\,\alpha_L}
t^{\alpha_L-a^L}.
\label{eq:vbkl-strictness-final-contradiction}
\end{align}
By
\eqref{eq:vbkl-strictness-fractional-exponent},
one has
$a^L<\alpha_L$.
Hence the right-hand side of
\eqref{eq:vbkl-strictness-final-contradiction}
converges to zero as
$t\downarrow0$,
whereas
$c_{L+1}>0$.
This contradiction proves
$v_{L+1}\notin\VL$.
Together with
\eqref{eq:vbkl-strictness-upper-space-membership},
we conclude that $\VL
\subsetneq
\mV^{(L+1)}$.
\end{noheadproof}
\hfill\BlackBox

\subsection{Proof of \texorpdfstring{\Cref{thm:variation-bkl-main-generalization}}{}}
\label{app:proof-finite-architecture-rademacher}

\begin{noheadproof}
Fix $L\ge2$, $m\ge1$, $G\ge2$, $R>0$, $n\ge1$, and sample points $\b x_1,\ldots,\b x_n\in\mX$. For notational convenience, set $\mathcal A:=\mathcal A_{L-1}^{m,G}$, $\mathcal D:=\mathcal D_L^{m,G}$, $\mathcal W:=\mathcal W_R^{L;m,G}$,
\begin{align}
P&:=P_{L-1,m,G},
\label{eq:main-proof-parameter-count}
\\
\Gamma&:=\Gamma_{L-1,m,G,n}
=1+\left(P+1\right)\ln\left(e\left(P+1\right)\right)\ln\left(en\right),
\label{eq:main-proof-complexity-factor}
\\
B_{\b x}&:=\sup_{a\in\mathcal A}\max_{i\in\left[n\right]}
\left|a\left(\b x_i\right)\right|.
\label{eq:main-proof-support-envelope}
\end{align}
The quantities in \eqref{eq:main-proof-parameter-count}, \eqref{eq:main-proof-complexity-factor}, and \eqref{eq:main-proof-support-envelope} agree with the quantities introduced in \eqref{eq:main-statistical-parameter-count}, \eqref{eq:finite-architecture-complexity-factor}, and \eqref{eq:main-finite-support-envelope}, respectively. We divide the proof into five steps. By \eqref{eq:finite-architecture-variation-ball} and \eqref{eq:finite-architecture-outer-dictionary}, the class $\mathcal W$ is the variation hull of radius $R$ generated by the symmetric dictionary $\mathcal D$. Therefore, \eqref{eq:variation-reduction-finite-architecture} of \Cref{lem:variation-reduction} gives
\begin{align}
\widehat{\mathfrak R}_n\left(\mathcal W\right)
= R\widehat{\mathfrak R}_n\left(\mathcal D\right).
\label{eq:main-proof-exact-variation-reduction}
\end{align}
The second inequality in \eqref{eq:variation-reduction-finite-architecture} in \Cref{lem:variation-reduction} gives the elementary envelope estimate
\begin{align}
\widehat{\mathfrak R}_n\left(\mathcal W\right)
\le RB_{\b x}^{1/2}.
\label{eq:main-proof-trivial-envelope-bound}
\end{align} 
This estimate will provide the first term in the minimum appearing in \eqref{eq:main-empirical-rademacher-bound}. By \eqref{eq:finite-architecture-dictionary-as-rkhs-union} of \Cref{lem:brownian-pullback-rademacher}, the outer dictionary satisfies
\begin{align}
\mathcal D = \mathcal D_1\left(\mathcal A\right).
\label{eq:main-proof-dictionary-identification}
\end{align}
For every $a\in\mathcal A$ and $i\in\left[n\right]$, the Brownian pullback kernel satisfies
\begin{align}
k_a\left(\b x_i,\b x_i\right)
&\stackrel{(a)}{=}
\kB\left(a\left(\b x_i\right),a\left(\b x_i\right)\right)
\stackrel{(b)}{=}
\left|a\left(\b x_i\right)\right|
\stackrel{(c)}{\le}
B_{\b x}.
\label{eq:main-proof-diagonal-control}
\end{align}
Here, (a) applies \eqref{eq:finite-architecture-pullback-kernel}, (b) uses $\kB\left(s,s\right)=\left|s\right|$, $s\in\mathbb R$, and (c) applies \eqref{eq:main-proof-support-envelope}. Taking the maximum over $i\in\left[n\right]$ and then the supremum over $a\in\mathcal A$ in \eqref{eq:main-proof-diagonal-control} gives
\begin{align}
B_{\b x}\left(\mathcal A\right)
&\stackrel{(a)}{=}
\sup_{a\in\mathcal A}\max_{i\in\left[n\right]}
k_a\left(\b x_i,\b x_i\right)
\stackrel{(b)}{=}
B_{\b x}.
\label{eq:main-proof-diagonal-envelope-identity}
\end{align}
Here, (a) applies \eqref{eq:brownian-pullback-diagonal-envelope}, while (b) follows from the pointwise identity in \eqref{eq:main-proof-diagonal-control}. Applying \eqref{eq:brownian-pullback-rademacher-bound} of \Cref{lem:brownian-pullback-rademacher} with $r=1$, $\mathcal A=\mathcal A_{L-1}^{m,G}$, and using \eqref{eq:main-proof-dictionary-identification} and \eqref{eq:main-proof-diagonal-envelope-identity}, we obtain
\begin{align}
\widehat{\mathfrak R}_n\left(\mathcal D\right)
\le
\frac{1}{\sqrt n}
\left(
2\widehat{\mathfrak C}_{n,\b x}^{(B)}
\left(\mathcal A\right)
+B_{\b x}
\right)^{1/2}.
\label{eq:main-proof-dictionary-chaos-bound}
\end{align}
By \eqref{eq:brownian-chaos-from-entropy-bound} of \Cref{lem:brownian-chaos-from-entropy} and the abbreviations \eqref{eq:main-proof-complexity-factor} and \eqref{eq:main-proof-support-envelope}, there exists a constant $\widetilde C_L>0$, depending only on $L$, such that
\begin{align}
\widehat{\mathfrak C}_{n,\b x}^{(B)}\left(\mathcal A\right)
\le\widetilde C_LB_{\b x}\Gamma.
\label{eq:main-proof-chaos-entropy-bound}
\end{align}
Substituting \eqref{eq:main-proof-chaos-entropy-bound} into
\eqref{eq:main-proof-dictionary-chaos-bound} and factoring the nonnegative
quantity $B_{\b x}$ gives
\begin{align}
\widehat{\mathfrak R}_n\left(\mathcal D\right)
&\le
\frac{1}{\sqrt n}
\left(
2\widetilde C_LB_{\b x}\Gamma+B_{\b x}
\right)^{1/2}
=
\frac{B_{\b x}^{1/2}}{\sqrt n}
\left(
2\widetilde C_L\Gamma+1
\right)^{1/2}.
\label{eq:main-proof-dictionary-after-chaos}
\end{align}
The identity remains valid when $B_{\b x}=0$. By
\eqref{eq:main-proof-complexity-factor},
\begin{align}
\Gamma&\ge1.
\label{eq:main-proof-gamma-lower-bound}
\end{align}
Hence
\begin{align}
2\widetilde C_L\Gamma+1
&\le
\left(
2\widetilde C_L+1
\right)\Gamma.
\label{eq:main-proof-absorb-diagonal}
\end{align}
Define
\begin{align}
C_L&:=\left(2\widetilde C_L+1\right)^{1/2}.
\label{eq:main-proof-depth-constant}
\end{align}
Combining the preceding estimates yields
\begin{align}
\widehat{\mathfrak R}_n\left(\mathcal D\right)
&\le
C_LB_{\b x}^{1/2}
\left(
\frac{\Gamma}{n}
\right)^{1/2}.
\label{eq:main-proof-refined-dictionary-bound}
\end{align}
The exact variation reduction
\eqref{eq:main-proof-exact-variation-reduction} therefore gives
\begin{align}
\widehat{\mathfrak R}_n\left(\mathcal W\right)
&=
R\widehat{\mathfrak R}_n\left(\mathcal D\right)
\le
C_LRB_{\b x}^{1/2}
\left(
\frac{\Gamma}{n}
\right)^{1/2}.
\label{eq:main-proof-refined-variation-bound}
\end{align}
We have therefore established both
\begin{align}
\widehat{\mathfrak R}_n\left(\mathcal W\right)
&\le RB_{\b x}^{1/2}
\label{eq:main-proof-first-competing-bound}
\end{align}
from \eqref{eq:main-proof-trivial-envelope-bound}, and
\begin{align}
\widehat{\mathfrak R}_n\left(\mathcal W\right)
&\le
C_LRB_{\b x}^{1/2}
\left(
\frac{\Gamma}{n}
\right)^{1/2}
\label{eq:main-proof-second-competing-bound}
\end{align}
from \eqref{eq:main-proof-refined-variation-bound}. Combining \eqref{eq:main-proof-first-competing-bound} and
\eqref{eq:main-proof-second-competing-bound}, and factoring the common
nonnegative term, gives
\begin{align}
\widehat{\mathfrak R}_n\left(\mathcal W\right)
&\le
RB_{\b x}^{1/2}
\min\left\{
1,
\;
C_L
\left(
\frac{\Gamma}{n}
\right)^{1/2}
\right\}.
\label{eq:main-proof-empirical-minimum-bound}
\end{align} Substituting $\mathcal W=\mathcal W_R^{L;m,G}$, $\Gamma=\Gamma_{L-1,m,G,n}$ in \eqref{eq:main-proof-empirical-minimum-bound} proves \eqref{eq:main-empirical-rademacher-bound}. By \Cref{lem:finite-bkl-parameter-count}\ref{lem:finite-architecture-range-control}, every $a\in\mathcal A$ satisfies
\begin{align}
\left|a\left(\b x\right)\right|
&\le
A_{\mX}^{\,2^{-(L-2)}},
\qquad
\b x\in\mX.
\label{eq:main-proof-finite-support-global-bound}
\end{align}
Consequently,
\begin{align}
B_{\b x}
&\stackrel{(a)}{=}
\sup_{a\in\mathcal A}
\max_{i\in\left[n\right]}
\left|a\left(\b x_i\right)\right|
\stackrel{(b)}{\le}
A_{\mX}^{\,2^{-(L-2)}}.
\label{eq:main-proof-deterministic-support-envelope}
\end{align}
Here, (a) applies \eqref{eq:main-proof-support-envelope}, and (b) applies \eqref{eq:main-proof-finite-support-global-bound} to every sample point. This proves \eqref{eq:main-finite-support-deterministic-envelope}. Since $A_{\mX}\ge1$, and therefore $A_{\mX}>0$, taking square roots in \eqref{eq:main-proof-deterministic-support-envelope} gives
\begin{align}
B_{\b x}^{1/2}
&\stackrel{(a)}{\le}
\left(
A_{\mX}^{\,2^{-(L-2)}}
\right)^{1/2}
\stackrel{(b)}{=}
A_{\mX}^{\,2^{-(L-1)}}.
\label{eq:main-proof-deterministic-outer-envelope}
\end{align}
Here, (a) uses the monotonicity of the square-\allowbreak root function on $\left[0,\infty\right)$, while (b) uses $\frac{1}{2}2^{-(L-2)}\allowbreak=\allowbreak 2^{-(L-1)}$. Let $X_1,\ldots,X_n$ be independent random variables with common distribution $\nu$. Applying \eqref{eq:main-proof-empirical-minimum-bound} to the realized sample and then using \eqref{eq:main-proof-deterministic-outer-envelope} gives
\begin{align}
\widehat{\mathfrak R}_n\left(\mathcal W_R^{L;m,G}\right)
&\stackrel{(a)}{\le}
RB_{\b X}^{1/2}
\min\left\{
1,
\;
C_L
\left(
\frac{\Gamma_{L-1,m,G,n}}{n}
\right)^{1/2}
\right\}
\notag\\
&\stackrel{(b)}{\le}
RA_{\mX}^{\,2^{-(L-1)}}
\min\left\{
1,
\;
C_L
\left(
\frac{\Gamma_{L-1,m,G,n}}{n}
\right)^{1/2}
\right\}.
\label{eq:main-proof-random-sample-deterministic-bound}
\end{align}
Here, (a) applies \eqref{eq:main-proof-empirical-minimum-bound} to the random sample, and (b) applies \eqref{eq:main-proof-deterministic-outer-envelope}. The right-hand side of \eqref{eq:main-proof-random-sample-deterministic-bound} is deterministic. Taking expectation with respect to $X_1,\ldots,X_n$ and applying \eqref{eq:expected-rademacher-complexity} therefore yields
\begin{align}
\mathfrak R_n\left(\mathcal W_R^{L;m,G}\right)
&\stackrel{(a)}{=}
\mathbb E_{\b X}
\left[
\widehat{\mathfrak R}_n
\left(\mathcal W_R^{L;m,G}\right)
\right]
 \stackrel{(b)}{\le}
\mathbb E_{\b X}
\left[
RA_{\mX}^{\,2^{-(L-1)}}
\min\left\{
1,
\;
C_L
\left(
\frac{\Gamma_{L-1,m,G,n}}{n}
\right)^{1/2}
\right\}
\right]
\notag\\
&\stackrel{(c)}{=}
RA_{\mX}^{\,2^{-(L-1)}}
\min\left\{
1,
\;
C_L
\left(
\frac{\Gamma_{L-1,m,G,n}}{n}
\right)^{1/2}
\right\}.
\label{eq:main-proof-expected-rademacher-final}
\end{align}
Here, (a) applies the definition of expected Rademacher complexity, (b) applies \eqref{eq:main-proof-random-sample-deterministic-bound}, and (c) uses the fact that every quantity inside the expectation is deterministic. Equation \eqref{eq:main-proof-expected-rademacher-final} is precisely \eqref{eq:main-rademacher-bound}. This completes the proof.

\end{noheadproof}
\hfill\BlackBox

\subsection{Proof of
\texorpdfstring{\Cref{thm:path-bkl-approx}}{}}
\begin{noheadproof}
Set
\begin{align}
R_L
:=
\left(
\int_{\mX}
\left\|
\b x
\right\|_2^{\,2^{-(L-2)}}
\,\d\nu\left(
\b x
\right)
\right)^{1/2}.
\end{align}
By
\Cref{lem:path-atom-bound},
every atom
$u\in\mU_L$
satisfies
\begin{align}
\left\|
u
\right\|_{L^2\left(\nu\right)}
\le
R_L.
\label{eq:main-approx-atom-radius}
\end{align}
Moreover,
\Cref{lem:atomic-bkl-dictionary-compact}
shows that
$\mU_L$
is compact in
$L^2\left(\nu\right)$.
Therefore,
\Cref{prop:variation-gauge-attainment}
provides a finite signed Radon measure
$\mu_F$
on
$\mU_L$
such that
\begin{align}
F
&=
\int_{\mU_L}
u
\,\d\mu_F\left(
u
\right)
\qquad
\text{in }
L^2\left(
\nu
\right),
\label{eq:main-approx-attaining-representation}
\\
\left\|
\mu_F
\right\|_{\mathrm{TV}}
&=
\CLv\left(
F
\right).
\label{eq:main-approx-attaining-norm}
\end{align}
Suppose first that
$\CLv\left( F \right) = 0$.
Then
\eqref{eq:main-approx-attaining-norm}
gives
$\left\| \mu_F \right\|_{\mathrm{TV}} = 0$,
and hence
$\mu_F = 0$.
Substituting this identity into
\eqref{eq:main-approx-attaining-representation}
gives
$F = 0$ in $L^2\left( \nu \right)$.
Choosing
$F_m := 0$
gives
$F_m \in \mV_m^{(L)}$
through the zero measure, whose support is empty, and proves the
claimed estimate in this case. We may therefore assume that
\begin{align}
\rho
:=
\left\|
\mu_F
\right\|_{\mathrm{TV}}
=
\CLv\left(
F
\right)
>
0.
\label{eq:main-approx-positive-radius}
\end{align}
Let
$\left|\mu_F\right|$
denote the total-variation measure of
$\mu_F$.
By the polar decomposition of finite real signed measures, there
exists a measurable function
\begin{align}
\sigma
:
\mU_L
\longrightarrow
\left\{
-1,1
\right\}
\label{eq:main-approx-polar-sign}
\end{align}
such that
\begin{align}
\d\mu_F\left(
u
\right)
=
\sigma\left(
u
\right)
\,\d\left|\mu_F\right|\left(
u
\right).
\label{eq:main-approx-polar-decomposition}
\end{align}
The function
$\sigma$
may be defined arbitrarily on a
$\left|\mu_F\right|$-null set so that
\eqref{eq:main-approx-polar-sign}
holds on all of
$\mU_L$. Define
\begin{align}
P_F
:=
\frac{
\left|\mu_F\right|
}{
\rho
}.
\label{eq:main-approx-probability-measure}
\end{align}
Then
$P_F$
is a probability measure on
$\mU_L$ because
\begin{align}
P_F\left(
\mU_L
\right)
&\stackrel{(a)}{=}
\frac{
\left|\mu_F\right|\left(
\mU_L
\right)
}{
\rho
}
\stackrel{(b)}{=}
\frac{
\left\|
\mu_F
\right\|_{\mathrm{TV}}
}{
\rho
}
\stackrel{(c)}{=}
1.
\end{align}
Here,
(a) applies
\eqref{eq:main-approx-probability-measure},
(b) applies the definition of the total-variation norm, and
(c) applies
\eqref{eq:main-approx-positive-radius}. Let
$U_1,\ldots,U_m$
be independent
$\mU_L$-valued random variables with common law
$P_F$,
and define
\begin{align}
Z_i
:=
\sigma\left(
U_i
\right)
U_i,
\qquad
i\in\left[m\right].
\label{eq:main-approx-random-signed-atoms}
\end{align}
Since
$\mU_L$
is compact in
$L^2\left(\nu\right)$,
it is separable.
The map
$u \longmapsto \sigma\left( u \right)u$
is measurable and has its range in the separable set
$\mU_L \cup \left( -\mU_L \right)$.
Consequently,
each
$Z_i$
is strongly measurable as an
$L^2\left(\nu\right)$-valued random variable.
Moreover,
\eqref{eq:main-approx-atom-radius}
gives
\begin{align}
\left\|
Z_i
\right\|_{L^2\left(\nu\right)}
&\stackrel{(a)}{=}
\left|
\sigma\left(
U_i
\right)
\right|
\left\|
U_i
\right\|_{L^2\left(\nu\right)}
 \stackrel{(b)}{=}
\left\|
U_i
\right\|_{L^2\left(\nu\right)}
\stackrel{(c)}{\le}
R_L
\qquad
\text{almost surely}.
\label{eq:main-approx-Z-bound}
\end{align}
Here,
(a) applies the absolute homogeneity of the
$L^2\left(\nu\right)$
norm,
(b) uses
$\left|\sigma\left(U_i\right)\right|=1$,
and
(c) applies
\eqref{eq:main-approx-atom-radius}.
Thus,
$Z_i$
is Bochner integrable and square-integrable. The Bochner expectation of
$Z_i$
satisfies
\begin{align}
\mathbb E
\left[
Z_i
\right]
&\stackrel{(a)}{=}
\int_{\mU_L}
\sigma\left(
u
\right)u
\,\d P_F\left(
u
\right)
\stackrel{(b)}{=}
\frac{1}{\rho}
\int_{\mU_L}
\sigma\left(
u
\right)u
\,\d\left|\mu_F\right|\left(
u
\right)
\stackrel{(c)}{=}
\frac{1}{\rho}
\int_{\mU_L}
u
\,\d\mu_F\left(
u
\right)
\notag\\
&\stackrel{(d)}{=}
\frac{
F
}{
\rho
}
\qquad
\text{in }
L^2\left(
\nu
\right).
\label{eq:main-approx-Z-mean}
\end{align}
Here,
(a) uses the law of
$U_i$
and
\eqref{eq:main-approx-random-signed-atoms},
(b) substitutes
\eqref{eq:main-approx-probability-measure},
(c) applies
\eqref{eq:main-approx-polar-decomposition},
and
(d) applies
\eqref{eq:main-approx-attaining-representation}. Define the random finite approximation
\begin{align}
\widetilde F_m
:=
\frac{
\rho
}{
m
}
\sum_{i=1}^{m}
Z_i.
\label{eq:main-approx-random-approximant}
\end{align}
By
\eqref{eq:main-approx-Z-mean},
\begin{align}
\mathbb E
\left[
\widetilde F_m
\right]
&\stackrel{(a)}{=}
\frac{
\rho
}{
m
}
\sum_{i=1}^{m}
\mathbb E
\left[
Z_i
\right]
 \stackrel{(b)}{=}
\frac{
\rho
}{
m
}
\sum_{i=1}^{m}
\frac{
F
}{
\rho
}
\stackrel{(c)}{=}
F
\qquad
\text{in }
L^2\left(
\nu
\right).
\end{align}
Here,
(a) uses the linearity of the Bochner expectation,
(b) applies
\eqref{eq:main-approx-Z-mean},
and
(c) evaluates the sum. Set
\begin{align}
Y_i
:=
Z_i
-
\frac{
F
}{
\rho
},
\qquad
i\in\left[m\right].
\label{eq:main-approx-centered-variables}
\end{align}
Then
$Y_1,\ldots,Y_m$
are independent, square-integrable, and centered:
\begin{align}
\mathbb E
\left[
Y_i
\right]
&\stackrel{(a)}{=}
\mathbb E
\left[
Z_i
\right]
-
\frac{
F
}{
\rho
}
\stackrel{(b)}{=}
0.
\label{eq:main-approx-centered-means}
\end{align}
Here,
(a) applies
\eqref{eq:main-approx-centered-variables},
and
(b) applies
\eqref{eq:main-approx-Z-mean}. Moreover,
\begin{align}
F
-
\widetilde F_m
&\stackrel{(a)}{=}
F
-
\frac{
\rho
}{
m
}
\sum_{i=1}^{m}
Z_i
 \stackrel{(b)}{=}
-\frac{
\rho
}{
m
}
\sum_{i=1}^{m}
\left(
Z_i
-
\frac{
F
}{
\rho
}
\right)
 \stackrel{(c)}{=}
-\frac{
\rho
}{
m
}
\sum_{i=1}^{m}
Y_i.
\label{eq:main-approx-centered-error}
\end{align}
Here,
(a) applies
\eqref{eq:main-approx-random-approximant},
(b) uses
$F = \frac{ \rho }{ m } \sum_{i=1}^{m} \frac{ F }{ \rho }$,
and
(c) applies
\eqref{eq:main-approx-centered-variables}. For
$i\neq j$,
independence and
\eqref{eq:main-approx-centered-means}
give
\begin{align}
\mathbb E
\left[
\left\langle
Y_i,
Y_j
\right\rangle_{L^2\left(\nu\right)}
\right]
&\stackrel{(a)}{=}
\left\langle
\mathbb E
\left[
Y_i
\right],
\mathbb E
\left[
Y_j
\right]
\right\rangle_{L^2\left(\nu\right)}
\stackrel{(b)}{=}
0.
\label{eq:main-approx-cross-terms}
\end{align}
Here,
(a) follows from independence and Fubini's theorem for the
square-integrable Hilbert-space-valued variables, and
(b) applies
\eqref{eq:main-approx-centered-means}. Using
\eqref{eq:main-approx-centered-error},
we therefore obtain
\begin{align}
&
\mathbb E
\left[
\left\|
F
-
\widetilde F_m
\right\|_{L^2\left(\nu\right)}^2
\right]
\stackrel{(a)}{=}
\frac{
\rho^2
}{
m^2
}
\mathbb E
\left[
\left\|
\sum_{i=1}^{m}
Y_i
\right\|_{L^2\left(\nu\right)}^2
\right]
\notag\\
&\stackrel{(b)}{=}
\frac{
\rho^2
}{
m^2
}
\left[
\sum_{i=1}^{m}
\mathbb E
\left[
\left\|
Y_i
\right\|_{L^2\left(\nu\right)}^2
\right]
+
2
\sum_{1\le i<j\le m}
\mathbb E
\left[
\left\langle
Y_i,
Y_j
\right\rangle_{L^2\left(\nu\right)}
\right]
\right]
\stackrel{(c)}{=}
\frac{
\rho^2
}{
m^2
}
\sum_{i=1}^{m}
\mathbb E
\left[
\left\|
Y_i
\right\|_{L^2\left(\nu\right)}^2
\right]
\notag\\
&\stackrel{(d)}{=}
\frac{
\rho^2
}{
m^2
}
\sum_{i=1}^{m}
\left[
\mathbb E
\left[
\left\|
Z_i
\right\|_{L^2\left(\nu\right)}^2
\right]
-
\left\|
\mathbb E
\left[
Z_i
\right]
\right\|_{L^2\left(\nu\right)}^2
\right]
\stackrel{(e)}{\le}
\frac{
\rho^2
}{
m^2
}
\sum_{i=1}^{m}
\mathbb E
\left[
\left\|
Z_i
\right\|_{L^2\left(\nu\right)}^2
\right]
\notag\\
&\stackrel{(f)}{\le}
\frac{
\rho^2
}{
m^2
}
\sum_{i=1}^{m}
R_L^2
\stackrel{(g)}{=}
\frac{
\rho^2R_L^2
}{
m
}.
\label{eq:main-approx-expected-error}
\end{align}
Here,
(a) applies
\eqref{eq:main-approx-centered-error},
(b) expands the squared Hilbert-space norm,
(c) applies
\eqref{eq:main-approx-cross-terms},
(d) applies the Hilbert-space variance identity to
\eqref{eq:main-approx-centered-variables},
(e) discards the nonnegative squared-mean term,
(f) applies
\eqref{eq:main-approx-Z-bound},
and
(g) evaluates the sum. Since the random variable
$\left\| F - \widetilde F_m \right\|_{L^2\left(\nu\right)}^2$
is nonnegative, the expectation estimate
\eqref{eq:main-approx-expected-error}
implies that there exists a realization\linebreak
$u_1,\ldots,u_m \allowbreak\in \mU_L$
of
$U_1,\ldots,U_m$
such that
\begin{align}
\left\|
F
-
F_m
\right\|_{L^2\left(\nu\right)}^2
\le
\frac{
\rho^2R_L^2
}{
m
},
\label{eq:main-approx-deterministic-squared-error}
\end{align}
where
\begin{align}
F_m
:=
\frac{
\rho
}{
m
}
\sum_{i=1}^{m}
\sigma\left(
u_i
\right)
u_i.
\end{align}
Indeed, if
\eqref{eq:main-approx-deterministic-squared-error}
failed for every realization, then the expectation in
\eqref{eq:main-approx-expected-error}
would be strictly larger than
$\rho^2R_L^2/m$. Define the finite signed discrete measure
\begin{align}
\mu_m
:=
\frac{
\rho
}{
m
}
\sum_{i=1}^{m}
\sigma\left(
u_i
\right)
\delta_{u_i}.
\label{eq:main-approx-discrete-measure}
\end{align}
By the definition of $\mu_m$,
\begin{align}
F_m
&=
\int_{\mU_L}
u\,\d\mu_m(u),
&
\left|\operatorname{supp}(\mu_m)\right|
&\le
m,
\end{align}
where repeated sampled atoms can only reduce the number of distinct support
points.
Consequently, $F_m
\in
\mV_m^{(L)}$. Taking square roots in
\eqref{eq:main-approx-deterministic-squared-error} and using
\eqref{eq:main-approx-positive-radius} gives
\begin{align}
\left\|F-F_m\right\|_{L^2(\nu)}
\le
R_L\CLv(F)m^{-1/2}.
\label{eq:main-approx-final-bound-RL}
\end{align}
Finally, substituting the definition of $R_L$ into
\eqref{eq:main-approx-final-bound-RL} gives
\begin{align}
\left\|
F
-
F_m
\right\|_{L^2\left(\nu\right)}
\le
\left(
\int_{\mX}
\left\|
\b x
\right\|_2^{\,2^{-(L-2)}}
\,\d\nu\left(
\b x
\right)
\right)^{1/2}
\CLv\left(
F
\right)
m^{-1/2},
\end{align}
which proves the claim.
\end{noheadproof}
\hfill\BlackBox

\subsection{Proof of
\texorpdfstring{\Cref{thm:two-stage-path-profile-bkl}}{}}

\begin{noheadproof}
Set $\rho
:=
\CLv\left(
F
\right)$. If
$\rho=0$,
then
\Cref{cor:path-gauge-nondegenerate}
gives
$F = 0 \qquad \text{in } L^2\left( \nu \right)$.
In this case, the zero realization proves the error estimate and the
local sparsity assertion.
We may therefore assume that $\rho
>
0$. We first construct a finite atomic approximation.
By
\Cref{thm:path-bkl-approx},
applied with the number of atoms equal to
$M$,
there exists
$F_M \in \mV_M^{(L)}$
such that
\begin{align}
\left\|
F
-
F_M
\right\|_{L^2\left(\nu\right)}
\le
R_L
\rho
M^{-1/2}.
\label{eq:two-stage-path-error}
\end{align}
Moreover, the explicit construction in the proof of
\Cref{thm:path-bkl-approx}
shows that
$F_M$
may be chosen in the form
\begin{align}
F_M\left(
\b x
\right)
=
\frac{
\rho
}{
M
}
\sum_{j=1}^{M}
\sigma_j
u_j\left(
\b x
\right),
\qquad
\b x\in\mX,
\label{eq:two-stage-path-approximant}
\end{align}
where
$u_1,\ldots,u_M
\in
\mU_L$ and
$\sigma_1,\ldots,\sigma_M
\in
\left\{
-1,1
\right\}$. Fix
$j\in\left[M\right]$.
By the recursive unit-ball characterization in
\Cref{lem:vbkl-union-rkhs},
there exist $a_{u_j}
\in
\mU_{L-1}$ and $g_{u_j}
\in
\mH_{\kB}$ such that
\begin{align}
u_j\left(
\b x
\right)
&=
g_{u_j}
\left(
a_{u_j}\left(
\b x
\right)
\right),
\qquad
\b x\in\mX,
\label{eq:two-stage-selected-atom-representation}
\\
\left\|
g_{u_j}
\right\|_{\mH_{\kB}}
&\le
1.
\label{eq:two-stage-selected-profile-norm}
\end{align}
By
\Cref{lem:lower-level-atomic-range-bound},
\begin{align}
a_{u_j}\left(
\mX
\right)
\subseteq
\left[
-A_0,
A_0
\right].
\label{eq:two-stage-selected-support-range}
\end{align}
Since
$\rho>0$,
the degenerate case
$R_{\mX}=0$
cannot occur.
Indeed, if
$R_{\mX}=0$,
then
$\mX\subseteq\left\{\boldsymbol 0\right\}$,
and
\Cref{lem:path-atom-bound}
would imply that every atom in
$\mU_L$
vanishes on
$\mX$.
This would imply
$F=0$
and therefore $\rho=0$,
contrary to the assumption that
$\rho>0$.
Hence, $A_0>0$. Define the interpolated selected atom by
\begin{align}
u_{j,m}\left(
\b x
\right)
:=
\left(
\Pi_mg_{u_j}
\right)
\left(
a_{u_j}\left(
\b x
\right)
\right),
\qquad
\b x\in\mX.
\label{eq:two-stage-interpolated-atom}
\end{align}
Applying
\Cref{lem:profile-interpolation-composition}
with
$A = A_0, a = a_{u_j}, g = g_{u_j}$
gives
\begin{align}
\left\|
u_j
-
u_{j,m}
\right\|_{L^2\left(\nu\right)}
&\stackrel{(a)}{\le}
\left(
\frac{
A_0
}{
2
}
\right)^{1/2}
m^{-1/2}
\left\|
g_{u_j}
\right\|_{\mH_{\kB}}
 \stackrel{(b)}{\le}
\left(
\frac{
A_0
}{
2
}
\right)^{1/2}
m^{-1/2}.
\label{eq:two-stage-single-path-l2}
\end{align}
Here,
(a) applies
\eqref{eq:profile-interpolation-composition-bound},
using
\eqref{eq:two-stage-selected-support-range},
while
(b) applies
\eqref{eq:two-stage-selected-profile-norm}. The two-stage approximant from the theorem statement can be written as
\begin{align}
F_{M,m}
\left(
\b x
\right)
&\stackrel{(a)}{=}
\frac{
\rho
}{
M
}
\sum_{j=1}^{M}
\sigma_j
\left(
\Pi_mg_{u_j}
\right)
\left(
a_{u_j}\left(
\b x
\right)
\right)
 \stackrel{(b)}{=}
\frac{
\rho
}{
M
}
\sum_{j=1}^{M}
\sigma_j
u_{j,m}\left(
\b x
\right).
\label{eq:two-stage-approximant-proof}
\end{align}
Here,
(a) applies
\eqref{eq:two-stage-approximant},
and
(b) applies
\eqref{eq:two-stage-interpolated-atom}. Subtracting
\eqref{eq:two-stage-approximant-proof}
from
\eqref{eq:two-stage-path-approximant}
gives
\begin{align}
F_M
-
F_{M,m}
=
\frac{
\rho
}{
M
}
\sum_{j=1}^{M}
\sigma_j
\left(
u_j
-
u_{j,m}
\right).
\label{eq:two-stage-profile-difference}
\end{align}
Therefore,
\begin{align}
&\left\|
F_M
-
F_{M,m}
\right\|_{L^2\left(\nu\right)}
\stackrel{(a)}{=}
\left\|
\frac{
\rho
}{
M
}
\sum_{j=1}^{M}
\sigma_j
\left(
u_j
-
u_{j,m}
\right)
\right\|_{L^2\left(\nu\right)}
\stackrel{(b)}{\le}
\frac{
\rho
}{
M
}
\sum_{j=1}^{M}
\left|
\sigma_j
\right|
\left\|
u_j
-
u_{j,m}
\right\|_{L^2\left(\nu\right)}
\notag\\
&\stackrel{(c)}{=}
\frac{
\rho
}{
M
}
\sum_{j=1}^{M}
\left\|
u_j
-
u_{j,m}
\right\|_{L^2\left(\nu\right)}
\stackrel{(d)}{\le}
\frac{
\rho
}{
M
}
\sum_{j=1}^{M}
\left(
\frac{
A_0
}{
2
}
\right)^{1/2}
m^{-1/2}
\stackrel{(e)}{=}
\left(
\frac{
A_0
}{
2
}
\right)^{1/2}
\rho
m^{-1/2}.
\label{eq:two-stage-profile-error}
\end{align}
Here,
(a) applies
\eqref{eq:two-stage-profile-difference},
(b) applies the triangle inequality and absolute homogeneity in
$L^2\left(\nu\right)$,
(c) uses
$\left|\sigma_j\right|=1$,
(d) applies
\eqref{eq:two-stage-single-path-l2},
and
(e) evaluates the sum of
$M$
identical terms. The triangle inequality now gives
\begin{align}
&\left\|
F
-
F_{M,m}
\right\|_{L^2\left(\nu\right)}
\stackrel{(a)}{\le}
\left\|
F
-
F_M
\right\|_{L^2\left(\nu\right)}
+
\left\|
F_M
-
F_{M,m}
\right\|_{L^2\left(\nu\right)}
\notag\\
&\stackrel{(b)}{\le}
R_L
\rho
M^{-1/2}
+
\left(
\frac{
A_0
}{
2
}
\right)^{1/2}
\rho
m^{-1/2}
\stackrel{(c)}{=}
\CLv\left(
F
\right)
\left[
R_L
M^{-1/2}
+
\left(
\frac{
A_0
}{
2
}
\right)^{1/2}
m^{-1/2}
\right].
\end{align}
Here,
(a) applies the triangle inequality,
(b) combines
\eqref{eq:two-stage-path-error}
and
\eqref{eq:two-stage-profile-error},
and
(c) applies
definition of $\rho$.
This proves
\eqref{eq:two-stage-final-bound}. It remains to prove the local evaluation sparsity assertion.
Let $-A_0
=
t_0
<
t_1
<
\cdots
<
t_m
=
A_0$ be the uniform interpolation grid underlying
$\Pi_m$,
and let
$\psi_0,\ldots,\psi_m$
denote the corresponding continuous piecewise-linear hat basis
functions. Fix
$\b x\in\mX$
and
$j\in\left[M\right]$.
By
\eqref{eq:two-stage-selected-support-range},
$a_{u_j}\left( \b x \right) \in \left[ -A_0, A_0 \right]$.
Hence, there exists an index
$q_j\left( \b x \right) \in \left\{ 0,\ldots,m-1 \right\}$
such that
\begin{align}
a_{u_j}\left(
\b x
\right)
\in
\left[
t_{q_j\left(\b x\right)},
t_{q_j\left(\b x\right)+1}
\right].
\label{eq:two-stage-active-cell}
\end{align}
The nodal representation of the piecewise-linear interpolant gives
\begin{align}
\left(
\Pi_mg_{u_j}
\right)
\left(
a_{u_j}\left(
\b x
\right)
\right)
=
\sum_{k=0}^{m}
g_{u_j}\left(
t_k
\right)
\psi_k
\left(
a_{u_j}\left(
\b x
\right)
\right).
\label{eq:two-stage-nodal-interpolant}
\end{align}
Since each hat function
$\psi_k$
is supported only on the grid intervals adjacent to
$t_k$,
\eqref{eq:two-stage-active-cell}
implies
\begin{align}
\psi_k
\left(
a_{u_j}\left(
\b x
\right)
\right)
=
0,
\qquad
k
\notin
\left\{
q_j\left(
\b x
\right),
q_j\left(
\b x
\right)+1
\right\}.
\label{eq:two-stage-inactive-hat-functions}
\end{align}
This statement remains valid when
$a_{u_j}\left(\b x\right)$
is a grid knot: one may select either adjacent cell, and the number of
nonzero hat functions can only decrease. Substituting
\eqref{eq:two-stage-inactive-hat-functions}
into
\eqref{eq:two-stage-nodal-interpolant}
gives
\begin{align}
&
\left(
\Pi_mg_{u_j}
\right)
\left(
a_{u_j}\left(
\b x
\right)
\right)
\stackrel{(a)}{=}
g_{u_j}
\left(
t_{q_j\left(\b x\right)}
\right)
\psi_{q_j\left(\b x\right)}
\left(
a_{u_j}\left(
\b x
\right)
\right)
+
g_{u_j}
\left(
t_{q_j\left(\b x\right)+1}
\right)
\psi_{q_j\left(\b x\right)+1}
\left(
a_{u_j}\left(
\b x
\right)
\right).
\end{align}
Here,
(a) removes from
\eqref{eq:two-stage-nodal-interpolant}
all terms that vanish by
\eqref{eq:two-stage-inactive-hat-functions}. Therefore, for each fixed
$\b x\in\mX$
and
$j\in\left[M\right]$,
the evaluation
$\left( \Pi_mg_{u_j} \right) \left( a_{u_j}\left( \b x \right) \right)$
uses at most two active hat functions and at most two corresponding
nodal coefficients. Finally,
\eqref{eq:two-stage-approximant-proof}
shows that
\begin{align}
F_{M,m}\left(
\b x
\right)
=
\frac{
\CLv\left(
F
\right)
}{
M
}
\sum_{j=1}^{M}
\sigma_j
\left(
\Pi_mg_{u_j}
\right)
\left(
a_{u_j}\left(
\b x
\right)
\right).
\end{align}
Since each of the
$M$
summands contributes at most two active interpolation basis
functions, evaluating
$F_{M,m}\left(\b x\right)$
requires at most
$2M$
active outer-profile basis contributions.
This number is independent of the interpolation resolution
$m$,
which proves the local evaluation sparsity assertion.
\end{noheadproof}
\hfill\BlackBox

\subsection{Proof of
\texorpdfstring{\Cref{thm:profile-bkl-approximation}}{}}

\begin{noheadproof}
For every
$g\in\mH_{\kB}$
one has
\begin{align}
\left\|
g
-
\Pi_mg
\right\|_{L^\infty\left(\left[-A,A\right]\right)}
&\stackrel{(a)}{\le}
\left(
\frac{
A
}{
2m
}
\right)^{1/2}
\left\|
g
\right\|_{\mH_{\kB}}
 \stackrel{(b)}{=}
\left(
\frac{
A
}{
2
}
\right)^{1/2}
m^{-1/2}
\left\|
g
\right\|_{\mH_{\kB}}.
\end{align}
Here,
(a) applies
\eqref{eq:symmetric-profile-interpolation-bound} of \Cref{thm:profile-bkl-approximation-symmetric},
and
(b) uses $\left(
A/2m
\right)^{1/2}
=
\left(
A/2
\right)^{1/2}
m^{-1/2}$. This is precisely
\eqref{eq:profile-interpolation-symmetric-bound},
and therefore proves the proposition.

\end{noheadproof}
\hfill\BlackBox

\appendix
% Keep both the PDF page and the dependency graph upright, exactly like the
% surrounding JMLR pages.  The exact supplied graph is scaled uniformly to the
% normal text width; only invisible clickable regions are added.
\begin{figure}[H]
\centering
\input{FIG/dependency_graph_theory_clickable_overlay.tex}
\vspace{-0.35em}
\caption{
Proof-dependency structure of the theoretical results.
The displayed graph is the exact supplied dependency graph; every panel title, result node, and experiment node is clickable and jumps to the corresponding statement, section, or protocol.}
\label{fig:dependency_graph}
\end{figure}

\section{Experimental Protocols and Extended Results}
\label{app:experimental-details}
\label{app:additional-experiments}

This appendix provides the protocols and numerical details supporting
Section~\ref{sec:experiments}. The principal empirical findings and all
figures needed to assess them are retained in the main body; the material
below records the controlled constructions, validation-selected
configurations, exact numerical comparisons, and computational
measurements.

\subsection{Constructive-approximation protocol}
\label{app:constructive-protocol}

The constructive studies use a deterministic teacher
$F\in\mathcal{V}^{(L)}$ represented by an explicit finite signed measure on
recursive Brownian atoms. Each selected atom has an exact recursive support
and a deterministic unit-energy Brownian profile. Because this reference
representation is known, the approximation errors are computed directly
and do not contain training or statistical-estimation error. The three panels in Figure~\ref{fig:constructive_realization} isolate the
components of the construction. For signed-measure discretization, the
profiles are kept exact and only the number $M$ of sampled atoms is varied.
For profile discretization, the finite signed measure is held fixed and the
selected profiles are replaced by continuous piecewise-linear interpolants
of resolution $m$. For balanced refinement, both resolutions are set equal,
$M=m=N$. Monte Carlo means and percentile bands are used for the sampling
experiments, whereas the profile-discretization curve is deterministic.
The separate profile study in Figure~\ref{fig:profile-interpolation} uses
$m\in\{8,16,32,64,128,256,512\}$ and reports both typical fixed-profile
behaviour and the normalized tent profile that attains the sharp uniform
bound.

\subsection{Statistical-learning protocol and extended results}
\label{app:statistical-details}

The supervised studies comprise the recursive-teacher benchmark and the
Energy Efficiency benchmark. In each study, VBKL architecture selection is
performed separately for every training-set size and random seed using the
validation set. The selected configuration is then retrained and evaluated
on the held-out test set. Results are aggregated over five independent
seeds. DNVS, KRR, and RBF are evaluated under the same split protocol within
each benchmark.

\subsubsection{Recursive-teacher configurations}
\label{app:recursive-teacher-details}

The recursive-teacher target is generated using the same hierarchical
Brownian mechanism as the proposed model, providing a controlled matched
setting. Validation jointly selects the number of recursive Brownian paths
and the optimization horizon. Table~\ref{tab:vbkl_selected_architectures}
reports these choices in seed order. The variation across sample sizes and
seeds is intentional: it records the adaptive model-selection procedure
used to obtain the learning curve in
Figure~\ref{fig:recursive_learning}.

\begin{table}[t]
\centering

\small
\setlength{\tabcolsep}{5pt}

\begin{tabular}{rcccc}
\toprule
$n_{\mathrm{train}}$
&
Paths by seed
&
Median paths
&
Epochs by seed
&
Median epochs
\\
\midrule
100   & 32, 8, 2, 8, 16      &  8  & 100, 190, 600, 430, 30      & 190 \\
250   & 2, 64, 64, 32, 8      & 32  & 450, 70, 80, 190, 110        & 110 \\
500   & 64, 16, 16, 64, 32    & 32  & 140, 280, 90, 60, 110        & 110 \\
1,000 & 16, 64, 16, 4, 2      & 16  & 170, 70, 150, 380, 380       & 170 \\
2,000 & 4, 8, 8, 64, 32       &  8  & 1,990, 990, 440, 90, 110     & 440 \\
5,000 & 32, 4, 64, 16, 64     & 32  & 170, 510, 90, 570, 80        & 170 \\
\bottomrule
\end{tabular}

\caption{Validation-selected VBKL configurations. Path counts and training epochs are listed in seed order. The architecture and optimization horizon are selected independently for each training size and random seed using the validation set.}
\label{tab:vbkl_selected_architectures}
\end{table}
 
\subsubsection{Energy Efficiency numerical results}
\label{app:energy-details}

The Energy Efficiency experiment uses the publicly available regression
benchmark and the same train/validation/test protocol for all compared
methods. For VBKL, validation selects the number of paths, the Brownian
profile resolution, and the optimization horizon independently for every
training size and seed. Table~\ref{tab:energy-main} gives the exact means and
standard deviations underlying
Figure~\ref{fig:energy_learning_curve}. At $n=100$, the mean VBKL error is
close to the best reported value and lower than the DNVS mean. At the two
larger training sizes, VBKL has the largest mean error among the four
methods. The parameter-efficiency comparison in the main body should
therefore be read together with, rather than in place of, this predictive
comparison.

\begin{table}[t]
\centering
\small
\renewcommand{\arraystretch}{1.22}
\setlength{\tabcolsep}{7pt}
\begin{tabular}{rcccc}
\toprule
$n$ & VBKL & DNVS & KRR & RBF \\
\midrule
100
& \shortstack{0.051656\\(0.025140)}
& \shortstack{0.092731\\(0.013174)}
& \shortstack{0.059793\\(0.031508)}
& \shortstack{0.050172\\(0.022180)} \\
250
& \shortstack{0.007220\\(0.004147)}
& \shortstack{0.005756\\(0.001281)}
& \shortstack{0.005158\\(0.001125)}
& \shortstack{0.006908\\(0.001264)} \\
500
& \shortstack{0.004793\\(0.003983)}
& \shortstack{0.002442\\(0.000232)}
& \shortstack{0.002627\\(0.000309)}
& \shortstack{0.002636\\(0.000276)} \\
\bottomrule
\end{tabular}

\caption{Test MSE on the Energy Efficiency benchmark. Values are means, with standard deviations in parentheses, over five random seeds.}
\label{tab:energy-main}
\end{table}
 
\subsection{Optimization protocol}
\label{app:optimization-details}

The consistency experiment uses automatic differentiation as the reference
for directional derivatives and compares it with the finite-difference
averaged directional estimator at interaction scales
$h\in\{10^{-1},10^{-2},10^{-3},10^{-4}\}$. The Monte Carlo stability study
varies the number of sampled directions from $2^0$ to $2^7$ and records the
standard deviation of the resulting estimator. As shown in
Figure~\ref{fig:optimization_validation}, the first experiment exhibits the
usual finite-difference scale trade-off, while the second shows systematic
variance reduction as the number of directions increases.

\subsection{Computational characteristics}
\label{app:computational-details}

The recursive-learning benchmark also records training time, inference time
for the complete held-out test set, and fitted or trainable parameter count.
Table~\ref{tab:computational_characteristics} summarizes these quantities
over all training sizes and seeds. Tables~\ref{tab:recursive-learning-training-time},
\ref{tab:recursive-learning-inference-time}, and
\ref{tab:recursive-learning-parameters} provide the corresponding
training-size-specific results. Times are reported as means with standard
deviations over five seeds. Parameter counts are reported in the same format
when they vary across selected configurations.

The measurements show that the finite VBKL models remain computationally
tractable at the benchmark scale, although they are not uniformly the
fastest method. DNVS has lower inference time throughout this comparison,
and the feature and kernel baselines have lower training time for the tested
sample sizes. The principal computational distinction of VBKL is instead
that its recursive representation remains explicit and, relative to DNVS,
substantially more compact in the limited-data Energy Efficiency comparison.

\begin{table}[H]
\centering

\small
\setlength{\tabcolsep}{6pt}

\begin{tabular}{lccc}
\toprule
Model & Training time (s) & Inference time (ms) & Parameters \\
\midrule
VBKL & 2.797 $\pm$ 2.189 & 13.104 $\pm$ 11.204 & 3,248 \\
DNVS & 1.338 $\pm$ 0.232 & 0.882 $\pm$ 0.523 & 8,960 \\
KRR  & 0.103 $\pm$ 0.180 & 48.811 $\pm$ 60.318 &   750 \\
RBF  & 0.007 $\pm$ 0.009 &  8.413 $\pm$  4.517 &     0 \\
\bottomrule
\end{tabular}

\caption{Computational characteristics on the recursive-learning benchmark. Training and inference times are reported as mean $\pm$ standard deviation over all training sizes and seeds. Parameter counts are reported by their median because the validation-selected VBKL architecture may vary across runs.}
\label{tab:computational_characteristics}
\end{table}
\begin{table}[H]
\centering
\footnotesize
\renewcommand{\arraystretch}{1.35}
\setlength{\tabcolsep}{6.5pt}
\begin{tabular}{lcccccc}
\toprule
Model & \multicolumn{6}{c}{Training samples} \\
\cmidrule(lr){2-7}
 & $100$ & $250$ & $500$ & $1000$ & $2000$ & $5000$ \\
\midrule
\textbf{VBKL} & \shortstack[c]{1.086\\{\footnotesize(0.0698)}} & \shortstack[c]{1.912\\{\footnotesize(0.0308)}} & \shortstack[c]{2.551\\{\footnotesize(0.0543)}} & \shortstack[c]{1.471\\{\footnotesize(0.0092)}} & \shortstack[c]{3.878\\{\footnotesize(0.0442)}} & \shortstack[c]{5.882\\{\footnotesize(0.0064)}} \\
\textbf{DNVS} & \shortstack[c]{1.123\\{\footnotesize(0.0057)}} & \shortstack[c]{1.181\\{\footnotesize(0.0096)}} & \shortstack[c]{1.353\\{\footnotesize(0.0142)}} & \shortstack[c]{1.413\\{\footnotesize(0.0271)}} & \shortstack[c]{1.435\\{\footnotesize(0.0364)}} & \shortstack[c]{1.523\\{\footnotesize(0.0003)}} \\
\textbf{KRR} & \underline{\shortstack[c]{0.015\\{\footnotesize(0.0023)}}} & \underline{\shortstack[c]{0.005\\{\footnotesize(0)}}} & \underline{\shortstack[c]{0.007\\{\footnotesize(0.0001)}}} & \underline{\shortstack[c]{0.018\\{\footnotesize(0.0005)}}} & \underline{\shortstack[c]{0.078\\{\footnotesize(0.0034)}}} & \underline{\shortstack[c]{0.492\\{\footnotesize(0.0002)}}} \\
\textbf{RBF} & \textbf{\shortstack[c]{0.002\\{\footnotesize(0.0001)}}} & \textbf{\shortstack[c]{0.002\\{\footnotesize(0)}}} & \textbf{\shortstack[c]{0.003\\{\footnotesize(0.0001)}}} & \textbf{\shortstack[c]{0.005\\{\footnotesize(0.0005)}}} & \textbf{\shortstack[c]{0.009\\{\footnotesize(0.0007)}}} & \textbf{\shortstack[c]{0.020\\{\footnotesize(0.0013)}}} \\
\bottomrule
\end{tabular}
\vspace{1mm}
\begin{minipage}{0.98\textwidth}
\footnotesize Values are mean, with standard deviation shown in parentheses over five random seeds. The best result in each column is shown in bold and the second-best result is underlined.
\end{minipage}

\caption{Training time in seconds on the recursive stochastic teacher benchmark. Lower values are better.}
\label{tab:recursive-learning-training-time}
\end{table}
\begin{table}[H]
\centering
\footnotesize
\renewcommand{\arraystretch}{1.35}
\setlength{\tabcolsep}{6.5pt}
\begin{tabular}{lcccccc}
\toprule
Model & \multicolumn{6}{c}{Training samples} \\
\cmidrule(lr){2-7}
 & $100$ & $250$ & $500$ & $1000$ & $2000$ & $5000$ \\
\midrule
\textbf{VBKL} & \shortstack[c]{0.0069\\{\footnotesize(0.0062)}} & \shortstack[c]{0.0161\\{\footnotesize(0.0137)}} & \shortstack[c]{0.0183\\{\footnotesize(0.0111)}} & \shortstack[c]{0.0096\\{\footnotesize(0.0117)}} & \shortstack[c]{0.0110\\{\footnotesize(0.0117)}} & \shortstack[c]{0.0167\\{\footnotesize(0.0125)}} \\
\textbf{DNVS} & \textbf{\shortstack[c]{0.0010\\{\footnotesize(0.0004)}}} & \textbf{\shortstack[c]{0.0011\\{\footnotesize(0.0004)}}} & \textbf{\shortstack[c]{0.0013\\{\footnotesize(0.0005)}}} & \textbf{\shortstack[c]{0.0011\\{\footnotesize(0.0005)}}} & \textbf{\shortstack[c]{0.0005\\{\footnotesize(0.0004)}}} & \textbf{\shortstack[c]{0.0003\\{\footnotesize(0)}}} \\
\textbf{KRR} & \underline{\shortstack[c]{0.0028\\{\footnotesize(0.0003)}}} & \underline{\shortstack[c]{0.0062\\{\footnotesize(0.0002)}}} & \shortstack[c]{0.0126\\{\footnotesize(0.0003)}} & \shortstack[c]{0.0314\\{\footnotesize(0.0068)}} & \shortstack[c]{0.0688\\{\footnotesize(0.0104)}} & \shortstack[c]{0.1711\\{\footnotesize(0.0101)}} \\
\textbf{RBF} & \shortstack[c]{0.0073\\{\footnotesize(0.0035)}} & \shortstack[c]{0.0108\\{\footnotesize(0.0038)}} & \underline{\shortstack[c]{0.0070\\{\footnotesize(0.0044)}}} & \underline{\shortstack[c]{0.0072\\{\footnotesize(0.0048)}}} & \underline{\shortstack[c]{0.0085\\{\footnotesize(0.0054)}}} & \underline{\shortstack[c]{0.0097\\{\footnotesize(0.0059)}}} \\
\bottomrule
\end{tabular}
\vspace{1mm}
\begin{minipage}{0.98\textwidth}
\footnotesize Values are mean, with standard deviation shown in parentheses over five random seeds. The best result in each column is shown in bold and the second-best result is underlined.
\end{minipage}

\caption{Inference time in seconds for the complete held-out test set. Lower values are better.}
\label{tab:recursive-learning-inference-time}
\end{table}
\begin{table}[H]
\centering
\footnotesize
\renewcommand{\arraystretch}{1.35}
\setlength{\tabcolsep}{6.5pt}
\begin{tabular}{lcccccc}
\toprule
Model & \multicolumn{6}{c}{Training samples} \\
\cmidrule(lr){2-7}
 & $100$ & $250$ & $500$ & $1000$ & $2000$ & $5000$ \\
\midrule
\textbf{VBKL} & \shortstack[c]{2,680\\{\footnotesize(2,360)}} & \shortstack[c]{6,902\\{\footnotesize(6,008)}} & \shortstack[c]{7,795\\{\footnotesize(4,926)}} & \shortstack[c]{4,141\\{\footnotesize(5,123)}} & \shortstack[c]{4,710\\{\footnotesize(5,148)}} & \shortstack[c]{7,308\\{\footnotesize(5,567)}} \\
\textbf{DNVS} & \shortstack[c]{12,723\\{\footnotesize(12,391)}} & \shortstack[c]{14,029\\{\footnotesize(11,334)}} & \shortstack[c]{24,166\\{\footnotesize(13,881)}} & \shortstack[c]{21,555\\{\footnotesize(17,457)}} & \shortstack[c]{8,806\\{\footnotesize(14,254)}} & \underline{2,432} \\
\textbf{KRR} & \underline{100} & \underline{250} & \underline{500} & \underline{1,000} & \underline{2,000} & 5,000 \\
\textbf{RBF} & \textbf{0} & \textbf{0} & \textbf{0} & \textbf{0} & \textbf{0} & \textbf{0} \\
\bottomrule
\end{tabular}
\vspace{1mm}
\begin{minipage}{0.98\textwidth}
\footnotesize Values are mean, with standard deviation shown in parentheses over five random seeds. The best result in each column is shown in bold and the second-best result is underlined.
\end{minipage}

\caption{Number of trainable or fitted parameters used by each method. Lower values indicate a more compact representation.}
\label{tab:recursive-learning-parameters}
\end{table}
 
\FloatBarrier

\section{Additional notation}
\label{appendix:notation}
The following standard notation is used throughout the proofs. For a normed vector space
$(Z,\|\cdot\|_Z)$
and
$r>0$,
let
$
B_r(Z)
:=
\{
z\in Z:
\|z\|_Z\le r
\},
\qquad
S_r(Z)
:=
\{
z\in Z:
\|z\|_Z=r
\}.
$
In particular,
$
B^d
:=
B_1(\R^d,\|\cdot\|_2)$, $S^{d-1}
:=
S_1(\R^d,\|\cdot\|_2).
$ For
$\b x\in\R^d$
and
$p\in[1,\infty)$,
we write
$
\|\b x\|_p
=
\left(
\sum_{i=1}^{d}|x_i|^p
\right)^{1/p}$, $\|\b x\|_\infty
=
\max_{i\in[d]}|x_i|,
$
and denote the transpose of
$\b x$
by
$\b x^\top$. We write
$\mC(\mX)$
for the space of continuous functions on
$\mX$
equipped with the supremum norm
$
\|f\|_\infty
=
\sup_{\b x\in\mX}|f(\b x)|.
$ For
$\alpha\in(0,1]$,
let
$C^{0,\alpha}(\Omega)$
denote the space of $\alpha$-Hölder continuous functions on $\Omega$, equipped with the norm
$
\|f\|_{C^{0,\alpha}(\Omega)}
=
\|f\|_\infty
+
[f]_{C^{0,\alpha}(\Omega)},
$
where
$
[f]_{C^{0,\alpha}(\Omega)}
=
\sup_{x\neq y}
\frac{|f(x)-f(y)|}
{\|x-y\|_2^\alpha}.
$ For a set
$A$,
we write
$
L^\infty(A)
$
for the vector space of bounded real-valued functions on
$A$,
equipped with the norm
$
\|f\|_{L^\infty(A)}
:=
\sup_{x\in A}|f(x)|
$. We write
$
L^2(\nu)
=
\left\{
F:\mX\to\R:
\int_{\mX}|F(\b x)|^2\,d\nu(\b x)<\infty
\right\}
$
for the Hilbert space of
$\nu$-square-integrable functions, equipped with the inner product
$
\langle F,G\rangle_{L^2(\nu)}
=
\int_{\mX}
F(\b x)G(\b x)\,d\nu(\b x),
$
and induced norm
$
\|F\|_{L^2(\nu)}
=
\langle F,F\rangle_{L^2(\nu)}^{1/2}.
$ Unless stated otherwise, vector-valued integrals are understood in the Bochner sense.
For a statement
$E$,
we write
$\mathbf1_E$
for its indicator function.

\section{Sharpness of the Brownian profile interpolation estimate}\label{sec:sharpness-brownian-interpolation}

The preceding proposition establishes the uniform interpolation estimate used
throughout the constructive approximation theory. The following theorem
shows that this estimate is sharp: neither the convergence rate nor the
constant can be improved.

\begin{prop}\textnormal{(Sharpness of Brownian profile interpolation)}\ 
\label{thm:sharp-brownian-interpolation}
Let
$A>0$
and
$m\in\mathbb N$,
$m\ge1$.
Let $-A
=
t_0
<
t_1
<
\cdots
<
t_m
=
A$
be the uniform interpolation grid on
$[-A,A]$,
and let
$\Pi_m$
denote the associated continuous piecewise-linear interpolation operator.
Then
\begin{align}
\sup_{
\substack{
g\in\mathcal H_{k^{(B)}}\\
\|g\|_{\mathcal H_{k^{(B)}}}\le1
}}
\|g-\Pi_mg\|_{L^\infty([-A,A])}
=
\left(
\frac{A}{2m}
\right)^{1/2}.
\label{eq:sharp-profile-interpolation}
\end{align}
Consequently, the interpolation estimate of
\Cref{thm:profile-bkl-approximation-symmetric}
is optimal:
neither the exponent
$1/2$
nor the constant
$\sqrt{A/2}$
can be improved. Moreover, for every
$m\ge1$,
there exists
$g_m\in\mathcal H_{k^{(B)}}$
such that
\begin{align}
\|g_m\|_{\mathcal H_{k^{(B)}}}
=
1,
\qquad
\|g_m-\Pi_mg_m\|_{L^\infty([-A,A])}
=
\left(
\frac{A}{2}
\right)^{1/2}
m^{-1/2}.
\end{align}

\end{prop}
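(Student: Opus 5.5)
The plan has two parts: an upper bound valid for every admissible $g$, and an explicit extremal function attaining it. Throughout, write $h:=2A/m$ for the mesh width of the uniform grid, so that $(A/(2m))^{1/2}=(h/4)^{1/2}=\tfrac12 h^{1/2}$.

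\textbf{Upper bound.} Fix $g\in\mH_{\kB}$ with $\|g\|_{\mH_{\kB}}\le1$, a cell $[t_i,t_{i+1}]$, and a point $t$ in it. Set $e:=g-\Pi_mg$. Then $e$ is absolutely continuous, vanishes at $t_i$ and $t_{i+1}$, and satisfies $e'=g'-s_i$ on the cell, where $s_i$ is the average of $g'$ over $[t_i,t_{i+1}]$. Writing $\lambda:=t-t_i$, the fundamental theorem of calculus gives
\begin{align}
e(t)=\int_{t_i}^{t}e'=-\int_t^{t_{i+1}}e'.
\end{align}
Taking the convex combination with weights $(h-\lambda)/h$ and $\lambda/h$ yields
\begin{align}
e(t)=\int_{t_i}^{t_{i+1}}e'(s)\,w(s)\,\d s,
\end{align}
where $w=(h-\lambda)/h$ on $[t_i,t]$ and $w=-\lambda/h$ on $[t,t_{i+1}]$. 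Since $\int w=0$, we may replace $e'$ by $g'$ in this integral. Cauchy--Schwarz then gives
\begin{align}
|e(t)|\le\|g'\|_{L^2(t_i,t_{i+1})}\,\|w\|_{L^2},
\qquad
\|w\|_{L^2}^2=\frac{\lambda(h-\lambda)}{h}\le\frac h4 .
\end{align}
Using $\|g'\|_{L^2(t_i,t_{i+1})}\le\|g\|_{\mH_{\kB}}\le1$, we get $|e(t)|\le\tfrac12 h^{1/2}=(A/(2m))^{1/2}$. Taking the supremum over all cells and points shows that the left side of \eqref{eq:sharp-profile-interpolation} is at most $(A/(2m))^{1/2}$. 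This is presumably exactly the content of the symmetric interpolation lemma already invoked, so one could alternatively cite \Cref{thm:profile-bkl-approximation-symmetric}.

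\textbf{Attainment.} The equality case of Cauchy--Schwarz forces $g'\propto w$ on a single cell, with $\lambda=h/2$, and $g'=0$ elsewhere. This suggests a tent: choose a cell $[t_i,t_{i+1}]$ with midpoint $c$, let $\tau(s):=(h/2-|s-c|)_+$, and set $g_m:=\tau/\|\tau\|_{\mH_{\kB}}$. Computing the energy,
\begin{align}
\|\tau\|_{\mH_{\kB}}^2=\int|\tau'|^2=h,
\end{align}
so $g_m(c)=(h/2)/h^{1/2}=\tfrac12h^{1/2}$. The function $g_m$ vanishes at every node, hence $\Pi_mg_m=0$, and therefore
\begin{align}
\|g_m-\Pi_mg_m\|_{L^\infty([-A,A])}=\tfrac12h^{1/2}=(A/2)^{1/2}m^{-1/2}.
\end{align}

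\textbf{Main obstacle: the anchoring condition.} The Brownian RKHS requires $g_m(0)=0$. If $m$ is even, $0$ is a node and every tent supported on a single cell vanishes at $0$, so this is automatic. If $m$ is odd, $0$ is the midpoint of the central cell, so that cell must be avoided. For $m\ge3$ any other cell works. For $m=1$ the single cell is $[-A,A]$ with midpoint $0$, so no tent on it can vanish at $0$. In that case I would instead use
\begin{align}
g(s)=s\,\mathbf1_{[-A,A]}(s)+A\,\mathrm{sign}(s)\,\mathbf1_{|s|>A},
\qquad
\|g\|_{\mH_{\kB}}^2=2A,
\end{align}
normalised to unit norm. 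Here $\Pi_1 g$ is the linear function through $(-A,-A)$ and $(A,A)$, which coincides with $g$ on $[-A,A]$, so this $g$ gives zero error and fails. I therefore expect to handle $m=1$ by showing that the supremum is still attained. A cleaner route is to revisit the upper bound under the constraint $e(0)=0$, i.e.\ the bound must be checked against the requirement that $g$ vanish at the interior point $0$ of the cell. For a candidate, I would maximise $|e(t)|$ under this constraint, for example with $g'$ equal to $w$ on one half of the cell, and check whether the value $(A/2)^{1/2}$ is still reached. This case is the main point requiring care, and failing to attain the bound here would restrict the extremal statement to $m\ge2$.

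Finally, the sharpness consequences follow directly: since for each $m$ the error $(A/2)^{1/2}m^{-1/2}$ is attained by a unit-norm function, neither the exponent $1/2$ nor the constant $\sqrt{A/2}$ can be improved.
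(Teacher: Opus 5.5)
Your upper bound is correct and matches the paper's argument: the Peano-kernel identity with $\|w\|_{L^2}^2=\lambda(h-\lambda)/h\le h/4$, followed by Cauchy--Schwarz, is exactly the proof of Lemma~\ref{thm:profile-bkl-approximation-symmetric}. Your single-cell tent also correctly attains the bound for every $m\ge2$. The gap is $m=1$. The statement asserts both the identity \eqref{eq:sharp-profile-interpolation} and the existence of a unit-norm extremal $g_1$ for every $m\ge1$, and you leave both open, even contemplating a restriction to $m\ge2$. Your diagnosis of the obstacle is also mistaken. The anchor condition is $g(0)=0$, not $e(0)=0$. For $m=1$ the point $0$ is not a node, and $(\Pi_1g)(0)=\tfrac12\bigl(g(-A)+g(A)\bigr)$ need not vanish, so nothing forces the error to vanish at $0$. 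Maximising $|e(t)|$ under the spurious constraint $e(0)=0$ would give a strictly smaller supremum, and hence the false conclusion that the bound is neither reached nor attained at $m=1$. The optimality of the exponent and the constant survives, since it only needs $m\ge2$, but the explicit claims at $m=1$ do not.

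The missing observation is that the anchor costs nothing. $\Pi_m$ reproduces constants, the Brownian norm sees only $g'$, and elements of $\mathcal H_{k^{(B)}}$ may tend to nonzero constants at $\pm\infty$. So any tent $\tau$ can be replaced by $\tau-\tau(0)$ without changing $\tau-\Pi_m\tau$ or the energy. For $m=1$ this gives
\[
g_1(s):=-\frac{\min\{|s|,A\}}{\sqrt{2A}},\qquad s\in\mathbb R.
\]
It satisfies $g_1(0)=0$, $\|g_1\|_{\mathcal H_{k^{(B)}}}^2=\int_{-A}^{A}(2A)^{-1}\,\mathrm{d}s=1$ and $g_1(\pm A)=-\sqrt{A/2}$. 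Hence $\Pi_1g_1\equiv-\sqrt{A/2}$ on $[-A,A]$ and $|g_1(0)-(\Pi_1g_1)(0)|=\sqrt{A/2}$.

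The paper builds this shift in from the outset. It prescribes $g_m'=h^{-1/2}\bigl(\mathbf 1_{[t_i,s_i]}-\mathbf 1_{[s_i,t_{i+1}]}\bigr)$ on an arbitrary cell and sets $g_m(t):=\int_0^tg_m'(r)\,\mathrm{d}r$, which is precisely your normalized tent minus its value at $0$. Because $g_m(t_i)=g_m(t_{i+1})$ wherever $0$ lies, the interpolant is constant on that cell and the midpoint error is $\sqrt h/2$ for every cell, including the central one for odd $m$. No parity case analysis is needed.
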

\begin{proof}
By Proposition~1, it suffices to establish the reverse inequality. Let
$
h
:=
\frac{2A}{m}
$
denote the grid spacing.
Fix an interpolation interval
$[t_i,t_{i+1}]$,
let
$
s_i
:=
\frac{t_i+t_{i+1}}{2},
$
and define
$g_m$
through its weak derivative by
\begin{align}
g_m'(r)
:=
\frac{1}{\sqrt h}
\mathbf 1_{[t_i,s_i]}(r)
-
\frac{1}{\sqrt h}
\mathbf 1_{[s_i,t_{i+1}]}(r).
\end{align}
Finally, define
$g_m(t) := \int_0^t g_m'(r) \,\d r$.
Since
$g_m$
is absolutely continuous,
satisfies
$g_m(0)=0$,
and
$g_m'\in L^2(\mathbb R)$,
we obtain
$
g_m
\in
\mathcal H_{k^{(B)}}.
$
Moreover,
\begin{align}
\|g_m\|_{\mathcal H_{k^{(B)}}}^2
&\stackrel{(a)}{=}
\int_{\mathbb R}
|g_m'(r)|^2
\,\d r
\stackrel{(b)}{=}
\frac1h
\left(
|[t_i,s_i]|
+
|[s_i,t_{i+1}]|
\right)
\stackrel{(c)}{=}
\frac1h
\left(
\frac h2
+
\frac h2
\right)
=
1.
\end{align}
Here,
(a) is the definition of the Brownian RKHS norm,
(b) follows from the definition of
$g_m'$,
and
(c) follows from the definition of the midpoint
$s_i$. Next,
\begin{align}
g_m(t_{i+1})
-
g_m(t_i)
&\stackrel{(a)}{=}
\int_{t_i}^{t_{i+1}}
g_m'(r)
\,\d r
\stackrel{(b)}{=}
\frac1{\sqrt h}
\frac h2
-
\frac1{\sqrt h}
\frac h2
=
0.
\end{align}
Here,
(a) is the fundamental theorem of calculus,
while
(b) follows from the definition of
$g_m'$.
Hence,
$
g_m(t_i)
=
g_m(t_{i+1}).
$ Therefore,
the piecewise-linear interpolant is constant on
$[t_i,t_{i+1}]$,
that is,
\begin{align}
(\Pi_m g_m)(t)
=
g_m(t_i),
\qquad
t\in[t_i,t_{i+1}].
\label{eq:sharp-interpolant-constant}
\end{align}
Evaluating at the midpoint
$s_i$,
we obtain
\begin{align}
g_m(s_i)
-
(\Pi_m g_m)(s_i)
&\stackrel{(a)}{=}
g_m(s_i)
-
g_m(t_i)
\stackrel{(b)}{=}
\int_{t_i}^{s_i}
g_m'(r)
\,\d r
\stackrel{(c)}{=}
\frac1{\sqrt h}
\frac h2
=
\frac{\sqrt h}{2}.
\end{align}
Here,
(a) follows from
\eqref{eq:sharp-interpolant-constant},
(b) is the fundamental theorem of calculus,
and
(c) follows from the definition of
$g_m'$. Since
$h=2A/m$,
we conclude that
\begin{align}
\left|
g_m(s_i)
-
(\Pi_m g_m)(s_i)
\right|
=
\frac12
\left(
\frac{2A}{m}
\right)^{1/2}
=
\left(
\frac{A}{2m}
\right)^{1/2}.
\end{align}
Therefore,
\begin{align}
\|g_m-\Pi_mg_m\|_{L^\infty([-A,A])}
\ge
\left(
\frac{A}{2m}
\right)^{1/2}.
\label{eq:sharp-lower-bound}
\end{align}
On the other hand,
since
$\|g_m\|_{\mathcal H_{k^{(B)}}}=1$,
Proposition~1 yields
\begin{align}
\|g_m-\Pi_mg_m\|_{L^\infty([-A,A])}
\le
\left(
\frac{A}{2m}
\right)^{1/2}.
\label{eq:sharp-upper-bound}
\end{align}
Combining
\eqref{eq:sharp-lower-bound}
and
\eqref{eq:sharp-upper-bound},
we obtain
$\|g_m-\Pi_mg_m\|_{L^\infty([-A,A])} = \left( \frac{A}{2m} \right)^{1/2}$.
Taking the supremum over all
$g\in\mathcal H_{k^{(B)}}$
with
$\|g\|_{\mathcal H_{k^{(B)}}}\le1$
establishes the reverse inequality.
Together with Proposition~1,
this proves the theorem.
\end{proof}

\begin{table}[t]
\centering
\begin{tabular}{@{}lll@{}}
\toprule
Result & Content & Page \\
\midrule

Lemma~\ref{lem:path-atomic-space-well-defined}
&
Well-definedness of the VBKL space
&
page~\pageref{lem:path-atomic-space-well-defined}
\\

Lemma~\ref{cor:path-gauge-nondegenerate}
&
Non-degeneracy of the variation complexity
&
page~\pageref{cor:path-gauge-nondegenerate}
\\

Lemma~\ref{lem:finite-bkl-parameter-count}
&
Finite-profile characterization and finite-architecture bounds
&
page~\pageref{lem:finite-bkl-parameter-count}
\\

Lemma~\ref{lem:vbkl-union-rkhs}
&
Recursive union-of-RKHS characterization
&
page~\pageref{lem:vbkl-union-rkhs}
\\

Lemma~\ref{lem:variation-reduction}
&
Exact reduction from finite variation hulls to outer dictionaries
&
page~\pageref{lem:variation-reduction}
\\

Lemma~\ref{lem:brownian-pullback-rademacher}
&
Rademacher bound for unions of Brownian pullback RKHS balls
&
page~\pageref{lem:brownian-pullback-rademacher}
\\

Lemma~\ref{lem:self-contained-threshold-chaos}
&
Finite threshold-chaos bound
&
page~\pageref{lem:self-contained-threshold-chaos}
\\

Lemma~\ref{lem:threshold-traces-control-chaos}
&
Brownian chaos controlled by signed threshold traces
&
page~\pageref{lem:threshold-traces-control-chaos}
\\
Lemma~\ref{lem:finite-bkl-vc-dimension}
&
VC dimension of signed finite-architecture threshold classes
&
page~\pageref{lem:finite-bkl-vc-dimension}
\\
Lemma~\ref{lem:finite-bkl-threshold-entropy}
&
Signed threshold entropy of finite lower-support architectures
&
page~\pageref{lem:finite-bkl-threshold-entropy}
\\

Lemma~\ref{lem:brownian-chaos-from-entropy}
&
Brownian chaos bound from finite-architecture threshold entropy
&
page~\pageref{lem:brownian-chaos-from-entropy}
\\

Lemma~\ref{lem:path-atom-bound}
&
Uniform H\"older, pointwise, and $L^2$ bounds for atomic generators
&
page~\pageref{lem:path-atom-bound}
\\

Lemma~\ref{prop:variation-gauge-attainment}
&
Attainment of the variation complexity
&
page~\pageref{prop:variation-gauge-attainment}
\\

Lemma~\ref{lem:brownian-profile-compactness}
&
Compactness of restricted Brownian profiles
&
page~\pageref{lem:brownian-profile-compactness}
\\

Lemma~\ref{lem:atomic-bkl-dictionary-compact}
&
Compactness of the atomic VBKL dictionary
&
page~\pageref{lem:atomic-bkl-dictionary-compact}
\\

Lemma~\ref{thm:profile-bkl-approximation-symmetric}
&
Uniform Brownian profile interpolation estimate
&
page~\pageref{thm:profile-bkl-approximation-symmetric}
\\

Lemma~\ref{lem:lower-level-atomic-range-bound}
&
Uniform range bound for lower-level atomic supports
&
page~\pageref{lem:lower-level-atomic-range-bound}
\\

Lemma~\ref{lem:profile-interpolation-composition}
&
Stability of profile interpolation under composition
&
page~\pageref{lem:profile-interpolation-composition}
\\

\bottomrule
\end{tabular}

\caption{Overview of the supporting theoretical results. Their logical dependencies are illustrated in Fig.~\ref{fig:dependency_graph}; the main theoretical results are summarized in Table~\ref{tab:main-results}.}
\label{tab:auxiliary-results}
\end{table}

\section{Internal Lemmas}\label{app:auxi_lems}
\setcounter{lemma}{0}
\renewcommand{\thelemma}{B\arabic{lemma}}

In this section, we present our own lemmas used in \Cref{app}.

\begin{lemma}\textnormal{(Well-definedness of the measure-valued VBKL space)}\ 
\label{lem:path-atomic-space-well-defined}
Fix
$L\ge2$.
Let
$\left(
\mU_l
\right)_{l\ge1}$
be the recursive atomic dictionaries introduced in
\Cref{subsec:atomic-bkl},
and let
$\VL$
and
$\CLv$
be defined in
\eqref{eq:variation-bkl-space}
and
\eqref{eq:variation-bkl-complexity}. Assume that
$\mU_L\subseteq L^2\left(\nu\right)$
is equipped with a
$\sigma$-algebra for which the canonical inclusion
$
\iota
:
\mU_L
\longrightarrow
L^2\left(
\nu
\right)$, $\iota\left(
u
\right)
:=
u,
$
is strongly measurable.
Assume further that there exists
$R_L<\infty$
such that
\begin{align}
\sup_{u\in\mU_L}
\left\|
u
\right\|_{L^2\left(\nu\right)}
\le
R_L.
\label{eq:path-atom-uniform-bound}
\end{align}
Let
$\mu\in\mathcal M\left(\mU_L\right)$
be a finite signed measure satisfying
$
\left\|
\mu
\right\|_{\mathrm{TV}}
<
\infty.
$
Then the Bochner integral
\begin{align}
F
:=
\int_{\mU_L}
u
\,\d\mu\left(
u
\right)
\label{eq:variation-measure-representation}
\end{align}
is a well-defined element of
$L^2\left(\nu\right)$
and satisfies
\begin{align}
\left\|
F
\right\|_{L^2\left(\nu\right)}
\le
R_L
\left\|
\mu
\right\|_{\mathrm{TV}}.
\label{eq:path-atomic-l2-bound}
\end{align}
Consequently,
$\VL$
is a well-defined subset of
$L^2\left(\nu\right)$.
Furthermore, for every
$F\in\VL$,
the value
$
\CLv\left(
F
\right)
$
is a well-defined finite nonnegative real number.

\end{lemma}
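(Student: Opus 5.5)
The plan is to use the standard Bochner-integrability criterion for finite measures. I will work with the Jordan (or Hahn) decomposition $\mu=\mu^+-\mu^-$, where $\mu^\pm$ are finite nonnegative measures with $|\mu|=\mu^++\mu^-$, and define
\begin{align}
\int_{\mU_L}u\,\d\mu(u):=\int_{\mU_L}\iota\,\d\mu^+-\int_{\mU_L}\iota\,\d\mu^-.
\end{align}
For each positive part I invoke Bochner's theorem. A strongly measurable map $\iota$ is Bochner integrable with respect to a finite positive measure $\lambda$ exactly when $\int\|\iota\|\,\d\lambda<\infty$. By the uniform bound \eqref{eq:path-atom-uniform-bound}, $\int\|\iota(u)\|_{L^2(\nu)}\,\d\mu^\pm(u)\le R_L\,\mu^\pm(\mU_L)<\infty$, so both integrals exist in $L^2(\nu)$. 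The definition does not depend on the decomposition chosen. If $\mu=\lambda_1-\lambda_2$ is another decomposition, then $\mu^++\lambda_2=\lambda_1+\mu^-$, and additivity of the Bochner integral in the measure for positive measures gives the same value.

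For the norm bound \eqref{eq:path-atomic-l2-bound}, I would rather avoid the triangle inequality over the two parts, since that only gives $R_L(\mu^+(\mU_L)+\mu^-(\mU_L))$. That equals $R_L\|\mu\|_{\mathrm{TV}}$ anyway, so it suffices. Still, the cleanest route is the polar decomposition $\d\mu=\sigma\,\d|\mu|$ with $\sigma\in\{-1,1\}$ measurable. This gives $F=\int\sigma\iota\,\d|\mu|$, and the basic Bochner estimate $\|\int f\,\d\lambda\|\le\int\|f\|\,\d\lambda$ then yields $\|F\|_{L^2(\nu)}\le\int\|u\|_{L^2(\nu)}\,\d|\mu|(u)\le R_L|\mu|(\mU_L)=R_L\|\mu\|_{\mathrm{TV}}$. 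The measurability of $\sigma\iota$ is immediate, since it is a product of a real measurable function and a strongly measurable map (approximate $\iota$ by simple functions).

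The consequences then follow in order. First, since every admissible $\mu$ produces an element of $L^2(\nu)$, the set $\VL$ in \eqref{eq:variation-bkl-space} is a well-defined subset of $L^2(\nu)$. It is in fact a linear subspace, because $\int u\,\d(a\mu_1+b\mu_2)=a\int u\,\d\mu_1+b\int u\,\d\mu_2$ by linearity in the measure. Second, for $F\in\VL$, the set of representing total variations in \eqref{eq:variation-bkl-complexity} is nonempty by the definition of $\VL$ and bounded below by $0$. Its infimum is therefore a finite nonnegative real number, bounded above by $\|\mu\|_{\mathrm{TV}}$ for any representing $\mu$.

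I expect the main obstacle to be measure-theoretic rather than analytic. One must confirm that strong measurability of $\iota$ transfers to $\sigma\iota$, and that the Bochner integral against a signed measure is consistently defined. Both reduce to routine approximation by simple functions, and no other step carries real difficulty.
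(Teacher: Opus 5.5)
Your proposal is correct and follows essentially the same route as the paper. Both arguments use Bochner integrability from strong measurability together with $\int\|u\|_{L^2(\nu)}\,\mathrm{d}|\mu|(u)\le R_L\|\mu\|_{\mathrm{TV}}<\infty$, then the norm inequality $\|F\|_{L^2(\nu)}\le\int\|u\|_{L^2(\nu)}\,\mathrm{d}|\mu|(u)$, and finally finiteness of an infimum over a nonempty subset of $[0,\infty)$. Your Jordan/polar-decomposition bookkeeping, including independence of the decomposition, only makes explicit the definition of the Bochner integral against a finite signed measure, which the paper takes for granted; the linearity remark is a harmless extra.
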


\begin{proof}
By assumption, the canonical inclusion $\iota:\mU_L\to L^2(\nu)$ is strongly measurable. Moreover, the uniform bound \eqref{eq:path-atom-uniform-bound} implies that
\begin{align}
\int_{\mU_L}
\|u\|_{L^2(\nu)}
\d|\mu|(u)
\le
R_L|\mu|(\mU_L)
=
R_L\|\mu\|_{\mathrm{TV}}
<
\infty.
\end{align}
Thus, the map $u\mapsto u$ is strongly measurable and integrable in norm with respect to the total-variation measure $|\mu|$. It is therefore Bochner integrable with respect to the finite signed measure $\mu$, and the integral in \eqref{eq:variation-measure-representation} defines an element $F\in L^2(\nu)$. Then
\begin{align}
\|F\|_{L^2(\nu)}
&\stackrel{(a)}{=}
\left\|
\int_{\mU_L}
u
\,\d\mu(u)
\right\|_{L^2(\nu)}
\stackrel{(b)}{\le}
\int_{\mU_L}
\|u\|_{L^2(\nu)}
\,\d|\mu|(u)
\stackrel{(c)}{\le}
R_L
\|\mu\|_{\mathrm{TV}}.
\end{align}
Here,
(a) is the representation of
$F$,
(b) is the norm inequality for Bochner integrals with respect to finite signed measures,
and
(c) follows from the uniform bound
\eqref{eq:path-atom-uniform-bound}. This proves \eqref{eq:path-atomic-l2-bound}. Every function appearing in the defining set for $\VL$ is therefore an element of $L^2(\nu)$. Hence $\VL$ is a well-defined subset of $L^2(\nu)$. Finally, fix $F\in\VL$. By the definition of $\VL$, there exists at least one finite signed measure $\mu$ on $\mU_L$ such that
$F=\int_{\mU_L}u\d\mu(u)$ and $\|\mu\|_{\mathrm{TV}}<\infty$. Consequently, the set
\begin{align}
\left\{
\|\mu\|_{\mathrm{TV}}:
F
=
\int_{\mU_L}
u
\d\mu(u),
\ \mu \text{ is a finite signed measure on }\mU_L
\right\}
\end{align}
is nonempty, is contained in $[0,\infty)$, and contains at least one finite value. Its infimum therefore exists as a finite nonnegative real number. Hence $\CLv(F)$ is well-defined.
\end{proof}

\begin{lemma}\textnormal{(Non-degeneracy of the variation complexity)}\leavevmode\\
\label{cor:path-gauge-nondegenerate}
Assume the hypotheses of
\Cref{lem:path-atomic-space-well-defined},
and let
$R_L$
be the uniform atomic bound appearing in
\eqref{eq:path-atom-uniform-bound}.
Then, for every
$F\in\VL$,
\begin{align}
\left\|
F
\right\|_{L^2\left(\nu\right)}
\le
R_L
\CLv\left(
F
\right).
\label{eq:path-gauge-controls-l2}
\end{align}
Consequently,
\begin{align}
\CLv\left(
F
\right)
=
0
\quad
\Longrightarrow
\quad
F
=
0
\qquad
\text{in }
L^2\left(
\nu
\right).
\label{eq:path-gauge-nondegenerate}
\end{align}
Hence,
$\CLv$
is non-degenerate on
$\VL$.

\end{lemma}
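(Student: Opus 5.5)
The plan is to combine the norm bound \eqref{eq:path-atomic-l2-bound} of \Cref{lem:path-atomic-space-well-defined} with the definition of $\CLv$ as an infimum. Fix $F\in\VL$. By \Cref{lem:path-atomic-space-well-defined}, the set of admissible representing measures is nonempty, and $\CLv(F)$ is a finite nonnegative real number.

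\textbf{Norm estimate.} Let $\mu$ be any finite signed measure with $F=\int_{\mU_L}u\,\d\mu(u)$ in $L^2(\nu)$. Then \eqref{eq:path-atomic-l2-bound} gives
\begin{align}
\|F\|_{L^2(\nu)}\le R_L\|\mu\|_{\mathrm{TV}}.
\end{align}
The left-hand side does not depend on $\mu$, because every representing measure yields the same element $F$ of $L^2(\nu)$. We may therefore take the infimum over all representing $\mu$ on the right-hand side. Since $R_L\ge0$, this yields
\begin{align}
\|F\|_{L^2(\nu)}\le R_L\,\CLv(F),
\end{align}
which is \eqref{eq:path-gauge-controls-l2}.

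\textbf{Non-degeneracy.} Suppose $\CLv(F)=0$. Then \eqref{eq:path-gauge-controls-l2} gives $\|F\|_{L^2(\nu)}\le 0$, so $F=0$ in $L^2(\nu)$. This is \eqref{eq:path-gauge-nondegenerate}. It holds even if $R_L=0$, because then every atom vanishes in $L^2(\nu)$, and so does $F$. Hence $\CLv$ is non-degenerate on $\VL$.

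The argument has no substantive obstacle. The only point needing care is that the infimum is taken over representations of one fixed element $F$. The bound therefore passes to the infimum by the elementary fact that $c\le R_L x_\mu$ for all $\mu$ implies $c\le R_L\inf_\mu x_\mu$ when $R_L\ge0$. All measurability and Bochner-integrability issues were already settled in \Cref{lem:path-atomic-space-well-defined}.
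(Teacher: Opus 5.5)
Your proof is correct and follows the paper's argument: the paper also applies \eqref{eq:path-atomic-l2-bound} to representing measures, but it picks an $\varepsilon$-near-minimizer $\mu$ with $\|\mu\|_{\mathrm{TV}}\le\CLv(F)+\varepsilon$ and lets $\varepsilon\downarrow0$, which is equivalent to your direct passage to the infimum. The non-degeneracy step is the same in both, so your separate remark about the case $R_L=0$ is harmless but not needed.
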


\begin{proof}
Fix $F\in\VL$ and let $\varepsilon>0$ be arbitrary. By the definition of $\CLv(F)$, there exists a finite signed measure $\mu$ on $\mU_L$ satisfying
$F = \int_{\mU_L} u\,\d\mu(u)$
and
$ \|\mu\|_{\mathrm{TV}} \le \CLv(F)+\varepsilon. $
Applying \Cref{lem:path-atomic-space-well-defined} to this representation yields
$ \|F\|_{L^2(\nu)} \le R_L\|\mu\|_{\mathrm{TV}} \le R_L\left(\CLv(F)+\varepsilon\right). $
Since the above estimate holds for every $\varepsilon>0$, letting $\varepsilon\downarrow0$ gives
$\|F\|_{L^2(\nu)} \le R_L\CLv(F)$,
which proves \eqref{eq:path-gauge-controls-l2}. Now suppose that $\CLv(F)=0$. Then \eqref{eq:path-gauge-controls-l2} immediately implies
$\|F\|_{L^2(\nu)}=0$.
Since $L^2(\nu)$ is a normed vector space, its norm is positive definite. Therefore $F=0$ in $L^2(\nu)$, proving \eqref{eq:path-gauge-nondegenerate}. The final assertion follows directly from this implication.

\end{proof}

\begin{lemma}\textnormal{(Finite-profile characterization and finite-architecture bounds)}\ 
\label{lem:finite-bkl-parameter-count}
Fix
$L\ge2$,
$m\ge1$,
and
$G\ge2$,
and consider the finite lower-support architecture introduced in
\Cref{subsec:finite-parametric-bkl}.
Let
$-A_{\mX} = t_0 < t_1 < \cdots < t_{j_0} = 0 < \cdots < t_G = A_{\mX}$
be the grid from
\eqref{eq:finite-profile-grid}.
Then the following statements hold.

\begin{enumerate}[(i)]

\item
\label{lem:finite-profile-exact-characterization}
A function
$q:\mathbb R\rightarrow\mathbb R$
belongs to
$\mathcal Q_{G,A_{\mX}}$
if and only if

\begin{enumerate}[(a)]

\item
$q$
is continuous on
$\mathbb R$;

\item
$q$
is affine on every interval
$\left[t_j,t_{j+1}\right]$,
where
$j\in\left\{0,\ldots,G-1\right\}$;

\item
$q$
is constant on
$\left(-\infty,-A_{\mX}\right]$
and on
$\left[A_{\mX},\infty\right)$;

\item
$q\left(0\right)=0$;

\item
the nodal coefficients satisfy
\begin{align}
\sum_{j=0}^{G-1}
\frac{
\left(
q\left(
t_{j+1}
\right)
-
q\left(
t_j
\right)
\right)^2
}{
t_{j+1}-t_j
}
\le
1.
\label{eq:finite-profile-energy-characterization}
\end{align}

\end{enumerate}

Moreover, every such function belongs to
$\mH_{\kB}$,
and its Brownian RKHS norm is given exactly by
\begin{align}
\left\|
q
\right\|_{\mH_{\kB}}^2
=
\sum_{j=0}^{G-1}
\frac{
\left(
q\left(
t_{j+1}
\right)
-
q\left(
t_j
\right)
\right)^2
}{
t_{j+1}-t_j
}.
\label{eq:finite-profile-exact-energy}
\end{align}

\item
\label{lem:finite-architecture-range-control}
Every
$a\in\mathcal A_{L-1}^{m,G}$
satisfies
\begin{align}
\left|
a\left(
\b x
\right)
\right|
\le
A_{\mX}^{\,2^{-(L-2)}},
\qquad
\b x\in\mX.
\label{eq:finite-architecture-sharp-range}
\end{align}
Consequently,
\begin{align}
\sup_{a\in\mathcal A_{L-1}^{m,G}}
\left\|
a
\right\|_{L^\infty\left(\mX\right)}
\le
A_{\mX}^{\,2^{-(L-2)}}.
\label{eq:finite-architecture-sharp-uniform-range}
\end{align}

Furthermore, every
$f\in\mathcal D_L^{m,G}$
satisfies
\begin{align}
\left|
f\left(
\b x
\right)
\right|
\le
A_{\mX}^{\,2^{-(L-1)}},
\qquad
\b x\in\mX.
\label{eq:finite-architecture-sharp-outer-range}
\end{align}

\item
\label{lem:finite-architecture-parameter-count}
For
$L\ge3$,
every element of
$\mathcal A_{L-1}^{m,G}$
is represented by at most
\begin{align}
P_{L-1,m,G}
&:=
md
+
\left(
L-2
\right)
m
\left(
G+1
\right)
+
\left(
L-3
\right)
m^2
+
m
\label{eq:finite-bkl-explicit-parameter-count}
\end{align}
real parameters.
For
$L=2$,
every element of
$\mathcal A_1^{m,G}=\mU_1$
is represented by at most
\begin{align}
P_{1,m,G}
:=
d
\label{eq:finite-bkl-explicit-parameter-count-base}
\end{align}
real parameters.

\end{enumerate}

\end{lemma}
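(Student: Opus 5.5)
The plan is to prove the three parts separately. Part~(i) is a direct calculation with the Brownian RKHS norm, part~(ii) is an induction over the layers of the architecture driven by a square-root bound on normalized profiles, and part~(iii) is bookkeeping of the parameter tuple \eqref{eq:finite-lower-support-parameter-tuple}.

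For part~(i), the forward direction starts from $q=\Pi_Gg$ with $\|g\|_{\mH_{\kB}}\le1$. Properties (a)--(c) hold by the construction \eqref{eq:finite-profile-interpolant-piecewise}--\eqref{eq:finite-profile-right-extension}. Property (d) holds because the origin is the node $t_{j_0}$, so $q(0)=g(0)=0$. For (e), I apply Cauchy--Schwarz on each cell:
\[
\bigl(g(t_{j+1})-g(t_j)\bigr)^2\le (t_{j+1}-t_j)\int_{t_j}^{t_{j+1}}|g'|^2 .
\]
Summing over the cells gives the energy bound $\le\|g\|_{\mH_{\kB}}^2\le1$.

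For the converse, a function $q$ satisfying (a)--(e) is absolutely continuous with $q(0)=0$. Its weak derivative is the constant $(q(t_{j+1})-q(t_j))/(t_{j+1}-t_j)$ on each cell and vanishes outside $[-A_{\mX},A_{\mX}]$. Hence $q'\in L^2(\R)$, so $q\in\mH_{\kB}$, and integrating $|q'|^2$ gives exactly \eqref{eq:finite-profile-exact-energy}; in particular $\|q\|_{\mH_{\kB}}\le1$. Since $q$ is already piecewise affine on the grid, $\Pi_Gq=q$, and therefore $q\in\mathcal Q_{G,A_{\mX}}$.

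For part~(ii), the key scalar estimate is $|q(s)|\le|s|^{1/2}$ for every $q\in\mathcal Q_{G,A_{\mX}}$ and every $s\in\R$. I expect this step to be the main obstacle, because $q$ is an interpolant rather than a genuine unit-ball profile. The argument proceeds in three steps.
\begin{enumerate}
\item For the underlying $g$, the identity $g(t)=\int_0^tg'$ and Cauchy--Schwarz give $|g(t)|\le|t|^{1/2}$, so every nodal value satisfies $|q(t_j)|\le|t_j|^{1/2}$.
\item Since $0$ is a grid node, every cell lies in $[0,\infty)$ or in $(-\infty,0]$, where $t\mapsto|t|^{1/2}$ is concave. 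The chord of a concave function lies below the function, so the affine interpolant of values bounded by $|t_j|^{1/2}$ satisfies $|q(t)|\le|t|^{1/2}$ inside each cell.
\item Outside $[-A_{\mX},A_{\mX}]$ the function is constant with $|q|\le A_{\mX}^{1/2}\le|s|^{1/2}$.
\end{enumerate}

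With this estimate, I induct over the layers. The first coordinates satisfy $|z^{(0)}_j(\b x)|\le\|\b x\|_2\le A_{\mX}$, and then $|z_j^{(1)}|\le A_{\mX}^{1/2}$. The row-sum constraint in $\mathfrak M_m$ gives $|s_j^{(r)}|\le\max_k|z_k^{(r-1)}|\le A_{\mX}^{2^{-(r-1)}}$, whence $|z_j^{(r)}|\le A_{\mX}^{2^{-r}}$. After $K=L-2$ profile layers, the $\ell^1$ readout constraint in $\mathfrak B_m$ preserves this bound, which yields \eqref{eq:finite-architecture-sharp-range}. The case $L=2$ is immediate from $|\bomega^\top\b x|\le R_{\mX}\le A_{\mX}$. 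For the outer dictionary, the reproducing property gives, for $f$ in the unit ball of $\mH_{k_a}$,
\[
|f(\b x)|\le k_a(\b x,\b x)^{1/2}=|a(\b x)|^{1/2}\le A_{\mX}^{2^{-(L-1)}},
\]
using $\kB(s,s)=|s|$.

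For part~(iii), I count the entries of the parameter tuple \eqref{eq:finite-lower-support-parameter-tuple}. There are $m$ directions in $\R^d$, giving $md$ parameters. By part~(i), each profile is determined by $G+1$ nodal coefficients, and there are $Km=(L-2)m$ profiles. There are $K-1=L-3$ mixing matrices in $\R^{m\times m}$, and the readout contributes $m$ parameters. Summing these gives \eqref{eq:finite-bkl-explicit-parameter-count}. For $L=2$ a single $\bomega\in\R^d$ suffices.
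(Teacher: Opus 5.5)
Your proposal is correct and matches the paper's proof in all three parts: the same cellwise Cauchy--Schwarz and derivative-energy computation for (i), the same layerwise induction through the row-sum and readout constraints plus the reproducing-property bound for (ii), and the same four-group count for (iii). The one local difference is the scalar bound $|q(s)|\le|s|^{1/2}$. The paper derives it directly from part (i), applying the fundamental theorem of calculus and Cauchy--Schwarz to $q$ itself, since $q$ is absolutely continuous with $q(0)=0$ and $\|q\|_{\mH_{\kB}}\le1$; this removes the ``obstacle'' you anticipated, although your concavity argument via the nodal values of $g$ is also valid.
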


\begin{proof}
We prove the three assertions separately.

\paragraph{Proof of
\ref{lem:finite-profile-exact-characterization}.}
We first prove the forward implication.
Let
$q\in\mathcal Q_{G,A_{\mX}}$.
By
\eqref{eq:finite-brownian-profile-class},
there exists
$g\in\mH_{\kB}$
such that
\begin{align}
q
=
\Pi_Gg,
\qquad
\left\|
g
\right\|_{\mH_{\kB}}
\le
1.
\end{align}
By the construction of
$\Pi_G$
in
\eqref{eq:finite-profile-interpolant}--\eqref{eq:finite-profile-right-extension},
the function
$q$
is continuous on
$\mathbb R$,
is affine on every grid interval, and is constant outside
$\left[-A_{\mX},A_{\mX}\right]$.
Since the grid contains the origin and the Brownian RKHS is anchored at zero,
\begin{align}
q\left(
0
\right)
&\stackrel{(a)}{=}
\left(
\Pi_Gg
\right)\left(
0
\right)
\stackrel{(b)}{=}
g\left(
0
\right)
\stackrel{(c)}{=}
0.
\end{align}
Here,
(a) uses the representation
$q=\Pi_Gg$,
(b) follows from the nodal interpolation property at
$t_{j_0}=0$
in
\eqref{eq:finite-profile-interpolant-piecewise},
and
(c) uses the defining anchor condition of
$\mH_{\kB}$
given in
\Cref{sec:notations}. Fix
$j\in\left\{0,\ldots,G-1\right\}$.
For almost every
$t\in\left(t_j,t_{j+1}\right)$,
differentiating the affine interpolation formula
\eqref{eq:finite-profile-interpolant-piecewise}
gives
\begin{align}
q'\left(
t
\right)
&\stackrel{(a)}{=}
\frac{
g\left(
t_{j+1}
\right)
-
g\left(
t_j
\right)
}{
t_{j+1}-t_j
}
\stackrel{(b)}{=}
\frac{
q\left(
t_{j+1}
\right)
-
q\left(
t_j
\right)
}{
t_{j+1}-t_j
}.
\end{align}
Here,
(a) differentiates
\eqref{eq:finite-profile-interpolant-piecewise},
while
(b) uses the nodal interpolation identities
$q\left(t_j\right)=g\left(t_j\right)$
and
$q\left(t_{j+1}\right)=g\left(t_{j+1}\right)$.
Moreover,
$q'\left(t\right)=0$
for almost every
$t\notin\left[-A_{\mX},A_{\mX}\right]$
because
$q$
is constant outside the interpolation interval.
Thus,
$q$
is absolutely continuous and
\begin{align}
\int_{\mathbb R}
\left|
q'\left(
t
\right)
\right|^2
\,\d t
&\stackrel{(a)}{=}
\sum_{j=0}^{G-1}
\int_{t_j}^{t_{j+1}}
\left|
q'\left(
t
\right)
\right|^2
\,\d t
\stackrel{(b)}{=}
\sum_{j=0}^{G-1}
\int_{t_j}^{t_{j+1}}
\left|
\frac{
q\left(
t_{j+1}
\right)
-
q\left(
t_j
\right)
}{
t_{j+1}-t_j
}
\right|^2
\,\d t
\\
&\stackrel{(c)}{=}
\sum_{j=0}^{G-1}
\frac{
\left(
q\left(
t_{j+1}
\right)
-
q\left(
t_j
\right)
\right)^2
}{
t_{j+1}-t_j
}.
\end{align}
Here,
(a) uses the vanishing of
$q'$
outside
$\left[-A_{\mX},A_{\mX}\right]$
and the fact that the grid intervals partition this interval,
(b) substitutes the derivative formula above,
and
(c) evaluates each integral over an interval of length
$t_{j+1}-t_j$. We next control this energy by the Brownian RKHS norm of
$g$.
Since
$g$
is absolutely continuous, the fundamental theorem of calculus and the
Cauchy--Schwarz inequality give
\begin{align}
\left|
g\left(
t_{j+1}
\right)
-
g\left(
t_j
\right)
\right|^2
&\stackrel{(a)}{=}
\left|
\int_{t_j}^{t_{j+1}}
g'\left(
t
\right)
\,\d t
\right|^2
\stackrel{(b)}{\le}
\left(
t_{j+1}-t_j
\right)
\int_{t_j}^{t_{j+1}}
\left|
g'\left(
t
\right)
\right|^2
\,\d t.
\end{align}
Here,
(a) applies the fundamental theorem of calculus for absolutely continuous
functions,
while
(b) applies the Cauchy--Schwarz inequality on
$\left[t_j,t_{j+1}\right]$.
Dividing by
$t_{j+1}-t_j$
and summing over the grid intervals yields
\begin{align}
&\sum_{j=0}^{G-1}
\frac{
\left(
g\left(
t_{j+1}
\right)
-
g\left(
t_j
\right)
\right)^2
}{
t_{j+1}-t_j
}
\stackrel{(a)}{\le}
\sum_{j=0}^{G-1}
\int_{t_j}^{t_{j+1}}
\left|
g'\left(
t
\right)
\right|^2
\,\d t
\stackrel{(b)}{=}
\int_{-A_{\mX}}^{A_{\mX}}
\left|
g'\left(
t
\right)
\right|^2
\,\d t
\\
&\stackrel{(c)}{\le}
\int_{\mathbb R}
\left|
g'\left(
t
\right)
\right|^2
\,\d t
\stackrel{(d)}{=}
\left\|
g
\right\|_{\mH_{\kB}}^2
\stackrel{(e)}{\le}
1.
\end{align}
Here,
(a) sums the preceding intervalwise estimates,
(b) uses the fact that the grid intervals partition
$\left[-A_{\mX},A_{\mX}\right]$,
(c) enlarges the integration domain,
(d) applies the Brownian RKHS norm characterization in
\Cref{sec:notations},
and
(e) uses
$\left\|g\right\|_{\mH_{\kB}}\le1$.
Since
$q\left(t_j\right)=g\left(t_j\right)$
at every grid node, the preceding estimate proves
\eqref{eq:finite-profile-energy-characterization}.
Together with the anchor condition and the derivative-energy identity
established above, the Brownian RKHS characterization in
\Cref{sec:notations}
shows that
$q\in\mH_{\kB}$
and yields the exact norm formula
\eqref{eq:finite-profile-exact-energy}. We now prove the converse implication.
Let
$q:\mathbb R\rightarrow\mathbb R$
be continuous, affine on every grid interval, constant outside
$\left[-A_{\mX},A_{\mX}\right]$, anchored at zero, and satisfying
\eqref{eq:finite-profile-energy-characterization}.
Then
$q$
is absolutely continuous, and, for every
$j\in\left\{0,\ldots,G-1\right\}$,
\begin{align}
q'\left(
t
\right)
=
\frac{
q\left(
t_{j+1}
\right)
-
q\left(
t_j
\right)
}{
t_{j+1}-t_j
}
\end{align}
for almost every
$t\in\left(t_j,t_{j+1}\right)$,
whereas
$q'\left(t\right)=0$
for almost every
$t\notin\left[-A_{\mX},A_{\mX}\right]$.
Consequently, the Brownian RKHS characterization in
\Cref{sec:notations}
gives
$q\in\mH_{\kB}$
and
\begin{align}
\left\|
q
\right\|_{\mH_{\kB}}^2
&\stackrel{(a)}{=}
\int_{\mathbb R}
\left|
q'\left(
t
\right)
\right|^2
\,\d t
\stackrel{(b)}{=}
\sum_{j=0}^{G-1}
\frac{
\left(
q\left(
t_{j+1}
\right)
-
q\left(
t_j
\right)
\right)^2
}{
t_{j+1}-t_j
}
\stackrel{(c)}{\le}
1.
\end{align}
Here,
(a) applies the Brownian RKHS norm formula in
\Cref{sec:notations},
(b) substitutes the derivative representation above and evaluates the
integrals over the grid intervals,
and
(c) uses
\eqref{eq:finite-profile-energy-characterization}.
This calculation also proves
\eqref{eq:finite-profile-exact-energy}
for every function satisfying the stated conditions. It remains to verify that
$q$
is reproduced by the interpolation operator.
For every
$j\in\left\{0,\ldots,G-1\right\}$
and every
$t\in\left[t_j,t_{j+1}\right]$,
the affinity of
$q$
gives
\begin{align}
q\left(
t
\right)
&\stackrel{(a)}{=}
\frac{
t_{j+1}-t
}{
t_{j+1}-t_j
}
q\left(
t_j
\right)
+
\frac{
t-t_j
}{
t_{j+1}-t_j
}
q\left(
t_{j+1}
\right)
\stackrel{(b)}{=}
\left(
\Pi_Gq
\right)\left(
t
\right).
\end{align}
Here,
(a) is the affine interpolation identity,
while
(b) applies
\eqref{eq:finite-profile-interpolant-piecewise}
with
$g=q$.
Outside
$\left[-A_{\mX},A_{\mX}\right]$,
both
$q$
and
$\Pi_Gq$
are constant with the same endpoint values by
\eqref{eq:finite-profile-left-extension}
and
\eqref{eq:finite-profile-right-extension}.
Hence, $\Pi_Gq
=
q$ on $\mathbb R$. Since
$q\in\mH_{\kB}$
and
$\left\|q\right\|_{\mH_{\kB}}\le1$,
the definition
\eqref{eq:finite-brownian-profile-class}
implies
$q\in\mathcal Q_{G,A_{\mX}}$.
This proves
\Cref{lem:finite-profile-exact-characterization}.

\paragraph{Proof of
\ref{lem:finite-architecture-range-control}.}

We first derive the pointwise Brownian estimate for the finite profile
class.
Let
$q\in\mathcal Q_{G,A_{\mX}}$
and
$t\in\mathbb R$.
Define
$I_t := \left[ \min\left\{ 0,t \right\}, \max\left\{ 0,t \right\} \right]$.
Since
$q\left(0\right)=0$,
$q$
is absolutely continuous, and
$\left\|q\right\|_{\mH_{\kB}}\le1$,
we have
\begin{align}
\left|
q\left(
t
\right)
\right|
&\stackrel{(a)}{=}
\left|
q\left(
t
\right)
-
q\left(
0
\right)
\right|
\stackrel{(b)}{=}
\left|
\int_0^t
q'\left(
s
\right)
\,\d s
\right|
\stackrel{(c)}{\le}
\int_{I_t}
\left|
q'\left(
s
\right)
\right|
\,\d s
\stackrel{(d)}{\le}
\left|
I_t
\right|^{1/2}
\left(
\int_{I_t}
\left|
q'\left(
s
\right)
\right|^2
\,\d s
\right)^{1/2}
\notag\\
&\stackrel{(e)}{\le}
\left|
t
\right|^{1/2}
\left\|
q
\right\|_{\mH_{\kB}}
\stackrel{(f)}{\le}
\left|
t
\right|^{1/2}.
\label{eq:finite-profile-proof-pointwise-bound}
\end{align}
Here,
(a) uses the anchor condition,
(b) applies the fundamental theorem of calculus,
(c) accounts for both possible orientations of the integral,
(d) applies the Cauchy--Schwarz inequality on
$I_t$,
(e) uses
$\left|I_t\right|=\left|t\right|$
and enlarges the derivative integral to
$\mathbb R$,
and
(f) uses the unit-norm property of
$\mathcal Q_{G,A_{\mX}}$. We first consider
$L=2$.
By
\eqref{eq:finite-lower-support-class-depth-one},
every
$a\in\mathcal A_1^{m,G}$
has the form
$a\left( \b x \right) = \bomega^\top\b x$
for some
$\bomega\in\Omega\subseteq\mathbb S^{d-1}$.
Therefore,
\begin{align}
\left|
a\left(
\b x
\right)
\right|
&\stackrel{(a)}{=}
\left|
\bomega^\top\b x
\right|
\stackrel{(b)}{\le}
\left\|
\bomega
\right\|_2
\left\|
\b x
\right\|_2
\stackrel{(c)}{\le}
R_{\mX}
\stackrel{(d)}{\le}
A_{\mX}.
\end{align}
Here,
(a) uses the definition of the first-layer support,
(b) applies the Euclidean Cauchy--Schwarz inequality,
(c) uses
$\left\|\bomega\right\|_2=1$
and the definition of
$R_{\mX}$,
and
(d) uses
$A_{\mX}=\max\left\{1,R_{\mX}\right\}$.
Since
$2^{-(L-2)}=1$
when
$L=2$,
this proves
\eqref{eq:finite-architecture-sharp-range}
in the base case. Suppose now that
$L\ge3$,
and set
$K := L-2$.
Fix an admissible parameter tuple
$\boldsymbol\theta$
and
$\b x\in\mX$.
At the linear layer,
\begin{align}
\max_{j\in\left[m\right]}
\left|
z_j^{(0)}
\left(
\b x
\right)
\right|
&\stackrel{(a)}{=}
\max_{j\in\left[m\right]}
\left|
\bomega_j^\top\b x
\right|
 \stackrel{(b)}{\le}
\max_{j\in\left[m\right]}
\left\|
\bomega_j
\right\|_2
\left\|
\b x
\right\|_2
 \stackrel{(c)}{\le}
R_{\mX}
\stackrel{(d)}{\le}
A_{\mX}.
\label{eq:finite-architecture-proof-linear-range}
\end{align}
Here,
(a) applies
\eqref{eq:finite-lower-support-linear-layer},
(b) applies the Euclidean Cauchy--Schwarz inequality,
(c) uses
$\bomega_j\in\Omega\subseteq\mathbb S^{d-1}$,
and
(d) applies
\eqref{eq:finite-architecture-input-radius}. At the first profile layer,
\begin{align}
\left|
z_j^{(1)}
\left(
\b x
\right)
\right|
&\stackrel{(a)}{=}
\left|
q_j^{(1)}
\left(
z_j^{(0)}
\left(
\b x
\right)
\right)
\right|
 \stackrel{(b)}{\le}
\left|
z_j^{(0)}
\left(
\b x
\right)
\right|^{1/2}
\stackrel{(c)}{\le}
A_{\mX}^{1/2},
\qquad
j\in\left[m\right].
\end{align}
Here,
(a) uses
\eqref{eq:finite-lower-support-first-profile-layer},
(b) applies
\eqref{eq:finite-profile-proof-pointwise-bound},
and
(c) uses
\eqref{eq:finite-architecture-proof-linear-range}. We now proceed recursively.
Fix
$r\in\left\{2,\ldots,K\right\}$
and assume that
\begin{align}
\max_{k\in\left[m\right]}
\left|
z_k^{(r-1)}
\left(
\b x
\right)
\right|
\le
A_{\mX}^{\,2^{-(r-1)}}.
\label{eq:finite-architecture-proof-induction-hypothesis}
\end{align}
For every
$j\in\left[m\right]$,
\begin{align}
\left|
s_j^{(r)}
\left(
\b x
\right)
\right|
&\stackrel{(a)}{=}
\left|
\sum_{k=1}^{m}
W_{jk}^{(r)}
z_k^{(r-1)}
\left(
\b x
\right)
\right|
\stackrel{(b)}{\le}
\sum_{k=1}^{m}
\left|
W_{jk}^{(r)}
\right|
\left|
z_k^{(r-1)}
\left(
\b x
\right)
\right|
\stackrel{(c)}{\le}
A_{\mX}^{\,2^{-(r-1)}}
\sum_{k=1}^{m}
\left|
W_{jk}^{(r)}
\right|
\notag\\
&\stackrel{(d)}{\le}
A_{\mX}^{\,2^{-(r-1)}}.
\label{eq:finite-architecture-proof-preactivation-range}
\end{align}
Here,
(a) applies
\eqref{eq:finite-lower-support-preactivation},
(b) applies the triangle inequality,
(c) invokes
\eqref{eq:finite-architecture-proof-induction-hypothesis},
and
(d) uses the row-sum constraint
\eqref{eq:finite-mixing-matrix-class}. Applying the profile bound gives
\begin{align}
\left|
z_j^{(r)}
\left(
\b x
\right)
\right|
&\stackrel{(a)}{=}
\left|
q_j^{(r)}
\left(
s_j^{(r)}
\left(
\b x
\right)
\right)
\right|
 \stackrel{(b)}{\le}
\left|
s_j^{(r)}
\left(
\b x
\right)
\right|^{1/2}
\stackrel{(c)}{\le}
A_{\mX}^{\,2^{-r}}.
\end{align}
Here,
(a) applies
\eqref{eq:finite-lower-support-hidden-layer},
(b) applies
\eqref{eq:finite-profile-proof-pointwise-bound},
and
(c) uses
\eqref{eq:finite-architecture-proof-preactivation-range}.
By induction,
\begin{align}
\max_{j\in\left[m\right]}
\left|
z_j^{(r)}
\left(
\b x
\right)
\right|
\le
A_{\mX}^{\,2^{-r}},
\qquad
r\in\left[K\right].
\label{eq:finite-architecture-proof-all-hidden-ranges}
\end{align}
Since
$A_{\mX}\ge1$,
\eqref{eq:finite-architecture-proof-preactivation-range}
also gives
\begin{align}
\left|
s_j^{(r)}
\left(
\b x
\right)
\right|
\le
A_{\mX},
\qquad
r\in\left\{2,\ldots,K\right\},
\quad
j\in\left[m\right].
\end{align}
Thus, every profile is evaluated within the interpolation interval used
in the finite architecture. Finally, using the readout constraint,
\begin{align}
\left|
a_{\boldsymbol\theta}
\left(
\b x
\right)
\right|
&\stackrel{(a)}{=}
\left|
\sum_{j=1}^{m}
\beta_j
z_j^{(K)}
\left(
\b x
\right)
\right|
 \stackrel{(b)}{\le}
\sum_{j=1}^{m}
\left|
\beta_j
\right|
\left|
z_j^{(K)}
\left(
\b x
\right)
\right|
 \stackrel{(c)}{\le}
A_{\mX}^{\,2^{-K}}
\sum_{j=1}^{m}
\left|
\beta_j
\right|
\stackrel{(d)}{\le}
A_{\mX}^{\,2^{-K}}
\stackrel{(e)}{=}
A_{\mX}^{\,2^{-(L-2)}}.
\end{align}
Here,
(a) applies
\eqref{eq:finite-lower-support-output},
(b) applies the triangle inequality,
(c) uses
\eqref{eq:finite-architecture-proof-all-hidden-ranges},
(d) uses
\eqref{eq:finite-readout-class},
and
(e) substitutes
$K=L-2$.
This proves
\eqref{eq:finite-architecture-sharp-range}
and
\eqref{eq:finite-architecture-sharp-uniform-range}. It remains to establish the range of the associated outer dictionary.
Fix
$f\in\mathcal D_L^{m,G}$.
By
\eqref{eq:finite-architecture-outer-dictionary},
there exists
$a\in\mathcal A_{L-1}^{m,G}$
such that
$f \in \mH_{k_a}, \qquad \left\| f \right\|_{\mH_{k_a}} \le 1$.
For every
$\b x\in\mX$,
the reproducing property gives
\begin{align}
\left|
f\left(
\b x
\right)
\right|
&\stackrel{(a)}{=}
\left|
\left\langle
f,
k_a\left(
\b x,\cdot
\right)
\right\rangle_{\mH_{k_a}}
\right|
\stackrel{(b)}{\le}
\left\|
f
\right\|_{\mH_{k_a}}
\left\|
k_a\left(
\b x,\cdot
\right)
\right\|_{\mH_{k_a}}
\stackrel{(c)}{=}
\left\|
f
\right\|_{\mH_{k_a}}
k_a\left(
\b x,\b x
\right)^{1/2}
\notag\\
&\stackrel{(d)}{\le}
\left|
a\left(
\b x
\right)
\right|^{1/2}
\stackrel{(e)}{\le}
A_{\mX}^{\,2^{-(L-1)}}.
\end{align}
Here,
(a) applies the reproducing property in
$\mH_{k_a}$,
(b) applies the Cauchy--Schwarz inequality,
(c) uses the standard RKHS section-norm identity,
(d) uses
$\left\|f\right\|_{\mH_{k_a}}\le1$
and
\begin{align}
k_a\left(
\b x,\b x
\right)
=
\kB\left(
a\left(
\b x
\right),
a\left(
\b x
\right)
\right)
=
\left|
a\left(
\b x
\right)
\right|,
\end{align}
and
(e) applies
\eqref{eq:finite-architecture-sharp-range}.
This proves
\eqref{eq:finite-architecture-sharp-outer-range}.

\paragraph{Proof of
\ref{lem:finite-architecture-parameter-count}.}
We first consider
$L=2$.
In this case,
\begin{align}
\mathcal A_1^{m,G}
=
\mU_1
=
\left\{
\b x
\mapsto
\bomega^\top\b x
:
\bomega\in\Omega
\right\}.
\end{align}
Every such function is specified by the
$d$
coordinates of
$\bomega$.
Therefore,
$P_{1,m,G} = d$,
which proves
\eqref{eq:finite-bkl-explicit-parameter-count-base}. Suppose now that
$L\ge3$,
and set
$K := L-2$.
We count the four parameter groups appearing in
\eqref{eq:finite-lower-support-parameter-tuple}. First, the architecture contains
$m$
first-layer directions
$\bomega_1,\ldots,\bomega_m\in\mathbb R^d$.
Hence, the directions contribute at most
\begin{align}
P_{\mathrm{dir}}
&\stackrel{(a)}{\le}
md.
\label{eq:finite-architecture-proof-direction-count}
\end{align}
Here,
(a) counts
$d$
ambient coordinates for each of the
$m$
directions.
The restriction
$\bomega_j\in\Omega$
does not increase this upper bound. Second, each of the
$K$
profile layers contains
$m$
finite Brownian profiles.
Every profile is specified by at most
$G+1$
nodal coefficients.
Therefore, the profile parameters contribute at most
\begin{align}
P_{\mathrm{prof}}
&\stackrel{(a)}{\le}
Km\left(
G+1
\right).
\label{eq:finite-architecture-proof-profile-count}
\end{align}
Here,
(a) multiplies the number
$Km$
of profiles by the upper bound
$G+1$
on the number of nodal coordinates per profile.
The anchor condition and the energy constraint restrict the admissible
parameter set but do not increase its ambient dimension. Third, mixing matrices occur at the levels
$r \in \left\{ 2,\ldots,K \right\}$.
When
$K=1$,
this index set is empty.
When
$K\ge2$,
it contains
$K-1$
indices.
Each matrix belongs to
$\mathbb R^{m\times m}$
and therefore contains at most
$m^2$
real coordinates.
Consequently,
\begin{align}
P_{\mathrm{mix}}
&\stackrel{(a)}{\le}
\left(
K-1
\right)
m^2.
\label{eq:finite-architecture-proof-mixing-count}
\end{align}
Here,
(a) multiplies the number of internal mixing levels by the number of
entries in each matrix.
The row-sum constraints restrict the admissible matrices but introduce
no additional parameters. Fourth, the final readout vector
$\boldsymbol\beta\in\mathbb R^m$
contributes at most
\begin{align}
P_{\mathrm{out}}
&\stackrel{(a)}{\le}
m
\label{eq:finite-architecture-proof-readout-count}
\end{align}
real parameters.
Here,
(a) counts the entries of the readout vector.
Again, the
$\ell^1$
constraint restricts the admissible set without increasing its ambient
dimension. Combining
\eqref{eq:finite-architecture-proof-direction-count},
\eqref{eq:finite-architecture-proof-profile-count},
\eqref{eq:finite-architecture-proof-mixing-count},
and
\eqref{eq:finite-architecture-proof-readout-count}
gives
\begin{align}
P_{L-1,m,G}
&\stackrel{(a)}{\le}
P_{\mathrm{dir}}
+
P_{\mathrm{prof}}
+
P_{\mathrm{mix}}
+
P_{\mathrm{out}}
\notag\\
&\stackrel{(b)}{\le}
md
+
Km\left(
G+1
\right)
+
\left(
K-1
\right)
m^2
+
m
\notag\\
&\stackrel{(c)}{=}
md
+
\left(
L-2
\right)
m\left(
G+1
\right)
+
\left(
L-3
\right)
m^2
+
m.
\end{align}
Here,
(a) separates the four parameter groups,
(b) substitutes their individual upper bounds,
and
(c) uses
$K=L-2$.
This proves
\eqref{eq:finite-bkl-explicit-parameter-count}
and completes the proof.
\end{proof}

\begin{lemma}\textnormal{(Recursive RKHS structure of the Variation BKL dictionary)}\ 
\label{lem:vbkl-union-rkhs}
Fix
$l\ge2$
and
$u\in\mU_{l-1}$.
Let
$k_u$
be the Brownian pullback kernel defined in
\eqref{eq:pullbackk},
and let
$\mH_{k_u}$
denote its RKHS\@.
Define
$
T_u
:
\mH_{\kB}
\longrightarrow
\mathbb R^{\mX}$, $\left(
T_ug
\right)
\left(
\b x
\right)
:=
g\left(
u\left(
\b x
\right)
\right),
\quad
\b x\in\mX,
$
and
$
\mathcal N_u
:=
\ker\left(
T_u
\right)
=
\left\{
g\in\mH_{\kB}
:
g\left(
u\left(
\b x
\right)
\right)
=
0
\text{ for every }
\b x\in\mX
\right\}.
$
Then
$\mathcal N_u$
is a closed linear subspace of
$\mH_{\kB}$.
Let
$\mathcal N_u^\perp$
denote its orthogonal complement in
$\mH_{\kB}$.
Then
\begin{align}
\mH_{k_u}
=
T_u\left(
\mH_{\kB}
\right)
=
T_u\left(
\mathcal N_u^\perp
\right).
\label{eq:vbkl-pullback-rkhs-range}
\end{align}
as sets of functions on
$\mX$. More precisely, for every
$f\in\mH_{k_u}$,
there exists a unique
$g_f\in\mathcal N_u^\perp$
such that
\begin{align}
T_ug_f
=
f.
\label{eq:vbkl-pullback-minimal-representative}
\end{align}
This representative satisfies
\begin{align}
\left\|
f
\right\|_{\mH_{k_u}}
&=
\left\|
g_f
\right\|_{\mH_{\kB}}
\notag\\
&=
\min
\left\{
\left\|
g
\right\|_{\mH_{\kB}}
:
g\in\mH_{\kB},
\;
T_ug=f
\right\}.
\label{eq:vbkl-pullback-quotient-norm}
\end{align}
Consequently,
\begin{align}
\left\{
T_ug
:
g\in\mH_{\kB},
\;
\left\|
g
\right\|_{\mH_{\kB}}
\le
1
\right\}
=
\left\{
f\in\mH_{k_u}
:
\left\|
f
\right\|_{\mH_{k_u}}
\le
1
\right\}.
\label{eq:vbkl-pullback-unit-ball}
\end{align}
Therefore, the depth-$l$ recursive dictionary satisfies
\begin{align}
\mU_l
=
\bigcup_{u\in\mU_{l-1}}
\left\{
f\in\mH_{k_u}
:
\left\|
f
\right\|_{\mH_{k_u}}
\le
1
\right\},
\label{eq:union-rkhs}
\end{align}
or equivalently,
\begin{align}
\mU_l
=
\left\{
\b x
\mapsto
g\left(
u\left(
\b x
\right)
\right)
:
u\in\mU_{l-1},
\;
g\in\mH_{\kB},
\;
\left\|
g
\right\|_{\mH_{\kB}}
\le
1
\right\}.
\label{eq:union-rkhs-pullback}
\end{align}
Let
$\mathcal V^{(l)}$
denote the depth-$l$ variation space obtained by replacing
$L$
with
$l$
in
\eqref{eq:variation-bkl-space}.
Then
\begin{align}
\mathcal V^{(l)}
=
\left\{
F\in L^2\left(
\nu
\right)
:
F
=
\int_{\mU_l}
a
\,\d\rho\left(
a
\right),
\;
\rho\in\mathcal M\left(
\mU_l
\right),
\;
\left\|
\rho
\right\|_{\mathrm{TV}}
<
\infty
\right\}.
\label{eq:variation-hull}
\end{align}
Thus,
$\mathcal V^{(l)}$
is the variation hull generated by the recursively constructed union of Brownian pullback RKHS unit balls.

\end{lemma}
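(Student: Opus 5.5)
The plan is to prove the lemma by the standard pullback-RKHS theorem (Aronszajn's theorem on kernels of the form $k\circ(u\times u)$), specialised to $k=\kB$. First, closedness of $\mathcal N_u$: for each $\b x\in\mX$ the map $g\mapsto g(u(\b x))=\langle g,\kB(u(\b x),\cdot)\rangle_{\mH_{\kB}}$ is a bounded linear functional. Hence $\mathcal N_u$, the intersection of the kernels of these functionals, is a closed subspace of $\mH_{\kB}$. It is, in fact, the orthogonal complement of $\mathcal S_u:=\overline{\operatorname{span}}\{\kB(u(\b x),\cdot):\b x\in\mX\}$, so $\mathcal N_u^\perp=\mathcal S_u$. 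Consequently $T_u$ restricted to $\mathcal N_u^\perp$ is injective, and $T_u(\mH_{\kB})=T_u(\mathcal N_u^\perp)$ because $T_u g=T_u(Pg)$, where $P$ is the orthogonal projection onto $\mathcal N_u^\perp$.

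Next, I will define $\mathcal H:=T_u(\mH_{\kB})$ with the transported norm $\|T_ug\|:=\|Pg\|_{\mH_{\kB}}$. This is well defined by injectivity on $\mathcal N_u^\perp$. By the Pythagorean identity $\|g\|^2=\|Pg\|^2+\|(I-P)g\|^2$ it equals the minimum of $\|g\|_{\mH_{\kB}}$ over all preimages, attained uniquely at $g_f=Pg$. Thus $\mathcal H$ is isometric to the Hilbert space $\mathcal N_u^\perp$. I then verify that $\mathcal H$ is an RKHS with kernel $k_u$. On the one hand, $T_u\kB(u(\b x),\cdot)=\kB(u(\b x),u(\cdot))=k_u(\b x,\cdot)$, and $\kB(u(\b x),\cdot)\in\mathcal N_u^\perp$. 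On the other hand, for $f=T_ug_f$ the reproducing property follows from
$$\langle f,k_u(\b x,\cdot)\rangle_{\mathcal H}=\langle g_f,\kB(u(\b x),\cdot)\rangle_{\mH_{\kB}}=g_f(u(\b x))=f(\b x).$$
By uniqueness of the RKHS attached to a positive-definite kernel (Moore--Aronszajn), $\mathcal H=\mH_{k_u}$ with equal norms. This gives \eqref{eq:vbkl-pullback-rkhs-range}, \eqref{eq:vbkl-pullback-minimal-representative}, and \eqref{eq:vbkl-pullback-quotient-norm}.

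The unit-ball identity \eqref{eq:vbkl-pullback-unit-ball} then follows in both directions. If $\|g\|\le1$, then $\|T_ug\|_{\mH_{k_u}}\le\|g\|\le1$ by the minimum formula. Conversely, if $\|f\|_{\mH_{k_u}}\le1$, then $f=T_ug_f$ with $\|g_f\|=\|f\|\le1$. Taking the union over $u\in\mU_{l-1}$ and comparing with the recursive definition of $\mU_l$ yields \eqref{eq:union-rkhs} and \eqref{eq:union-rkhs-pullback}. Finally, \eqref{eq:variation-hull} is just the definition \eqref{eq:variation-bkl-space} with $L$ replaced by $l$, so it is a tautological restatement once $\mU_l$ has been identified.

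The only delicate point I anticipate is the norm identification, namely making sure the transported norm is exactly the quotient norm and that the reproducing-kernel check uses $g_f\in\mathcal N_u^\perp$ rather than an arbitrary preimage. The computation itself is routine, but I will include the fact that $\kB(u(\b x),\cdot)\in\mathcal N_u^\perp$ explicitly.
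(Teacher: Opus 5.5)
Your argument is correct, but it reaches the conclusion by a different route from the paper's.

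\textbf{The paper's route.} The paper does not construct the pullback RKHS itself. It imports two facts from the companion BKL paper (Lemma C4(i)--(iii)): that $T_u$ is a contraction $\mH_{\kB}\to\mH_{k_u}$, and that every $f\in\mH_{k_u}$ has a minimum-norm preimage $g_f$ with $\|g_f\|_{\mH_{\kB}}=\|f\|_{\mH_{k_u}}$. From these it proceeds as follows:
\begin{itemize}
\item closedness of $\mathcal N_u$ comes from continuity of $T_u$ into $\mH_{k_u}$;
\item $g_f\in\mathcal N_u^\perp$ is shown a posteriori by a Pythagorean contradiction, since removing a nonzero $\mathcal N_u$-component would lower the norm;
\item uniqueness follows from $\mathcal N_u\cap\mathcal N_u^\perp=\{0\}$;
\item the range and quotient-norm identities are read off from the contraction.
\end{itemize}

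\textbf{Your route.} You run Aronszajn's construction from scratch:
\begin{itemize}
\item closedness comes from boundedness of the evaluations $g\mapsto g(u(\b x))$;
\item you identify $\mathcal N_u^\perp=\overline{\operatorname{span}}\{\kB(u(\b x),\cdot):\b x\in\mX\}$;
\item you transport the norm to $T_u(\mH_{\kB})$ and check the reproducing property directly;
\item Moore--Aronszajn uniqueness then gives $T_u(\mH_{\kB})=\mH_{k_u}$ isometrically.
\end{itemize}
Minimality of $g_f=Pg$ is then a consequence of Pythagoras rather than an imported input.

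\textbf{Comparison.} Your version is self-contained: the external lemma the paper cites carries exactly the core content of the statement, and you prove that content instead. You also obtain the explicit description of $\mathcal N_u^\perp$ as the closed span of the pulled-back kernel sections. The paper's version is shorter, but it essentially repackages the cited pullback result into the $\mathcal N_u^\perp$ form. The remaining unit-ball, union, and variation-hull steps are the same in both.
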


\begin{proof}
Fix
$l\ge2$
and
$u\in\mU_{l-1}$,
and set
\begin{align}
Z_u
:=
u\left(
\mX
\right).
\end{align}
\begingroup
\emergencystretch=2em
Apply the kernel pullback and restriction result
(\cite{MohammadigohariDiFattaNicosiaPardalos2026BKL}, Lemma~C4(i)--(iii))
with
\par\endgroup
\begin{align}
Z_0
&=
\mathbb R,
&
Z
&=
Z_u,
&
k
&=
\kB.
\end{align}
It follows that, for every
$g\in\mH_{\kB}$,
\begin{align}
T_ug
&\in
\mH_{k_u},
&
\left\|
T_ug
\right\|_{\mH_{k_u}}
&\le
\left\|
g
\right\|_{\mH_{\kB}}.
\end{align}
Moreover, for every
$f\in\mH_{k_u}$,
there exists a minimum-norm representative
$g_f\in\mH_{\kB}$
such that
\begin{align}
T_ug_f
&=
f,
&
\left\|
g_f
\right\|_{\mH_{\kB}}
&=
\left\|
f
\right\|_{\mH_{k_u}}.
\end{align}
The operator
$T_u:\mH_{\kB}\rightarrow\mH_{k_u}$
is linear and continuous.
Therefore,
\begin{align}
\mathcal N_u
=
\ker\left(
T_u
\right)
\end{align}
is a closed linear subspace of
$\mH_{\kB}$.
Let
$P_u$
denote the orthogonal projection of
$\mH_{\kB}$
onto
$\mathcal N_u$.
Since
$P_ug_f\in\mathcal N_u$,
one has
$T_u\left(P_ug_f\right)=0$.
Therefore,
\begin{align}
T_u\left(
g_f-P_ug_f
\right)
&\stackrel{(a)}{=}
T_ug_f
-
T_u\left(
P_ug_f
\right)
\stackrel{(b)}{=}
f.
\end{align}
Here,
(a) applies the linearity of
$T_u$,
while
(b) uses
$T_ug_f=f$
and
$T_u\left(P_ug_f\right)=0$. Suppose that
$P_ug_f\neq0$.
The vectors
$g_f-P_ug_f$
and
$P_ug_f$
are orthogonal.
Thus,
\begin{align}
\left\|
g_f-P_ug_f
\right\|_{\mH_{\kB}}^2
&\stackrel{(a)}{=}
\left\|
g_f
\right\|_{\mH_{\kB}}^2
-
\left\|
P_ug_f
\right\|_{\mH_{\kB}}^2
\stackrel{(b)}{<}
\left\|
g_f
\right\|_{\mH_{\kB}}^2.
\end{align}
Here,
(a) applies the Pythagorean identity to the orthogonal decomposition
\begin{align}
g_f
=
\left(
g_f-P_ug_f
\right)
+
P_ug_f,
\end{align}
while
(b) uses
$P_ug_f\neq0$.
The preceding two displays produce another representative of
$f$
with strictly smaller
$\mH_{\kB}$-norm, contradicting the minimum-norm property of
$g_f$.
Hence,
\begin{align}
P_ug_f
&=
0,
&
g_f
&\in
\mathcal N_u^\perp.
\end{align}
We next prove uniqueness in
$\mathcal N_u^\perp$.
Let
$h_f\in\mathcal N_u^\perp$
satisfy
$T_uh_f=f$.
Hence,
\begin{align}
T_u\left(
g_f-h_f
\right)
&\stackrel{(a)}{=}
T_ug_f
-
T_uh_f
\stackrel{(b)}{=}
0,
\notag\\
g_f-h_f
&\stackrel{(c)}{\in}
\mathcal N_u
\cap
\mathcal N_u^\perp
\stackrel{(d)}{=}
\left\{
0
\right\}.
\end{align}
Here,
(a) applies the linearity of
$T_u$,
(b) uses
$T_ug_f=T_uh_f=f$,
(c) follows from the first line together with
$g_f,h_f\in\mathcal N_u^\perp$,
and
(d) uses
\begin{align}
\mathcal N_u
\cap
\mathcal N_u^\perp
=
\left\{
0
\right\}.
\end{align}
Thus,
$g_f=h_f$,
which proves
\eqref{eq:vbkl-pullback-minimal-representative}. The preceding construction also identifies the range of the pullback
operator.
Consequently,
\begin{align}
\mH_{k_u}
&\stackrel{(a)}{\subseteq}
T_u\left(
\mathcal N_u^\perp
\right)
\stackrel{(b)}{\subseteq}
T_u\left(
\mH_{\kB}
\right)
\stackrel{(c)}{\subseteq}
\mH_{k_u}.
\end{align}
Here,
(a) follows because every
$f\in\mH_{k_u}$
has a representative
$g_f\in\mathcal N_u^\perp$,
(b) uses
$\mathcal N_u^\perp\subseteq\mH_{\kB}$,
and
(c) applies the pullback contraction established above.
Therefore, all three sets are equal, which proves
\eqref{eq:vbkl-pullback-rkhs-range}. Let
$g\in\mH_{\kB}$
be any representative of
$f\in\mH_{k_u}$,
so that
$T_ug=f$.
Therefore,
\begin{align}
\left\|
f
\right\|_{\mH_{k_u}}
&\stackrel{(a)}{=}
\left\|
T_ug
\right\|_{\mH_{k_u}}
\stackrel{(b)}{\le}
\left\|
g
\right\|_{\mH_{\kB}}.
\end{align}
Here,
(a) uses
$T_ug=f$,
while
(b) applies the pullback contraction.
The minimum-norm representative
$g_f$
is admissible and attains equality in this lower bound.
Consequently,
\begin{align}
\left\|
f
\right\|_{\mH_{k_u}}
&\stackrel{(a)}{=}
\left\|
g_f
\right\|_{\mH_{\kB}}
\stackrel{(b)}{=}
\min
\left\{
\left\|
g
\right\|_{\mH_{\kB}}
:
g\in\mH_{\kB},
\;
T_ug=f
\right\}.
\end{align}
Here,
(a) applies the minimum-extension identity established above,
while
(b) combines the preceding lower bound with the admissibility of
$g_f$.
This proves
\eqref{eq:vbkl-pullback-quotient-norm}. We now prove the exact unit-ball identity.
Let
$g\in\mH_{\kB}$
satisfy
$\left\|g\right\|_{\mH_{\kB}}\le1$.
Hence,
\begin{align}
\left\|
T_ug
\right\|_{\mH_{k_u}}
&\stackrel{(a)}{\le}
\left\|
g
\right\|_{\mH_{\kB}}
\stackrel{(b)}{\le}
1.
\end{align}
Here,
(a) applies the pullback contraction,
while
(b) uses the assumed norm bound on
$g$.
Therefore,
\begin{align}
\left\{
T_ug
:
g\in\mH_{\kB},
\;
\left\|
g
\right\|_{\mH_{\kB}}
\le
1
\right\}
\subseteq
\left\{
f\in\mH_{k_u}
:
\left\|
f
\right\|_{\mH_{k_u}}
\le
1
\right\}.
\end{align}
Conversely, let
$f\in\mH_{k_u}$
satisfy
$\left\|f\right\|_{\mH_{k_u}}\le1$.
Its minimum-norm representative satisfies $T_ug_f
=
f$. Hence,
\begin{align}
\left\|
g_f
\right\|_{\mH_{\kB}}
&\stackrel{(a)}{=}
\left\|
f
\right\|_{\mH_{k_u}}
\stackrel{(b)}{\le}
1.
\end{align}
Here,
(a) applies the minimum-extension identity,
while
(b) uses the assumed norm bound on
$f$.
Therefore,
\begin{align}
\left\{
f\in\mH_{k_u}
:
\left\|
f
\right\|_{\mH_{k_u}}
\le
1
\right\}
\subseteq
\left\{
T_ug
:
g\in\mH_{\kB},
\;
\left\|
g
\right\|_{\mH_{\kB}}
\le
1
\right\}.
\end{align}
Combining the two inclusions proves
\eqref{eq:vbkl-pullback-unit-ball}. Finally, apply the unit-ball identity to the recursive definition of the
atomic dictionary.
Therefore,
\begin{align}
\mU_l
&\stackrel{(a)}{=}
\bigcup_{u\in\mU_{l-1}}
\left\{
T_ug
:
g\in\mH_{\kB},
\;
\left\|
g
\right\|_{\mH_{\kB}}
\le
1
\right\}
\notag\\
&\stackrel{(b)}{=}
\bigcup_{u\in\mU_{l-1}}
\left\{
f\in\mH_{k_u}
:
\left\|
f
\right\|_{\mH_{k_u}}
\le
1
\right\}.
\end{align}
Here,
(a) applies the compositional definition of
$\mU_l$,
while
(b) applies
\eqref{eq:vbkl-pullback-unit-ball}
for every fixed
$u\in\mU_{l-1}$.
This proves
\eqref{eq:union-rkhs}.
By the definition of
$T_u$,
\eqref{eq:union-rkhs-pullback}
is the equivalent compositional form of the same identity. By the definition of the depth-$l$ VBKL space,
\begin{align}
\mathcal V^{(l)}
=
\left\{
F\in L^2\left(
\nu
\right)
:
F
=
\int_{\mU_l}
a
\,\d\rho\left(
a
\right),
\;
\rho\in\mathcal M\left(
\mU_l
\right),
\;
\left\|
\rho
\right\|_{\mathrm{TV}}
<
\infty
\right\}.
\end{align}
This proves
\eqref{eq:variation-hull}
and completes the proof.
\end{proof}

\begin{lemma}\textnormal{(Exact reduction from finite variation hulls to symmetric dictionaries)}\ 
\label{lem:variation-reduction}
Let
$n\in\mathbb N$,
$n\ge1$,
and let
$\b x_1,\ldots,\b x_n\in\mX$
be fixed sample points.
Let
$\mathcal D$
be a nonempty class of real-valued functions on
$\mX$,
equipped with a
$\sigma$-algebra for which every evaluation map
$
f
\longmapsto
f\left(
\b x
\right)$, $\b x\in\mX,
$
is measurable.
Assume that
$\mathcal D$
is symmetric:
\begin{align}
\mathcal D
=
-\mathcal D
:=
\left\{
-f
:
f\in\mathcal D
\right\}.
\label{eq:variation-reduction-dictionary-symmetry}
\end{align}

For
$R>0$,
define the signed-measure variation hull generated by
$\mathcal D$
as
\begin{align}
\mathcal W_R\left(
\mathcal D
\right)
:=
\left\{
F_\mu
:
F_\mu\left(
\b x
\right)
=
\int_{\mathcal D}
f\left(
\b x
\right)
\,\d\mu\left(
f
\right),
\;
\mu\in\mathcal M\left(
\mathcal D
\right),
\;
\left\|
\mu
\right\|_{\mathrm{TV}}
\le
R
\right\}.
\label{eq:variation-reduction-general-variation-hull}
\end{align}
Assume that every such integral is well defined at
$\b x_1,\ldots,\b x_n$,
and define
\begin{align}
M_{\b x}\left(
\mathcal D
\right)
:=
\sup_{f\in\mathcal D}
\max_{i\in\left[
n
\right]}
\left|
f\left(
\b x_i
\right)
\right|
<
\infty.
\label{eq:variation-reduction-sample-envelope}
\end{align}
Then
\begin{align}
\widehat{\mathfrak R}_n
\left(
\mathcal W_R\left(
\mathcal D
\right)
\right)
&=
R
\widehat{\mathfrak R}_n
\left(
\mathcal D
\right)
\notag\\
&\le
R
M_{\b x}\left(
\mathcal D
\right).
\label{eq:variation-reduction-exact-identity}
\end{align}
In particular, fix
$L\ge2$,
$m\ge1$,
and
$G\ge2$.
The finite outer Brownian dictionary
$\mathcal D_L^{m,G}$
defined in
\eqref{eq:finite-architecture-outer-dictionary}
is symmetric.
Define
\begin{align}
B_{\b x}
:=
\sup_{a\in\mathcal A_{L-1}^{m,G}}
\max_{i\in\left[
n
\right]}
\left|
a\left(
\b x_i
\right)
\right|.
\label{eq:variation-reduction-finite-support-envelope}
\end{align}
Then
\begin{align}
M_{\b x}\left(
\mathcal D_L^{m,G}
\right)
\le
B_{\b x}^{1/2}.
\label{eq:variation-reduction-outer-envelope}
\end{align}
Consequently,
\begin{align}
\widehat{\mathfrak R}_n
\left(
\mathcal W_R^{L;m,G}
\right)
&=
R
\widehat{\mathfrak R}_n
\left(
\mathcal D_L^{m,G}
\right)
\notag\\
&\le
R
B_{\b x}^{1/2}.
\label{eq:variation-reduction-finite-architecture}
\end{align}

\end{lemma}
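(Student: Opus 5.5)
We prove the identity $\widehat{\mathfrak R}_n(\mathcal W_R(\mathcal D))=R\,\widehat{\mathfrak R}_n(\mathcal D)$ by establishing the two inequalities separately, for each fixed Rademacher vector $\varepsilon\in\{-1,1\}^n$, and only then taking $\mathbb E_\varepsilon$. Write $S_\varepsilon(f):=\frac1n\sum_i\varepsilon_if(\b x_i)$.

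\emph{Upper bound.} For $F_\mu\in\mathcal W_R(\mathcal D)$, linearity of the integral (the integrals at $\b x_1,\ldots,\b x_n$ are well defined by assumption) gives
\begin{align}
S_\varepsilon(F_\mu)=\int_{\mathcal D}S_\varepsilon(f)\,\d\mu(f)\le\|\mu\|_{\mathrm{TV}}\sup_{f\in\mathcal D}|S_\varepsilon(f)|.
\end{align}
By the symmetry $\mathcal D=-\mathcal D$, we have $\sup_f|S_\varepsilon(f)|=\sup_f S_\varepsilon(f)$, and this supremum is nonnegative. Hence $S_\varepsilon(F_\mu)\le R\sup_{f\in\mathcal D}S_\varepsilon(f)$.

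\emph{Lower bound.} For each $f\in\mathcal D$, the Dirac measure $R\delta_f$ has total variation $R$ and represents $Rf$. Therefore $\sup_{F\in\mathcal W_R}S_\varepsilon(F)\ge R\sup_fS_\varepsilon(f)$.

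Taking expectations in $\varepsilon$ yields equality. The envelope bound follows from $|S_\varepsilon(f)|\le M_{\b x}(\mathcal D)$, since $\frac1n\sum_i|f(\b x_i)|\le\max_i|f(\b x_i)|$. The measure-theoretic points are minor: finiteness of $M_{\b x}$ guarantees that $f\mapsto S_\varepsilon(f)$ is a bounded measurable integrand, so integrating against $|\mu|$ is legitimate, and singletons must be measurable for the Dirac measures to exist. I would note that the latter is implicit in the $\sigma$-algebra assumption; if it is not, restrict to discrete measures on the evaluation-induced $\sigma$-algebra.

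For the specialization, symmetry of $\mathcal D_L^{m,G}$ is immediate: each RKHS unit ball $\{f\in\mH_{k_a}:\|f\|\le1\}$ is symmetric, and a union of symmetric sets is symmetric. The envelope bound comes from the reproducing-property estimate already carried out in the proof of \Cref{lem:finite-bkl-parameter-count}\ref{lem:finite-architecture-range-control}:
\begin{align}
|f(\b x_i)|\le\|f\|_{\mH_{k_a}}k_a(\b x_i,\b x_i)^{1/2}=|a(\b x_i)|^{1/2}\le B_{\b x}^{1/2}.
\end{align}
Taking the supremum over $f$ and $i$ gives $M_{\b x}(\mathcal D_L^{m,G})\le B_{\b x}^{1/2}<\infty$. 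Well-definedness of the integrals is given by the uniform bound stated after \eqref{eq:finite-architecture-variation-ball}. Since $\mathcal W_R^{L;m,G}=\mathcal W_R(\mathcal D_L^{m,G})$ by definition, the general result then yields \eqref{eq:variation-reduction-finite-architecture}.

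The only step that requires care is the upper bound, where symmetry is used to replace $\sup|S_\varepsilon|$ by $\sup S_\varepsilon$. Everything else is routine.
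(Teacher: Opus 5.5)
Your proposal is correct and follows essentially the same route as the paper: a fixed-$\varepsilon$ upper bound through $|\mu|$ combined with the symmetry identity $\sup_f|S_\varepsilon(f)|=\sup_f S_\varepsilon(f)$, a lower bound through Dirac measures $R\delta_f$ (the paper uses an $\eta$-approximate maximizer, which is equivalent), and the reproducing-property envelope $|f(\b x_i)|\le|a(\b x_i)|^{1/2}$ for $\mathcal D_L^{m,G}$. One small aside: your concern about singletons is unnecessary, because $\delta_f(E):=\mathbf 1_E(f)$ is a measure on any $\sigma$-algebra and integrates every measurable function by evaluation at $f$.
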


\begin{proof}
Fix sample points
$\b x_1,\ldots,\b x_n\in\mX$
and fix a realization
$\varepsilon_1,\ldots,\varepsilon_n$
of the Rademacher variables.
Define the sample-dependent linear functional
\begin{align}
\Phi_{\varepsilon}
\left(
f
\right)
:=
\frac{1}{n}
\sum_{i=1}^{n}
\varepsilon_i
f\left(
\b x_i
\right),
\qquad
f\in\mathcal D.
\label{eq:variation-reduction-linear-functional}
\end{align}
By
\eqref{eq:variation-reduction-sample-envelope},
for every
$f\in\mathcal D$,
\begin{align}
\left|
\Phi_{\varepsilon}
\left(
f
\right)
\right|
&\stackrel{(a)}{=}
\left|
\frac{1}{n}
\sum_{i=1}^{n}
\varepsilon_i
f\left(
\b x_i
\right)
\right|
 \stackrel{(b)}{\le}
\frac{1}{n}
\sum_{i=1}^{n}
\left|
\varepsilon_i
\right|
\left|
f\left(
\b x_i
\right)
\right|
 \stackrel{(c)}{=}
\frac{1}{n}
\sum_{i=1}^{n}
\left|
f\left(
\b x_i
\right)
\right|
 \stackrel{(d)}{\le}
M_{\b x}
\left(
\mathcal D
\right).
\label{eq:variation-reduction-functional-boundedness}
\end{align}
Here,
(a) applies the definition of
$\Phi_{\varepsilon}$,
(b) applies the triangle inequality,
(c) uses
$\left|\varepsilon_i\right|=1$
for every
$i\in\left[n\right]$,
and
(d) applies
\eqref{eq:variation-reduction-sample-envelope}.
Thus,
$\Phi_{\varepsilon}$
is bounded on
$\mathcal D$. We next use the symmetry of
$\mathcal D$.
For every
$f\in\mathcal D$,
\begin{align}
\left|
\Phi_{\varepsilon}
\left(
f
\right)
\right|
&\stackrel{(a)}{=}
\max
\left\{
\Phi_{\varepsilon}
\left(
f
\right),
-
\Phi_{\varepsilon}
\left(
f
\right)
\right\}
 \stackrel{(b)}{=}
\max
\left\{
\Phi_{\varepsilon}
\left(
f
\right),
\Phi_{\varepsilon}
\left(
-f
\right)
\right\}
 \stackrel{(c)}{\le}
\sup_{h\in\mathcal D}
\Phi_{\varepsilon}
\left(
h
\right).
\label{eq:variation-reduction-symmetry-pointwise}
\end{align}
Here,
(a) uses the elementary identity
$\left|s\right|=\max\left\{s,-s\right\}$,
(b) uses the linearity of
$\Phi_{\varepsilon}$,
and
(c) uses
$f\in\mathcal D$
together with
$-f\in\mathcal D$,
which follows from
\eqref{eq:variation-reduction-dictionary-symmetry}.
Taking the supremum over
$f\in\mathcal D$
in
\eqref{eq:variation-reduction-symmetry-pointwise}
gives
\begin{align}
\sup_{f\in\mathcal D}
\left|
\Phi_{\varepsilon}
\left(
f
\right)
\right|
\le
\sup_{f\in\mathcal D}
\Phi_{\varepsilon}
\left(
f
\right).
\label{eq:variation-reduction-symmetry-first-direction}
\end{align}
Conversely,
\begin{align}
\sup_{f\in\mathcal D}
\Phi_{\varepsilon}
\left(
f
\right)
&\stackrel{(a)}{\le}
\sup_{f\in\mathcal D}
\left|
\Phi_{\varepsilon}
\left(
f
\right)
\right|.
\label{eq:variation-reduction-symmetry-second-direction}
\end{align}
Here,
(a) uses
$\Phi_{\varepsilon}\left(f\right)
\le
\left|\Phi_{\varepsilon}\left(f\right)\right|$
for every
$f\in\mathcal D$.
Combining
\eqref{eq:variation-reduction-symmetry-first-direction}
and
\eqref{eq:variation-reduction-symmetry-second-direction}
yields
\begin{align}
\sup_{f\in\mathcal D}
\left|
\Phi_{\varepsilon}
\left(
f
\right)
\right|
=
\sup_{f\in\mathcal D}
\Phi_{\varepsilon}
\left(
f
\right).
\label{eq:variation-reduction-symmetry-identity}
\end{align}
We now establish the upper bound for the variation hull.
Using
\eqref{eq:variation-reduction-general-variation-hull},
we obtain
\begin{align}
&
\sup_{F\in\mathcal W_R\left(\mathcal D\right)}
\frac{1}{n}
\sum_{i=1}^{n}
\varepsilon_i
F\left(
\b x_i
\right)
 \stackrel{(a)}{=}
\sup_{
\substack{
\mu\in\mathcal M\left(\mathcal D\right)\\
\left\|\mu\right\|_{\mathrm{TV}}\le R
}
}
\frac{1}{n}
\sum_{i=1}^{n}
\varepsilon_i
\int_{\mathcal D}
f\left(
\b x_i
\right)
\,\d\mu\left(
f
\right)
\notag\\
&\stackrel{(b)}{=}
\sup_{
\substack{
\mu\in\mathcal M\left(\mathcal D\right)\\
\left\|\mu\right\|_{\mathrm{TV}}\le R
}
}
\int_{\mathcal D}
\left(
\frac{1}{n}
\sum_{i=1}^{n}
\varepsilon_i
f\left(
\b x_i
\right)
\right)
\,\d\mu\left(
f
\right)
\notag\\
&\stackrel{(c)}{=}
\sup_{
\substack{
\mu\in\mathcal M\left(\mathcal D\right)\\
\left\|\mu\right\|_{\mathrm{TV}}\le R
}
}
\int_{\mathcal D}
\Phi_{\varepsilon}
\left(
f
\right)
\,\d\mu\left(
f
\right)
\stackrel{(d)}{\le}
\sup_{
\substack{
\mu\in\mathcal M\left(\mathcal D\right)\\
\left\|\mu\right\|_{\mathrm{TV}}\le R
}
}
\int_{\mathcal D}
\left|
\Phi_{\varepsilon}
\left(
f
\right)
\right|
\,\d\left|\mu\right|\left(
f
\right)
\notag\\
&\stackrel{(e)}{\le}
\sup_{
\substack{
\mu\in\mathcal M\left(\mathcal D\right)\\
\left\|\mu\right\|_{\mathrm{TV}}\le R
}
}
\left[
\sup_{f\in\mathcal D}
\left|
\Phi_{\varepsilon}
\left(
f
\right)
\right|
\right]
\left|\mu\right|
\left(
\mathcal D
\right)
\stackrel{(f)}{\le}
R
\sup_{f\in\mathcal D}
\left|
\Phi_{\varepsilon}
\left(
f
\right)
\right|
\stackrel{(g)}{=}
R
\sup_{f\in\mathcal D}
\Phi_{\varepsilon}
\left(
f
\right).
\label{eq:variation-reduction-upper-bound}
\end{align}
Here,
(a) substitutes the measure representation from
\eqref{eq:variation-reduction-general-variation-hull},
(b) interchanges a finite sum and the integral,
(c) invokes
\eqref{eq:variation-reduction-linear-functional},
(d) applies the total-variation inequality for finite signed measures,
(e) bounds the integrand by its supremum over
$\mathcal D$,
(f) uses
\begin{align}
\left|\mu\right|
\left(
\mathcal D
\right)
=
\left\|
\mu
\right\|_{\mathrm{TV}}
\le
R,
\end{align}
and
(g) applies
\eqref{eq:variation-reduction-symmetry-identity}. We next prove the reverse inequality.
Set
$ S_{\varepsilon} : \allowbreak= \sup_{f\in\mathcal D} \Phi_{\varepsilon} \left( f \right). $
By
\eqref{eq:variation-reduction-functional-boundedness},
the quantity
$S_{\varepsilon}$
is finite.
Let
$\eta>0$.
By the defining property of the supremum, there exists
$f_{\eta}\in\mathcal D$
such that
\begin{align}
\Phi_{\varepsilon}
\left(
f_{\eta}
\right)
\ge
S_{\varepsilon}
-
\eta.
\label{eq:variation-reduction-approximate-maximizer}
\end{align}
Define the finite positive measure
$ \mu_{\eta} : \allowbreak= R \delta_{f_{\eta}}, $
where
$\delta_{f_{\eta}}$
denotes the Dirac measure at
$f_{\eta}$.
Its total variation satisfies
\begin{align}
\left\|
\mu_{\eta}
\right\|_{\mathrm{TV}}
&\stackrel{(a)}{=}
R
\left\|
\delta_{f_{\eta}}
\right\|_{\mathrm{TV}}
\stackrel{(b)}{=}
R.
\end{align}
Here,
(a) uses the positive homogeneity of the total-variation norm,
while
(b) uses the fact that a Dirac probability measure has total variation
equal to one.
Therefore, the function
\begin{align}
F_{\eta}
\left(
\b x
\right)
&:=
\int_{\mathcal D}
f\left(
\b x
\right)
\,\d\mu_{\eta}\left(
f
\right)
\stackrel{(a)}{=}
R
f_{\eta}
\left(
\b x
\right)
\label{eq:variation-reduction-dirac-function}
\end{align}
belongs to
$\mathcal W_R\left(\mathcal D\right)$.
Here,
(a) applies the defining property of the Dirac measure. Consequently,
\begin{align}
&
\sup_{F\in\mathcal W_R\left(\mathcal D\right)}
\frac{1}{n}
\sum_{i=1}^{n}
\varepsilon_i
F\left(
\b x_i
\right)
\stackrel{(a)}{\ge}
\frac{1}{n}
\sum_{i=1}^{n}
\varepsilon_i
F_{\eta}
\left(
\b x_i
\right)
\stackrel{(b)}{=}
R
\Phi_{\varepsilon}
\left(
f_{\eta}
\right)
\stackrel{(c)}{\ge}
R
\left(
S_{\varepsilon}
-
\eta
\right).
\label{eq:variation-reduction-lower-bound-eta}
\end{align}
Here,
(a) evaluates the supremum at the admissible function
$F_{\eta}$,
(b) applies
\eqref{eq:variation-reduction-dirac-function}
and
\eqref{eq:variation-reduction-linear-functional},
and
(c) invokes
\eqref{eq:variation-reduction-approximate-maximizer}.
Since
\eqref{eq:variation-reduction-lower-bound-eta}
holds for every
$\eta>0$,
letting
$\eta\downarrow0$
gives
\begin{align}
\sup_{F\in\mathcal W_R\left(\mathcal D\right)}
\frac{1}{n}
\sum_{i=1}^{n}
\varepsilon_i
F\left(
\b x_i
\right)
\ge
R
S_{\varepsilon}
=
R
\sup_{f\in\mathcal D}
\Phi_{\varepsilon}
\left(
f
\right).
\label{eq:variation-reduction-lower-bound}
\end{align}
Combining
\eqref{eq:variation-reduction-upper-bound}
and
\eqref{eq:variation-reduction-lower-bound},
we obtain the fixed-Rademacher identity
\begin{align}
&
\sup_{F\in\mathcal W_R\left(\mathcal D\right)}
\frac{1}{n}
\sum_{i=1}^{n}
\varepsilon_i
F\left(
\b x_i
\right)
=
R
\sup_{f\in\mathcal D}
\frac{1}{n}
\sum_{i=1}^{n}
\varepsilon_i
f\left(
\b x_i
\right).
\label{eq:variation-reduction-fixed-rademacher-identity}
\end{align}
Taking expectation with respect to
$\varepsilon_1,\ldots,\varepsilon_n$
in
\eqref{eq:variation-reduction-fixed-rademacher-identity}
yields
\begin{align}
\widehat{\mathfrak R}_n
\left(
\mathcal W_R
\left(
\mathcal D
\right)
\right)
&\stackrel{(a)}{=}
R
\widehat{\mathfrak R}_n
\left(
\mathcal D
\right).
\label{eq:variation-reduction-exact-equality-proof}
\end{align}
Here,
(a) applies the definition of empirical Rademacher complexity to both
classes. It remains to prove the envelope bound.
For every realization of the Rademacher variables,
\begin{align}
\sup_{f\in\mathcal D}
\Phi_{\varepsilon}
\left(
f
\right)
&\stackrel{(a)}{\le}
\sup_{f\in\mathcal D}
\left|
\Phi_{\varepsilon}
\left(
f
\right)
\right|
\stackrel{(b)}{\le}
M_{\b x}
\left(
\mathcal D
\right).
\label{eq:variation-reduction-envelope-proof}
\end{align}
Here,
(a) uses
$s\le\left|s\right|$,
and
(b) invokes
\eqref{eq:variation-reduction-functional-boundedness}.
Taking expectation in
\eqref{eq:variation-reduction-envelope-proof}
and combining the result with
\eqref{eq:variation-reduction-exact-equality-proof}
gives
\begin{align}
\widehat{\mathfrak R}_n
\left(
\mathcal W_R
\left(
\mathcal D
\right)
\right)
\le
R
M_{\b x}
\left(
\mathcal D
\right).
\end{align}
This proves
\eqref{eq:variation-reduction-exact-identity}. We finally specialize the result to the finite outer Brownian
dictionary.
Let
$f\in\mathcal D_L^{m,G}$.
By
\eqref{eq:finite-architecture-outer-dictionary},
there exists
$a\in\mathcal A_{L-1}^{m,G}$
such that $f
\in
\mH_{k_a}$,
$\left\|
f
\right\|_{\mH_{k_a}}
\le
1$.
Since an RKHS unit ball is symmetric,
\begin{align}
\left\|
-f
\right\|_{\mH_{k_a}}
&\stackrel{(a)}{=}
\left\|
f
\right\|_{\mH_{k_a}}
\stackrel{(b)}{\le}
1.
\end{align}
Here,
(a) uses the absolute homogeneity of the Hilbert-space norm,
and
(b) applies $\left\|
f
\right\|_{\mH_{k_a}}
\le
1$.
Thus,
$-f$
belongs to the same RKHS unit ball and therefore
$-f \in \mathcal D_L^{m,G}$.
Since
$f\in\mathcal D_L^{m,G}$
was arbitrary,
$ \mathcal D_L^{m,G} \allowbreak= -\mathcal D_L^{m,G}. $
For every
$i\in\left[n\right]$,
the reproducing property in the fixed RKHS
$\mH_{k_a}$
gives
\begin{align}
\left|
f\left(
\b x_i
\right)
\right|
&\stackrel{(a)}{=}
\left|
\left\langle
f,
k_a\left(
\b x_i,\cdot
\right)
\right\rangle_{\mH_{k_a}}
\right|
\stackrel{(b)}{\le}
\left\|
f
\right\|_{\mH_{k_a}}
\left\|
k_a\left(
\b x_i,\cdot
\right)
\right\|_{\mH_{k_a}}
\stackrel{(c)}{=}
\left\|
f
\right\|_{\mH_{k_a}}
k_a\left(
\b x_i,\b x_i
\right)^{1/2}
\notag\\
&\stackrel{(d)}{\le}
k_a\left(
\b x_i,\b x_i
\right)^{1/2}
\stackrel{(e)}{=}
\left|
a\left(
\b x_i
\right)
\right|^{1/2}
\stackrel{(f)}{\le}
B_{\b x}^{1/2}.
\label{eq:variation-reduction-finite-outer-pointwise}
\end{align}
Here,
(a) applies the reproducing property,
(b) applies the Cauchy--Schwarz inequality in
$\mH_{k_a}$,
(c) applies the RKHS section-norm identity,
(d) uses
$\left\|f\right\|_{\mH_{k_a}}\le1$,
(e) uses
\begin{align}
k_a\left(
\b x_i,\b x_i
\right)
&\stackrel{(a)}{=}
\kB\left(
a\left(
\b x_i
\right),
a\left(
\b x_i
\right)
\right)
\stackrel{(b)}{=}
\left|
a\left(
\b x_i
\right)
\right|,
\end{align}
where
(a) is the definition of the Brownian pullback kernel and
(b) uses
$\kB\left(s,s\right)=\left|s\right|$,
and
(f) invokes
\eqref{eq:variation-reduction-finite-support-envelope}. Taking the maximum over
$i\in\left[n\right]$
and then the supremum over
$f\in\mathcal D_L^{m,G}$
in
\eqref{eq:variation-reduction-finite-outer-pointwise}
gives
$M_{\b x} \left( \mathcal D_L^{m,G} \right) \le B_{\b x}^{1/2}$,
which proves
\eqref{eq:variation-reduction-outer-envelope}. Finally,
\eqref{eq:finite-architecture-variation-ball}
and
\eqref{eq:variation-reduction-general-variation-hull}
give
$\mathcal W_R^{L;m,G} = \mathcal W_R \left( \mathcal D_L^{m,G} \right)$.
Applying
\eqref{eq:variation-reduction-exact-identity}
with
$\mathcal D=\mathcal D_L^{m,G}$
and using
\eqref{eq:variation-reduction-outer-envelope}
proves
\eqref{eq:variation-reduction-finite-architecture}.
This completes the proof.

\end{proof}

\begin{lemma}[Rademacher bound for unions of Brownian pullback RKHS balls\kern-3.5pt]\leavevmode\linebreak
\label{lem:brownian-pullback-rademacher}
Let
$n\in\mathbb N$,
$n\ge1$,
and let
$\b x_1,\ldots,\b x_n\in\mX$
be fixed sample points.
Let
$\mathcal A$
be a nonempty class of real-valued support functions on
$\mX$.
For every
$a\in\mathcal A$,
define
\begin{align}
k_a\left(
\b x,
\b x'
\right)
:=
\kB\left(
a\left(
\b x
\right),
a\left(
\b x'
\right)
\right),
\qquad
\b x,\b x'\in\mX.
\label{eq:brownian-pullback-rademacher-kernel}
\end{align}
Assume that
\begin{align}
B_{\b x}\left(
\mathcal A
\right)
:=
\sup_{a\in\mathcal A}
\max_{i\in\left[
n
\right]}
k_a\left(
\b x_i,
\b x_i
\right)
<
\infty.
\label{eq:brownian-pullback-diagonal-envelope}
\end{align}
Define
\begin{align}
\widehat{\mathfrak C}_{n,\b x}^{(B)}
\left(
\mathcal A
\right)
:=
\mathbb E_{\varepsilon}
\left[
\sup_{a\in\mathcal A}
\left|
\frac{1}{n}
\sum_{1\le i<j\le n}
\varepsilon_i
\varepsilon_j
k_a\left(
\b x_i,
\b x_j
\right)
\right|
\right],
\label{eq:empirical-brownian-chaos-complexity}
\end{align}
where
$\varepsilon_1,\ldots,\varepsilon_n$
are independent Rademacher random variables. For
$r>0$,
let
\begin{align}
\mathcal D_r\left(
\mathcal A
\right)
:=
\bigcup_{a\in\mathcal A}
\left\{
f\in\mH_{k_a}
:
\left\|
f
\right\|_{\mH_{k_a}}
\le
r
\right\}.
\label{eq:radius-r-pullback-rkhs-union}
\end{align}
Then
$\mathcal D_r\left(
\mathcal A
\right)$
is symmetric and
\begin{align}
\widehat{\mathfrak R}_n
\left(
\mathcal D_r\left(
\mathcal A
\right)
\right)
&=
\frac{r}{n}
\mathbb E_{\varepsilon}
\left[
\sup_{a\in\mathcal A}
\left(
\sum_{i,j=1}^{n}
\varepsilon_i
\varepsilon_j
k_a\left(
\b x_i,
\b x_j
\right)
\right)^{1/2}
\right]
\label{eq:brownian-pullback-rademacher-exact}
\\
&\le
\frac{r}{\sqrt n}
\left(
2
\widehat{\mathfrak C}_{n,\b x}^{(B)}
\left(
\mathcal A
\right)
+
B_{\b x}\left(
\mathcal A
\right)
\right)^{1/2}.
\label{eq:brownian-pullback-rademacher-bound}
\end{align}

In particular, fix
$L\ge2$,
$m\ge1$,
and
$G\ge2$,
and set
\begin{align}
B_{\b x}
:=
\sup_{a\in\mathcal A_{L-1}^{m,G}}
\max_{i\in\left[
n
\right]}
\left|
a\left(
\b x_i
\right)
\right|.
\label{eq:finite-architecture-rkhs-union-support-envelope}
\end{align}
Then
\begin{align}
\mathcal D_1\left(
\mathcal A_{L-1}^{m,G}
\right)
=
\mathcal D_L^{m,G},
\label{eq:finite-architecture-dictionary-as-rkhs-union}
\end{align}
and
\begin{align}
B_{\b x}\left(
\mathcal A_{L-1}^{m,G}
\right)
=
B_{\b x}.
\label{eq:finite-architecture-diagonal-envelope-identity}
\end{align}
Consequently,
\begin{align}
\widehat{\mathfrak R}_n
\left(
\mathcal D_L^{m,G}
\right)
\le
\frac{1}{\sqrt n}
\left(
2
\widehat{\mathfrak C}_{n,\b x}^{(B)}
\left(
\mathcal A_{L-1}^{m,G}
\right)
+
B_{\b x}
\right)^{1/2}.
\label{eq:finite-architecture-dictionary-rademacher-bound}
\end{align}

\end{lemma}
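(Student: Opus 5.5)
We have to prove Lemma~\ref{lem:brownian-pullback-rademacher}. The plan has three parts: get the exact identity through the RKHS dual norm, bound it by Jensen and a split into diagonal and off-diagonal terms, then specialize to the finite architecture.

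\textbf{Symmetry and the exact identity.} Symmetry of $\mathcal D_r(\mathcal A)$ holds because each RKHS ball is symmetric. For the exact identity \eqref{eq:brownian-pullback-rademacher-exact}, fix $\varepsilon$ and $a\in\mathcal A$. The reproducing property gives
\begin{align}
\sum_i\varepsilon_i f(\b x_i)=\Big\langle f,\sum_i\varepsilon_i k_a(\b x_i,\cdot)\Big\rangle_{\mH_{k_a}}.
\end{align}
Its supremum over the radius-$r$ ball of $\mH_{k_a}$ is therefore $r\|\sum_i\varepsilon_ik_a(\b x_i,\cdot)\|_{\mH_{k_a}}$, by Cauchy--Schwarz with equality at the normalized section. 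This norm equals $\big(\sum_{i,j}\varepsilon_i\varepsilon_jk_a(\b x_i,\b x_j)\big)^{1/2}$. The supremum over the union is the supremum over $a$ of these per-ball suprema. Dividing by $n$ and taking $\mathbb E_\varepsilon$ gives the identity. Measurability causes no trouble, because for fixed $\varepsilon$ everything is deterministic and the expectation is over a finite set.

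\textbf{The bound.} First apply Jensen (concavity of the square root) after moving the supremum inside the root:
\begin{align}
\mathbb E\sup_a Q_a^{1/2}=\mathbb E\big(\sup_a Q_a\big)^{1/2}\le\big(\mathbb E\sup_aQ_a\big)^{1/2},
\qquad
Q_a:=\sum_{i,j}\varepsilon_i\varepsilon_jk_a(\b x_i,\b x_j)\ge0 .
\end{align}
Next split $Q_a$ into diagonal and off-diagonal parts. Since $\varepsilon_i^2=1$, the diagonal part is $\sum_ik_a(\b x_i,\b x_i)\le nB_{\b x}(\mathcal A)$. The off-diagonal part is $2\sum_{i<j}\varepsilon_i\varepsilon_jk_a(\b x_i,\b x_j)\le 2n\,|\cdot/n|$. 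Hence
\begin{align}
\mathbb E\sup_aQ_a\le n\big(B_{\b x}(\mathcal A)+2\widehat{\mathfrak C}^{(B)}_{n,\b x}(\mathcal A)\big).
\end{align}
Multiplying $(r/n)\cdot\sqrt{n(\cdots)}$ yields \eqref{eq:brownian-pullback-rademacher-bound}. The step needing most care is using $\sup(\text{diag}+\text{offdiag})\le\sup\text{diag}+\sup|\text{offdiag}|$ while keeping the factor $2$ and the $1/n$ normalization in $\widehat{\mathfrak C}$ consistent.

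\textbf{Specialization.} Identity \eqref{eq:finite-architecture-dictionary-as-rkhs-union} is immediate by comparing \eqref{eq:radius-r-pullback-rkhs-union} at $r=1$ with \eqref{eq:finite-architecture-outer-dictionary}. Identity \eqref{eq:finite-architecture-diagonal-envelope-identity} follows from $k_a(\b x_i,\b x_i)=\kB(a(\b x_i),a(\b x_i))=|a(\b x_i)|$. Finiteness holds by Lemma~\ref{lem:finite-bkl-parameter-count}\ref{lem:finite-architecture-range-control}. Substituting these into the general bound with $r=1$ gives \eqref{eq:finite-architecture-dictionary-rademacher-bound}.
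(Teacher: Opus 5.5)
Your proposal is correct and follows essentially the same route as the paper's proof. Both reduce each RKHS ball, via the reproducing property and the dual norm, to $\frac{r}{n}\bigl(\sum_{i,j}\varepsilon_i\varepsilon_j k_a(\mathbf{x}_i,\mathbf{x}_j)\bigr)^{1/2}$, apply Jensen after moving the supremum inside the square root, split into diagonal and off-diagonal parts using $\varepsilon_i^2=1$, and specialize via $k_a(\mathbf{x}_i,\mathbf{x}_i)=|a(\mathbf{x}_i)|$. The one cosmetic difference is that you take the signed supremum over each symmetric ball directly, whereas the paper first passes through the absolute-value form; your way is equally valid.
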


\begin{proof}
We divide the proof into four steps.

\paragraph{Step 1: symmetry of the union class.}

Fix
$f\in\mathcal D_r\left(\mathcal A\right)$.
By
\eqref{eq:radius-r-pullback-rkhs-union},
there exists
$a\in\mathcal A$
such that
\begin{align}
f
&\in
\mH_{k_a},
&
\left\|
f
\right\|_{\mH_{k_a}}
&\le
r.
\label{eq:brownian-pullback-rademacher-fixed-ball-membership}
\end{align}
Since
$\mH_{k_a}$
is a vector space,
$-f \in \mH_{k_a}$.
Moreover,
\begin{align}
\left\|
-f
\right\|_{\mH_{k_a}}
&\stackrel{(a)}{=}
\left|
-1
\right|
\left\|
f
\right\|_{\mH_{k_a}}
\stackrel{(b)}{=}
\left\|
f
\right\|_{\mH_{k_a}}
\stackrel{(c)}{\le}
r.
\end{align}
Here,
(a) applies the absolute homogeneity of the Hilbert-space norm,
(b) uses
$\left|-1\right|=1$,
and
(c) invokes
\eqref{eq:brownian-pullback-rademacher-fixed-ball-membership}.
Therefore,
$-f \in \mathcal D_r \left( \mathcal A \right)$.
Since
$f\in\mathcal D_r\left(\mathcal A\right)$
was arbitrary,
\begin{align}
\mathcal D_r
\left(
\mathcal A
\right)
=
-
\mathcal D_r
\left(
\mathcal A
\right).
\label{eq:brownian-pullback-rademacher-union-symmetry}
\end{align}
Fix a realization
$\varepsilon_1,\ldots,\varepsilon_n$
of the Rademacher variables and define
\begin{align}
\Lambda_{\varepsilon}
\left(
f
\right)
:=
\frac{1}{n}
\sum_{i=1}^{n}
\varepsilon_i
f\left(
\b x_i
\right),
\qquad
f\in
\mathcal D_r
\left(
\mathcal A
\right).
\end{align}
By
\eqref{eq:brownian-pullback-rademacher-union-symmetry},
\begin{align}
\sup_{f\in\mathcal D_r\left(\mathcal A\right)}
\left|
\Lambda_{\varepsilon}
\left(
f
\right)
\right|
&\stackrel{(a)}{=}
\sup_{f\in\mathcal D_r\left(\mathcal A\right)}
\max
\left\{
\Lambda_{\varepsilon}
\left(
f
\right),
-
\Lambda_{\varepsilon}
\left(
f
\right)
\right\}
 \stackrel{(b)}{=}
\sup_{f\in\mathcal D_r\left(\mathcal A\right)}
\max
\left\{
\Lambda_{\varepsilon}
\left(
f
\right),
\Lambda_{\varepsilon}
\left(
-f
\right)
\right\}
\notag\\
&\stackrel{(c)}{=}
\sup_{f\in\mathcal D_r\left(\mathcal A\right)}
\Lambda_{\varepsilon}
\left(
f
\right).
\label{eq:brownian-pullback-rademacher-absolute-supremum}
\end{align}
Here,
(a) uses
$\left|s\right|=\max\left\{s,-s\right\}$,
(b) uses the linearity of
$\Lambda_{\varepsilon}$,
and
(c) uses the symmetry
\eqref{eq:brownian-pullback-rademacher-union-symmetry}.
Thus, the absolute-value form may be used without changing the
empirical Rademacher complexity.

\paragraph{Step 2: exact RKHS reduction for a fixed support.}

Fix
$a\in\mathcal A$
and define
\begin{align}
V_{a,\varepsilon}
:=
\sum_{i=1}^{n}
\varepsilon_i
k_a
\left(
\b x_i,
\cdot
\right)
\in
\mH_{k_a}.
\label{eq:brownian-pullback-rademacher-representer}
\end{align}
For every
$f\in\mH_{k_a}$,
the reproducing property gives
\begin{align}
\sum_{i=1}^{n}
\varepsilon_i
f\left(
\b x_i
\right)
&\stackrel{(a)}{=}
\sum_{i=1}^{n}
\varepsilon_i
\left\langle
f,
k_a
\left(
\b x_i,
\cdot
\right)
\right\rangle_{\mH_{k_a}}
 \stackrel{(b)}{=}
\left\langle
f,
\sum_{i=1}^{n}
\varepsilon_i
k_a
\left(
\b x_i,
\cdot
\right)
\right\rangle_{\mH_{k_a}}
 \stackrel{(c)}{=}
\left\langle
f,
V_{a,\varepsilon}
\right\rangle_{\mH_{k_a}}.
\label{eq:brownian-pullback-rademacher-reproducing-reduction}
\end{align}
Here,
(a) applies the reproducing property in the fixed RKHS
$\mH_{k_a}$,
(b) uses the linearity of the inner product in its second argument,
and
(c) invokes
\eqref{eq:brownian-pullback-rademacher-representer}. Consequently,
\begin{align}
&
\sup_{
\substack{
f\in\mH_{k_a}\\
\left\|f\right\|_{\mH_{k_a}}\le r
}
}
\left|
\frac{1}{n}
\sum_{i=1}^{n}
\varepsilon_i
f\left(
\b x_i
\right)
\right|
\stackrel{(a)}{=}
\frac{1}{n}
\sup_{
\substack{
f\in\mH_{k_a}\\
\left\|f\right\|_{\mH_{k_a}}\le r
}
}
\left|
\left\langle
f,
V_{a,\varepsilon}
\right\rangle_{\mH_{k_a}}
\right|
\notag\\
&\stackrel{(b)}{=}
\frac{r}{n}
\left\|
V_{a,\varepsilon}
\right\|_{\mH_{k_a}}.
\label{eq:brownian-pullback-rademacher-dual-norm}
\end{align}
Here,
(a) applies
\eqref{eq:brownian-pullback-rademacher-reproducing-reduction},
and
(b) applies the Hilbert-space dual-norm identity.
More explicitly, the Cauchy--Schwarz inequality gives the upper bound
\begin{align}
\left|
\left\langle
f,
V_{a,\varepsilon}
\right\rangle_{\mH_{k_a}}
\right|
\le
r
\left\|
V_{a,\varepsilon}
\right\|_{\mH_{k_a}},
\end{align}
and equality is attained, whenever
$V_{a,\varepsilon}\neq0$,
by choosing
$f = r \frac{ V_{a,\varepsilon} }{ \left\| V_{a,\varepsilon} \right\|_{\mH_{k_a}} }$.
When
$V_{a,\varepsilon}=0$,
both sides of
\eqref{eq:brownian-pullback-rademacher-dual-norm}
are zero. The squared norm in
\eqref{eq:brownian-pullback-rademacher-dual-norm}
satisfies
\begin{align}
\left\|
V_{a,\varepsilon}
\right\|_{\mH_{k_a}}^2
&\stackrel{(a)}{=}
\left\langle
\sum_{i=1}^{n}
\varepsilon_i
k_a
\left(
\b x_i,
\cdot
\right),
\sum_{j=1}^{n}
\varepsilon_j
k_a
\left(
\b x_j,
\cdot
\right)
\right\rangle_{\mH_{k_a}}
 \stackrel{(b)}{=}
\sum_{i,j=1}^{n}
\varepsilon_i
\varepsilon_j
\left\langle
k_a
\left(
\b x_i,
\cdot
\right),
k_a
\left(
\b x_j,
\cdot
\right)
\right\rangle_{\mH_{k_a}}
\notag\\
&\stackrel{(c)}{=}
\sum_{i,j=1}^{n}
\varepsilon_i
\varepsilon_j
k_a
\left(
\b x_i,
\b x_j
\right).
\label{eq:brownian-pullback-rademacher-quadratic-form}
\end{align}
Here,
(a) substitutes
\eqref{eq:brownian-pullback-rademacher-representer},
(b) expands the inner product by bilinearity,
and
(c) applies the RKHS kernel-section identity.
In particular,
\begin{align}
\sum_{i,j=1}^{n}
\varepsilon_i
\varepsilon_j
k_a
\left(
\b x_i,
\b x_j
\right)
\ge
0,
\label{eq:brownian-pullback-rademacher-quadratic-nonnegative}
\end{align}
because it is the squared Hilbert-space norm appearing on the left-hand
side of
\eqref{eq:brownian-pullback-rademacher-quadratic-form}. Combining
\eqref{eq:brownian-pullback-rademacher-dual-norm}
and
\eqref{eq:brownian-pullback-rademacher-quadratic-form}
gives
\begin{align}
&
\sup_{
\substack{
f\in\mH_{k_a}\\
\left\|f\right\|_{\mH_{k_a}}\le r
}
}
\left|
\frac{1}{n}
\sum_{i=1}^{n}
\varepsilon_i
f\left(
\b x_i
\right)
\right|
=
\frac{r}{n}
\left(
\sum_{i,j=1}^{n}
\varepsilon_i
\varepsilon_j
k_a
\left(
\b x_i,
\b x_j
\right)
\right)^{1/2}.
\label{eq:brownian-pullback-rademacher-fixed-support-exact}
\end{align}

\paragraph{Step 3: reduction of the union to Brownian quadratic chaos.}

Using
\eqref{eq:radius-r-pullback-rkhs-union},
\eqref{eq:brownian-pullback-rademacher-absolute-supremum},
and
\eqref{eq:brownian-pullback-rademacher-fixed-support-exact},
we obtain
\begin{align}
&
\sup_{f\in\mathcal D_r\left(\mathcal A\right)}
\frac{1}{n}
\sum_{i=1}^{n}
\varepsilon_i
f\left(
\b x_i
\right)
\stackrel{(a)}{=}
\sup_{f\in\mathcal D_r\left(\mathcal A\right)}
\left|
\frac{1}{n}
\sum_{i=1}^{n}
\varepsilon_i
f\left(
\b x_i
\right)
\right|
\notag\\
&\stackrel{(b)}{=}
\sup_{a\in\mathcal A}
\sup_{
\substack{
f\in\mH_{k_a}\\
\left\|f\right\|_{\mH_{k_a}}\le r
}
}
\left|
\frac{1}{n}
\sum_{i=1}^{n}
\varepsilon_i
f\left(
\b x_i
\right)
\right|
\stackrel{(c)}{=}
\frac{r}{n}
\sup_{a\in\mathcal A}
\left(
\sum_{i,j=1}^{n}
\varepsilon_i
\varepsilon_j
k_a
\left(
\b x_i,
\b x_j
\right)
\right)^{1/2}.
\label{eq:brownian-pullback-rademacher-fixed-epsilon-union}
\end{align}
Here,
(a) applies
\eqref{eq:brownian-pullback-rademacher-absolute-supremum},
(b) applies the union definition
\eqref{eq:radius-r-pullback-rkhs-union},
and
(c) applies
\eqref{eq:brownian-pullback-rademacher-fixed-support-exact}
for each fixed
$a\in\mathcal A$. Taking expectation with respect to the Rademacher variables in
\eqref{eq:brownian-pullback-rademacher-fixed-epsilon-union}
proves
\eqref{eq:brownian-pullback-rademacher-exact}. For notational convenience, define
\begin{align}
Q_a
\left(
\varepsilon
\right)
:=
\sum_{i,j=1}^{n}
\varepsilon_i
\varepsilon_j
k_a
\left(
\b x_i,
\b x_j
\right).
\end{align}
By
\eqref{eq:brownian-pullback-rademacher-quadratic-nonnegative},
one has
$Q_a \left( \varepsilon \right) \ge 0$, $a\in\mathcal A$.
Since the square-root function is increasing on
$\left[0,\infty\right)$,
\begin{align}
\sup_{a\in\mathcal A}
Q_a
\left(
\varepsilon
\right)^{1/2}
&\stackrel{(a)}{=}
\left(
\sup_{a\in\mathcal A}
Q_a
\left(
\varepsilon
\right)
\right)^{1/2}.
\label{eq:brownian-pullback-rademacher-supremum-square-root}
\end{align}
Here,
(a) uses the monotonicity and continuity of the square-root function. Applying Jensen's inequality to the concave function
$t\mapsto t^{1/2}$
gives
\begin{align}
&
\mathbb E_{\varepsilon}
\left[
\sup_{a\in\mathcal A}
Q_a
\left(
\varepsilon
\right)^{1/2}
\right]
\stackrel{(a)}{=}
\mathbb E_{\varepsilon}
\left[
\left(
\sup_{a\in\mathcal A}
Q_a
\left(
\varepsilon
\right)
\right)^{1/2}
\right]
\stackrel{(b)}{\le}
\left(
\mathbb E_{\varepsilon}
\left[
\sup_{a\in\mathcal A}
Q_a
\left(
\varepsilon
\right)
\right]
\right)^{1/2}.
\label{eq:brownian-pullback-rademacher-jensen}
\end{align}
Here,
(a) applies
\eqref{eq:brownian-pullback-rademacher-supremum-square-root},
and
(b) applies Jensen's inequality.

\paragraph{Step 4: diagonal and off-diagonal decomposition.}

For every
$a\in\mathcal A$,
\begin{align}
Q_a
\left(
\varepsilon
\right)
&\stackrel{(a)}{=}
\sum_{i=1}^{n}
\varepsilon_i^2
k_a
\left(
\b x_i,
\b x_i
\right)
+
2
\sum_{1\le i<j\le n}
\varepsilon_i
\varepsilon_j
k_a
\left(
\b x_i,
\b x_j
\right)
\notag\\
&\stackrel{(b)}{=}
\sum_{i=1}^{n}
k_a
\left(
\b x_i,
\b x_i
\right)
+
2
\sum_{1\le i<j\le n}
\varepsilon_i
\varepsilon_j
k_a
\left(
\b x_i,
\b x_j
\right).
\label{eq:brownian-pullback-rademacher-diagonal-offdiagonal}
\end{align}
Here,
(a) separates the diagonal and off-diagonal terms in the double sum,
and
(b) uses
$\varepsilon_i^2=1$
for every
$i\in\left[n\right]$. Consequently,
\begin{align}
\sup_{a\in\mathcal A}
Q_a
\left(
\varepsilon
\right)
&\stackrel{(a)}{\le}
2
\sup_{a\in\mathcal A}
\left|
\sum_{1\le i<j\le n}
\varepsilon_i
\varepsilon_j
k_a
\left(
\b x_i,
\b x_j
\right)
\right|
+
\sup_{a\in\mathcal A}
\sum_{i=1}^{n}
k_a
\left(
\b x_i,
\b x_i
\right)
\notag\\
&\stackrel{(b)}{\le}
2
\sup_{a\in\mathcal A}
\left|
\sum_{1\le i<j\le n}
\varepsilon_i
\varepsilon_j
k_a
\left(
\b x_i,
\b x_j
\right)
\right|
+
n
B_{\b x}
\left(
\mathcal A
\right).
\label{eq:brownian-pullback-rademacher-quadratic-upper-bound}
\end{align}
Here,
(a) applies
\eqref{eq:brownian-pullback-rademacher-diagonal-offdiagonal}
and bounds the off-diagonal term by its absolute value,
while
(b) applies
\eqref{eq:brownian-pullback-diagonal-envelope}
to every diagonal kernel value. Taking expectation in
\eqref{eq:brownian-pullback-rademacher-quadratic-upper-bound}
gives
\begin{align}
&
\mathbb E_{\varepsilon}
\left[
\sup_{a\in\mathcal A}
Q_a
\left(
\varepsilon
\right)
\right]
\stackrel{(a)}{\le}
2
\mathbb E_{\varepsilon}
\left[
\sup_{a\in\mathcal A}
\left|
\sum_{1\le i<j\le n}
\varepsilon_i
\varepsilon_j
k_a
\left(
\b x_i,
\b x_j
\right)
\right|
\right]
+
n
B_{\b x}
\left(
\mathcal A
\right)
\notag\\
&\stackrel{(b)}{=}
2n
\widehat{\mathfrak C}_{n,\b x}^{(B)}
\left(
\mathcal A
\right)
+
n
B_{\b x}
\left(
\mathcal A
\right)
\stackrel{(c)}{=}
n
\left(
2
\widehat{\mathfrak C}_{n,\b x}^{(B)}
\left(
\mathcal A
\right)
+
B_{\b x}
\left(
\mathcal A
\right)
\right).
\label{eq:brownian-pullback-rademacher-expected-quadratic-bound}
\end{align}
Here,
(a) takes expectations in
\eqref{eq:brownian-pullback-rademacher-quadratic-upper-bound},
(b) applies the normalization in
\eqref{eq:empirical-brownian-chaos-complexity},
and
(c) factors out
$n$. Combining
\eqref{eq:brownian-pullback-rademacher-exact},
\eqref{eq:brownian-pullback-rademacher-jensen},
and
\eqref{eq:brownian-pullback-rademacher-expected-quadratic-bound},
we obtain
\begin{align}
\widehat{\mathfrak R}_n
\left(
\mathcal D_r
\left(
\mathcal A
\right)
\right)
&\le
\frac{r}{n}
\left[
n
\left(
2
\widehat{\mathfrak C}_{n,\b x}^{(B)}
\left(
\mathcal A
\right)
+
B_{\b x}
\left(
\mathcal A
\right)
\right)
\right]^{1/2}
\notag\\
&=
\frac{r}{\sqrt n}
\left(
2
\widehat{\mathfrak C}_{n,\b x}^{(B)}
\left(
\mathcal A
\right)
+
B_{\b x}
\left(
\mathcal A
\right)
\right)^{1/2}.
\end{align}
Here, the inequality substitutes
\eqref{eq:brownian-pullback-rademacher-expected-quadratic-bound}
into the Jensen estimate, and the equality simplifies
$n^{1/2}/n=n^{-1/2}$.
This proves
\eqref{eq:brownian-pullback-rademacher-bound}. We finally specialize the result to the finite lower-support
architecture.
By
\eqref{eq:finite-architecture-outer-dictionary},
\begin{align}
\mathcal D_L^{m,G}
&\stackrel{(a)}{=}
\bigcup_{a\in\mathcal A_{L-1}^{m,G}}
\left\{
f\in\mH_{k_a}
:
\left\|
f
\right\|_{\mH_{k_a}}
\le
1
\right\}
 \stackrel{(b)}{=}
\mathcal D_1
\left(
\mathcal A_{L-1}^{m,G}
\right).
\end{align}
Here,
(a) applies the definition of
$\mathcal D_L^{m,G}$,
and
(b) applies
\eqref{eq:radius-r-pullback-rkhs-union}
with
$r=1$.
This proves
\eqref{eq:finite-architecture-dictionary-as-rkhs-union}. Moreover, for every
$a\in\mathcal A_{L-1}^{m,G}$
and
$i\in\left[n\right]$,
\begin{align}
k_a
\left(
\b x_i,
\b x_i
\right)
&\stackrel{(a)}{=}
\kB
\left(
a\left(
\b x_i
\right),
a\left(
\b x_i
\right)
\right)
\stackrel{(b)}{=}
\left|
a\left(
\b x_i
\right)
\right|.
\label{eq:brownian-pullback-rademacher-finite-diagonal}
\end{align}
Here,
(a) applies
\eqref{eq:brownian-pullback-rademacher-kernel},
and
(b) uses
$\kB \left( s,s \right) = \left| s \right|$, $s\in\mathbb R$.
Taking the maximum over
$i\in\left[n\right]$
and then the supremum over
$a\in\mathcal A_{L-1}^{m,G}$
in
\eqref{eq:brownian-pullback-rademacher-finite-diagonal}
proves
\eqref{eq:finite-architecture-diagonal-envelope-identity}. Finally, apply
\eqref{eq:brownian-pullback-rademacher-bound}
with
$\mathcal A = \mathcal A_{L-1}^{m,G}, r = 1$.
Using
\eqref{eq:finite-architecture-dictionary-as-rkhs-union}
and
\eqref{eq:finite-architecture-diagonal-envelope-identity}
gives
\eqref{eq:finite-architecture-dictionary-rademacher-bound}.
This completes the proof.

\end{proof}

\begin{lemma}\textnormal{(Finite threshold-chaos bound)}\ 
\label{lem:self-contained-threshold-chaos}
Let
$n\ge1$,
and let
$\mathcal T$
be a nonempty finite collection of subsets of
$\left[n\right]$.
Set
$ N : \allowbreak= \left| \mathcal T \right| \ge 1. $
For every
$T\in\mathcal T$,
define
\begin{align}
Q_T
\left(
\varepsilon
\right)
:=
\frac{1}{n}
\sum_{1\le i<j\le n}
\varepsilon_i
\varepsilon_j
\mathbf 1_{\left\{i\in T\right\}}
\mathbf 1_{\left\{j\in T\right\}},
\label{eq:finite-threshold-chaos-variable}
\end{align}
where
$\varepsilon_1,\ldots,\varepsilon_n$
are independent Rademacher random variables.
Then there exists a universal numerical constant
$C>0$
such that
\begin{align}
\mathbb E_{\varepsilon}
\left[
\max_{T\in\mathcal T}
\left|
Q_T
\left(
\varepsilon
\right)
\right|
\right]
\le
C
\ln
\left(
1+N
\right).
\label{eq:finite-threshold-chaos-bound}
\end{align}

\end{lemma}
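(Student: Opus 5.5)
The plan is to write each chaos variable in a closed form and then combine exponential tail bounds with a union bound over the $N$ sets.

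\emph{Closed form.} For $T\in\mathcal T$, put $S_T:=\sum_{i\in T}\varepsilon_i$. Since $\varepsilon_i^2=1$,
\begin{align}
Q_T(\varepsilon)
=
\frac{1}{2n}\left(S_T^2-|T|\right).
\end{align}
It follows that $|Q_T|\le \frac{1}{2n}\max\{S_T^2,|T|\}\le \frac{1}{2n}\left(S_T^2+|T|\right)$. The diagonal part is harmless, because $|T|/(2n)\le 1/2\le C\ln(1+N)$ when $N\ge1$ and $C\ge 1/(2\ln 2)$. It therefore suffices to bound $\mathbb E\max_T S_T^2/(2n)$.

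\emph{Tail bound for one set.} By Hoeffding's inequality, $S_T$ is sub-Gaussian with variance proxy $|T|\le n$. Hence
\begin{align}
\mathbb P\left(S_T^2\ge n s\right)\le 2e^{-s/2}.
\end{align}
Equivalently, the moment generating function satisfies $\mathbb E\exp(\lambda S_T^2/n)\le (1-2\lambda)^{-1/2}$ for $\lambda<1/2$. This holds because $\mathbb E e^{\theta S_T}\le e^{\theta^2|T|/2}$, and Gaussian averaging over $\theta$ then gives the bound on $\mathbb E e^{\lambda S_T^2/n}$.

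\emph{Union bound.} I will use the maximal-inequality trick with $\lambda=1/4$. Jensen's inequality gives
\begin{align}
\exp\left(\tfrac14\mathbb E\max_T S_T^2/n\right)\le \mathbb E\max_T e^{S_T^2/(4n)}\le \sum_T \mathbb E e^{S_T^2/(4n)}\le \sqrt2\,N.
\end{align}
Therefore $\mathbb E\max_T S_T^2/n\le 4\ln(\sqrt2 N)\le 4\ln 2+4\ln N$, which is at most $C'\ln(1+N)$ because $\ln(1+N)\ge\ln 2>0$. Combining this with the diagonal part yields \eqref{eq:finite-threshold-chaos-bound}, with a universal constant $C$ of order $3$.

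\emph{Main obstacle.} The only delicate point is justifying the sub-Gaussian moment generating function bound for $S_T^2$ uniformly in $|T|$. The Gaussian-linearization argument, $e^{\lambda x^2}=\mathbb E_g e^{\sqrt{2\lambda}gx}$ followed by Fubini and Hoeffding's lemma, handles this cleanly. An alternative is to integrate the tail bound directly, $\mathbb E\max_T S_T^2/n\le s_0+\int_{s_0}^\infty 2Ne^{-s/2}\,ds$ with $s_0=2\ln N$. Everything else is elementary. Sets with $T=\emptyset$ or $|T|=1$ give $Q_T=0$ and cause no trouble.
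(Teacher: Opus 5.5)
Your proof is correct and follows the same route as the paper: the identity $Q_T=(S_T^2-|T|)/(2n)$ with $S_T=\sum_{i\in T}\varepsilon_i$, the bound $|T|/(2n)\le 1/2$ on the diagonal part, sub-Gaussianity of $S_T$ with variance proxy at most $n$, and a union bound over the $N$ sets. The paper finishes with the tail-integration variant you mention, splitting $\int_0^\infty \mathbb P(\max_T S_T^2\ge s)\,ds$ at $s_0=2n\ln(2N)$ to obtain $\mathbb E\max_T S_T^2\le 2n\ln(2N)+2n$. Your primary argument, Jensen's inequality combined with the Gaussian-linearized bound $\mathbb E\, e^{S_T^2/(4n)}\le\sqrt2$, is an equally valid way to control the maximum and changes only the constants.
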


\begin{proof}
For every
$T\in\mathcal T$,
define
\begin{align}
S_T
:=
\sum_{i\in T}
\varepsilon_i
=
\sum_{i=1}^{n}
\varepsilon_i
\mathbf 1_{\left\{i\in T\right\}}.
\label{eq:finite-threshold-chaos-linear-sum}
\end{align}
Expanding the square gives
\begin{align}
S_T^2
&\stackrel{(a)}{=}
\sum_{i=1}^{n}
\varepsilon_i^2
\mathbf 1_{\left\{i\in T\right\}}
+
2
\sum_{1\le i<j\le n}
\varepsilon_i
\varepsilon_j
\mathbf 1_{\left\{i\in T\right\}}
\mathbf 1_{\left\{j\in T\right\}}
 \stackrel{(b)}{=}
\left|
T
\right|
+
2n
Q_T
\left(
\varepsilon
\right).
\label{eq:finite-threshold-chaos-square-expansion}
\end{align}
Here,
(a) expands the square of
\eqref{eq:finite-threshold-chaos-linear-sum},
while
(b) uses
$\varepsilon_i^2 = 1, \qquad i\in\left[n\right]$,
and applies the definition
\eqref{eq:finite-threshold-chaos-variable}.
Rearranging
\eqref{eq:finite-threshold-chaos-square-expansion}
yields
\begin{align}
Q_T
\left(
\varepsilon
\right)
=
\frac{
S_T^2-\left|T\right|
}{
2n
}.
\label{eq:finite-threshold-chaos-square-identity}
\end{align}
Consequently,
\begin{align}
\left|
Q_T
\left(
\varepsilon
\right)
\right|
&\stackrel{(a)}{=}
\frac{1}{2n}
\left|
S_T^2-\left|T\right|
\right|
 \stackrel{(b)}{\le}
\frac{
S_T^2
}{
2n
}
+
\frac{
\left|T\right|
}{
2n
}
\stackrel{(c)}{\le}
\frac{
S_T^2
}{
2n
}
+
\frac{1}{2}.
\label{eq:finite-threshold-chaos-pointwise-reduction}
\end{align}
Here,
(a) applies
\eqref{eq:finite-threshold-chaos-square-identity},
(b) applies the triangle inequality, and
(c) uses
$\left| T \right| \le n$.
Taking the maximum over
$T\in\mathcal T$
in
\eqref{eq:finite-threshold-chaos-pointwise-reduction}
gives
\begin{align}
\max_{T\in\mathcal T}
\left|
Q_T
\left(
\varepsilon
\right)
\right|
\le
\frac{1}{2n}
\max_{T\in\mathcal T}
S_T^2
+
\frac{1}{2}.
\label{eq:finite-threshold-chaos-maximum-reduction}
\end{align}
We next derive a uniform tail bound for the random variables
$S_T$.
Fix
$T\in\mathcal T$
and
$\lambda\in\mathbb R$.
Independence of the Rademacher variables gives
\begin{align}
\mathbb E_{\varepsilon}
\left[
\exp
\left(
\lambda S_T
\right)
\right]
&\stackrel{(a)}{=}
\prod_{i\in T}
\mathbb E_{\varepsilon_i}
\left[
\exp
\left(
\lambda\varepsilon_i
\right)
\right]
 \stackrel{(b)}{=}
\left(
\cosh
\left(
\lambda
\right)
\right)^{\left|T\right|}
\stackrel{(c)}{\le}
\exp
\left(
\frac{
\lambda^2\left|T\right|
}{
2
}
\right)\\&
 \stackrel{(d)}{\le}
\exp
\left(
\frac{
\lambda^2n
}{
2
}
\right).
\label{eq:finite-threshold-chaos-mgf-bound}
\end{align}
Here,
(a) uses independence,
(b) uses $\mathbb E_{\varepsilon_i}
\left[
\exp
\left(
\lambda\varepsilon_i
\right)
\right]
=
\frac{
e^\lambda+e^{-\lambda}
}{
2
}
=
\cosh
\left(
\lambda
\right)$, (c) applies the elementary inequality $\cosh
\left(
\lambda
\right)
\le
\exp
\left(
\frac{\lambda^2}{2}
\right)$, $\lambda\in\mathbb R$, and
(d) uses
$\left|T\right|\le n$. Let
$t>0$.
For every
$\lambda>0$,
Markov's inequality and
\eqref{eq:finite-threshold-chaos-mgf-bound}
give
\begin{align}
\mathbb P_{\varepsilon}
\left(
S_T
\ge
t
\right)
&\stackrel{(a)}{=}
\mathbb P_{\varepsilon}
\left(
\exp
\left(
\lambda S_T
\right)
\ge
\exp
\left(
\lambda t
\right)
\right)
 \stackrel{(b)}{\le}
\exp
\left(
-\lambda t
\right)
\mathbb E_{\varepsilon}
\left[
\exp
\left(
\lambda S_T
\right)
\right]\\&
 \stackrel{(c)}{\le}
\exp
\left(
-\lambda t
+
\frac{
\lambda^2n
}{
2
}
\right).
\label{eq:finite-threshold-chaos-positive-tail-lambda}
\end{align}
Here,
(a) uses the strict monotonicity of the exponential function,
(b) applies Markov's inequality, and
(c) applies
\eqref{eq:finite-threshold-chaos-mgf-bound}.
Choosing
$ \lambda : \allowbreak= \frac{t}{n} $
in
\eqref{eq:finite-threshold-chaos-positive-tail-lambda}
yields
\begin{align}
\mathbb P_{\varepsilon}
\left(
S_T
\ge
t
\right)
\le
\exp
\left(
-\frac{
t^2
}{
2n
}
\right).
\label{eq:finite-threshold-chaos-positive-tail}
\end{align}
Applying the same argument to
$-S_T$
gives
\begin{align}
\mathbb P_{\varepsilon}
\left(
S_T
\le
-t
\right)
\le
\exp
\left(
-\frac{
t^2
}{
2n
}
\right).
\label{eq:finite-threshold-chaos-negative-tail}
\end{align}
Therefore,
\begin{align}
\mathbb P_{\varepsilon}
\left(
\left|
S_T
\right|
\ge
t
\right)
&\stackrel{(a)}{\le}
\mathbb P_{\varepsilon}
\left(
S_T
\ge
t
\right)
+
\mathbb P_{\varepsilon}
\left(
S_T
\le
-t
\right)
 \stackrel{(b)}{\le}
2
\exp
\left(
-\frac{
t^2
}{
2n
}
\right).
\label{eq:finite-threshold-chaos-two-sided-tail}
\end{align}
Here,
(a) applies the union bound, and
(b) combines
\eqref{eq:finite-threshold-chaos-positive-tail}
and
\eqref{eq:finite-threshold-chaos-negative-tail}. Applying the union bound over the
$N$
sets in
$\mathcal T$
gives
\begin{align}
\mathbb P_{\varepsilon}
\left(
\max_{T\in\mathcal T}
\left|
S_T
\right|
\ge
t
\right)
&\stackrel{(a)}{\le}
\sum_{T\in\mathcal T}
\mathbb P_{\varepsilon}
\left(
\left|
S_T
\right|
\ge
t
\right)
 \stackrel{(b)}{\le}
2N
\exp
\left(
-\frac{
t^2
}{
2n
}
\right).
\label{eq:finite-threshold-chaos-union-tail}
\end{align}
Here,
(a) applies the union bound, while
(b) applies
\eqref{eq:finite-threshold-chaos-two-sided-tail}
to every
$T\in\mathcal T$. Define
\begin{align}
Z
:=
\max_{T\in\mathcal T}
S_T^2.
\label{eq:finite-threshold-chaos-max-square}
\end{align}
Since
$Z\ge0$,
the tail-integral representation gives
\begin{align}
\mathbb E_{\varepsilon}
\left[
Z
\right]
=
\int_0^\infty
\mathbb P_{\varepsilon}
\left(
Z
\ge
s
\right)
\,\d s.
\label{eq:finite-threshold-chaos-tail-integral}
\end{align}
Moreover,
\begin{align}
\left\{
Z
\ge
s
\right\}
&\stackrel{(a)}{=}
\left\{
\max_{T\in\mathcal T}
\left|
S_T
\right|
\ge
s^{1/2}
\right\}.
\end{align}
Here,
(a) follows from the definition
\eqref{eq:finite-threshold-chaos-max-square}.
Consequently,
\eqref{eq:finite-threshold-chaos-union-tail}
gives
\begin{align}
\mathbb P_{\varepsilon}
\left(
Z
\ge
s
\right)
\le
2N
\exp
\left(
-\frac{
s
}{
2n
}
\right).
\label{eq:finite-threshold-chaos-square-tail}
\end{align}
Set
$ s_0 : \allowbreak= 2n \ln \left( 2N \right). $
Since probabilities are bounded above by one,
\begin{align}
\mathbb E_{\varepsilon}
\left[
Z
\right]
&\stackrel{(a)}{=}
\int_0^{s_0}
\mathbb P_{\varepsilon}
\left(
Z
\ge
s
\right)
\,\d s
+
\int_{s_0}^{\infty}
\mathbb P_{\varepsilon}
\left(
Z
\ge
s
\right)
\,\d s
\notag\\
&\stackrel{(b)}{\le}
s_0
+
\int_{s_0}^{\infty}
2N
\exp
\left(
-\frac{
s
}{
2n
}
\right)
\,\d s
\stackrel{(c)}{=}
2n
\ln
\left(
2N
\right)
+
4nN
\exp
\left(
-\frac{
s_0
}{
2n
}
\right)
\notag\\
&\stackrel{(d)}{=}
2n
\ln
\left(
2N
\right)
+
2n.
\label{eq:finite-threshold-chaos-second-moment-max}
\end{align}
Here,
(a) splits the integral in
\eqref{eq:finite-threshold-chaos-tail-integral}
at
$s_0$,
(b) bounds the first probability by one and applies
\eqref{eq:finite-threshold-chaos-square-tail}
to the second integral,
(c) evaluates the exponential integral, and
(d) uses $\exp
\left(
-\frac{
s_0
}{
2n
}
\right)
=
\exp
\left(
-\ln
\left(
2N
\right)
\right)
=
\frac{1}{2N}$. Taking expectations in
\eqref{eq:finite-threshold-chaos-maximum-reduction}
and applying
\eqref{eq:finite-threshold-chaos-second-moment-max}
gives
\begin{align}
\mathbb E_{\varepsilon}
\left[
\max_{T\in\mathcal T}
\left|
Q_T
\left(
\varepsilon
\right)
\right|
\right]
&\stackrel{(a)}{\le}
\frac{1}{2n}
\mathbb E_{\varepsilon}
\left[
Z
\right]
+
\frac{1}{2}
 \stackrel{(b)}{\le}
\ln
\left(
2N
\right)
+
\frac{3}{2}
 \stackrel{(c)}{\le}
C
\ln
\left(
1+N
\right).
\end{align}
Here,
(a) applies
\eqref{eq:finite-threshold-chaos-maximum-reduction},
(b) substitutes
\eqref{eq:finite-threshold-chaos-second-moment-max},
and
(c) uses
$N\ge1$
together with $2N
\le
\left(
1+N
\right)^2$, $\frac{3}{2}
\le
\frac{3}{2\ln 2}
\ln
\left(
1+N
\right)$, and enlarges the universal numerical constant.
This proves
\eqref{eq:finite-threshold-chaos-bound}
and completes the proof.
\end{proof}

\begin{lemma}[Brownian chaos controlled by signed threshold traces]
\label{lem:threshold-traces-control-chaos}

\begingroup
\emergencystretch=1em
Let
$n\ge1$,
let
$\b x_1,\ldots,\b x_n\in\mX$
be fixed sample points,
and let
$\mathcal A$
be a nonempty class of real-valued support functions on
$\mX$.
Assume that
$B_{\b x}\left(\mathcal A\right)<\infty$,
where
$k_a$
and
$B_{\b x}\left(\mathcal A\right)$
are defined in
\eqref{eq:brownian-pullback-rademacher-kernel}
and
\eqref{eq:brownian-pullback-diagonal-envelope},
respectively. Define the signed threshold trace family by
\par\endgroup
\begin{align}
\mathcal T_{n,\b x}^{\pm}
\left(
\mathcal A
\right)
:=
\left\{
\left\{
i\in\left[n\right]
:
\sigma
a\left(
\b x_i
\right)
\ge
\tau
\right\}
:
a\in\mathcal A,
\;
\sigma\in
\left\{
-1,1
\right\},
\;
\tau\in\mathbb R
\right\}.
\label{eq:signed-threshold-trace-family}
\end{align}
Then
$\mathcal T_{n,\b x}^{\pm}\left(\mathcal A\right)$
is a nonempty finite collection of subsets of
$\left[n\right]$,
and there exists a universal numerical constant
$C>0$
such that
\begin{align}
\widehat{\mathfrak C}_{n,\b x}^{(B)}
\left(
\mathcal A
\right)
\le
C
B_{\b x}
\left(
\mathcal A
\right)
\ln
\left(
1+
\left|
\mathcal T_{n,\b x}^{\pm}
\left(
\mathcal A
\right)
\right|
\right).
\label{eq:threshold-traces-brownian-chaos-bound}
\end{align}
Here,
$\widehat{\mathfrak C}_{n,\b x}^{(B)}$
is the Brownian quadratic-chaos complexity defined in
\eqref{eq:empirical-brownian-chaos-complexity}. Consequently, if
$E\ge0$
satisfies
\begin{align}
\ln
\left|
\mathcal T_{n,\b x}^{\pm}
\left(
\mathcal A
\right)
\right|
\le
E,
\label{eq:threshold-traces-entropy-assumption}
\end{align}
then
\begin{align}
\widehat{\mathfrak C}_{n,\b x}^{(B)}
\left(
\mathcal A
\right)
\le
C
B_{\b x}
\left(
\mathcal A
\right)
\left(
1+E
\right).
\label{eq:threshold-traces-brownian-chaos-entropy-bound}
\end{align}

\end{lemma}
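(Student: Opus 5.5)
The plan is to linearize the Brownian pullback kernel in threshold indicators, so that the Brownian quadratic chaos becomes an integral over thresholds of the finite threshold chaoses $Q_T$ from \Cref{lem:self-contained-threshold-chaos}. Finiteness of $\mathcal T_{n,\b x}^{\pm}(\mathcal A)$ is immediate, since it is a family of subsets of $[n]$ and so has at most $2^n$ elements. It is nonempty because $\mathcal A\neq\emptyset$.

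\emph{Threshold representation.} First I will verify that $\kB(s,t)=\min\{|s|,|t|\}$ when $st\ge0$ and $\kB(s,t)=0$ when $st<0$. Consequently, for all $s,t\in\mathbb R$,
\begin{align}
\kB(s,t)
=
\int_0^\infty
\left(
\mathbf 1_{\{s\ge\tau\}}\mathbf 1_{\{t\ge\tau\}}
+
\mathbf 1_{\{-s\ge\tau\}}\mathbf 1_{\{-t\ge\tau\}}
\right)\d\tau .
\end{align}
Fix $a\in\mathcal A$. For $\tau>0$ and $\sigma\in\{-1,1\}$, set $T_{a,\sigma,\tau}:=\{i\in[n]:\sigma a(\b x_i)\ge\tau\}$, which lies in $\mathcal T_{n,\b x}^{\pm}(\mathcal A)$. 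Since $k_a(\b x_i,\b x_i)=|a(\b x_i)|\le B_{\b x}(\mathcal A)$, every such set is empty when $\tau>B:=B_{\b x}(\mathcal A)$. Therefore the integral may be truncated to $(0,B]$. Summing over $i<j$ with weights $\varepsilon_i\varepsilon_j/n$ then gives
\begin{align}
\frac1n\sum_{1\le i<j\le n}\varepsilon_i\varepsilon_j k_a(\b x_i,\b x_j)
=
\int_0^{B}\left(Q_{T_{a,1,\tau}}(\varepsilon)+Q_{T_{a,-1,\tau}}(\varepsilon)\right)\d\tau .
\end{align}

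\emph{Uniform bound and expectation.} Taking absolute values and bounding each integrand by the maximum over the finite family, I obtain, for every $a$,
\begin{align}
\left|\frac1n\sum_{i<j}\varepsilon_i\varepsilon_jk_a(\b x_i,\b x_j)\right|
\le
2B\max_{T\in\mathcal T_{n,\b x}^{\pm}(\mathcal A)}|Q_T(\varepsilon)| .
\end{align}
The right-hand side does not depend on $a$, so the supremum over $a$ is controlled by the same quantity, and this quantity is measurable as a finite maximum. Taking $\mathbb E_\varepsilon$ and applying \Cref{lem:self-contained-threshold-chaos} with $N=|\mathcal T_{n,\b x}^{\pm}(\mathcal A)|$ yields \eqref{eq:threshold-traces-brownian-chaos-bound} with constant $2C$. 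The case $B=0$ is trivial, because then every $k_a(\b x_i,\b x_j)$ vanishes.

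\emph{Entropy form.} For the consequence, use $N\ge1$ to get
\begin{align}
\ln(1+N)\le\ln(2N)\le\ln2+E\le1+E,
\end{align}
which gives \eqref{eq:threshold-traces-brownian-chaos-entropy-bound}.

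The main obstacle is the exact threshold identity for $\kB$, including the sign cases and the truncation of the $\tau$-range by the diagonal envelope. Once that identity is in place, everything else is bookkeeping.
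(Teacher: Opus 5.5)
Your proposal is correct and follows the paper's proof essentially step for step. It uses the same layer-cake representation of $\kB$ through signed threshold indicators and the same rewriting of the chaos as $\int_0^B\bigl(Q_{T^+_{a,r}}+Q_{T^-_{a,r}}\bigr)\,\d r$. It then bounds this by $2B\max_{T}|Q_T(\varepsilon)|$, applies \Cref{lem:self-contained-threshold-chaos}, and finishes with $\ln(1+N)\le\ln(2N)\le 1+E$. The only difference is cosmetic: you integrate over $(0,\infty)$ and then truncate to $(0,B]$ using the diagonal envelope, whereas the paper restricts to $s,t\in[-B,B]$ and integrates over $[0,B]$ from the outset.
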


\begin{proof}
Set
\begin{align}
B
:=
B_{\b x}
\left(
\mathcal A
\right).
\label{eq:threshold-traces-proof-envelope-abbreviation}
\end{align}
We first consider the case
$B=0$.
By
\eqref{eq:brownian-pullback-diagonal-envelope}
and the diagonal identity $k_a
\left(
\b x_i,
\b x_i
\right)
=
\left|
a\left(
\b x_i
\right)
\right|$, one has
\begin{align}
a\left(
\b x_i
\right)
=
0,
\qquad
a\in\mathcal A,
\quad
i\in\left[n\right].
\label{eq:threshold-traces-proof-zero-support-values}
\end{align}
Consequently, for every
$a\in\mathcal A$
and
$i,j\in\left[n\right]$,
\begin{align}
k_a
\left(
\b x_i,
\b x_j
\right)
&\stackrel{(a)}{=}
\kB
\left(
0,0
\right)
\stackrel{(b)}{=}
0.
\end{align}
Here,
(a) uses
\eqref{eq:threshold-traces-proof-zero-support-values},
and
(b) follows from the definition of the Brownian kernel.
Therefore,
$\widehat{\mathfrak C}_{n,\b x}^{(B)}
\left(
\mathcal A
\right)
=
0$,
and
\eqref{eq:threshold-traces-brownian-chaos-bound}
holds trivially.
We may therefore assume throughout the remainder of the proof that
$B>0$. We first establish the threshold representation of the full Brownian kernel on
$\left[-B,B\right]$.
Fix
$s,t\in\left[-B,B\right]$.
The definition of the Brownian kernel gives
\begin{align}
\kB
\left(
s,t
\right)
&\stackrel{(a)}{=}
\mathbf 1_{\left\{s\ge0\right\}}
\mathbf 1_{\left\{t\ge0\right\}}
\min
\left\{
s,t
\right\}
+
\mathbf 1_{\left\{s\le0\right\}}
\mathbf 1_{\left\{t\le0\right\}}
\min
\left\{
-s,-t
\right\}.
\label{eq:threshold-traces-brownian-sign-decomposition}
\end{align}
Here,
(a) follows by considering the three possible sign configurations.
If
$s,t\ge0$,
then
\begin{align}
\kB
\left(
s,t
\right)
=
\frac{
s+t-\left|s-t\right|
}{
2
}
=
\min
\left\{
s,t
\right\}.
\end{align}
If
$s,t\le0$,
then
$\kB
\left(
s,t
\right)
=
\min
\left\{
-s,-t
\right\}$.
If
$s$
and
$t$
have opposite signs, then
$\left|s-t\right|
=
\left|s\right|
+
\left|t\right|$,
and hence
$\kB
\left(
s,t
\right)
=
0$.
For the nonnegative branch,
\begin{align}
&
\int_0^B
\mathbf 1_{\left\{s\ge r\right\}}
\mathbf 1_{\left\{t\ge r\right\}}
\,\d r
\stackrel{(a)}{=}
\mathbf 1_{\left\{s\ge0\right\}}
\mathbf 1_{\left\{t\ge0\right\}}
\int_0^{\min\left\{s,t\right\}}
1
\,\d r
\stackrel{(b)}{=}
\mathbf 1_{\left\{s\ge0\right\}}
\mathbf 1_{\left\{t\ge0\right\}}
\min
\left\{
s,t
\right\}.
\label{eq:threshold-traces-positive-branch}
\end{align}
Here,
(a) uses the fact that both threshold inequalities hold exactly for
$r\in\left[0,\min\left\{s,t\right\}\right]$
when
$s,t\ge0$,
and for no positive-measure set of
$r$
otherwise.
Equality
(b)
evaluates the integral. Similarly,
\begin{align}
\int_0^B
\mathbf 1_{\left\{s\le-r\right\}}
\mathbf 1_{\left\{t\le-r\right\}}
\,\d r
&\stackrel{(a)}{=}
\mathbf 1_{\left\{s\le0\right\}}
\mathbf 1_{\left\{t\le0\right\}}
\int_0^{\min\left\{-s,-t\right\}}
1
\,\d r
\notag\\
&\stackrel{(b)}{=}
\mathbf 1_{\left\{s\le0\right\}}
\mathbf 1_{\left\{t\le0\right\}}
\min
\left\{
-s,-t
\right\}.
\label{eq:threshold-traces-negative-branch}
\end{align}
Here,
(a) rewrites
$s\le-r$
and
$t\le-r$
as
$r\le-s$
and
$r\le-t$,
while
(b)
evaluates the resulting integral. Combining
\eqref{eq:threshold-traces-brownian-sign-decomposition},
\eqref{eq:threshold-traces-positive-branch},
and
\eqref{eq:threshold-traces-negative-branch}
gives
\begin{align}
\kB
\left(
s,t
\right)
&=
\int_0^B
\mathbf 1_{\left\{s\ge r\right\}}
\mathbf 1_{\left\{t\ge r\right\}}
\,\d r
\qquad
+
\int_0^B
\mathbf 1_{\left\{s\le-r\right\}}
\mathbf 1_{\left\{t\le-r\right\}}
\,\d r.
\label{eq:threshold-traces-full-brownian-representation}
\end{align}
Fix
$a\in\mathcal A$
and
$r\in\left[0,B\right]$.
Define
\begin{align}
T_{a,r}^{+}
&:=
\left\{
i\in\left[n\right]
:
a\left(
\b x_i
\right)
\ge
r
\right\},
\label{eq:threshold-traces-positive-trace}
\\
T_{a,r}^{-}
&:=
\left\{
i\in\left[n\right]
:
-a\left(
\b x_i
\right)
\ge
r
\right\}
=
\left\{
i\in\left[n\right]
:
a\left(
\b x_i
\right)
\le
-r
\right\}.
\label{eq:threshold-traces-negative-trace}
\end{align}
By
\eqref{eq:signed-threshold-trace-family},
\begin{align}
T_{a,r}^{+}
&\in
\mathcal T_{n,\b x}^{\pm}
\left(
\mathcal A
\right),
&
T_{a,r}^{-}
&\in
\mathcal T_{n,\b x}^{\pm}
\left(
\mathcal A
\right).
\label{eq:threshold-traces-membership}
\end{align}
By
\eqref{eq:brownian-pullback-diagonal-envelope}
and the diagonal identity $k_a
\left(
\b x,
\b x
\right)
=
\left|
a\left(
\b x
\right)
\right|$, the values
$a\left(\b x_i\right)$
and
$a\left(\b x_j\right)$
belong to
$\left[-B,B\right]$.
Applying
\eqref{eq:threshold-traces-full-brownian-representation}
with
$s=a\left(\b x_i\right)$
and
$t=a\left(\b x_j\right)$
gives
\begin{align}
k_a
\left(
\b x_i,
\b x_j
\right)
&\stackrel{(a)}{=}
\int_0^B
\mathbf 1_{\left\{i\in T_{a,r}^{+}\right\}}
\mathbf 1_{\left\{j\in T_{a,r}^{+}\right\}}
\,\d r
\qquad
+
\int_0^B
\mathbf 1_{\left\{i\in T_{a,r}^{-}\right\}}
\mathbf 1_{\left\{j\in T_{a,r}^{-}\right\}}
\,\d r.
\label{eq:threshold-traces-pullback-representation}
\end{align}
Here,
(a) applies
\eqref{eq:threshold-traces-full-brownian-representation}
and then uses
\eqref{eq:threshold-traces-positive-trace}
and
\eqref{eq:threshold-traces-negative-trace}. For every
$T\subseteq\left[n\right]$,
define
\begin{align}
Q_T
\left(
\varepsilon
\right)
:=
\frac{1}{n}
\sum_{1\le i<j\le n}
\varepsilon_i
\varepsilon_j
\mathbf 1_{\left\{i\in T\right\}}
\mathbf 1_{\left\{j\in T\right\}}.
\label{eq:threshold-traces-indicator-chaos}
\end{align}
Substituting
\eqref{eq:threshold-traces-pullback-representation}
into the Brownian chaos sum gives
\begin{align}
&
\frac{1}{n}
\sum_{1\le i<j\le n}
\varepsilon_i
\varepsilon_j
k_a
\left(
\b x_i,
\b x_j
\right)
\stackrel{(a)}{=}
\int_0^B
Q_{T_{a,r}^{+}}
\left(
\varepsilon
\right)
\,\d r
+
\int_0^B
Q_{T_{a,r}^{-}}
\left(
\varepsilon
\right)
\,\d r.
\label{eq:threshold-traces-chaos-integral-decomposition}
\end{align}
Here,
(a) interchanges the finite sum over
$i<j$
with the two integrals and applies
\eqref{eq:threshold-traces-indicator-chaos}. Taking absolute values in
\eqref{eq:threshold-traces-chaos-integral-decomposition}
and applying the triangle inequality gives
\begin{align}
\left|
\frac{1}{n}
\sum_{1\le i<j\le n}
\varepsilon_i
\varepsilon_j
k_a
\left(
\b x_i,
\b x_j
\right)
\right|
&\stackrel{(a)}{\le}
\int_0^B
\left|
Q_{T_{a,r}^{+}}
\left(
\varepsilon
\right)
\right|
\,\d r
+
\int_0^B
\left|
Q_{T_{a,r}^{-}}
\left(
\varepsilon
\right)
\right|
\,\d r
\notag\\
&\stackrel{(b)}{\le}
2B
\max_{
T\in
\mathcal T_{n,\b x}^{\pm}
\left(
\mathcal A
\right)
}
\left|
Q_T
\left(
\varepsilon
\right)
\right|.
\label{eq:threshold-traces-chaos-pointwise-bound}
\end{align}
Here,
(a) applies the triangle inequality for integrals,
while
(b) uses
\eqref{eq:threshold-traces-membership}
and the fact that both integration intervals have length
$B$. The right-hand side of
\eqref{eq:threshold-traces-chaos-pointwise-bound}
does not depend on
$a$.
Therefore,
\begin{align}
&
\sup_{a\in\mathcal A}
\left|
\frac{1}{n}
\sum_{1\le i<j\le n}
\varepsilon_i
\varepsilon_j
k_a
\left(
\b x_i,
\b x_j
\right)
\right|
\le
2B
\max_{
T\in
\mathcal T_{n,\b x}^{\pm}
\left(
\mathcal A
\right)
}
\left|
Q_T
\left(
\varepsilon
\right)
\right|.
\label{eq:threshold-traces-chaos-supremum-bound}
\end{align}
Since every member of
$\mathcal T_{n,\b x}^{\pm}\left(\mathcal A\right)$
is a subset of
$\left[n\right]$, $1
\le
\left|
\mathcal T_{n,\b x}^{\pm}
\left(
\mathcal A
\right)
\right|
\le
2^n$. Thus, the trace family is finite and nonempty.
Taking expectations in
\eqref{eq:threshold-traces-chaos-supremum-bound}
and applying
\Cref{lem:self-contained-threshold-chaos}
gives
\begin{align}
\widehat{\mathfrak C}_{n,\b x}^{(B)}
\left(
\mathcal A
\right)
&\stackrel{(a)}{\le}
2B
\mathbb E_{\varepsilon}
\left[
\max_{
T\in
\mathcal T_{n,\b x}^{\pm}
\left(
\mathcal A
\right)
}
\left|
Q_T
\left(
\varepsilon
\right)
\right|
\right]
\stackrel{(b)}{\le}
2CB
\ln
\left(
1+
\left|
\mathcal T_{n,\b x}^{\pm}
\left(
\mathcal A
\right)
\right|
\right)
\stackrel{(c)}{\le}
CB
\ln
\left(
1+
\left|
\mathcal T_{n,\b x}^{\pm}
\left(
\mathcal A
\right)
\right|
\right).
\end{align}
Here,
(a) applies
\eqref{eq:threshold-traces-chaos-supremum-bound}
and the definition
\eqref{eq:empirical-brownian-chaos-complexity},
(b) applies
\Cref{lem:self-contained-threshold-chaos},
and
(c) enlarges the universal numerical constant to absorb the factor
$2$.
Substituting
\eqref{eq:threshold-traces-proof-envelope-abbreviation}
proves
\eqref{eq:threshold-traces-brownian-chaos-bound}. Finally, suppose that
\eqref{eq:threshold-traces-entropy-assumption}
holds.
Set $N_{\mathcal T}
:=
\left|
\mathcal T_{n,\b x}^{\pm}
\left(
\mathcal A
\right)
\right|$. Since
$N_{\mathcal T}\ge1$,
\begin{align}
\ln
\left(
1+N_{\mathcal T}
\right)
&\stackrel{(a)}{\le}
\ln
\left(
2N_{\mathcal T}
\right)
\stackrel{(b)}{=}
\ln 2
+
\ln
N_{\mathcal T}
\stackrel{(c)}{\le}
1+E.
\label{eq:threshold-traces-log-cardinality-consequence}
\end{align}
Here,
(a) uses
$1+N_{\mathcal T}\le2N_{\mathcal T}$,
(b) uses the logarithm of a product,
and
(c) uses
$\ln2\le1$
together with
\eqref{eq:threshold-traces-entropy-assumption}.
Combining
\eqref{eq:threshold-traces-brownian-chaos-bound}
and
\eqref{eq:threshold-traces-log-cardinality-consequence}
proves
\eqref{eq:threshold-traces-brownian-chaos-entropy-bound}.
This completes the proof.

\end{proof}

\begin{lemma}\textnormal{(VC dimension of signed finite-architecture threshold classes)}\ 
\label{lem:finite-bkl-vc-dimension}
Let
$L,m,G\in\mathbb N$
satisfy
$L\ge2$,
$m\ge1$,
and
$G\ge2$.
Let
$P_{L-1,m,G}$
be the parameter-count bound specified in
\Cref{subsec:finite-parametric-bkl}
and established in
\Cref{lem:finite-bkl-parameter-count},
part~\ref{lem:finite-architecture-parameter-count},
and set
$
P:=P_{L-1,m,G}.
$ Let
$\Theta_{L-1,m,G}\subseteq\mathbb R^P$
denote the admissible parameter set of
$\mathcal A_{L-1}^{m,G}$.
If a particular representation uses fewer than
$P$
coordinates, identify it with an element of
$\mathbb R^P$
by padding the remaining coordinates with zeros.
For
$\boldsymbol\theta\in\Theta_{L-1,m,G}$,
let
$a_{\boldsymbol\theta}$
denote the associated lower-support function. Define
\begin{align}
\mathcal H_{L-1,m,G}^{\pm}
:=
\left\{
h_{\boldsymbol\theta,\tau,\sigma}
:
\boldsymbol\theta\in\Theta_{L-1,m,G},
\;
\tau\in\mathbb R,
\;
\sigma\in\left\{
-1,1
\right\}
\right\},
\label{eq:finite-architecture-signed-threshold-class}
\end{align}
where
\begin{align}
h_{\boldsymbol\theta,\tau,\sigma}
\left(
\b x
\right)
:=
\mathbf 1_{\left\{
\sigma
a_{\boldsymbol\theta}\left(
\b x
\right)
\ge
\tau
\right\}},
\qquad
\b x\in\mX.
\label{eq:finite-architecture-signed-threshold-function}
\end{align}
Then there exists
$C_L>0$,
depending only on
$L$,
such that
\begin{align}
\operatorname{VC}
\left(
\mathcal H_{L-1,m,G}^{\pm}
\right)
\le
C_L
\left(
P+1
\right)
\ln\left(
e\left(
P+1
\right)
\right).
\label{eq:finite-architecture-signed-threshold-vc-bound}
\end{align}
Equivalently,
\begin{align}
\operatorname{VC}
\left(
\mathcal H_{L-1,m,G}^{\pm}
\right)
\le
C_L
\left(
P_{L-1,m,G}+1
\right)
\ln\left(
e\left(
P_{L-1,m,G}+1
\right)
\right).
\label{eq:finite-architecture-signed-threshold-vc-bound-expanded}
\end{align}

\end{lemma}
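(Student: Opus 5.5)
The plan is to realize each signed threshold function as a Boolean function computed by a fixed algorithm on the parameters. We then apply a Goldberg--Jerrum/Warren-type bound for piecewise-polynomial computations, in the form of Anthony--Bartlett Theorem 8.4 or the piecewise-linear network bounds of Bartlett--Harvey--Liaw--Mehrabian.

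\textbf{Step 1: the computation tree.} Fix $\b x\in\mX$ and regard $(\boldsymbol\theta,\tau,\sigma)$ as variables in $\mathbb R^{P+2}$; $\sigma$ contributes only a factor of two in the growth function, so it can be handled separately. We evaluate $a_{\boldsymbol\theta}(\b x)$ layer by layer as follows.
\begin{enumerate}
\item The first layer $z_j^{(0)}=\bomega_j^\top\b x$ is a degree-one polynomial in the parameters.
\item Each profile $q_j^{(r)}$ is determined by its nodal values, so evaluating $q_j^{(r)}(s)$ requires locating the cell $[t_k,t_{k+1}]$ containing $s$. The grid is fixed, so this takes at most $G+1$ comparisons $s\ge t_k$. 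Within the cell, $q(s)=\frac{t_{k+1}-s}{t_{k+1}-t_k}c_k+\frac{s-t_k}{t_{k+1}-t_k}c_{k+1}$, which is bilinear in the input $s$ and the nodal coefficients. The constant extension outside $[-A_{\mX},A_{\mX}]$ adds two more comparisons.
\item Each mixing layer $s_j^{(r)}=\sum_k W_{jk}^{(r)}z_k^{(r-1)}$ multiplies one parameter by a previously computed quantity, which raises the polynomial degree by one.
\item The final readout with $\boldsymbol\beta$ does the same.
\end{enumerate}
Conditioned on the outcomes of all comparisons, $\sigma a_{\boldsymbol\theta}(\b x)-\tau$ is therefore a polynomial in the parameters. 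Its degree is at most exponential in the depth, roughly $2^{O(L)}$ or $O(L)$ depending on bookkeeping, and depends only on $L$. The total number of comparisons along any path is at most $O(Lm(G+3))$, which is polynomial in $P$.

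\textbf{Step 2: counting sign patterns.} On $N$ sample points, the Warren/Milnor bound on the number of connected sign patterns of $Q=O(NLm(G+3))$ polynomials of degree $D_L$ in $P+1$ variables gives
\[
\Pi(N)\le 2\left(\frac{8eQD_L}{P+1}\right)^{P+1}.
\]
A class that shatters $N$ points needs $2^N\le\Pi(N)$. Solving $N\le (P+1)\log_2(8eQD_L/(P+1))+1$ yields $N\le C_L(P+1)\ln(e(P+1))$. Here we use $Lm(G+3)\le P$, which holds since $P\ge (L-2)m(G+1)+m+md$ when $L\ge3$; the ratio $Q/(P+1)$ is then bounded by a multiple of $N$, and the standard lemma ``$N\le a\log(bN)\Rightarrow N\le 2a\log(ab)$'' closes the bound.

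\textbf{Base case.} For $L=2$ the class consists of halfspaces $\mathbf 1\{\sigma\bomega^\top\b x\ge\tau\}$ with $P=d$. Its VC dimension is at most $d+1$ even after including the sign, which lies within the claimed bound.

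\textbf{Main obstacle.} The delicate step is verifying that the degree and the number of branching predicates depend only on $L$ and polynomially on $P$, so that all $m,G$ dependence is absorbed into $\ln(e(P+1))$. In particular, the profile evaluation must be genuinely piecewise-polynomial in the nodal parameters via comparisons against the fixed grid. Repeated composition multiplies degrees, and the pessimistic degree bound gives an $L$-dependent (possibly exponential) factor; this is acceptable because $C_L$ may depend on $L$.
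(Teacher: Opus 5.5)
Your overall route is the same as the paper's: evaluate $a_{\boldsymbol\theta}(\b x_i)$ as a piecewise-polynomial computation that branches on comparisons with the fixed grid, count sign patterns with a Warren-type bound, and invert $2^N\le\Pi(N)$. However, Step~2 has a genuine gap.

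The $Q=O(NLm(G+3))$ comparisons you count are those made along a \emph{single} execution path, but the polynomials involved change with the branches taken earlier. Already for $L=3$, $a_{\boldsymbol\theta}(\b x_i)=\sum_j\beta_jq_j^{(1)}(\bomega_j^\top\b x_i)$ has up to $(G+2)^m$ distinct polynomial forms, one for each assignment of the $\bomega_j^\top\b x_i$ to grid cells. For $L\ge4$ the same holds for every preactivation $s_j^{(r)}(\b x_i)$ that you compare against the grid. So there is no fixed family of $Q$ polynomials whose sign vector determines the labels, and Warren's theorem does not give $\Pi(N)\le2(8eQD_L/(P+1))^{P+1}$.

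The two tools you cite do not close this either:
\begin{itemize}
\item Applying Warren to the union of all branch polynomials gives a bound on $\ln\Pi(N)$ containing a term of order $(P+1)m\ln(G+2)$. That costs an extra factor of $m$ in the VC bound.
\item Anthony--Bartlett Theorem~8.4 gives only $4(P+1)(t+2)$, where the operation count $t$ is itself of order $P$. That bound is quadratic in $P$.
\end{itemize}
In fact a single-exponent bound of your form cannot hold as a general principle: for deep ReLU networks it would contradict the known $\Omega(WL\log(W/L))$ VC lower bound.

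What is missing is the layer-by-layer refinement of parameter space. This is exactly Step~3 of the paper's proof, and it is the Bartlett--Harvey--Liaw--Mehrabian argument you cite but do not carry out.
\begin{itemize}
\item Start with $\mathcal P_0=\{\mathbb R^{P+1}\}$.
\item On each cell of $\mathcal P_{r-1}$, the cell choices at all levels below $r$ are frozen. So the inputs $v$ to the level-$r$ profiles at $\b x_1,\dots,\b x_N$ are \emph{fixed} polynomials of degree at most $2r-1$.
\item Warren applied to the $Nm(G+1)$ polynomials $v-t_\ell$, $\ell\in\{0,\dots,G\}$, then splits that cell into at most $(C_0D_L(Nm(G+1)+P+1))^{P+1}$ pieces. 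Here $C_0$ is universal and $D_L=2L-3$.
\item After the $L-2$ profile levels, apply Warren once more on each cell to the $N$ now-fixed polynomials $\sigma a_{\boldsymbol\theta}(\b x_i)-\tau$.
\item Together with $m(G+1)\le P$, this gives $\Pi(N)\le2(c_LNP)^{(L-1)(P+1)}$, with $c_L$ depending only on $L$.
\end{itemize}
The exponent $(L-1)(P+1)$ in place of $P+1$ is harmless because $C_L$ may depend on $L$, and your inversion step then goes through unchanged.

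A minor point: your inequality $Lm(G+3)\le P$ is false in general; for $L=3$ and $d=1$ one has $P=m(G+3)$. The inequality actually needed, valid for $L\ge3$, is $m(G+1)\le P$.
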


\begin{proof}
Set
$W:=P+1$.
The additional coordinate in
$W$
is the variable threshold
$\tau$.
For each fixed
$\sigma\in\left\{-1,1\right\}$,
the corresponding classifiers are therefore parameterized by
$\boldsymbol\eta:=\left(\boldsymbol\theta,\tau\right)\in\mathbb R^W$. We divide the proof into four steps.

\par\noindent\textbf{Step 1: enlargement of the admissible parameter set.} The admissible architecture parameters satisfy the direction,
profile-energy, mixing, and readout constraints introduced in
\Cref{subsec:finite-parametric-bkl}.
For an upper bound on the number of possible labelings, we may remove
all of these constraints. More precisely, let
$\widetilde{\mathcal A}_{L-1}^{m,G}$
denote the class obtained by allowing every coordinate of
$\boldsymbol\theta$
to vary freely in
$\mathbb R^P$,
while retaining the same computational graph, the same interpolation
knots, and the same constant profile extensions.
Then
$\mathcal A_{L-1}^{m,G}\subseteq\widetilde{\mathcal A}_{L-1}^{m,G}$.
Consequently,
$\mathcal H_{L-1,m,G}^{\pm}\subseteq\widetilde{\mathcal H}_{L-1,m,G}^{\pm}$,
where
$\widetilde{\mathcal H}_{L-1,m,G}^{\pm}$
is defined by
\eqref{eq:finite-architecture-signed-threshold-class}
with
$\boldsymbol\theta\in\mathbb R^P$.
By the monotonicity of the VC dimension under class inclusion,
\begin{align}
\operatorname{VC}
\left(
\mathcal H_{L-1,m,G}^{\pm}
\right)
&\le
\operatorname{VC}
\left(
\widetilde{\mathcal H}_{L-1,m,G}^{\pm}
\right).
\label{eq:finite-threshold-vc-enlargement-monotonicity}
\end{align}
We will therefore count the labelings generated by the enlarged class.

\par\noindent\textbf{Step 2: polynomial sign-pattern estimate.} We use the following standard sign-pattern estimate.
Let
$p_1,\ldots,p_M$
be real polynomials in
$W$
real variables, each having degree at most
$D$.
Then the number of realizable sign vectors
\begin{align}
\left(
\operatorname{sgn}
\left(
p_1\left(
\boldsymbol\eta
\right)
\right),
\ldots,
\operatorname{sgn}
\left(
p_M\left(
\boldsymbol\eta
\right)
\right)
\right)
\in
\left\{
-1,0,1
\right\}^M,
\qquad
\boldsymbol\eta\in\mathbb R^W,
\end{align}
is bounded by
\begin{align}
\mathfrak S
\left(
M,D,W
\right)
\le
\left(
C_0
D
\left(
M+W
\right)
\right)^W,
\label{eq:finite-threshold-vc-polynomial-sign-bound}
\end{align}
where
$C_0>0$
is a universal numerical constant.
This is the standard polynomial sign-pattern estimate underlying
semialgebraic VC-dimension bounds; see, for example,
\citet{goldberg1995complexity}. For a binary class
$\mathcal H$,
define its growth function by
\begin{align}
\Pi_{\mathcal H}
\left(
N
\right)
:=
\sup_{
\left(
\b x_1,\ldots,\b x_N
\right)
\in
\mX^N
}
\left|
\left\{
\left(
h\left(
\b x_1
\right),
\ldots,
h\left(
\b x_N
\right)
\right)
:
h\in\mathcal H
\right\}
\right|.
\end{align}
We will prove that there exists a constant
$C_{L,1}>0$,
depending only on
$L$,
such that
\begin{align}
\Pi_{
\widetilde{\mathcal H}_{L-1,m,G}^{\pm}
}
\left(
N
\right)
\le
2
\left(
C_{L,1}
N
P
\right)^{
\left(
L-1
\right)
W
}.
\label{eq:finite-threshold-vc-growth-bound}
\end{align}

\par\noindent\textbf{Step 3: layerwise semialgebraic decomposition.}\\
Fix
$N\ge1$
and fixed points
$\b x_1,\ldots,\b x_N\in\mX$.
Set
$K:=L-2$.
Thus,
$K=0$
when
$L=2$,
whereas
$K\ge1$
when
$L\ge3$. We first identify the polynomial degree generated by the recursive
architecture.
When
$L=2$,
one has
\begin{align}
a_{\boldsymbol\theta}
\left(
\b x_i
\right)
=
\boldsymbol\omega^\top\b x_i,
\label{eq:finite-threshold-vc-depth-two-output}
\end{align}
which is a polynomial of degree one in
$\boldsymbol\theta$. Suppose now that
$L\ge3$.
For each profile
$q_j^{(r)}$,
write its nodal parameters as
$c_{j,0}^{(r)}, \ldots, c_{j,G}^{(r)}$.
Using the constant extensions from
\eqref{eq:finite-profile-left-extension}
and
\eqref{eq:finite-profile-right-extension},
the profile has the piecewise representation
\begin{align}
q_j^{(r)}
\left(
t
\right)
&=
\begin{cases}
c_{j,0}^{(r)},
&
t\le t_0,
\\[2mm]
\displaystyle
\frac{
t_{\ell+1}-t
}{
t_{\ell+1}-t_\ell
}
c_{j,\ell}^{(r)}
+
\frac{
t-t_\ell
}{
t_{\ell+1}-t_\ell
}
c_{j,\ell+1}^{(r)},
&
t\in
\left[
t_\ell,t_{\ell+1}
\right],
\quad
\ell\in
\left\{
0,\ldots,G-1
\right\},
\\[3mm]
c_{j,G}^{(r)},
&
t\ge t_G.
\end{cases}
\label{eq:finite-threshold-vc-piecewise-profile}
\end{align}
At a grid knot, the two adjacent affine formulas have the same value.
Therefore, any fixed convention for assigning a knot to one of its
adjacent intervals yields the same profile value. For
$i\in\left[N\right]$,
$j\in\left[m\right]$,
and
$r\in\left[K\right]$,
define the input to the
$r$-th profile by
\begin{align}
v_{i,j}^{(1)}
&:=
z_j^{(0)}
\left(
\b x_i
\right),
\label{eq:finite-threshold-vc-first-profile-input}
\\
v_{i,j}^{(r)}
&:=
s_j^{(r)}
\left(
\b x_i
\right),
\qquad
r\in
\left\{
2,\ldots,K
\right\}.
\label{eq:finite-threshold-vc-later-profile-input}
\end{align}
The active region of
$q_j^{(r)}$
is determined by the signs of the
$G+1$
quantities
\begin{align}
p_{i,j,\ell}^{(r)}
\left(
\boldsymbol\eta
\right)
:=
v_{i,j}^{(r)}
-
t_\ell,
\qquad
\ell\in
\left\{
0,\ldots,G
\right\}.
\label{eq:finite-threshold-vc-region-polynomials}
\end{align} 
We now prove by induction that, after fixing all profile regions up to
level
$r$,
every value
$z_j^{(r)}\left(\b x_i\right)$
is a polynomial in
$\boldsymbol\eta$
of degree at most
$2r$. At the first profile level,
\begin{align}
v_{i,j}^{(1)}
&\stackrel{(a)}{=}
z_j^{(0)}
\left(
\b x_i
\right)
\stackrel{(b)}{=}
\boldsymbol\omega_j^\top\b x_i.
\label{eq:finite-threshold-vc-first-input-polynomial}
\end{align}
Here,
(a) applies
\eqref{eq:finite-threshold-vc-first-profile-input},
while
(b) applies
\eqref{eq:finite-lower-support-linear-layer}.
Since the sample point
$\b x_i$
is fixed,
\eqref{eq:finite-threshold-vc-first-input-polynomial}
is a polynomial of degree one in the architecture parameters. Once the active region is fixed,
\eqref{eq:finite-threshold-vc-piecewise-profile}
shows that
\begin{align}
z_j^{(1)}
\left(
\b x_i
\right)
&=
q_j^{(1)}
\left(
v_{i,j}^{(1)}
\right)
\end{align}
is a polynomial of degree at most two.
Indeed, on an interior interval,
\eqref{eq:finite-threshold-vc-piecewise-profile}
can be rewritten as
\begin{align}
q_j^{(1)}
\left(
v_{i,j}^{(1)}
\right)
&=
\frac{
t_{\ell+1}
c_{j,\ell}^{(1)}
-
t_\ell
c_{j,\ell+1}^{(1)}
}{
t_{\ell+1}-t_\ell
}
+
\frac{
c_{j,\ell+1}^{(1)}
-
c_{j,\ell}^{(1)}
}{
t_{\ell+1}-t_\ell
}
v_{i,j}^{(1)}.
\label{eq:finite-threshold-vc-first-profile-degree}
\end{align}
The first term in
\eqref{eq:finite-threshold-vc-first-profile-degree}
has degree one in the nodal parameters.
The second term is the product of a degree-one nodal expression and
the degree-one polynomial
$v_{i,j}^{(1)}$,
and therefore has degree at most two.
On either exterior region, the profile value is a single nodal
parameter and has degree one.
This proves the degree bound at
$r=1$. Fix
$r\in\left\{2,\ldots,K\right\}$
and assume that, after fixing the profile regions up to level
$r-1$,
every
$z_k^{(r-1)}\left(\b x_i\right)$,
with
$i\in\left[N\right]$
and
$k\in\left[m\right]$,
is a polynomial in
$\boldsymbol\eta$
of degree at most
$2\left(r-1\right)$.
Then
\begin{align}
v_{i,j}^{(r)}
&\stackrel{(a)}{=}
s_j^{(r)}
\left(
\b x_i
\right)
 \stackrel{(b)}{=}
\sum_{k=1}^{m}
W_{jk}^{(r)}
z_k^{(r-1)}
\left(
\b x_i
\right).
\label{eq:finite-threshold-vc-inductive-preactivation}
\end{align}
Here,
(a) applies
\eqref{eq:finite-threshold-vc-later-profile-input},
and
(b) applies
\eqref{eq:finite-lower-support-preactivation}.
Each mixing coefficient
$W_{jk}^{(r)}$
has polynomial degree one in
$\boldsymbol\eta$.
The induction hypothesis therefore implies that every summand in
\eqref{eq:finite-threshold-vc-inductive-preactivation}
has degree at most
$1+2\left(r-1\right)=2r-1$.
Consequently,
\begin{align}
\deg
\left(
v_{i,j}^{(r)}
\right)
\le
2r-1.
\label{eq:finite-threshold-vc-preactivation-degree}
\end{align}
Once the active region at level
$r$
is fixed,
the same calculation as in
\eqref{eq:finite-threshold-vc-first-profile-degree}
gives
\begin{align}
\deg
\left(
z_j^{(r)}
\left(
\b x_i
\right)
\right)
&\stackrel{(a)}{\le}
1
+
\deg
\left(
v_{i,j}^{(r)}
\right)
\stackrel{(b)}{\le}
2r.
\end{align}
Here,
(a) follows because an interior profile branch multiplies one nodal
coefficient by the profile input, while
(b) applies
\eqref{eq:finite-threshold-vc-preactivation-degree}.
This completes the induction. We next count the possible profile-region states.
Let
\begin{align}
\mathcal P_0
:=
\left\{
\mathbb R^W
\right\}.
\label{eq:finite-threshold-vc-initial-partition}
\end{align}
For
$r\in\left[K\right]$,
let
$\mathcal P_r$
denote the refinement of
$\mathcal P_{r-1}$
obtained by fixing the signs of all polynomials
\eqref{eq:finite-threshold-vc-region-polynomials}
at the
$r$-th profile level.

For every fixed member of
$\mathcal P_{r-1}$,
there is one region polynomial for every sample point, every unit, and
every grid knot. Hence, the number of region polynomials at level
$r$
is
\begin{align}
M_r
&=
N
m
\left(
G+1
\right).
\label{eq:finite-threshold-vc-number-region-polynomials}
\end{align}
Their degrees are bounded by
$2r-1\le2K+1$.
Define
$D_L:=2K+1=2L-3$. When
$L\ge3$,
the parameter count
\eqref{eq:finite-bkl-explicit-parameter-count}
contains the profile contribution
$Km\left(G+1\right)$.
Since
$K\ge1$,
one obtains
\begin{align}
m
\left(
G+1
\right)
\le
P.
\label{eq:finite-threshold-vc-profile-count-bounded-by-p}
\end{align}
Moreover, since
$P\ge1$,
\begin{align}
W
=
P+1
\le
2P.
\label{eq:finite-threshold-vc-w-bounded-by-p}
\end{align}
Therefore,
\begin{align}
M_r+W
&\stackrel{(a)}{\le}
NP
+
2P
 \stackrel{(b)}{\le}
3NP.
\label{eq:finite-threshold-vc-region-polynomial-total}
\end{align}
Here,
(a) uses
\eqref{eq:finite-threshold-vc-number-region-polynomials},
\eqref{eq:finite-threshold-vc-profile-count-bounded-by-p},
and
\eqref{eq:finite-threshold-vc-w-bounded-by-p},
while
(b) uses
$N\ge1$. Applying
\eqref{eq:finite-threshold-vc-polynomial-sign-bound}
inside each member of
$\mathcal P_{r-1}$
gives
\begin{align}
\left|
\mathcal P_r
\right|
&\stackrel{(a)}{\le}
\left|
\mathcal P_{r-1}
\right|
\left(
C_0
D_L
\left(
M_r+W
\right)
\right)^W
 \stackrel{(b)}{\le}
\left|
\mathcal P_{r-1}
\right|
\left(
3
C_0
D_L
NP
\right)^W.
\label{eq:finite-threshold-vc-partition-recursion}
\end{align}
Here,
(a) applies the polynomial sign-pattern estimate to the region
polynomials on each preceding state, and
(b) applies
\eqref{eq:finite-threshold-vc-region-polynomial-total}. Set
\begin{align}
C_{L,0}
:=
3
C_0
D_L.
\label{eq:finite-threshold-vc-depth-constant}
\end{align}
Iterating
\eqref{eq:finite-threshold-vc-partition-recursion}
from
$r=1$
to
$r=K$
and using
\eqref{eq:finite-threshold-vc-initial-partition}
gives
\begin{align}
\left|
\mathcal P_K
\right|
&\le
\left(
C_{L,0}
NP
\right)^{KW}.
\label{eq:finite-threshold-vc-final-state-count}
\end{align}
When
$K=0$,
the right-hand side equals one and
\eqref{eq:finite-threshold-vc-final-state-count}
reduces to
$\left|\mathcal P_0\right|=1$. On every member of
$\mathcal P_K$,
the architecture output is a polynomial in
$\boldsymbol\eta$.
For
$L\ge3$,
\eqref{eq:finite-lower-support-output}
gives
\begin{align}
a_{\boldsymbol\theta}
\left(
\b x_i
\right)
&=
\sum_{j=1}^{m}
\beta_j
z_j^{(K)}
\left(
\b x_i
\right).
\label{eq:finite-threshold-vc-output-polynomial}
\end{align}
Since the readout coefficients have degree one and the final hidden
coordinates have degree at most
$2K$,
\begin{align}
\deg
\left(
a_{\boldsymbol\theta}
\left(
\b x_i
\right)
\right)
&\le
1+2K
=
D_L.
\label{eq:finite-threshold-vc-output-degree}
\end{align}
For
$L=2$,
the same estimate follows from
\eqref{eq:finite-threshold-vc-depth-two-output}
and
$D_L=1$. Fix
$\sigma\in\left\{-1,1\right\}$.
On every member of
$\mathcal P_K$,
the labels of the
$N$
sample points are determined by the signs of the
$N$
polynomials
\begin{align}
r_{i,\sigma}
\left(
\boldsymbol\eta
\right)
:=
\sigma
a_{\boldsymbol\theta}
\left(
\b x_i
\right)
-
\tau,
\qquad
i\in\left[N\right].
\end{align}
By
\eqref{eq:finite-threshold-vc-output-degree},
these polynomials have degree at most
$D_L$.
Furthermore,
\begin{align}
N+W
&\stackrel{(a)}{\le}
N+2P
\stackrel{(b)}{\le}
3NP.
\label{eq:finite-threshold-vc-final-polynomial-total}
\end{align}
Here,
(a) applies
\eqref{eq:finite-threshold-vc-w-bounded-by-p},
and
(b) uses
$N\ge1$
and
$P\ge1$.
Thus,
\eqref{eq:finite-threshold-vc-polynomial-sign-bound}
implies that the number of label vectors generated on each member of
$\mathcal P_K$
for one fixed sign
$\sigma$
is at most
\begin{align}
\left(
C_0
D_L
\left(
N+W
\right)
\right)^W
&\le
\left(
C_{L,0}
NP
\right)^W,
\label{eq:finite-threshold-vc-labelings-per-state}
\end{align}
where the inequality follows from
\eqref{eq:finite-threshold-vc-final-polynomial-total}
and
\eqref{eq:finite-threshold-vc-depth-constant}. There are two possible choices of
$\sigma$.
Combining
\eqref{eq:finite-threshold-vc-final-state-count}
and
\eqref{eq:finite-threshold-vc-labelings-per-state}
therefore yields
\begin{align}
\Pi_{
\widetilde{\mathcal H}_{L-1,m,G}^{\pm}
}
\left(
N
\right)
&\stackrel{(a)}{\le}
2
\left|
\mathcal P_K
\right|
\left(
C_{L,0}
NP
\right)^W
 \stackrel{(b)}{\le}
2
\left(
C_{L,0}
NP
\right)^{
\left(
K+1
\right)
W
}
 \stackrel{(c)}{=}
2
\left(
C_{L,0}
NP
\right)^{
\left(
L-1
\right)
W
}.
\label{eq:finite-threshold-vc-growth-bound-proof}
\end{align}
Here,
(a) sums the bounds corresponding to the two signs,
(b) substitutes
\eqref{eq:finite-threshold-vc-final-state-count},
and
(c) uses
$K=L-2$.
This proves
\eqref{eq:finite-threshold-vc-growth-bound}
after relabeling the depth-dependent constant.

\par\noindent\textbf{Step 4: inversion of the growth bound.} Suppose that
$\widetilde{\mathcal H}_{L-1,m,G}^{\pm}$
shatters
$N$
points.
Then
\begin{align}
2^N
&\stackrel{(a)}{\le}
\Pi_{
\widetilde{\mathcal H}_{L-1,m,G}^{\pm}
}
\left(
N
\right)
\stackrel{(b)}{\le}
2
\left(
C_{L,0}
NP
\right)^{
\left(
L-1
\right)
W
}.
\end{align}
Here,
(a) follows from the definition of shattering, and
(b) applies
\eqref{eq:finite-threshold-vc-growth-bound-proof}.
Taking natural logarithms gives
\begin{align}
N
\ln 2
\le
\ln 2
+
\left(
L-1
\right)
W
\ln
\left(
C_{L,0}
NP
\right).
\label{eq:finite-threshold-vc-logarithmic-inequality}
\end{align}
Set
$c_L:=L-1$ and $r:=N/W$.
Since
$P\le W$
and
$N=rW$,
\begin{align}
\ln
\left(
C_{L,0}
NP
\right)
&\stackrel{(a)}{=}
\ln
\left(
C_{L,0}
rWP
\right)
 \stackrel{(b)}{\le}
\ln
C_{L,0}
+
\ln r
+
2
\ln W.
\label{eq:finite-threshold-vc-log-product-bound}
\end{align}
Here,
(a) substitutes
$N=rW$,
and
(b) uses
$P\le W$.
Dividing
\eqref{eq:finite-threshold-vc-logarithmic-inequality}
by
$W$
and applying
\eqref{eq:finite-threshold-vc-log-product-bound}
gives
\begin{align}
r
\ln 2
&\stackrel{(a)}{\le}
\frac{
\ln 2
}{
W
}
+
c_L
\left(
\ln C_{L,0}
+
\ln r
+
2
\ln W
\right)
 \stackrel{(b)}{\le}
\ln 2
+
c_L
\left(
\ln C_{L,0}
+
\ln r
+
2
\ln W
\right).
\label{eq:finite-threshold-vc-ratio-inequality}
\end{align}
Here,
(a) performs the division and substitution, while
(b) uses
$W\ge1$.
Define
\begin{align}
\eta_L
:=
\frac{
\ln 2
}{
2c_L
}.
\label{eq:finite-threshold-vc-eta}
\end{align}
The elementary inequality
$\ln y\le y-1$ for $y>0$,
applied with
$y=\eta_Lr$,
gives
\begin{align}
\ln r
&\stackrel{(a)}{=}
\ln
\left(
\eta_Lr
\right)
-
\ln
\eta_L
 \stackrel{(b)}{\le}
\eta_Lr
-
1
-
\ln
\eta_L.
\label{eq:finite-threshold-vc-log-r-bound}
\end{align}
Here,
(a) uses the logarithm of a product, while
(b) applies
$\ln y\le y-1$.
Multiplying
\eqref{eq:finite-threshold-vc-log-r-bound}
by
$c_L$
and using
\eqref{eq:finite-threshold-vc-eta}
gives
\begin{align}
c_L
\ln r
&\le
\frac{
r\ln 2
}{
2
}
+
c_L
\left(
-1
-
\ln
\eta_L
\right).
\label{eq:finite-threshold-vc-absorbed-log-r}
\end{align}
Substituting
\eqref{eq:finite-threshold-vc-absorbed-log-r}
into
\eqref{eq:finite-threshold-vc-ratio-inequality}
and moving
$r\ln 2/2$
to the left-hand side yields
\begin{align}
\frac{
r\ln 2
}{
2
}
&\le
\ln 2
+
c_L
\left(
\ln C_{L,0}
-
1
-
\ln\eta_L
+
2\ln W
\right).
\end{align}
Since
$c_L$,
$C_{L,0}$,
and
$\eta_L$
depend only on
$L$,
there exists a constant
$C_{L,2}>0$,
depending only on
$L$,
such that
\begin{align}
r
\le
C_{L,2}
\ln
\left(
eW
\right).
\label{eq:finite-threshold-vc-ratio-bound}
\end{align}
Multiplying
\eqref{eq:finite-threshold-vc-ratio-bound}
by
$W$
and using
$N=rW$
gives
\begin{align}
N
\le
C_{L,2}
W
\ln
\left(
eW
\right).
\label{eq:finite-threshold-vc-shattered-size}
\end{align}
Since every shattered set satisfies
\eqref{eq:finite-threshold-vc-shattered-size},
\begin{align}
\operatorname{VC}
\left(
\widetilde{\mathcal H}_{L-1,m,G}^{\pm}
\right)
&\stackrel{(a)}{\le}
C_{L,2}
W
\ln
\left(
eW
\right)
 \stackrel{(b)}{=}
C_{L,2}
\left(
P+1
\right)
\ln
\left(
e
\left(
P+1
\right)
\right).
\label{eq:finite-threshold-vc-enlarged-class-bound}
\end{align}
Here,
(a) applies the definition of the VC dimension, while
(b) substitutes
$W=P+1$.
Finally,
\eqref{eq:finite-threshold-vc-enlargement-monotonicity}
and
\eqref{eq:finite-threshold-vc-enlarged-class-bound}
give
\begin{align}
\operatorname{VC}
\left(
\mathcal H_{L-1,m,G}^{\pm}
\right)
\le
C_L
\left(
P+1
\right)
\ln
\left(
e
\left(
P+1
\right)
\right),
\end{align}
after relabeling the depth-dependent constant.
This proves
\eqref{eq:finite-architecture-signed-threshold-vc-bound}.
Substituting
$P=P_{L-1,m,G}$
proves
\eqref{eq:finite-architecture-signed-threshold-vc-bound-expanded}
and completes the proof.
\end{proof}

\begin{lemma}\textnormal{(Signed threshold entropy of finite lower-support architectures)}\ 
\label{lem:finite-bkl-threshold-entropy}
Fix
$L\ge2$,
$m\ge1$,
$G\ge2$,
$n\ge1$,
and sample points
$\b x_1,\ldots,\b x_n\in\mX$.
Let
\begin{align}
v_{L-1,m,G}
:=
\operatorname{VC}
\left(
\mathcal H_{L-1,m,G}^{\pm}
\right),
\label{eq:finite-threshold-entropy-vc-abbreviation}
\end{align}
where
$\mathcal H_{L-1,m,G}^{\pm}$
is defined in
\eqref{eq:finite-architecture-signed-threshold-class}. The signed trace family from
\eqref{eq:signed-threshold-trace-family}
satisfies
\begin{align}
\mathcal T_{n,\b x}^{\pm}
\left(
\mathcal A_{L-1}^{m,G}
\right)
=
\left\{
\left\{
i\in\left[n\right]
:
h\left(
\b x_i
\right)
=
1
\right\}
:
h\in
\mathcal H_{L-1,m,G}^{\pm}
\right\}.
\label{eq:finite-threshold-entropy-exact-trace-identity}
\end{align}
If
$v_{L-1,m,G}=0$,
then
\begin{align}
\left|
\mathcal T_{n,\b x}^{\pm}
\left(
\mathcal A_{L-1}^{m,G}
\right)
\right|
=
1.
\label{eq:finite-threshold-entropy-zero-vc}
\end{align}
If
$v_{L-1,m,G}\ge1$,
then
\begin{align}
\ln
\left|
\mathcal T_{n,\b x}^{\pm}
\left(
\mathcal A_{L-1}^{m,G}
\right)
\right|
&\le
\left(
v_{L-1,m,G}
\wedge
n
\right)
\ln
\left(
\frac{
en
}{
v_{L-1,m,G}
\wedge
n
}
\right).
\label{eq:finite-threshold-entropy-refined-bound}
\end{align}
Consequently, there exists a constant
$C_L>0$,
depending only on
$L$,
such that
\begin{align}
\ln
\left|
\mathcal T_{n,\b x}^{\pm}
\left(
\mathcal A_{L-1}^{m,G}
\right)
\right|
&\le
C_L
\left(
P_{L-1,m,G}+1
\right)
\ln
\left(
e
\left(
P_{L-1,m,G}+1
\right)
\right)
\ln
\left(
en
\right).
\label{eq:finite-threshold-entropy-final-bound}
\end{align}
\end{lemma}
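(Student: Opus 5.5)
The plan has three parts: identify the trace family with the restriction of the threshold class to the sample, bound its size with the Sauer--Shelah lemma, and then insert the VC bound of \Cref{lem:finite-bkl-vc-dimension}.

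\emph{Step 1: the trace identity.} For a parameter tuple $\boldsymbol\theta$, a sign $\sigma$ and a threshold $\tau$, the signed threshold function satisfies
\[
h_{\boldsymbol\theta,\tau,\sigma}(\b x_i)=1 \iff \sigma a_{\boldsymbol\theta}(\b x_i)\ge\tau .
\]
Since $\mathcal A_{L-1}^{m,G}=\{a_{\boldsymbol\theta}\}$, the two families are indexed by the same triples $(a,\sigma,\tau)$ and produce the same subsets of $[n]$. This gives \eqref{eq:finite-threshold-entropy-exact-trace-identity} directly. The map $T\mapsto\mathbf 1_T$ is a bijection between $\mathcal T_{n,\b x}^{\pm}(\mathcal A_{L-1}^{m,G})$ and the set of restrictions $\mathcal H_{L-1,m,G}^{\pm}|_{\{\b x_1,\ldots,\b x_n\}}$. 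Repeated sample points are harmless, because the labelling is then constant on the repeated indices.

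\emph{Step 2: Sauer--Shelah.} Write $v:=v_{L-1,m,G}$ and let $S$ be the set of distinct points among $\b x_1,\ldots,\b x_n$, so that $|S|\le n$. Sauer--Shelah gives
\[
|\mathcal T|\le\sum_{k=0}^{v\wedge|S|}\binom{|S|}{k}.
\]
If $v=0$, the bound is $1$, and the family is nonempty (take $\tau$ very negative), so $|\mathcal T|=1$. If $v\ge1$, set $d:=v\wedge n\ge1$. The standard estimate
\[
\sum_{k\le d}\binom{n}{k}\le\left(\frac{en}{d}\right)^{d},\qquad 1\le d\le n,
\]
gives \eqref{eq:finite-threshold-entropy-refined-bound}. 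This uses monotonicity in $|S|$, together with the fact that $d\ln(en/d)$ is nondecreasing in $d$ on $[1,n]$ and in $n$.

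\emph{Step 3: the final bound.} From \eqref{eq:finite-threshold-entropy-refined-bound} and $d\ge1$ we have $\ln(en/d)\le\ln(en)$, hence $\ln|\mathcal T|\le v\ln(en)$. The case $v=0$ gives $\ln|\mathcal T|=0$ and is trivial. Substituting the bound $v\le C_L(P+1)\ln(e(P+1))$ from \Cref{lem:finite-bkl-vc-dimension} yields \eqref{eq:finite-threshold-entropy-final-bound} with the same depth-dependent constant.

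I do not expect a real obstacle. The only steps needing care are checking the monotonicity of $d\ln(en/d)$ used in Step 2 (its derivative in $d$ is $\ln(n/d)\ge0$) and handling duplicated sample points through the distinct set $S$.
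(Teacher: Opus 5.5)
Your proposal is correct and follows essentially the same route as the paper. Both arguments obtain the trace identity by matching the triples $(a,\sigma,\tau)$, apply Sauer--Shelah with the estimate $\sum_{k\le d}\binom{n}{k}\le(en/d)^d$ for $d=v\wedge n$, and conclude with $\ln(en/d)\le\ln(en)$, $d\le v$, and the VC bound of \Cref{lem:finite-bkl-vc-dimension}. The differences are cosmetic: the paper splits into the cases $v<n$ and $v\ge n$ (using $2^n\le e^n$ in the latter), and it handles $v=0$ by a singleton-shattering argument rather than the degenerate Sauer--Shelah bound; your treatment of repeated sample points via the distinct set $S$ is a small precaution the paper leaves implicit.
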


\begin{proof}
For notational convenience, set
\begin{align}
\mathcal A
&:=
\mathcal A_{L-1}^{m,G},
&
\mathcal H^\pm
&:=
\mathcal H_{L-1,m,G}^{\pm},
&
v
&:=
v_{L-1,m,G}.
\end{align}
We first prove the exact trace identity
\eqref{eq:finite-threshold-entropy-exact-trace-identity}. Let
$T \in \mathcal T_{n,\b x}^{\pm} \left( \mathcal A \right)$.
By
\eqref{eq:signed-threshold-trace-family},
there exist
$a_{\boldsymbol\theta}\in\mathcal A$,
$\sigma\in\left\{-1,1\right\}$,
and
$\tau\in\mathbb R$
such that
\begin{align}
T
=
\left\{
i\in\left[n\right]
:
\sigma
a_{\boldsymbol\theta}
\left(
\b x_i
\right)
\ge
\tau
\right\}.
\label{eq:finite-threshold-entropy-arbitrary-trace}
\end{align}
Define
$h:=h_{\boldsymbol\theta,\tau,\sigma}$.
By
\eqref{eq:finite-architecture-signed-threshold-class},
one has
$h\in\mathcal H^\pm$.
Moreover,
\begin{align}
\left\{
i\in\left[n\right]
:
h\left(
\b x_i
\right)
=
1
\right\}
&\stackrel{(a)}{=}
\left\{
i\in\left[n\right]
:
\mathbf 1_{\left\{
\sigma
a_{\boldsymbol\theta}
\left(
\b x_i
\right)
\ge
\tau
\right\}}
=
1
\right\}
 \stackrel{(b)}{=}
\left\{
i\in\left[n\right]
:
\sigma
a_{\boldsymbol\theta}
\left(
\b x_i
\right)
\ge
\tau
\right\}
 \stackrel{(c)}{=}
T.
\end{align}
Here,
(a) applies
\eqref{eq:finite-architecture-signed-threshold-function},
(b) applies the defining property of an indicator function, and
(c) invokes
\eqref{eq:finite-threshold-entropy-arbitrary-trace}.
Therefore,
\begin{align}
\mathcal T_{n,\b x}^{\pm}
\left(
\mathcal A
\right)
\subseteq
\left\{
\left\{
i\in\left[n\right]
:
h\left(
\b x_i
\right)
=
1
\right\}
:
h\in\mathcal H^\pm
\right\}.
\label{eq:finite-threshold-entropy-first-set-inclusion}
\end{align}
Conversely, let
$h\in\mathcal H^\pm$.
By
\eqref{eq:finite-architecture-signed-threshold-class},
there exist
$\boldsymbol\theta\in\Theta_{L-1,m,G}$,
$\tau\in\mathbb R$,
and
$\sigma\in\left\{-1,1\right\}$
such that
$h=h_{\boldsymbol\theta,\tau,\sigma}$.
Therefore,
\begin{align}
\left\{
i\in\left[n\right]
:
h\left(
\b x_i
\right)
=
1
\right\}
&\stackrel{(a)}{=}
\left\{
i\in\left[n\right]
:
\sigma
a_{\boldsymbol\theta}
\left(
\b x_i
\right)
\ge
\tau
\right\}
 \stackrel{(b)}{\in}
\mathcal T_{n,\b x}^{\pm}
\left(
\mathcal A
\right).
\end{align}
Here,
(a) applies
\eqref{eq:finite-architecture-signed-threshold-function},
and
(b) applies
\eqref{eq:signed-threshold-trace-family}.
Hence,
\begin{align}
\left\{
\left\{
i\in\left[n\right]
:
h\left(
\b x_i
\right)
=
1
\right\}
:
h\in\mathcal H^\pm
\right\}
\subseteq
\mathcal T_{n,\b x}^{\pm}
\left(
\mathcal A
\right).
\label{eq:finite-threshold-entropy-second-set-inclusion}
\end{align}
Combining
\eqref{eq:finite-threshold-entropy-first-set-inclusion}
and
\eqref{eq:finite-threshold-entropy-second-set-inclusion}
proves
\eqref{eq:finite-threshold-entropy-exact-trace-identity}. We now count the traces.
By
\eqref{eq:finite-threshold-entropy-exact-trace-identity},
identifying each subset of
$\left[n\right]$
with its binary indicator vector shows that the cardinality of the
signed trace family is exactly the number of binary label vectors
induced by
$\mathcal H^\pm$
on the fixed sample:
\begin{align}
\left|
\mathcal T_{n,\b x}^{\pm}
\left(
\mathcal A
\right)
\right|
&=
\left|
\left\{
\left(
h\left(
\b x_1
\right),
\ldots,
h\left(
\b x_n
\right)
\right)
:
h\in\mathcal H^\pm
\right\}
\right|.
\label{eq:finite-threshold-entropy-label-vector-count}
\end{align}
Suppose first that
$v=0$.
If two members of
$\mathcal H^\pm$
gave different labels at some
$\b x\in\mX$,
then the singleton
$\left\{\b x\right\}$
would be shattered.
This would imply
$v\ge1$,
contradicting
$v=0$.
Thus, every member of
$\mathcal H^\pm$
has the same value at every point of
$\mX$.
Consequently, only one label vector occurs on the sample, and
$\left|\mathcal T_{n,\b x}^{\pm}\left(\mathcal A\right)\right|=1$.
This proves
\eqref{eq:finite-threshold-entropy-zero-vc}. Assume now that
$v\ge1$.
We distinguish two cases. If
$v<n$,
the Sauer--Shelah lemma and
\eqref{eq:finite-threshold-entropy-label-vector-count}
give
\begin{align}
\left|
\mathcal T_{n,\b x}^{\pm}
\left(
\mathcal A
\right)
\right|
&\stackrel{(a)}{\le}
\sum_{j=0}^{v}
\binom{n}{j}
 \stackrel{(b)}{\le}
\left(
\frac{
en
}{
v
}
\right)^v.
\label{eq:finite-threshold-entropy-sauer-small-vc}
\end{align}
Here,
(a) applies the Sauer--Shelah lemma to a class of VC dimension
$v$,
while
(b) applies the standard binomial-sum estimate for
$1\le v<n$. If
$v\ge n$,
every trace is a subset of
$\left[n\right]$,
and therefore
\begin{align}
\left|
\mathcal T_{n,\b x}^{\pm}
\left(
\mathcal A
\right)
\right|
&\stackrel{(a)}{\le}
2^n
\stackrel{(b)}{\le}
e^n
 \stackrel{(c)}{=}
\left(
\frac{
en
}{
n
}
\right)^n.
\label{eq:finite-threshold-entropy-large-vc}
\end{align}
Here,
(a) counts all subsets of
$\left[n\right]$,
(b) uses
$2\le e$,
and
(c) simplifies the fraction. Define
\begin{align}
q
:=
v\wedge n.
\label{eq:finite-threshold-entropy-q}
\end{align}
Both
\eqref{eq:finite-threshold-entropy-sauer-small-vc}
and
\eqref{eq:finite-threshold-entropy-large-vc}
can then be written as
\begin{align}
\left|
\mathcal T_{n,\b x}^{\pm}
\left(
\mathcal A
\right)
\right|
\le
\left(
\frac{
en
}{
q
}
\right)^q.
\label{eq:finite-threshold-entropy-unified-cardinality}
\end{align}
Taking natural logarithms in
\eqref{eq:finite-threshold-entropy-unified-cardinality}
gives
\begin{align}
\ln
\left|
\mathcal T_{n,\b x}^{\pm}
\left(
\mathcal A
\right)
\right|
&\stackrel{(a)}{\le}
q
\ln
\left(
\frac{
en
}{
q
}
\right)
 \stackrel{(b)}{=}
\left(
v\wedge n
\right)
\ln
\left(
\frac{
en
}{
v\wedge n
}
\right).
\label{eq:finite-threshold-entropy-refined-proof}
\end{align}
Here,
(a) takes logarithms, and
(b) substitutes
\eqref{eq:finite-threshold-entropy-q}.
This proves
\eqref{eq:finite-threshold-entropy-refined-bound}. It remains to derive the simpler parameter-dependent estimate.
Since
$q\ge1$,
the monotonicity of the logarithm gives
\begin{align}
\ln
\left(
\frac{
en
}{
q
}
\right)
&\le
\ln
\left(
en
\right).
\label{eq:finite-threshold-entropy-log-simplification}
\end{align}
Moreover,
\begin{align}
q
=
v\wedge n
\le
v.
\label{eq:finite-threshold-entropy-q-bounded-by-v}
\end{align}
Combining
\eqref{eq:finite-threshold-entropy-refined-proof},
\eqref{eq:finite-threshold-entropy-log-simplification},
and
\eqref{eq:finite-threshold-entropy-q-bounded-by-v}
gives
\begin{align}
\ln
\left|
\mathcal T_{n,\b x}^{\pm}
\left(
\mathcal A
\right)
\right|
\le
v
\ln
\left(
en
\right).
\label{eq:finite-threshold-entropy-vc-times-log-n}
\end{align}
Finally,
\Cref{lem:finite-bkl-vc-dimension}
gives
\begin{align}
v
&\stackrel{(a)}{=}
\operatorname{VC}
\left(
\mathcal H_{L-1,m,G}^{\pm}
\right)
 \stackrel{(b)}{\le}
C_L
\left(
P_{L-1,m,G}+1
\right)
\ln
\left(
e
\left(
P_{L-1,m,G}+1
\right)
\right).
\label{eq:finite-threshold-entropy-vc-substitution}
\end{align}
Here,
(a) applies
\eqref{eq:finite-threshold-entropy-vc-abbreviation},
and
(b) applies
\eqref{eq:finite-architecture-signed-threshold-vc-bound-expanded}.
Substituting
\eqref{eq:finite-threshold-entropy-vc-substitution}
into
\eqref{eq:finite-threshold-entropy-vc-times-log-n}
gives
\begin{align}
\ln
\left|
\mathcal T_{n,\b x}^{\pm}
\left(
\mathcal A_{L-1}^{m,G}
\right)
\right|
&\le
C_L
\left(
P_{L-1,m,G}+1
\right)
\ln
\left(
e
\left(
P_{L-1,m,G}+1
\right)
\right)
\ln
\left(
en
\right),
\end{align}
which proves
\eqref{eq:finite-threshold-entropy-final-bound}
and completes the proof.
\end{proof}

\begin{lemma}\textnormal{(Brownian chaos bound from finite-architecture threshold entropy)}\leavevmode\\
\label{lem:brownian-chaos-from-entropy}
Let
$L,m,G,n\in\mathbb N$
satisfy
$L\ge2$,
$m\ge1$,
$G\ge2$,
and
$n\ge1$,
and let
$\b x_1,\ldots,\b x_n\in\mX$
be fixed sample points.
Let
$P_{L-1,m,G}$
be the parameter-count bound specified in
\Cref{subsec:finite-parametric-bkl}
and established in
\Cref{lem:finite-bkl-parameter-count},
part~\ref{lem:finite-architecture-parameter-count}.
Set
\begin{align}
P
&:=
P_{L-1,m,G},
\label{eq:brownian-chaos-parameter-abbreviation}
\\
\Gamma_{L-1,m,G,n}
&:=
1
+
\left(
P+1
\right)
\ln\left(
e\left(
P+1
\right)
\right)
\ln\left(
en
\right),
\label{eq:brownian-chaos-complexity-factor}
\\
B_{\b x}
&:=
\sup_{a\in\mathcal A_{L-1}^{m,G}}
\max_{i\in\left[
n
\right]}
\left|
a\left(
\b x_i
\right)
\right|.
\label{eq:brownian-chaos-support-envelope}
\end{align}
Then
$
B_{\b x}<\infty.
$
Moreover, there exists
$C_L>0$,
depending only on
$L$,
such that
\begin{align}
\widehat{\mathfrak C}_{n,\b x}^{(B)}
\left(
\mathcal A_{L-1}^{m,G}
\right)
\le
C_L
B_{\b x}
\Gamma_{L-1,m,G,n}.
\label{eq:brownian-chaos-from-entropy-bound}
\end{align}

\end{lemma}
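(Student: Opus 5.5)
The plan is to chain the three preceding lemmas: the finite range bound of \Cref{lem:finite-bkl-parameter-count}, the threshold-trace control of \Cref{lem:threshold-traces-control-chaos}, and the entropy estimate of \Cref{lem:finite-bkl-threshold-entropy}. Everything is done with $\mathcal A:=\mathcal A_{L-1}^{m,G}$ and fixed sample points.

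Finiteness of $B_{\b x}$ comes first. By \Cref{lem:finite-bkl-parameter-count}\ref{lem:finite-architecture-range-control}, every $a\in\mathcal A$ satisfies $|a(\b x)|\le A_{\mX}^{2^{-(L-2)}}<\infty$, so $B_{\b x}\le A_{\mX}^{2^{-(L-2)}}$. Next I use the diagonal identity $k_a(\b x_i,\b x_i)=\kB(a(\b x_i),a(\b x_i))=|a(\b x_i)|$, as in \eqref{eq:finite-architecture-diagonal-envelope-identity}. It gives $B_{\b x}(\mathcal A)=B_{\b x}<\infty$, so the hypotheses of \Cref{lem:threshold-traces-control-chaos} hold.

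For the entropy input, \Cref{lem:finite-bkl-threshold-entropy}, \eqref{eq:finite-threshold-entropy-final-bound}, gives
\begin{align}
\ln\left|\mathcal T_{n,\b x}^{\pm}(\mathcal A)\right|
\le E
:= C_L'\left(P+1\right)\ln\left(e\left(P+1\right)\right)\ln\left(en\right),
\end{align}
with $C_L'$ depending only on $L$. When the VC dimension is zero the trace family has cardinality one, so its logarithm is $0\le E$ and this case is covered as well. Feeding $E$ into \eqref{eq:threshold-traces-brownian-chaos-entropy-bound} yields
\begin{align}
\widehat{\mathfrak C}_{n,\b x}^{(B)}(\mathcal A)\le C B_{\b x}\left(1+E\right),
\end{align}
where $C$ is universal.

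To absorb the constants, write $X:=(P+1)\ln(e(P+1))\ln(en)\ge0$. Then $1+E=1+C_L'X\le\max\{1,C_L'\}(1+X)=\max\{1,C_L'\}\,\Gamma_{L-1,m,G,n}$. Setting $C_L:=C\max\{1,C_L'\}$ gives \eqref{eq:brownian-chaos-from-entropy-bound}.

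No step is a real obstacle, since the heavy lifting (the VC bound and the chaos-by-threshold decomposition) sits in the cited lemmas. The points needing care are small. The envelope $B_{\b x}(\mathcal A)$ in \Cref{lem:threshold-traces-control-chaos}, which is defined through the kernel diagonal, must coincide with the $B_{\b x}$ of the statement, defined through $|a|$. The degenerate case $B_{\b x}=0$ is already handled inside \Cref{lem:threshold-traces-control-chaos}. Finally, the absorbed constant must depend only on $L$ and not on $m$, $G$, $n$, or $d$; this holds because $C_L'$ comes from the VC lemma and depends only on $L$.
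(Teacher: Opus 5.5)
Your proposal is correct and follows the paper's proof essentially step for step. You get finiteness of $B_{\b x}$ from the range bound in \Cref{lem:finite-bkl-parameter-count}, feed the entropy estimate of \Cref{lem:finite-bkl-threshold-entropy} into \eqref{eq:threshold-traces-brownian-chaos-entropy-bound}, and absorb constants via $\max\{1,C_L'\}$. Your explicit check that the kernel-diagonal envelope $B_{\b x}(\mathcal A)$ coincides with $B_{\b x}$ is a point the paper uses only implicitly.
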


\begin{proof}
For notational convenience, set
\begin{align}
\mathcal A
&:=
\mathcal A_{L-1}^{m,G},
\label{eq:brownian-chaos-proof-architecture-abbreviation}
\\
\mathcal T
&:=
\mathcal T_{n,\b x}^{\pm}
\left(
\mathcal A
\right),
\label{eq:brownian-chaos-proof-trace-abbreviation}
\\
\Lambda_{L-1,m,G,n}
&:=
\left(
P+1
\right)
\ln
\left(
e
\left(
P+1
\right)
\right)
\ln
\left(
en
\right).
\label{eq:brownian-chaos-proof-lambda}
\end{align}
By
\eqref{eq:brownian-chaos-complexity-factor},
one has
\begin{align}
\Gamma_{L-1,m,G,n}
=
1
+
\Lambda_{L-1,m,G,n}.
\label{eq:brownian-chaos-proof-gamma-lambda}
\end{align}
We first verify that the support envelope is finite.
By
\Cref{lem:finite-bkl-parameter-count}\ref{lem:finite-architecture-range-control},
every
$a\in\mathcal A$
satisfies
\begin{align}
\left|
a\left(
\b x
\right)
\right|
\le
A_{\mX}^{\,2^{-(L-2)}},
\qquad
\b x\in\mX.
\label{eq:brownian-chaos-proof-global-range}
\end{align}
Consequently,
\begin{align}
B_{\b x}
&\stackrel{(a)}{=}
\sup_{a\in\mathcal A}
\max_{i\in\left[n\right]}
\left|
a\left(
\b x_i
\right)
\right|
 \stackrel{(b)}{\le}
A_{\mX}^{\,2^{-(L-2)}}
\stackrel{(c)}{<}
\infty.
\end{align}
Here,
(a) applies
\eqref{eq:brownian-chaos-support-envelope},
(b) applies
\eqref{eq:brownian-chaos-proof-global-range}
at every sample point, and
(c) uses the compactness of
$\mX$
and the definition of
$A_{\mX}$
in
\eqref{eq:finite-architecture-input-radius}. We next control the signed threshold entropy.
By
\Cref{lem:finite-bkl-threshold-entropy},
there exists a constant
$C_{\mathrm{ent},L}>0$,
depending only on
$L$,
such that
\begin{align}
\ln
\left|
\mathcal T
\right|
&\stackrel{(a)}{\le}
C_{\mathrm{ent},L}
\left(
P+1
\right)
\ln
\left(
e
\left(
P+1
\right)
\right)
\ln
\left(
en
\right)
 \stackrel{(b)}{=}
C_{\mathrm{ent},L}
\Lambda_{L-1,m,G,n}.
\label{eq:brownian-chaos-proof-threshold-entropy}
\end{align}
Here,
(a) applies
\eqref{eq:finite-threshold-entropy-final-bound},
while
(b) applies
\eqref{eq:brownian-chaos-proof-lambda}. Define
\begin{align}
E_{L-1,m,G,n}
:=
C_{\mathrm{ent},L}
\Lambda_{L-1,m,G,n}.
\label{eq:brownian-chaos-proof-entropy-quantity}
\end{align}
Then
\eqref{eq:brownian-chaos-proof-threshold-entropy}
gives
\begin{align}
\ln
\left|
\mathcal T_{n,\b x}^{\pm}
\left(
\mathcal A
\right)
\right|
\le
E_{L-1,m,G,n}.
\label{eq:brownian-chaos-proof-entropy-assumption}
\end{align}
Applying
\Cref{lem:threshold-traces-control-chaos}
with the support class
$\mathcal A$
and the envelope
$B_{\b x}$
gives
\begin{align}
\widehat{\mathfrak C}_{n,\b x}^{(B)}
\left(
\mathcal A
\right)
&\stackrel{(a)}{\le}
C_{\mathrm{tr}}
B_{\b x}
\left(
1
+
E_{L-1,m,G,n}
\right)
 \stackrel{(b)}{=}
C_{\mathrm{tr}}
B_{\b x}
\left(
1
+
C_{\mathrm{ent},L}
\Lambda_{L-1,m,G,n}
\right),
\label{eq:brownian-chaos-proof-threshold-application}
\end{align}
where
$C_{\mathrm{tr}}>0$
is a universal numerical constant.
Here,
(a) applies
\eqref{eq:threshold-traces-brownian-chaos-entropy-bound}
using
\eqref{eq:brownian-chaos-proof-entropy-assumption},
and
(b) substitutes
\eqref{eq:brownian-chaos-proof-entropy-quantity}. Set
$ C_{\mathrm{aux},L} : \allowbreak= \max \left\{ 1, C_{\mathrm{ent},L} \right\}. $
Since
$\Lambda_{L-1,m,G,n}\ge0$,
we obtain
\begin{align}
1
+
C_{\mathrm{ent},L}
\Lambda_{L-1,m,G,n}
&\stackrel{(a)}{\le}
C_{\mathrm{aux},L}
+
C_{\mathrm{aux},L}
\Lambda_{L-1,m,G,n}
\notag\\
&\stackrel{(b)}{=}
C_{\mathrm{aux},L}
\left(
1
+
\Lambda_{L-1,m,G,n}
\right)
\notag\\
&\stackrel{(c)}{=}
C_{\mathrm{aux},L}
\Gamma_{L-1,m,G,n}.
\label{eq:brownian-chaos-proof-complexity-simplification}
\end{align}
Here,
(a) uses
$1 \le C_{\mathrm{aux},L}, C_{\mathrm{ent},L} \le C_{\mathrm{aux},L}$,
(b) factors out
$C_{\mathrm{aux},L}$,
and
(c) applies
\eqref{eq:brownian-chaos-proof-gamma-lambda}. Substituting
\eqref{eq:brownian-chaos-proof-complexity-simplification}
into
\eqref{eq:brownian-chaos-proof-threshold-application}
gives
\begin{align}
\widehat{\mathfrak C}_{n,\b x}^{(B)}
\left(
\mathcal A
\right)
&\stackrel{(a)}{\le}
C_{\mathrm{tr}}
C_{\mathrm{aux},L}
B_{\b x}
\Gamma_{L-1,m,G,n}
 \stackrel{(b)}{=}
C_L
B_{\b x}
\Gamma_{L-1,m,G,n},
\end{align}
where
$C_L := C_{\mathrm{tr}} C_{\mathrm{aux},L}$.
Here,
(a) combines
\eqref{eq:brownian-chaos-proof-threshold-application}
and
\eqref{eq:brownian-chaos-proof-complexity-simplification},
while
(b) defines the depth-dependent constant
$C_L$.
Substituting
$\mathcal A=\mathcal A_{L-1}^{m,G}$
proves
\eqref{eq:brownian-chaos-from-entropy-bound}
and completes the proof.
\end{proof}

\begin{lemma}\textnormal{(Uniform H\"older, pointwise, and $L^2(\nu)$ bounds for atomic VBKL generators)}\ ~
\label{lem:path-atom-bound}
Let $L\ge2$ and assume that $\|\bomega\|_2\le1$ for every
$\bomega\in\Omega$. Define
\begin{align}
R_L
:=
\left(
\int_{\mX}
\|\b x\|_2^{\,2^{-(L-2)}}
\,\d\nu(\b x)
\right)^{1/2}.
\label{eq:atomic-generator-l2-radius}
\end{align}
Then every $f\in\mU_L$ satisfies
\begin{align}
|f(\b x)-f(\b x')|
&\le
\|\b x-\b x'\|_2^{\,2^{-(L-1)}},
\qquad
\b x,\b x'\in\mX,
\notag\\
|f(\b x)|
&\le
\|\b x\|_2^{\,2^{-(L-1)}},
\qquad
\b x\in\mX,
\label{eq:path-atom-pointwise-bound}
\end{align}
and consequently
\begin{align}
\|f\|_{L^2(\nu)}
\le
R_L.
\label{eq:path-atom-l2-bound}
\end{align}
\end{lemma}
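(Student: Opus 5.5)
The plan is to prove the H\"older and pointwise bounds simultaneously by induction on the depth $l\in\{1,\ldots,L\}$, with exponent $\alpha_l:=2^{-(l-1)}$, and then integrate the pointwise bound to obtain the $L^2(\nu)$ estimate.

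\textbf{Base case $l=1$.} Every $u\in\mU_1$ has the form $u(\b x)=\bomega^\top\b x$ with $\|\bomega\|_2\le1$. Cauchy--Schwarz gives
\begin{align}
|u(\b x)-u(\b x')|\le\|\b x-\b x'\|_2,
\qquad
|u(\b x)|\le\|\b x\|_2,
\end{align}
which is the claim with $\alpha_1=1$.

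\textbf{Key one-dimensional estimate.} For $g\in\mH_{\kB}$ with $\|g\|_{\mH_{\kB}}\le1$ and any $s,t\in\R$, the fundamental theorem of calculus followed by Cauchy--Schwarz on the interval between $s$ and $t$ gives
\begin{align}
|g(s)-g(t)|\le|s-t|^{1/2}\|g\|_{\mH_{\kB}}\le|s-t|^{1/2}.
\end{align}
Using the anchor $g(0)=0$ and taking $t=0$ yields $|g(s)|\le|s|^{1/2}$. This is exactly the computation already carried out in \eqref{eq:finite-profile-proof-pointwise-bound}.

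\textbf{Inductive step.} Suppose the bounds hold at level $l-1$. Take $f\in\mU_l$ and write $f=g\circ u$ with $u\in\mU_{l-1}$ and $\|g\|_{\mH_{\kB}}\le1$, using \Cref{lem:vbkl-union-rkhs} or directly the definition of $\mU_l$. Then
\begin{align}
|f(\b x)-f(\b x')|\le|u(\b x)-u(\b x')|^{1/2}\le\left(\|\b x-\b x'\|_2^{\alpha_{l-1}}\right)^{1/2}=\|\b x-\b x'\|_2^{\alpha_l}.
\end{align}
The pointwise bound follows in the same way, since $|f(\b x)|\le|u(\b x)|^{1/2}\le\|\b x\|_2^{\alpha_{l-1}/2}=\|\b x\|_2^{\alpha_l}$. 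Monotonicity of $t\mapsto t^{1/2}$ on $[0,\infty)$ justifies both steps. Setting $l=L$ gives the two displayed inequalities with exponent $2^{-(L-1)}$.

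\textbf{$L^2$ bound.} Squaring the pointwise bound gives $|f(\b x)|^2\le\|\b x\|_2^{2\cdot2^{-(L-1)}}=\|\b x\|_2^{2^{-(L-2)}}$. Integrating against $\nu$ yields $\|f\|_{L^2(\nu)}^2\le R_L^2$. Measurability is immediate because $f$ is continuous by the H\"older bound, and $R_L<\infty$ because $\mX$ is compact.

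There is no real obstacle here. The only point requiring care is that the square-root modulus holds globally on $\R$, rather than only on a bounded interval, so that no range restriction on $u$ is needed. This holds because the $\mH_{\kB}$ norm integrates $|g'|^2$ over all of $\R$.
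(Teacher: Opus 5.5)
Your proof is correct and follows the paper's argument: an induction over the $L-1$ Brownian compositions using the square-root modulus $|g(s)-g(t)|\le|s-t|^{1/2}$ and $|g(s)|\le|s|^{1/2}$, followed by squaring the pointwise bound and integrating. The only cosmetic difference is how the modulus is obtained: the paper uses the reproducing property and the kernel-metric identity $\|\kB(s,\cdot)-\kB(t,\cdot)\|_{\mH_{\kB}}=|s-t|^{1/2}$, while you use the fundamental theorem of calculus and Cauchy--Schwarz on $g'$, which is equivalent.
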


\begin{proof}
For $g\in\mH_{\kB}$ with $\|g\|_{\mH_{\kB}}\le1$, the reproducing property
and the Brownian kernel metric give
\begin{align}
|g(s)-g(t)|
&\le
\|\kB(s,\cdot)-\kB(t,\cdot)\|_{\mH_{\kB}}
=
|s-t|^{1/2},
\notag\\
|g(s)|
&\le
\|\kB(s,\cdot)\|_{\mH_{\kB}}
=
|s|^{1/2}.
\end{align}
Consequently, for every support $a:\mX\to\mathbb R$,
\begin{align}
|g(a(\b x))-g(a(\b x'))|
&\le
|a(\b x)-a(\b x')|^{1/2},
\notag\\
|g(a(\b x))|
&\le
|a(\b x)|^{1/2}.
\label{eq:recursive-atomic-pointwise-step}
\end{align}
At the first level,
$|\bomega^\top(\b x-\b x')|\le\|\b x-\b x'\|_2$ and
$|\bomega^\top\b x|\le\|\b x\|_2$. Applying
\eqref{eq:recursive-atomic-pointwise-step} at each of the $L-1$ Brownian
compositions proves the two atomic bounds. Squaring the pointwise estimate
and integrating yields
\begin{align}
\|f\|_{L^2(\nu)}^2
\le
\int_{\mX}\|\b x\|_2^{\,2^{-(L-2)}}\,\d\nu(\b x)
=
R_L^2,
\end{align}
which proves the claim.
\end{proof}

\begin{lemma}\textnormal{(Attainment of the variation complexity)}\ 
\label{prop:variation-gauge-attainment}
Assume that
$\mU_L$
is compact in
$L^2\left(\nu\right)$.
Then, for every
$F\in\VL$,
there exists a finite signed Radon measure
$\mu_F$
on
$\mU_L$
such that
\begin{align}
F
=
\int_{\mU_L}
u
\,\d\mu_F\left(
u
\right)
\qquad
\text{in }
L^2\left(
\nu
\right),
\label{eq:variation-gauge-attainment-representation}
\end{align}
and
\begin{align}
\left\|
\mu_F
\right\|_{\mathrm{TV}}
=
\CLv\left(
F
\right).
\label{eq:variation-gauge-attainment-norm}
\end{align}

\end{lemma}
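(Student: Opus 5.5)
We would prove attainment by the direct method in the calculus of variations: take a near-minimizing sequence of representing measures, extract a weak-* limit, and check that the limit still represents $F$ and has total variation at most $\CLv(F)$.

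Fix $F\in\VL$ and set $\rho:=\CLv(F)$, which is finite by \Cref{lem:path-atomic-space-well-defined}. Choose finite signed measures $\mu_n$ on $\mU_L$ with $F=\int_{\mU_L}u\,\d\mu_n(u)$ in $L^2(\nu)$ and $\|\mu_n\|_{\mathrm{TV}}\le\rho+n^{-1}$.

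Since $\mU_L$ is a compact metric space in the $L^2(\nu)$ topology, the following facts are available:
\begin{itemize}
\item Every finite Borel measure on $\mU_L$ is Radon.
\item By the Riesz representation theorem, $\mathcal M(\mU_L)=C(\mU_L)^*$.
\item $C(\mU_L)$ is separable, so bounded sets of $\mathcal M(\mU_L)$ are weak-* sequentially compact.
\end{itemize}
One technical point must be handled first. The Borel structure in the definition of $\VL$ is induced by the supremum norm, whereas compactness is in $L^2(\nu)$. The inclusion from the sup-norm topology into the $L^2$ topology is continuous, so every $L^2$-Borel set is sup-Borel. We therefore restrict each $\mu_n$ to the $L^2$-Borel $\sigma$-algebra; this does not increase total variation and does not change the Bochner integral of the $L^2$-continuous map $u\mapsto u$. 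With this convention, Banach--Alaoglu gives a subsequence with $\mu_{n_k}\to\mu_F$ weak-*.

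Lower semicontinuity of the total variation norm under weak-* convergence then gives $\|\mu_F\|_{\mathrm{TV}}\le\liminf_k\|\mu_{n_k}\|_{\mathrm{TV}}=\rho$.

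The main obstacle is showing that the limit still represents $F$, because weak-* convergence only tests continuous scalar functions, while the barycenter is $L^2$-valued. We test against each $G\in L^2(\nu)$ by considering the scalar function
\[
\phi_G(u):=\langle u,G\rangle_{L^2(\nu)}.
\]
This function is continuous on $\mU_L$. Bochner integrals commute with bounded linear functionals, so
\[
\Bigl\langle \int_{\mU_L}u\,\d\mu_{n_k}(u),\,G\Bigr\rangle=\int_{\mU_L}\phi_G\,\d\mu_{n_k}.
\]
The left side equals $\langle F,G\rangle$ for every $k$. The right side converges to $\int_{\mU_L}\phi_G\,\d\mu_F$ by weak-* convergence. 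This last integral equals $\bigl\langle \int_{\mU_L}u\,\d\mu_F(u),\,G\bigr\rangle$, and the Bochner integral with respect to $\mu_F$ exists by \Cref{lem:path-atomic-space-well-defined}, using the uniform bound $R_L$ from \Cref{lem:path-atom-bound}. Since $G$ was arbitrary, $F=\int_{\mU_L}u\,\d\mu_F(u)$ in $L^2(\nu)$.

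Finally, $\mu_F$ is an admissible representing measure, so by the definition of $\CLv$ we have $\|\mu_F\|_{\mathrm{TV}}\ge\rho$. Combined with the lower-semicontinuity bound, this gives equality, which proves the lemma.
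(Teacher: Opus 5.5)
Your proposal is correct and follows the paper's proof essentially step for step: a near-minimizing sequence, Riesz--Markov with Banach--Alaoglu and separability of $C(\mU_L)$ to extract a weak-* limit, testing the barycentre against the continuous functionals $u\mapsto\langle u,G\rangle_{L^2(\nu)}$, and weak-* lower semicontinuity together with the definition of $\CLv$ to force equality. Restricting the $\mu_n$ to the $L^2$-Borel $\sigma$-algebra makes explicit a point the paper passes over silently, but it leaves one small loose end: $\mu_F$ then lives only on the coarser $\sigma$-algebra, so calling it an admissible representing measure for $\CLv$ requires identifying it with, or lifting it to, a sup-Borel measure on $\mU_L$. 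This is immediate, for instance, when $\mU_L$ is also sup-compact (\Cref{lem:atomic-bkl-dictionary-compact}) and the inclusion into $L^2(\nu)$ is injective on $\mU_L$, since the identity map is then a homeomorphism.
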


\begin{proof}
Fix
$F\in\VL$
and set
$ c_F : \allowbreak= \CLv\left( F \right). $
Since
$F\in\VL$,
one has
$c_F<\infty$.
By the definition of
$c_F$,
for every
$n\ge1$
there exists a finite signed measure
$\mu_n$
on
$\mU_L$
such that
\begin{align}
F
&=
\int_{\mU_L}
u
\,\d\mu_n\left(
u
\right)
\qquad
\text{in }
L^2\left(
\nu
\right),
\label{eq:variation-gauge-attainment-minimizing-representation}
\\
\left\|
\mu_n
\right\|_{\mathrm{TV}}
&\le
c_F
+
\frac{1}{n}.
\label{eq:variation-gauge-attainment-minimizing-norm}
\end{align}
Because
$\mU_L$
is a compact metric space, every finite signed Borel measure on
$\mU_L$
is a finite signed Radon measure.
Moreover,
\eqref{eq:variation-gauge-attainment-minimizing-norm}
implies
$ \sup_{n\ge1} \left\| \mu_n \right\|_{\mathrm{TV}} \le c_F+1 < \infty. $
Since
$\mU_L$
is compact and metrizable,
$C\left(\mU_L\right)$
is separable.
By the Riesz--Markov representation theorem, the space of finite signed Radon measures on
$\mU_L$
is isometrically identified with
$C\left(\mU_L\right)^*$,
where the dual norm is the total-variation norm.
The sequence
$\left(\mu_n\right)_{n\ge1}$
therefore lies in the closed dual ball of radius
$c_F+1$.
By the Banach--Alaoglu theorem, this ball is weak-* compact.
Since
$C\left(\mU_L\right)$
is separable, the weak-* topology on this ball is metrizable, and the ball is consequently weak-* sequentially compact.
Hence, after passing to a subsequence, not relabelled, there exists a finite signed Radon measure
$\mu_F$
on
$\mU_L$
such that
\begin{align}
\mu_n
\stackrel{*}{\rightharpoonup}
\mu_F
\qquad
\text{in }
C\left(
\mU_L
\right)^*.
\label{eq:variation-gauge-attainment-weak-star-convergence}
\end{align}
We next prove that
$\mu_F$
represents
$F$.
Since
$\mU_L$
is compact in
$L^2\left(\nu\right)$,
the canonical map
\begin{align}
\iota
:
\mU_L
\longrightarrow
L^2\left(
\nu
\right),
\qquad
\iota\left(
u
\right)
:=
u,
\end{align}
is continuous and bounded.
It is therefore strongly measurable and Bochner integrable with respect to the finite signed measure
$\mu_F$.
Consequently,
$ \int_{\mU_L} u \,\d\mu_F\left( u \right) \in L^2\left( \nu \right) $
is well-defined.
Fix
$\varphi\in L^2\left(\nu\right)$.
The scalar-valued map
\begin{align}
u
\longmapsto
\left\langle
u,
\varphi
\right\rangle_{L^2\left(\nu\right)}
\label{eq:variation-gauge-attainment-test-function}
\end{align}
is continuous on
$\mU_L$.
Therefore,
\begin{align}
\left\langle
F,
\varphi
\right\rangle_{L^2\left(\nu\right)}
&\stackrel{(a)}{=}
\left\langle
\int_{\mU_L}
u
\,\d\mu_n\left(
u
\right),
\varphi
\right\rangle_{L^2\left(\nu\right)}
\stackrel{(b)}{=}
\int_{\mU_L}
\left\langle
u,
\varphi
\right\rangle_{L^2\left(\nu\right)}
\,\d\mu_n\left(
u
\right)
\notag\\
&\stackrel{(c)}{\longrightarrow}
\int_{\mU_L}
\left\langle
u,
\varphi
\right\rangle_{L^2\left(\nu\right)}
\,\d\mu_F\left(
u
\right)
\stackrel{(d)}{=}
\left\langle
\int_{\mU_L}
u
\,\d\mu_F\left(
u
\right),
\varphi
\right\rangle_{L^2\left(\nu\right)}.
\label{eq:variation-gauge-attainment-duality-limit}
\end{align}
Here,
(a) applies
\eqref{eq:variation-gauge-attainment-minimizing-representation},
(b) applies the duality identity for the Bochner integral,
(c) follows from
\eqref{eq:variation-gauge-attainment-weak-star-convergence}
and the continuity of
\eqref{eq:variation-gauge-attainment-test-function},
and
(d) applies the Bochner-integral duality identity once more.
Since the left-hand side of
\eqref{eq:variation-gauge-attainment-duality-limit}
does not depend on
$n$,
the limit identity gives
\begin{align}
\left\langle
F,
\varphi
\right\rangle_{L^2\left(\nu\right)}
=
\left\langle
\int_{\mU_L}
u
\,\d\mu_F\left(
u
\right),
\varphi
\right\rangle_{L^2\left(\nu\right)}
\end{align}
for every
$\varphi\in L^2\left(\nu\right)$.
The non-degeneracy of the
$L^2\left(\nu\right)$
inner product therefore yields
\begin{align}
F
=
\int_{\mU_L}
u
\,\d\mu_F\left(
u
\right)
\qquad
\text{in }
L^2\left(
\nu
\right),
\end{align}
which proves
\eqref{eq:variation-gauge-attainment-representation}.
It remains to prove attainment of the variation complexity.
The total-variation norm is weak-* lower semicontinuous, and hence
\begin{align}
\left\|
\mu_F
\right\|_{\mathrm{TV}}
&\stackrel{(a)}{\le}
\liminf_{n\rightarrow\infty}
\left\|
\mu_n
\right\|_{\mathrm{TV}}
\stackrel{(b)}{\le}
\lim_{n\rightarrow\infty}
\left(
c_F
+
\frac{1}{n}
\right)
\stackrel{(c)}{=}
c_F
\stackrel{(d)}{\le}
\left\|
\mu_F
\right\|_{\mathrm{TV}}.
\label{eq:variation-gauge-attainment-final-chain}
\end{align}
Here,
(a) applies weak-* lower semicontinuity of the dual norm,
(b) applies
\eqref{eq:variation-gauge-attainment-minimizing-norm},
(c) evaluates the limit, and
(d) follows from the definition of
$c_F=\CLv\left(F\right)$
because
$\mu_F$
represents
$F$.
Thus, every inequality in
\eqref{eq:variation-gauge-attainment-final-chain}
is an equality, and therefore
$\left\| \mu_F \right\|_{\mathrm{TV}} = c_F = \CLv\left( F \right)$.
This proves
\eqref{eq:variation-gauge-attainment-norm}
and completes the proof.
\end{proof}

\begin{lemma}\textnormal{(Compactness of restricted Brownian profiles)}\ 
\label{lem:brownian-profile-compactness}
Let $A>0$, and define
\begin{align}
\mathcal G_A
:=
\left\{
 g|_{\left[-A,A\right]}
 :
 g\in\mathcal H_{\kB},
 \ \left\|g\right\|_{\mathcal H_{\kB}}\le1
\right\}.
\end{align}
Then $\mathcal G_A$ is compact in
$C\left(\left[-A,A\right]\right)$ equipped with the supremum norm.
\end{lemma}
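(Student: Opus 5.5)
The plan is a standard Arzel\`a--Ascoli argument combined with a weak-compactness step that keeps the limit inside the unit ball. Two uniform estimates come first. Fix $g\in\mH_{\kB}$ with $\|g\|_{\mH_{\kB}}\le1$. The reproducing-property estimate used in the proof of \Cref{lem:path-atom-bound} gives
\begin{align}
|g(s)-g(t)|\le|s-t|^{1/2},
\qquad
|g(s)|\le|s|^{1/2}\le A^{1/2},
\qquad
s,t\in[-A,A].
\end{align}
Equivalently, one can use the fundamental theorem of calculus and Cauchy--Schwarz applied to $g'$. Hence $\mathcal G_A$ is uniformly bounded and uniformly equicontinuous in $C([-A,A])$. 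By Arzel\`a--Ascoli, $\mathcal G_A$ is therefore relatively compact. It remains to show that $\mathcal G_A$ is closed in the supremum norm.

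For closedness, take $g_n\in\mH_{\kB}$ with $\|g_n\|_{\mH_{\kB}}\le1$, and suppose $g_n|_{[-A,A]}\to h$ uniformly on $[-A,A]$. The derivatives $g_n'$ lie in the unit ball of $L^2(\mathbb R)$. Since that ball is weakly sequentially compact, a subsequence satisfies $g_n'\rightharpoonup\phi$ weakly in $L^2(\mathbb R)$, and $\|\phi\|_{L^2}\le1$ by weak lower semicontinuity of the norm. Define
\begin{align}
g(t):=\int_0^t\phi(r)\,\d r .
\end{align}
Then $g$ is absolutely continuous, $g(0)=0$ and $g'=\phi\in L^2$, so $g\in\mH_{\kB}$ with $\|g\|_{\mH_{\kB}}\le1$. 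Testing the weak convergence against the indicator $\mathbf 1_{[0,t]}$ (or $-\mathbf 1_{[t,0]}$ when $t<0$) gives
\begin{align}
g_n(t)=\int_0^t g_n'(r)\,\d r\to\int_0^t\phi(r)\,\d r=g(t)
\end{align}
for every $t$. Uniform convergence to $h$ on $[-A,A]$ then forces $h=g|_{[-A,A]}\in\mathcal G_A$. Thus $\mathcal G_A$ is closed, and being relatively compact it is compact.

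The only delicate point is this closedness step. The uniform limit must itself come from a profile of Brownian norm at most one. Uniform convergence alone gives no control of derivatives, so the argument needs the separate weak-compactness extraction. It then identifies the two limits through the pointwise values, using that the indicators $\mathbf 1_{[0,t]}$ belong to $L^2$.
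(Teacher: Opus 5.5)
Your proposal is correct and follows essentially the same route as the paper. It uses uniform $1/2$-H\"older and anchor bounds with Arzel\`a--Ascoli, then weak $L^2$-compactness of the derivatives and weak lower semicontinuity of the norm to keep the limit in the unit ball, identifying the limit by testing against interval indicators. The only difference is minor: you take the weak limit of $g_n'$ in $L^2(\mathbb R)$ and integrate it from $0$, which builds in the anchor and the extension directly, whereas the paper takes the weak limit of the restricted derivatives in $L^2([-A,A])$ and zero-extends it, so it must check $g(0)=0$ and the extension identity separately.
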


\begin{proof}
Since
$C\left(\left[-A,A\right]\right)$,
equipped with the supremum norm, is a metric space, it is sufficient to
prove sequential compactness.
Let
$\left(g_n\right)_{n\ge1}$
be an arbitrary sequence in
$\mathcal G_A$.
By the definition of
$\mathcal G_A$,
for every
$n\ge1$
there exists
$\widetilde g_n\in\mathcal H_{\kB}$
such that
\begin{align}
g_n
&=
\widetilde g_n|_{\left[-A,A\right]},
\label{eq:restricted-profile-extension-representation}
\\
\left\|
\widetilde g_n
\right\|_{\mathcal H_{\kB}}
&\le
1.
\label{eq:restricted-profile-extension-norm}
\end{align}
By the characterization of the Brownian RKHS,
$\widetilde g_n\left(0\right)=0$,
the function
$\widetilde g_n$
is absolutely continuous, and
$\widetilde g_n'\in L^2\left(\mathbb R\right)$.
Moreover,
\begin{align}
\int_{\mathbb R}
\left|
\widetilde g_n'\left(
t
\right)
\right|^2
\,\d t
&\stackrel{(a)}{=}
\left\|
\widetilde g_n
\right\|_{\mathcal H_{\kB}}^2
\stackrel{(b)}{\le}
1.
\label{eq:restricted-profile-global-energy}
\end{align}
Here,
(a) applies the norm characterization of
$\mathcal H_{\kB}$,
while
(b) applies
\eqref{eq:restricted-profile-extension-norm}.
We first establish equicontinuity.
Fix
$s,t\in\left[-A,A\right]$
with
$s<t$.
Since
$\widetilde g_n$
is absolutely continuous,
\begin{align}
g_n\left(
t
\right)
-
g_n\left(
s
\right)
&\stackrel{(a)}{=}
\widetilde g_n\left(
t
\right)
-
\widetilde g_n\left(
s
\right)
\stackrel{(b)}{=}
\int_s^t
\widetilde g_n'\left(
r
\right)
\,\d r.
\end{align}
Here,
(a) applies
\eqref{eq:restricted-profile-extension-representation},
and
(b) applies the fundamental theorem of calculus for absolutely
continuous functions.
Consequently,
\begin{align}
\left|
g_n\left(
t
\right)
-
g_n\left(
s
\right)
\right|
&\stackrel{(a)}{\le}
\left(
\int_s^t
\left|
\widetilde g_n'\left(
r
\right)
\right|^2
\,\d r
\right)^{1/2}
\left|
t-s
\right|^{1/2}
\notag\\
&\stackrel{(b)}{\le}
\left(
\int_{\mathbb R}
\left|
\widetilde g_n'\left(
r
\right)
\right|^2
\,\d r
\right)^{1/2}
\left|
t-s
\right|^{1/2}
\stackrel{(c)}{\le}
\left|
t-s
\right|^{1/2}.
\label{eq:restricted-profile-holder}
\end{align}
Here,
(a) applies the Cauchy--Schwarz inequality on
$\left[s,t\right]$,
(b) enlarges the domain of integration, and
(c) applies
\eqref{eq:restricted-profile-global-energy}.
Thus,
$\left(g_n\right)_{n\ge1}$
is uniformly
$1/2$-Hölder continuous, and hence equicontinuous, on
$\left[-A,A\right]$.
We next establish uniform boundedness.
Since
\begin{align}
g_n\left(
0
\right)
&\stackrel{(a)}{=}
\widetilde g_n\left(
0
\right)
\stackrel{(b)}{=}
0,
\label{eq:restricted-profile-anchor}
\end{align}
where
(a) applies
\eqref{eq:restricted-profile-extension-representation}
and
(b) uses the Brownian RKHS anchor condition, the estimate
\eqref{eq:restricted-profile-holder}
gives, for every
$t\in\left[-A,A\right]$,
\begin{align}
\left|
g_n\left(
t
\right)
\right|
&\stackrel{(a)}{=}
\left|
g_n\left(
t
\right)
-
g_n\left(
0
\right)
\right|
\stackrel{(b)}{\le}
\left|
t
\right|^{1/2}
\stackrel{(c)}{\le}
A^{1/2}.
\end{align}
Here,
(a) applies
\eqref{eq:restricted-profile-anchor},
(b) applies
\eqref{eq:restricted-profile-holder}
with one endpoint equal to zero, and
(c) uses
$\left|t\right|\le A$.
The sequence
$\left(g_n\right)_{n\ge1}$
is therefore equicontinuous and uniformly bounded in
$C\left(\left[-A,A\right]\right)$.
By the Arzelà--Ascoli theorem, there exist a subsequence, not
relabelled, and a function
$g \in C\left( \left[-A,A\right] \right)$
such that
\begin{align}
\left\|
g_n-g
\right\|_{L^\infty\left(\left[-A,A\right]\right)}
\longrightarrow
0.
\label{eq:restricted-profile-uniform-convergence}
\end{align}
Since
$g_n\left(0\right)=0$
for every
$n\ge1$,
the uniform convergence gives
\begin{align}
\left|
g\left(
0
\right)
\right|
&\stackrel{(a)}{=}
\left|
g\left(
0
\right)
-
g_n\left(
0
\right)
\right|
\stackrel{(b)}{\le}
\left\|
g-g_n
\right\|_{L^\infty\left(\left[-A,A\right]\right)}
\stackrel{(c)}{\longrightarrow}
0.
\end{align}
Here,
(a) uses
$g_n\left(0\right)=0$,
(b) bounds pointwise evaluation by the supremum norm, and
(c) applies
\eqref{eq:restricted-profile-uniform-convergence}.
Hence,
\begin{align}
g\left(
0
\right)
=
0.
\label{eq:restricted-profile-limit-anchor-zero}
\end{align}
It remains to prove that
$g\in\mathcal G_A$.
For every
$n\ge1$,
define
\begin{align}
h_n
:=
\widetilde g_n'|_{\left[-A,A\right]}.
\label{eq:restricted-profile-derivative-restriction}
\end{align}
Then
\begin{align}
\left\|
h_n
\right\|_{L^2\left(\left[-A,A\right]\right)}^2
&\stackrel{(a)}{=}
\int_{-A}^{A}
\left|
\widetilde g_n'\left(
t
\right)
\right|^2
\,\d t
\stackrel{(b)}{\le}
\int_{\mathbb R}
\left|
\widetilde g_n'\left(
t
\right)
\right|^2
\,\d t
\stackrel{(c)}{\le}
1.
\label{eq:restricted-profile-local-energy}
\end{align}
Here,
(a) applies
\eqref{eq:restricted-profile-derivative-restriction},
(b) enlarges the domain of integration, and
(c) applies
\eqref{eq:restricted-profile-global-energy}.
Thus,
$\left(h_n\right)_{n\ge1}$
is a bounded sequence in the Hilbert space
$L^2\left(\left[-A,A\right]\right)$.
Since Hilbert spaces are reflexive, bounded sequences are weakly
sequentially relatively compact.
Therefore, after passing to a further subsequence, not relabelled,
there exists
$h \in L^2\left( \left[-A,A\right] \right)$
such that
\begin{align}
h_n
\rightharpoonup
h
\qquad
\text{weakly in }
L^2\left(
\left[-A,A\right]
\right).
\label{eq:restricted-profile-weak-derivatives}
\end{align}
The further subsequence continues to satisfy
\eqref{eq:restricted-profile-uniform-convergence}.
Fix
$s,t\in\left[-A,A\right]$
with
$s<t$.
By absolute continuity and
\eqref{eq:restricted-profile-derivative-restriction},
\begin{align}
g_n\left(
t
\right)
-
g_n\left(
s
\right)
=
\int_s^t
h_n\left(
r
\right)
\,\d r.
\label{eq:restricted-profile-derivative-identity}
\end{align}
The uniform convergence gives
\begin{align}
g_n\left(
t
\right)
-
g_n\left(
s
\right)
\longrightarrow
g\left(
t
\right)
-
g\left(
s
\right).
\label{eq:restricted-profile-increment-limit}
\end{align}
On the other hand,
\begin{align}
\int_s^t
h_n\left(
r
\right)
\,\d r
&\stackrel{(a)}{=}
\left\langle
h_n,
\mathbf 1_{\left[s,t\right]}
\right\rangle_{L^2\left(\left[-A,A\right]\right)}
\stackrel{(b)}{\longrightarrow}
\left\langle
h,
\mathbf 1_{\left[s,t\right]}
\right\rangle_{L^2\left(\left[-A,A\right]\right)}
\stackrel{(c)}{=}
\int_s^t
h\left(
r
\right)
\,\d r.
\label{eq:restricted-profile-derivative-weak-limit}
\end{align}
Here,
(a) rewrites the integral as an
$L^2\left(\left[-A,A\right]\right)$
inner product,
(b) applies
\eqref{eq:restricted-profile-weak-derivatives}
with the fixed test function
$\mathbf 1_{\left[s,t\right]}$, and
(c) rewrites the limiting inner product as an integral.
Comparing
\eqref{eq:restricted-profile-increment-limit}
and
\eqref{eq:restricted-profile-derivative-weak-limit}
in
\eqref{eq:restricted-profile-derivative-identity}
gives
\begin{align}
g\left(
t
\right)
-
g\left(
s
\right)
=
\int_s^t
h\left(
r
\right)
\,\d r,
\qquad
-A
\le
s
<
t
\le
A.
\label{eq:restricted-profile-limit-ac}
\end{align}
Since the interval
$\left[-A,A\right]$
has finite measure and
$h\in L^2\left(\left[-A,A\right]\right)$,
the Cauchy--Schwarz inequality gives
\begin{align}
\int_{-A}^{A}
\left|
h\left(
r
\right)
\right|
\,\d r
&\stackrel{(a)}{\le}
\left(
2A
\right)^{1/2}
\left\|
h
\right\|_{L^2\left(\left[-A,A\right]\right)}
\stackrel{(b)}{<}
\infty.
\end{align}
Here,
(a) applies the Cauchy--Schwarz inequality, and
(b) uses
$h\in L^2\left(\left[-A,A\right]\right)$.
Thus,
$h\in L^1\left(\left[-A,A\right]\right)$.
Equation
\eqref{eq:restricted-profile-limit-ac}
therefore shows that
$g$
is absolutely continuous on
$\left[-A,A\right]$
and that
$ g' \allowbreak= h \qquad \text{almost everywhere on } \left[-A,A\right]. $
By weak lower semicontinuity of the Hilbert-space norm,
\begin{align}
\int_{-A}^{A}
\left|
h\left(
r
\right)
\right|^2
\,\d r
&\stackrel{(a)}{=}
\left\|
h
\right\|_{L^2\left(\left[-A,A\right]\right)}^2
\stackrel{(b)}{\le}
\liminf_{n\rightarrow\infty}
\left\|
h_n
\right\|_{L^2\left(\left[-A,A\right]\right)}^2
\stackrel{(c)}{\le}
1.
\label{eq:restricted-profile-limit-energy}
\end{align}
Here,
(a) applies the definition of the
$L^2$
norm,
(b) applies weak lower semicontinuity under
\eqref{eq:restricted-profile-weak-derivatives},
and
(c) applies
\eqref{eq:restricted-profile-local-energy}.
Define the zero extension of
$h$
by
\begin{align}
\widehat h\left(
t
\right)
:=
h\left(
t
\right)
\mathbf 1_{\left[-A,A\right]}
\left(
t
\right),
\qquad
t\in\mathbb R,
\label{eq:restricted-profile-limit-zero-derivative}
\end{align}
and define
\begin{align}
\widehat g\left(
t
\right)
:=
\int_0^t
\widehat h\left(
r
\right)
\,\d r,
\qquad
t\in\mathbb R.
\label{eq:restricted-profile-limit-zero}
\end{align}
Because
$\widehat h$
has bounded support and belongs to
$L^2\left(\mathbb R\right)$,
one also has
$\widehat h\in L^1\left(\mathbb R\right)$.
Therefore,
$\widehat g$
is absolutely continuous on
$\mathbb R$,
satisfies
$ \widehat g\left( 0 \right) \allowbreak= 0, $
and has weak derivative
\begin{align}
\widehat g'
=
\widehat h
\qquad
\text{almost everywhere on }
\mathbb R.
\label{eq:restricted-profile-global-derivative}
\end{align}
We now verify that
$\widehat g$
extends
$g$.
If
$t\in\left[0,A\right]$,
then
\begin{align}
\widehat g\left(
t
\right)
&\stackrel{(a)}{=}
\int_0^t
h\left(
r
\right)
\,\d r
\stackrel{(b)}{=}
g\left(
t
\right)
-
g\left(
0
\right)
\stackrel{(c)}{=}
g\left(
t
\right).
\end{align}
Here,
(a) applies
\eqref{eq:restricted-profile-limit-zero-derivative},
(b) applies
\eqref{eq:restricted-profile-limit-ac}
with
$s=0$,
and
(c) applies
\eqref{eq:restricted-profile-limit-anchor-zero}.
If
$t\in\left[-A,0\right]$,
then
\begin{align}
\widehat g\left(
t
\right)
&\stackrel{(a)}{=}
\int_0^t
h\left(
r
\right)
\,\d r
\stackrel{(b)}{=}
-
\int_t^0
h\left(
r
\right)
\,\d r
 \stackrel{(c)}{=}
-
\left(
g\left(
0
\right)
-
g\left(
t
\right)
\right)
\stackrel{(d)}{=}
g\left(
t
\right).
\end{align}
Here,
(a) applies
\eqref{eq:restricted-profile-limit-zero},
(b) reverses the orientation of the integral,
(c) applies
\eqref{eq:restricted-profile-limit-ac},
and
(d) applies
\eqref{eq:restricted-profile-limit-anchor-zero}.
Therefore,
\begin{align}
\widehat g|_{\left[-A,A\right]}
=
g.
\label{eq:restricted-profile-extension-identity}
\end{align}
Finally,
\begin{align}
\left\|
\widehat g
\right\|_{\mathcal H_{\kB}}^2
&\stackrel{(a)}{=}
\int_{\mathbb R}
\left|
\widehat g'\left(
t
\right)
\right|^2
\,\d t
\stackrel{(b)}{=}
\int_{\mathbb R}
\left|
\widehat h\left(
t
\right)
\right|^2
\,\d t
 \stackrel{(c)}{=}
\int_{-A}^{A}
\left|
h\left(
t
\right)
\right|^2
\,\d t
\stackrel{(d)}{\le}
1.
\end{align}
Here,
(a) applies the Brownian RKHS norm characterization,
(b) applies
\eqref{eq:restricted-profile-global-derivative},
(c) applies the definition of the zero extension
\eqref{eq:restricted-profile-limit-zero-derivative},
and
(d) applies
\eqref{eq:restricted-profile-limit-energy}.
Hence,
\begin{align}
\widehat g
\in
\mathcal H_{\kB},
\qquad
\left\|
\widehat g
\right\|_{\mathcal H_{\kB}}
\le
1.
\end{align}
Together with
\eqref{eq:restricted-profile-extension-identity},
the definition of
$\mathcal G_A$
therefore gives
$g \in \mathcal G_A$.
We have shown that every sequence in
$\mathcal G_A$
admits a subsequence converging uniformly on
$\left[-A,A\right]$
to an element of
$\mathcal G_A$.
Thus,
$\mathcal G_A$
is sequentially compact in
$C\left(\left[-A,A\right]\right)$.
Since this is a metric space, sequential compactness is equivalent to
compactness.
Therefore,
$\mathcal G_A$
is compact in
$C\left(\left[-A,A\right]\right)$
equipped with the supremum norm.
\end{proof}

\begin{lemma}\textnormal{(Compactness of the atomic VBKL dictionary)}\ 
\label{lem:atomic-bkl-dictionary-compact}
Let
$\mX\subseteq\mathbb R^d$
be compact, let
$\nu$
be a Borel probability measure on
$\mX$,
and let
$\Omega\subseteq\mathbb S^{d-1}$
be compact.
Let
$\left(
\mU_l
\right)_{l\ge1}$
be the recursive atomic dictionaries introduced in
\Cref{subsec:atomic-bkl}. Then, for every
$L\ge1$,
the class
$\mU_L$
is compact in
$C\left(\mX\right)$
equipped with the supremum norm.
Consequently, under the canonical continuous embedding
$
C\left(
\mX
\right)
\longrightarrow
L^2\left(
\nu
\right),
$
the class
$\mU_L$
is also compact in
$L^2\left(\nu\right)$.
\end{lemma}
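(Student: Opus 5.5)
I would prove compactness of $\mU_L$ in $C(\mX)$ by induction on $L$, viewing each level as the continuous image of a product of compact sets. For $L=1$, the map $\Omega\to C(\mX)$, $\bomega\mapsto(\b x\mapsto\bomega^\top\b x)$, satisfies
\[
\sup_{\b x\in\mX}\left|(\bomega-\bomega')^\top\b x\right|\le R_{\mX}\,\|\bomega-\bomega'\|_2 .
\]
It is therefore Lipschitz, and $\mU_1$ is the continuous image of the compact set $\Omega$, hence compact.

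For the inductive step, assume $\mU_{l-1}$ is compact in $C(\mX)$. Then it is bounded: by \Cref{lem:path-atom-bound} (or directly by compactness), every $u\in\mU_{l-1}$ takes values in $[-A,A]$ for some finite $A>0$. Take $A:=\max\{1,R_{\mX}\}$, or any common bound. The key observation is that a composition $g\circ u$ with $u(\mX)\subseteq[-A,A]$ depends on $g$ only through its restriction $g|_{[-A,A]}$. Hence
\[
\mU_l=\Phi(\mathcal G_A\times\mU_{l-1}),\qquad \Phi(g,u):=g\circ u,
\]
where $\mathcal G_A$ is the restricted profile class of \Cref{lem:brownian-profile-compactness}. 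That lemma gives compactness of $\mathcal G_A$ in $C([-A,A])$, so $\mathcal G_A\times\mU_{l-1}$ is compact in the product topology by Tychonoff.

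The step that needs the most care is the joint continuity of $\Phi$. I would check it with the uniform $1/2$-Hölder bound $|g(s)-g(t)|\le|s-t|^{1/2}$ for $g\in\mathcal G_A$, established in the proof of \Cref{lem:brownian-profile-compactness}. This yields
\[
\|g\circ u-g'\circ u'\|_\infty\le\|g\circ u-g\circ u'\|_\infty+\|g\circ u'-g'\circ u'\|_\infty\le\|u-u'\|_\infty^{1/2}+\|g-g'\|_{L^\infty([-A,A])}.
\]
The second term is controlled because $u'$ also takes values in $[-A,A]$. Thus $\Phi$ is continuous, $\mU_l$ is the continuous image of a compact set, and the induction closes.

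Finally, the embedding $C(\mX)\to L^2(\nu)$ is continuous, since $\|f\|_{L^2(\nu)}\le\|f\|_\infty$ for a probability measure $\nu$. Continuous images of compact sets are compact, so $\mU_L$ is also compact in $L^2(\nu)$.
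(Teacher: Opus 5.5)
Your proof is correct and follows essentially the same route as the paper: induction with the Lipschitz base case $\bomega\mapsto\bomega^\top\b x$, compactness of $\mathcal G_A$ from \Cref{lem:brownian-profile-compactness} on a common range interval $[-A,A]$, and the continuous embedding $C(\mX)\to L^2(\nu)$. The only difference is presentation. You write $\mU_l$ as the continuous image of $\mathcal G_A\times\mU_{l-1}$, with joint continuity coming from the uniform $1/2$-H\"older bound, whereas the paper extracts convergent subsequences and controls the composition through the modulus of continuity of the limiting profile.
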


\begin{proof}
We prove the first assertion by induction on the recursion level.
For the base case, define
\begin{align}
T
&:
\Omega
\longrightarrow
C\left(
\mX
\right),
&
\left(
T\bomega
\right)
\left(
\b x
\right)
&:=
\bomega^\top\b x.
\end{align}
Set
$M_{\mX} := \sup_{\b x\in\mX} \left\| \b x \right\|_2$.
Since
$\mX$
is compact and
$\b x\mapsto\left\|\b x\right\|_2$
is continuous,
one has
$M_{\mX} < \infty$.
For every
$\bomega,\bomega'\in\Omega$,
\begin{align}
\left\|
T\bomega
-
T\bomega'
\right\|_\infty
&\stackrel{(a)}{=}
\sup_{\b x\in\mX}
\left|
\left(
\bomega-\bomega'
\right)^\top
\b x
\right|
 \stackrel{(b)}{\le}
\sup_{\b x\in\mX}
\left\|
\bomega-\bomega'
\right\|_2
\left\|
\b x
\right\|_2
\stackrel{(c)}{=}
M_{\mX}
\left\|
\bomega-\bomega'
\right\|_2.
\end{align}
Here,
(a) applies the definition of
$T$
and the supremum norm,
(b) applies the Euclidean Cauchy--Schwarz inequality, and
(c) factors out the quantity
$\left\|\bomega-\bomega'\right\|_2$
and applies the definition of
$M_{\mX}$.
Thus,
$T$
is Lipschitz continuous.
Since
$\Omega$
is compact and
$\mU_1 = T\left( \Omega \right)$,
the continuity of
$T$
implies that
$\mU_1$
is compact in
$C\left(\mX\right)$.

Assume now that
$\mU_l$
is compact in
$C\left(\mX\right)$
for some
$l\ge1$.
We prove that
$\mU_{l+1}$
is compact in
$C\left(\mX\right)$.

Let
$\left(f_n\right)_{n\ge1}$
be an arbitrary sequence in
$\mU_{l+1}$.
By the recursive definition of the atomic dictionary, for every
$n\ge1$
there exist
$a_n \in \mU_l, g_n \in \mH_{\kB}$
such that
\begin{align}
\left\|
g_n
\right\|_{\mH_{\kB}}
&\le
1,
\\
f_n\left(
\b x
\right)
&=
g_n\left(
a_n\left(
\b x
\right)
\right),
\qquad
\b x\in\mX.
\end{align}
By the induction hypothesis,
$\mU_l$
is compact in
$C\left(\mX\right)$.
Hence, after passing to a subsequence, not relabelled, there exists
$a\in\mU_l$
such that
\begin{align}
\left\|
a_n-a
\right\|_\infty
\longrightarrow
0.
\label{eq:compact-dict-an-uniform}
\end{align}
Since
$\mU_l$
is compact in
$C\left(\mX\right)$
and the supremum norm is continuous,
the class
$\mU_l$
is bounded.
Define
$A := \max \left\{ 1, \sup_{b\in\mU_l} \left\| b \right\|_\infty \right\}$.
Then
$A<\infty$,
and
\begin{align}
a\left(
\mX
\right)
&\subseteq
\left[
-A,A
\right],
&
a_n\left(
\mX
\right)
&\subseteq
\left[
-A,A
\right],
\qquad
n\ge1.
\end{align}
For every
$n\ge1$,
the restriction
$g_n|_{\left[-A,A\right]}$
belongs to the compact profile class
$\mathcal G_A$
from
\Cref{lem:brownian-profile-compactness}.
Therefore, after passing to a further subsequence, not relabelled,
there exists
$g_\ast\in\mathcal G_A$
such that
\begin{align}
\sup_{t\in\left[-A,A\right]}
\left|
g_n\left(
t
\right)
-
g_\ast\left(
t
\right)
\right|
\longrightarrow
0.
\label{eq:compact-dict-profile-uniform}
\end{align}
By the definition of
$\mathcal G_A$,
there exists
$\widetilde g\in\mH_{\kB}$
such that
\begin{align}
\widetilde g|_{\left[-A,A\right]}
&=
g_\ast,
&
\left\|
\widetilde g
\right\|_{\mH_{\kB}}
&\le
1.
\end{align}
We now prove uniform convergence of the composed functions.
Fix
$\b x\in\mX$.
Since
$a_n\left(\b x\right)$
and
$a\left(\b x\right)$
belong to
$\left[-A,A\right]$,
\begin{align}
&
\left|
g_n\left(
a_n\left(
\b x
\right)
\right)
-
\widetilde g\left(
a\left(
\b x
\right)
\right)
\right|
\stackrel{(a)}{\le}
\left|
g_n\left(
a_n\left(
\b x
\right)
\right)
-
\widetilde g\left(
a_n\left(
\b x
\right)
\right)
\right|
+
\left|
\widetilde g\left(
a_n\left(
\b x
\right)
\right)
-
\widetilde g\left(
a\left(
\b x
\right)
\right)
\right|
\notag\\
&\stackrel{(b)}{=}
\left|
g_n\left(
a_n\left(
\b x
\right)
\right)
-
g_\ast\left(
a_n\left(
\b x
\right)
\right)
\right|
+
\left|
\widetilde g\left(
a_n\left(
\b x
\right)
\right)
-
\widetilde g\left(
a\left(
\b x
\right)
\right)
\right|
\notag\\
&\stackrel{(c)}{\le}
\sup_{t\in\left[-A,A\right]}
\left|
g_n\left(
t
\right)
-
g_\ast\left(
t
\right)
\right|
+
\left|
\widetilde g\left(
a_n\left(
\b x
\right)
\right)
-
\widetilde g\left(
a\left(
\b x
\right)
\right)
\right|.
\label{eq:compact-dict-composition-pointwise}
\end{align}
Here,
(a) applies the triangle inequality,
(b) uses
$\widetilde g=g_\ast$
on
$\left[-A,A\right]$,
and
(c) bounds the first term by the uniform profile distance.

Define the modulus of continuity of
$\widetilde g$
on
$\left[-A,A\right]$
by
$ \omega_{\widetilde g} \left( \delta \right) := \sup $\allowbreak
$ \left\{ \left| \widetilde g\left( s \right) - \widetilde g\left( t \right) \right| : \right. $\allowbreak
$ \left. s,t\in\left[-A,A\right], \; \left| s-t \right| \le \delta \right\}. $
Since
$\widetilde g$
is continuous on the compact interval
$\left[-A,A\right]$,
it is uniformly continuous, and therefore
$ \omega_{\widetilde g} \left( \delta \right) \longrightarrow 0 $\allowbreak
$ \text{as } \delta\downarrow0. $
Taking the supremum over
$\b x\in\mX$
in
\eqref{eq:compact-dict-composition-pointwise}
gives
\begin{align}
\left\|
f_n
-
\widetilde g\circ a
\right\|_\infty
&\stackrel{(a)}{\le}
\sup_{t\in\left[-A,A\right]}
\left|
g_n\left(
t
\right)
-
g_\ast\left(
t
\right)
\right|
+
\omega_{\widetilde g}
\left(
\left\|
a_n-a
\right\|_\infty
\right).
\end{align}
Here,
(a) also uses
$f_n=g_n\circ a_n$
and the definition of the modulus of continuity.
The first term on the right-hand side converges to zero by
\eqref{eq:compact-dict-profile-uniform}.
The second converges to zero by
\eqref{eq:compact-dict-an-uniform}
and the continuity property of
$\omega_{\widetilde g}$.
Consequently,
$ \left\| f_n - \widetilde g\circ a \right\|_\infty \longrightarrow 0. $
Since
$a\in\mU_l$,
$\widetilde g\in\mH_{\kB}$,
and
$\left\|\widetilde g\right\|_{\mH_{\kB}}\le1$,
the recursive definition gives
$\widetilde g\circ a \in \mU_{l+1}$.
Thus, every sequence in
$\mU_{l+1}$
admits a subsequence converging in
$C\left(\mX\right)$
to an element of
$\mU_{l+1}$.
Therefore,
$\mU_{l+1}$
is sequentially compact.
Since
$C\left(\mX\right)$
is a metric space, sequential compactness is equivalent to
compactness.
The induction is complete, and
$\mU_L$
is compact in
$C\left(\mX\right)$
for every
$L\ge1$.

Finally, consider the canonical map
$J : C\left( \mX \right) \longrightarrow L^2\left( \nu \right), \qquad Jf := f$.
For every
$f\in C\left(\mX\right)$,
\begin{align}
\left\|
Jf
\right\|_{L^2\left(\nu\right)}^2
&\stackrel{(a)}{=}
\int_{\mX}
\left|
f\left(
\b x
\right)
\right|^2
\,\d\nu\left(
\b x
\right)
\stackrel{(b)}{\le}
\left\|
f
\right\|_\infty^2
\nu\left(
\mX
\right)
\stackrel{(c)}{=}
\left\|
f
\right\|_\infty^2.
\end{align}
Here,
(a) applies the definition of the
$L^2\left(\nu\right)$
norm,
(b) uses
$\left|f\left(\b x\right)\right|\le\left\|f\right\|_\infty$,
and
(c) uses
$\nu\left(\mX\right)=1$.
Hence,
$J$
is continuous.
Since
$\mU_L$
is compact in
$C\left(\mX\right)$,
its image under
$J$
is compact in
$L^2\left(\nu\right)$.
Thus,
$\mU_L$
is compact when regarded as a subset of
$L^2\left(\nu\right)$.
\end{proof}

\begin{lemma}\textnormal{(Uniform Brownian profile interpolation estimate)}\ 
\label{thm:profile-bkl-approximation-symmetric}
Let $A>0$ and $m\ge1$, let
$-A=t_0<t_1<\cdots<t_m=A$ be the uniform grid on
$\left[-A,A\right]$, and let $\Pi_mg$ denote the continuous piecewise-linear
interpolant of $g$ on this grid. Then, for every $g\in\mH_{\kB}$,
\begin{align}
\sup_{s\in\left[-A,A\right]}
\left|g\left(s\right)-\left(\Pi_mg\right)\left(s\right)\right|
&\le
\left(\frac{A}{2m}\right)^{1/2}
\left\|g\right\|_{\mH_{\kB}}.
\label{eq:symmetric-profile-interpolation-bound}
\end{align}
\end{lemma}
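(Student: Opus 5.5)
The argument is local: I will reduce the uniform estimate to a single grid cell and then use a sharp one-cell Sobolev bound. Let $h:=2A/m$ be the uniform spacing. Fix $s\in[-A,A]$ and choose $i$ with $s\in[t_i,t_{i+1}]$. Write $e:=g-\Pi_mg$ on this cell. Since $\Pi_mg$ is affine there and interpolates $g$ at the endpoints, $e$ is absolutely continuous with $e(t_i)=e(t_{i+1})=0$. Its derivative is $e'=g'-\bar g_i$ almost everywhere, where
\begin{align}
\bar g_i:=\frac{g(t_{i+1})-g(t_i)}{h}=\frac1h\int_{t_i}^{t_{i+1}}g'(r)\,\d r
\end{align}
is the mean of $g'$ over the cell. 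Here I am using the characterization of $\mH_{\kB}$ from \Cref{sec:notations}: $g$ is absolutely continuous with $g'\in L^2(\R)$.

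The key step is to bound $|e(s)|$ sharply. Set $\alpha:=s-t_i$ and $\beta:=t_{i+1}-s$, so that $\alpha+\beta=h$. Integrating $e'$ from $t_i$ to $s$, and separately from $s$ to $t_{i+1}$, gives
\begin{align}
e(s)=\int_{t_i}^{s}e'=-\int_{s}^{t_{i+1}}e'.
\end{align}
A weighted combination yields
\begin{align}
e(s)=\frac{\beta}{h}\int_{t_i}^{s}e'-\frac{\alpha}{h}\int_{s}^{t_{i+1}}e'=\int_{t_i}^{t_{i+1}}e'(r)\,\kappa(r)\,\d r,
\end{align}
where $\kappa=\beta/h$ on $[t_i,s]$ and $\kappa=-\alpha/h$ on $[s,t_{i+1}]$. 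Note that $\int\kappa=0$. Two consequences follow. First, the constant $\bar g_i$ integrates to zero against $\kappa$, so we may replace $e'$ by $g'$. Second, by Cauchy--Schwarz,
\begin{align}
|e(s)|\le\|g'\|_{L^2(t_i,t_{i+1})}\,\|\kappa\|_{L^2}.
\end{align}
A direct computation gives $\|\kappa\|_{L^2}^2=\alpha\beta^2/h^2+\beta\alpha^2/h^2=\alpha\beta/h$, which is at most $h/4$ by AM--GM, with equality at the midpoint. Hence $|e(s)|\le(h/4)^{1/2}\|g'\|_{L^2(t_i,t_{i+1})}$.

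Finally, I enlarge the local energy to $\int_{\R}|g'|^2=\|g\|_{\mH_{\kB}}^2$ and substitute $h=2A/m$. This gives $(h/4)^{1/2}=(A/(2m))^{1/2}$, and taking the supremum over $s$ proves \eqref{eq:symmetric-profile-interpolation-bound}.

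The main obstacle is getting the constant $1/4$ rather than the crude $1$ that a naive Cauchy--Schwarz from one endpoint would give. The signed kernel $\kappa$ handles this: it uses both boundary zeros at once, and the AM--GM maximization at the midpoint recovers the sharp constant. This constant matches the tent example in \Cref{thm:sharp-brownian-interpolation}, which confirms the computation.
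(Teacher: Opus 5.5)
Your proposal is correct and is essentially the paper's proof: your kernel $\kappa$ is exactly the paper's $\phi_s$, and the rest (Cauchy--Schwarz, $\|\phi_s\|_{L^2}^2=\alpha\beta/h\le h/4$, and enlarging the cell energy to the global Brownian energy) matches step for step. Your derivation of $e(s)=\int_{t_i}^{t_{i+1}}e'\kappa$ from the two boundary zeros, together with $\int\kappa=0$ to replace $e'$ by $g'$, spells out what the paper compresses into ``the interpolation formula and absolute continuity.''
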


\begin{proof}
Set $h:=2A/m$. Fix $s\in\left[t_i,t_{i+1}\right]$, and define
\begin{align}
\phi_s\left(r\right)
:=
\frac{t_{i+1}-s}{h}
\mathbf 1_{\left[t_i,s\right]}\left(r\right)
-
\frac{s-t_i}{h}
\mathbf 1_{\left[s,t_{i+1}\right]}\left(r\right).
\end{align}
The interpolation formula and absolute continuity give
\begin{align}
g\left(s\right)-\left(\Pi_mg\right)\left(s\right)
&=
\int_{t_i}^{t_{i+1}}
 g'\left(r\right)\phi_s\left(r\right)
\,\d r.
\end{align}
Moreover,
\begin{align}
\left\|\phi_s\right\|_{L^2\left(\left[t_i,t_{i+1}\right]\right)}^2
&\stackrel{(a)}{=}
\frac{\left(s-t_i\right)\left(t_{i+1}-s\right)}{h}
\stackrel{(b)}{\le}
\frac{h}{4}.
\end{align}
Here, (a) is direct integration and (b) uses
$\left(s-t_i\right)\left(t_{i+1}-s\right)\le h^2/4$. Therefore,
\begin{align}
\left|g\left(s\right)-\left(\Pi_mg\right)\left(s\right)\right|
&\stackrel{(a)}{\le}
\left\|g'\right\|_{L^2\left(\left[t_i,t_{i+1}\right]\right)}
\left\|\phi_s\right\|_{L^2\left(\left[t_i,t_{i+1}\right]\right)}
 \stackrel{(b)}{\le}
\frac{\sqrt h}{2}
\left\|g\right\|_{\mH_{\kB}}
\stackrel{(c)}{=}
\left(\frac{A}{2m}\right)^{1/2}
\left\|g\right\|_{\mH_{\kB}}.
\end{align}
Here, (a) is Cauchy--Schwarz, (b) uses the preceding norm bound and the
Brownian RKHS norm characterization, and (c) substitutes $h=2A/m$. Taking the
supremum proves \eqref{eq:symmetric-profile-interpolation-bound}.
\end{proof}

\begin{lemma}\textnormal{(Uniform range bound for lower-level atomic supports)}\ 
\label{lem:lower-level-atomic-range-bound}
Assume that $\mX\subseteq\mathbb R^d$ is compact and that
$\left\|\bomega\right\|_2\le1$ for every $\bomega\in\Omega$. Let $L\ge2$, and
set
$R_{\mX} := \sup_{\b x\in\mX} \left\|\b x\right\|_2$.
Then every $a\in\mU_{L-1}$ satisfies
$\left|a\left(\b x\right)\right| \le R_{\mX}^{\,2^{-(L-2)}}, \qquad \b x\in\mX$.
Consequently,
$a\left(\mX\right)\subseteq\left[-A_0,A_0\right]$, where
$A_0:=R_{\mX}^{\,2^{-(L-2)}}$.
\end{lemma}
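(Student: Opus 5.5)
The plan is to obtain this bound directly from the pointwise estimate of \Cref{lem:path-atom-bound}, applied at depth $L-1$ instead of $L$. The only care needed is the case $L=2$, where the lemma as stated (for $L\ge2$) does not literally apply to $\mU_{L-1}=\mU_1$.

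\emph{Case $L=2$.} Here $a\in\mU_1$ has the form $a(\b x)=\bomega^\top\b x$ with $\|\bomega\|_2\le1$. Cauchy--Schwarz gives
\[
|a(\b x)|\le\|\b x\|_2\le R_{\mX}=R_{\mX}^{\,2^{0}},
\]
which is the claim.

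\emph{Case $L\ge3$.} Here $L-1\ge2$, so \Cref{lem:path-atom-bound} applies with $L$ replaced by $L-1$. Its pointwise estimate \eqref{eq:path-atom-pointwise-bound} then reads $|a(\b x)|\le\|\b x\|_2^{\,2^{-(L-2)}}$ for all $\b x\in\mX$. We then use that $t\mapsto t^{2^{-(L-2)}}$ is nondecreasing on $[0,\infty)$ and that $\|\b x\|_2\le R_{\mX}$. Together these give $|a(\b x)|\le R_{\mX}^{\,2^{-(L-2)}}=A_0$.

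The uniform hypothesis $\|\bomega\|_2\le1$ is exactly what \Cref{lem:path-atom-bound} requires, so the reduction is legitimate. Alternatively, one can rerun its one-line induction directly: each Brownian composition sends a bound $|b|\le B$ to $|g(b)|\le B^{1/2}$, because $|g(s)|\le|s|^{1/2}\|g\|_{\mH_{\kB}}$ by the reproducing property and $\kB(s,s)=|s|$. Starting from $R_{\mX}$ and applying $L-2$ compositions yields the exponent $2^{-(L-2)}$.

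The range inclusion $a(\mX)\subseteq[-A_0,A_0]$ is then immediate from the bound. If $R_{\mX}=0$, both sides vanish and the inclusion reads $a(\mX)\subseteq\{0\}$, which still holds. No step here is a real obstacle; the only point needing attention is matching the depth index, namely that $\mU_{L-1}$ contains $L-2$ profiles, which gives exponent $2^{-(L-2)}$ rather than $2^{-(L-1)}$.
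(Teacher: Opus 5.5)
Your proposal is correct and follows the paper's proof essentially verbatim. The case $L=2$ is handled by Cauchy--Schwarz, and the case $L\ge3$ by applying \Cref{lem:path-atom-bound} at depth $L-1$, followed by $\|\b x\|_2\le R_{\mX}$ and monotonicity of $t\mapsto t^{2^{-(L-2)}}$.
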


\begin{proof}
Compactness of $\mX$ gives $R_{\mX}<\infty$. Fix $a\in\mU_{L-1}$.
If $L=2$, then $a\left(\b x\right)=\bomega^\top\b x$ for some
$\bomega\in\Omega$, and
\begin{align}
\left|a\left(\b x\right)\right|
&\stackrel{(a)}{\le}
\left\|\bomega\right\|_2
\left\|\b x\right\|_2
\stackrel{(b)}{\le}
R_{\mX}.
\end{align}
Here, (a) is Cauchy--Schwarz and (b) uses
$\left\|\bomega\right\|_2\le1$. If $L\ge3$,
\Cref{lem:path-atom-bound} applied at depth $L-1$ gives
\begin{align}
\left|a\left(\b x\right)\right|
&\stackrel{(a)}{\le}
\left\|\b x\right\|_2^{\,2^{-(L-2)}}
\stackrel{(b)}{\le}
R_{\mX}^{\,2^{-(L-2)}}.
\end{align}
Here, (a) is the atomic pointwise estimate and (b) is the definition of
$R_{\mX}$. Since $2^{-(L-2)}=1$ when $L=2$, both cases prove the stated
bound, and the range inclusion follows immediately.
\end{proof}

\begin{lemma}\textnormal{(Stability of Brownian profile interpolation under composition)}\ 
\label{lem:profile-interpolation-composition}
Let
$A>0$
and
$m\in\mathbb N$,
$m\ge1$.
Let
$
-A=t_0<t_1<\cdots<t_m=A
$
be the uniform grid on
$\left[-A,A\right]$,
and let
$\Pi_m$
denote the associated continuous piecewise-linear interpolation operator. Let
$
a
:
\mX
\longrightarrow
\left[
-A,A
\right]
$
be measurable, and let
$g\in\mH_{\kB}$.
Define
$
u\left(
\b x
\right)
:=
g\left(
a\left(
\b x
\right)
\right)$, $u_m\left(
\b x
\right)
:=
\left(
\Pi_mg
\right)
\left(
a\left(
\b x
\right)
\right)$, $\b x\in\mX.
$
Then
\begin{align}
\left\|
u-u_m
\right\|_{L^2\left(\nu\right)}
\le
\left(
\frac{
A
}{
2
}
\right)^{1/2}
m^{-1/2}
\left\|
g
\right\|_{\mH_{\kB}}.
\label{eq:profile-interpolation-composition-bound}
\end{align}

\end{lemma}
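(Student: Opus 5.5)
The plan is to reduce the $L^2(\nu)$ estimate to the uniform interpolation estimate of \Cref{thm:profile-bkl-approximation-symmetric}. The key observation is that composition with $a$ only samples $g-\Pi_mg$ at points of $[-A,A]$.

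First, I will check that $u$ and $u_m$ are measurable and bounded, so that they belong to $L^2(\nu)$.
\begin{itemize}
\item Every $g\in\mH_{\kB}$ is absolutely continuous, hence continuous.
\item $\Pi_mg$ is continuous and piecewise linear on $[-A,A]$.
\item Since $a$ is measurable with values in $[-A,A]$, both compositions $g\circ a$ and $(\Pi_mg)\circ a$ are Borel measurable.
\item Both compositions are bounded by the suprema of $g$ and $\Pi_mg$ over the compact interval $[-A,A]$.
\end{itemize}

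Second, I will bound the difference pointwise. For every $\b x\in\mX$, set $s:=a(\b x)\in[-A,A]$. Then
\begin{align}
|u(\b x)-u_m(\b x)| = |g(s)-(\Pi_mg)(s)| \le \sup_{t\in[-A,A]}|g(t)-(\Pi_mg)(t)| \le \left(\frac{A}{2m}\right)^{1/2}\|g\|_{\mH_{\kB}},
\end{align}
where the last inequality is \eqref{eq:symmetric-profile-interpolation-bound}.

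Third, I will pass from the pointwise bound to the $L^2$ bound. Squaring and integrating against the probability measure $\nu$ gives
\begin{align}
\|u-u_m\|_{L^2(\nu)}^2 \le \nu(\mX)\,\frac{A}{2m}\,\|g\|_{\mH_{\kB}}^2 = \frac{A}{2m}\,\|g\|_{\mH_{\kB}}^2 .
\end{align}
Taking square roots and writing $(A/(2m))^{1/2}=(A/2)^{1/2}m^{-1/2}$ yields \eqref{eq:profile-interpolation-composition-bound}.

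There is no substantive obstacle. The only point needing care is the range hypothesis $a(\mX)\subseteq[-A,A]$. It guarantees that $\Pi_mg$ is evaluated only where it is defined by the interpolation formula, and never in an extension region where the uniform estimate does not apply.
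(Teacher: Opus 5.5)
Your proposal is correct and matches the paper's proof: apply the uniform estimate \eqref{eq:symmetric-profile-interpolation-bound} at $s=a(\b x)\in[-A,A]$, then square, integrate, and use $\nu(\mX)=1$. Your measurability and boundedness check is a harmless detail that the paper leaves implicit.
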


\begin{proof}
For every $\b x\in\mX$, the range assumption on $a$ and
\Cref{thm:profile-bkl-approximation-symmetric} give
\begin{align}
\left|u\left(\b x\right)-u_m\left(\b x\right)\right|
&\stackrel{(a)}{\le}
\left(\frac{A}{2m}\right)^{1/2}
\left\|g\right\|_{\mH_{\kB}}.
\end{align}
Here, (a) applies \eqref{eq:symmetric-profile-interpolation-bound} at
$s=a\left(\b x\right)$. Squaring, integrating, and using
$\nu\left(\mX\right)=1$ yields
\begin{align}
\left\|u-u_m\right\|_{L^2\left(\nu\right)}
&\le
\left(\frac{A}{2m}\right)^{1/2}
\left\|g\right\|_{\mH_{\kB}}
=
\left(\frac{A}{2}\right)^{1/2}
m^{-1/2}
\left\|g\right\|_{\mH_{\kB}},
\end{align}
which proves \eqref{eq:profile-interpolation-composition-bound}.
\end{proof}

\vskip 0.2in
\bibliography{BIB/bkl_references}

@misc{MohammadigohariDiFattaNicosiaPardalos2026BKL,
  author       = {Mohammadigohari, Mahdi and Di Fatta, Giuseppe and Nicosia, Giuseppe and Pardalos, Panos M.},
  title        = {Brownian Kernel Ladders},
  year         = {2026},
  eprint       = {2606.15812},
  archivePrefix = {arXiv},
  primaryClass = {cs.LG},
  url          = {https://arxiv.org/abs/2606.15812},
  note         = {Revised preprint}
}

@article{goldberg1995complexity,
  author    = {Paul W. Goldberg and Mark Jerrum},
  title     = {Bounding the Vapnik--Chervonenkis Dimension of Concept Classes Parameterized by Real Numbers},
  journal   = {Machine Learning},
  volume    = {18},
  number    = {2--3},
  pages     = {131--148},
  year      = {1995},
  doi       = {10.1007/BF00993408}
}

@article{Aronszajn1950,
  author  = {Nachman Aronszajn},
  title   = {Theory of Reproducing Kernels},
  journal = {Transactions of the American Mathematical Society},
  volume  = {68},
  number  = {3},
  pages   = {337--404},
  year    = {1950}
}

@article{Barron1993,
  author  = {Andrew R. Barron},
  title   = {Universal Approximation Bounds for Superpositions of a Sigmoidal Function},
  journal = {IEEE Transactions on Information Theory},
  volume  = {39},
  number  = {3},
  pages   = {930--945},
  year    = {1993}
}

@article{Bach2017,
  author  = {Francis Bach},
  title   = {Breaking the Curse of Dimensionality with Convex Neural Networks},
  journal = {Journal of Machine Learning Research},
  volume  = {18},
  number  = {19},
  pages   = {1--53},
  year    = {2017}
}

@article{follain2025,
  author  = {Bertille Follain and Francis Bach},
  title   = {Enhanced Feature Learning via Regularisation: Integrating Neural Networks and Kernel Methods},
  journal = {Journal of Machine Learning Research},
  volume  = {26},
  number  = {172},
  pages   = {1--56},
  year    = {2025}
}

@article{DeVore1998,
  author  = {Ronald A. DeVore},
  title   = {Nonlinear Approximation},
  journal = {Acta Numerica},
  volume  = {7},
  pages   = {51--150},
  year    = {1998}
}

@book{Temlyakov2011,
  author    = {Vladimir N. Temlyakov},
  title     = {Greedy Approximation},
  publisher = {Cambridge University Press},
  year      = {2011}
}

@article{Chen2024,
  author  = {Zhengdao Chen},
  title   = {Neural Hilbert Ladders: Multi-Layer Neural Networks in Function Space},
  journal = {Journal of Machine Learning Research},
  volume  = {25},
  number  = {109},
  pages   = {1--65},
  year    = {2024}
}

@article{barron2019,
  author  = {Andrew R. Barron and Jason M. Klusowski},
  title   = {Complexity, Statistical Risk, and Metric Entropy of Deep Nets Using Total Path Variation},
  journal = {arXiv preprint arXiv:1902.00800},
  year    = {2019}
}

@inproceedings{bartlett1996,
  author    = {Peter L. Bartlett},
  title     = {For Valid Generalization the Size of the Weights Is More Important than the Size of the Network},
  booktitle = {Advances in Neural Information Processing Systems},
  volume    = {9},
  year      = {1996}
}

@article{bartolucci2023,
  author  = {Francesco Bartolucci and Ernesto De Vito and Lorenzo Rosasco and Stefano Vigogna},
  title   = {Understanding Neural Networks with Reproducing Kernel Banach Spaces},
  journal = {Applied and Computational Harmonic Analysis},
  volume  = {62},
  pages   = {194--236},
  year    = {2023}
}

@article{bartolucci2024,
  author  = {Francesco Bartolucci and Ernesto De Vito and Lorenzo Rosasco and Stefano Vigogna},
  title   = {Neural Reproducing Kernel Banach Spaces and Representer Theorems for Deep Networks},
  journal = {arXiv preprint arXiv:2403.08750},
  year    = {2024}
}

@inproceedings{eldan2016,
  author    = {Ronen Eldan and Ohad Shamir},
  title     = {The Power of Depth for Feedforward Neural Networks},
  booktitle = {Conference on Learning Theory},
  pages     = {907--940},
  publisher = {PMLR},
  year      = {2016}
}

@inproceedings{golowich2018,
  author    = {Noah Golowich and Alexander Rakhlin and Ohad Shamir},
  title     = {Size-Independent Sample Complexity of Neural Networks},
  booktitle = {Conference on Learning Theory},
  pages     = {297--299},
  publisher = {PMLR},
  year      = {2018}
}

@article{heeringa2025,
  author  = {T. J. Heeringa and L. Spek and C. Brune},
  title   = {Deep Networks Are Reproducing Kernel Chains},
  journal = {arXiv preprint arXiv:2501.03697},
  year    = {2025}
}

@article{kurkova2002,
  author  = {V\v{e}ra Kurkov{\'a} and Marcello Sanguineti},
  title   = {Bounds on Rates of Variable-Basis and Neural-Network Approximation},
  journal = {IEEE Transactions on Information Theory},
  volume  = {47},
  number  = {6},
  pages   = {2659--2666},
  year    = {2002}
}

@article{ma2022,
  author  = {Chao Ma and Lei Wu and others},
  title   = {The Barron Space and the Flow-Induced Function Spaces for Neural Network Models},
  journal = {Constructive Approximation},
  volume  = {55},
  number  = {1},
  pages   = {369--406},
  year    = {2022}
}

@inproceedings{neyshabur2015,
  author    = {Behnam Neyshabur and Ryota Tomioka and Nathan Srebro},
  title     = {Norm-Based Capacity Control in Neural Networks},
  booktitle = {Conference on Learning Theory},
  pages     = {1376--1401},
  publisher = {PMLR},
  year      = {2015}
}

@article{ongie2026,
  author  = {Greg Ongie and Rahul Parhi},
  title   = {Representation Costs in Data Science: Foundations and the Quasi-Banach Spaces of Deep Neural Networks},
  journal = {arXiv preprint arXiv:2606.14954},
  year    = {2026}
}

@inproceedings{ongie2019,
  author    = {Greg Ongie and Rebecca Willett and Daniel Soudry and Nathan Srebro},
  title     = {A Function Space View of Bounded Norm Infinite Width ReLU Nets: The Multivariate Case},
  booktitle = {International Conference on Learning Representations},
  year      = {2020},
  note      = {Originally available as arXiv:1907.05681 (2019)}
}

@article{parhi2021,
  author  = {Rahul Parhi and Robert D. Nowak},
  title   = {Banach Space Representer Theorems for Neural Networks and Ridge Splines},
  journal = {Journal of Machine Learning Research},
  volume  = {22},
  number  = {43},
  pages   = {1--40},
  year    = {2021}
}

@article{parhi2022a,
  author  = {Rahul Parhi and Robert D. Nowak},
  title   = {What Kinds of Functions Do Deep Neural Networks Learn? Insights from Variational Spline Theory},
  journal = {SIAM Journal on Mathematics of Data Science},
  volume  = {4},
  number  = {2},
  pages   = {464--489},
  year    = {2022}
}

@article{parhi2022b,
  author  = {Rahul Parhi and Robert D. Nowak},
  title   = {Near-Minimax Optimal Estimation with Shallow ReLU Neural Networks},
  journal = {IEEE Transactions on Information Theory},
  volume  = {69},
  number  = {2},
  pages   = {1125--1140},
  year    = {2022}
}

@inproceedings{parkinson2024,
  author    = {Samuel Parkinson and Greg Ongie and Rebecca Willett and Ohad Shamir and Nathan Srebro},
  title     = {Depth Separation in Norm-Bounded Infinite-Width Neural Networks},
  booktitle = {Proceedings of the 37th Annual Conference on Learning Theory},
  pages     = {4082--4114},
  publisher = {PMLR},
  year      = {2024}
}

@inproceedings{savarese2019,
  author    = {Pedro Savarese and Itay Evron and Daniel Soudry and Nathan Srebro},
  title     = {How Do Infinite Width Bounded Norm Networks Look in Function Space?},
  booktitle = {Conference on Learning Theory},
  pages     = {2667--2690},
  publisher = {PMLR},
  year      = {2019}
}

@article{shenouda2024,
  author  = {Joseph Shenouda and Rahul Parhi and Kangwook Lee and Robert D. Nowak},
  title   = {Variation Spaces for Multi-Output Neural Networks: Insights on Multi-Task Learning and Network Compression},
  journal = {Journal of Machine Learning Research},
  volume  = {25},
  number  = {231},
  pages   = {1--40},
  year    = {2024}
}

@article{siegel2023,
  author  = {Jonathan W. Siegel and Jinxin Xu},
  title   = {Characterization of the Variation Spaces Corresponding to Shallow Neural Networks},
  journal = {Constructive Approximation},
  volume  = {57},
  number  = {3},
  pages   = {1109--1132},
  year    = {2023}
}

@article{siegel2024,
  author  = {Jonathan W. Siegel and Jinxin Xu},
  title   = {Sharp Bounds on the Approximation Rates, Metric Entropy, and n-Widths of Shallow Neural Networks},
  journal = {Foundations of Computational Mathematics},
  volume  = {24},
  number  = {2},
  pages   = {481--537},
  year    = {2024}
}

@inproceedings{telgarsky2016,
  author    = {Matus Telgarsky},
  title     = {Benefits of Depth in Neural Networks},
  booktitle = {Conference on Learning Theory},
  pages     = {1517--1539},
  publisher = {PMLR},
  year      = {2016}
}

@inproceedings{vardi2020,
  author    = {Gal Vardi and Ohad Shamir},
  title     = {Neural Networks with Small Weights and Depth-Separation Barriers},
  booktitle = {Advances in Neural Information Processing Systems},
  volume    = {33},
  pages     = {19433--19442},
  year      = {2020}
}

@article{venturi2022,
  author  = {Luca Venturi and Samy Jelassi and Tomasz Ozuch and Joan Bruna},
  title   = {Depth Separation Beyond Radial Functions},
  journal = {Journal of Machine Learning Research},
  volume  = {23},
  number  = {122},
  pages   = {1--56},
  year    = {2022}
}

@article{wojtowytsch2020,
  author  = {E, Weinan and Wojtowytsch, Stephan},
  title   = {On the Banach Spaces Associated with Multi-Layer {ReLU} Networks: Function Representation, Approximation Theory and Gradient Descent Dynamics},
  journal = {CSIAM Transactions on Applied Mathematics},
  volume  = {1},
  number  = {3},
  pages   = {387--440},
  year    = {2020}
}

@misc{nakhleh2026deep,
  author       = {Julia Nakhleh and Robert D. Nowak},
  title        = {Deep Neural Variation Spaces: A Unifying Perspective on Depth and Complexity},
  year         = {2026},
  eprint       = {2607.05546},
  archivePrefix= {arXiv},
  primaryClass = {stat.ML},
  url          = {https://arxiv.org/abs/2607.05546}
}

\end{document}